\definecolor{dblue}{RGB}{52, 104, 192}
\definecolor{dgreen}{RGB}{65, 171, 93}
\definecolor{dred}{RGB}{210, 69, 69}
\definecolor{dred2}{RGB}{169, 68, 56}
\definecolor{lavender}{HTML}{E6E6FA}
\newtheorem{lemma}{Lemma}
\newtheorem{theorem}{Theorem}
\newtheorem{assumption}{Assumption}
\let\hat\widehat
\newcommand{\bs}{\bm{s}}
\newcommand{\bu}{\bm{u}}
\newcommand{\bv}{\bm{v}}
\newcommand{\bw}{\bm{w}}
\newcommand{\bx}{\bm{x}}
\newcommand{\by}{\bm{y}}
\newcommand{\Mb}{\mathbf{M}}
\newcommand{\Ub}{\mathbf{U}}
\newcommand{\Vb}{\mathbf{V}}
\newcommand{\Wb}{\mathbf{W}}
\newcommand{\cB}{\mathcal{B}}
\newcommand{\cD}{\mathcal{D}}
\newcommand{\cT}{{\mathcal{T}}}
\newcommand{\RR}{\mathbb{R}}
\newcommand*{\E}{\mathbb{E}}
\newcommand{\ve}{\@ifnextchar\bgroup{\velong}{{\bm{e}}}}
\newcommand{\velong}[1]{{\bm{#1}}}
\newcommand{\rotbox}[1]{\rotatebox{90}{#1}}
\newcommand{\tf}[1]{\textbf{#1}}
\definecolor{lightfont}{rgb}{0.5, 0.5, 0.5}
\definecolor{tabgray}{HTML}{f2f2f2}
\definecolor{tablavender}{HTML}{E6E6FA}
\definecolor{tabnavy}{HTML}{00008B}
\newcommand{\xmark}{\textcolor{red}{\ding{55}}}%
\newcommand{\zip}{\textsc{Zip}\xspace}
\newcommand{\barr}{\textsc{BAR}\xspace}
\newcommand{\blackvip}{\textsc{BlackVIP}\xspace}
\newcommand{\bptvlm}{\textsc{BPTVLM}\xspace}
\newcommand{\ie}{\textit{i}.\textit{e}.}
\newcommand{\eg}{\textit{e}.\textit{g}.}
\newcolumntype{L}[1]{>{\raggedright\let\newline\\\arraybackslash\hspace{0pt}}m{#1}}
\newcolumntype{C}[1]{>{\centering\let\newline\\\arraybackslash\hspace{0pt}}m{#1}}
\newcolumntype{R}[1]{>{\raggedleft\let\newline\\\arraybackslash\hspace{0pt}}m{#1}}
\title{\zip: An Efficient Zeroth-order Prompt Tuning for Black-box Vision-Language Models}
\author{Seonghwan Park$^1$, Jaehyeon Jeong$^1$, Yongjun Kim$^1$, Jaeho Lee$^{1,2}$, Namhoon Lee$^{1,2}$ \\
$^1$POSTECH, $^2$Yonsei University\\
\texttt{\{seonghwan.park,chungsml,kus0505,jaeho.lee,namhoonlee\}@postech.ac.kr}
}
\begin{document}
    \maketitle
    \begin{abstract}
Recent studies have introduced various approaches for prompt-tuning black-box vision-language models, referred to as black-box prompt-tuning (BBPT).
While BBPT has demonstrated considerable potential, it is often found that many existing methods require an excessive number of queries (\ie, function evaluations), which poses a significant challenge in real-world scenarios where the number of allowed queries is limited.
To tackle this issue, we propose \underline{Z}eroth-order \underline{I}ntrinsic-dimensional \underline{P}rompt-tuning (\zip), a novel approach that enables efficient and robust prompt optimization in a purely black-box setting.
The key idea of \zip is to reduce the problem dimensionality and the variance of zeroth-order gradient estimates, such that the training is done fast with far less queries.
We achieve this by re-parameterizing prompts in low-rank representations and designing intrinsic-dimensional clipping of estimated gradients.
We evaluate \zip on 13+ vision-language tasks in standard benchmarks and show that it achieves an average improvement of approximately 6\% in few-shot accuracy and 48\% in query efficiency compared to the best-performing alternative BBPT methods, establishing a new state of the art.
Our ablation analysis further shows that the proposed clipping mechanism is robust and nearly optimal, without the need to manually select the clipping threshold, matching the result of expensive hyperparameter search.
\end{abstract}
    \section{Introduction}
\label{sec:intro}
Foundation models pre-trained on a vast amount of data are creating tremendous success across a wide range of applications in various domains~\citep{DALL-E, CLIP, ALIGN, FLAVA, LLaVA, Flamingo}.
A notable example is CLIP \citep{CLIP} which learns visual concepts from natural language supervision and works zero-shot at inference.

In fact, these models are fine-tuned for specific downstream tasks at deployment to create yet more performance refinement in practice~\citep{fine-tuning>ICL}.
It is noteworthy, however, that fine-tuning these models is not only computationally expensive, but also requires full access to model specifications.
The complication here is that many high-performing foundation models are provided only as a software-as-a-service \citep{ChatGPT, Gemini} without model details due to commercial interests and security concerns.

To overcome this challenge, recent works have suggested to fine-tune such \emph{black-box} models via so-called black-box prompt-tuning (BBPT)~\citep{BBT, BDPL, BlackVIP, BPT-VLM};
\ie, by parameterizing input prompts and only optimizing them via classic derivative-free optimization methods such as evolutionary strategies~\citep{6790628, 6790790} and zeroth-order optimization~\citep{SPSA, SPSA2, ZO-SGD}, it enables fine-tuning without direct access to the model details or back-propagation.

Nevertheless, it still remains a major challenge to secure query-efficiency in existing BBPT methods.
Specifically, we find that many existing approaches require excessive model evaluations (\ie, queries), often spanning several tens of thousands times \citep{BAR,BlackVIP}, and moreover, they result in significant performance drop when they are given a limited query budget.
This is quite critical in many practical scenarios where large models are provided in the form of prediction APIs, and users can only make use of it with a limited budget.

In this work, we propose a new method called \zip: \underline{Z}eroth-order \underline{I}ntrinsic-dimensional \underline{P}rompt-tuning to tackle this challenge.
The key idea is to reduce the dimensionality of the problem (hence the term ``intrinsic'') and the variance of zeroth-order gradients, such that the training is done fast with far less queries, and subsequently, improves generalization.
In essence, we achieve this by re-parameterizing prompts in low-rank representations in effective forms and designing intrinsic-dimensional clipping of zeroth-order gradients (\Cref{sec:zip}).

Fundamentally, we are inspired by a line of previous works that hint at the dimension dependency of zeorth-order methods \citep{SPSA, ZO-SGD, ZO-SMD} and noise in stochastic methods \citep{bottou2018optimization} causing optimization difficulty when it comes to large models.
Indeed, we find this pertinent to BBPT in our empirical analysis, in which we show that a naive zeroth-order method suffers from increased dimensionality unlike the first-order counterpart (\Cref{sec:limitation}).

Our extensive experimental results show that \zip is extremely efficient and robust, setting a new state of the art.
To be specific, we evaluate \zip on 13+ datasets for various vision-language tasks in standard benchmarks including few-shot learning, base-to-new generalization, cross-dataset transfer, and out-of-distribution generalization tasks, and across all, we find that \zip consistently outperforms existing BBPT methods by substantial margins in terms of prediction accuracy, while demanding far less number of queries (\Cref{sec:experiments}).
We provide a summary of how \zip performs in \Cref{fig:teaser}.

\begin{figure}[t!]
    \centering
    \begin{subfigure}{0.32\linewidth}
        \centering
        \includegraphics[width=\linewidth]{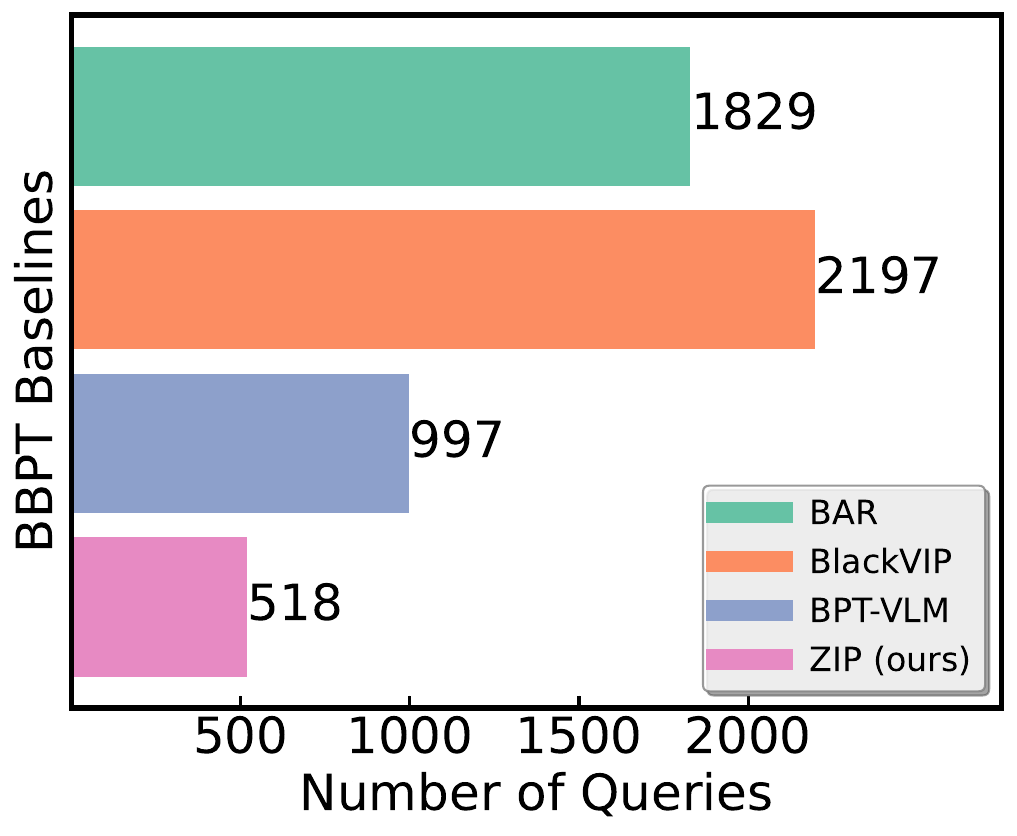}
        \caption{Query efficiency}
        \label{subfig:query pitch}
    \end{subfigure}
    \hfill
    \begin{subfigure}{0.35\linewidth}
        \centering
        \includegraphics[width=\linewidth]{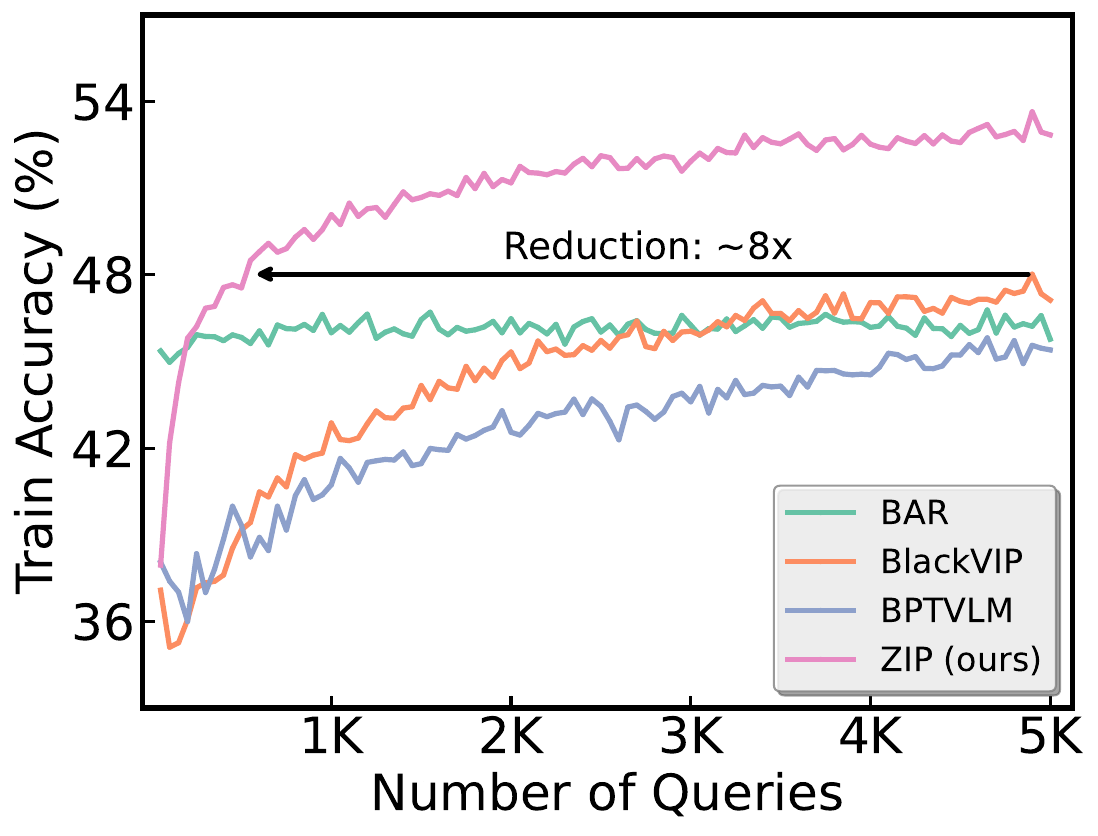}
        \caption{Training progress}
    \end{subfigure}
    \hfill
    \begin{subfigure}{0.28\linewidth}
        \centering
        \includegraphics[width=\linewidth]{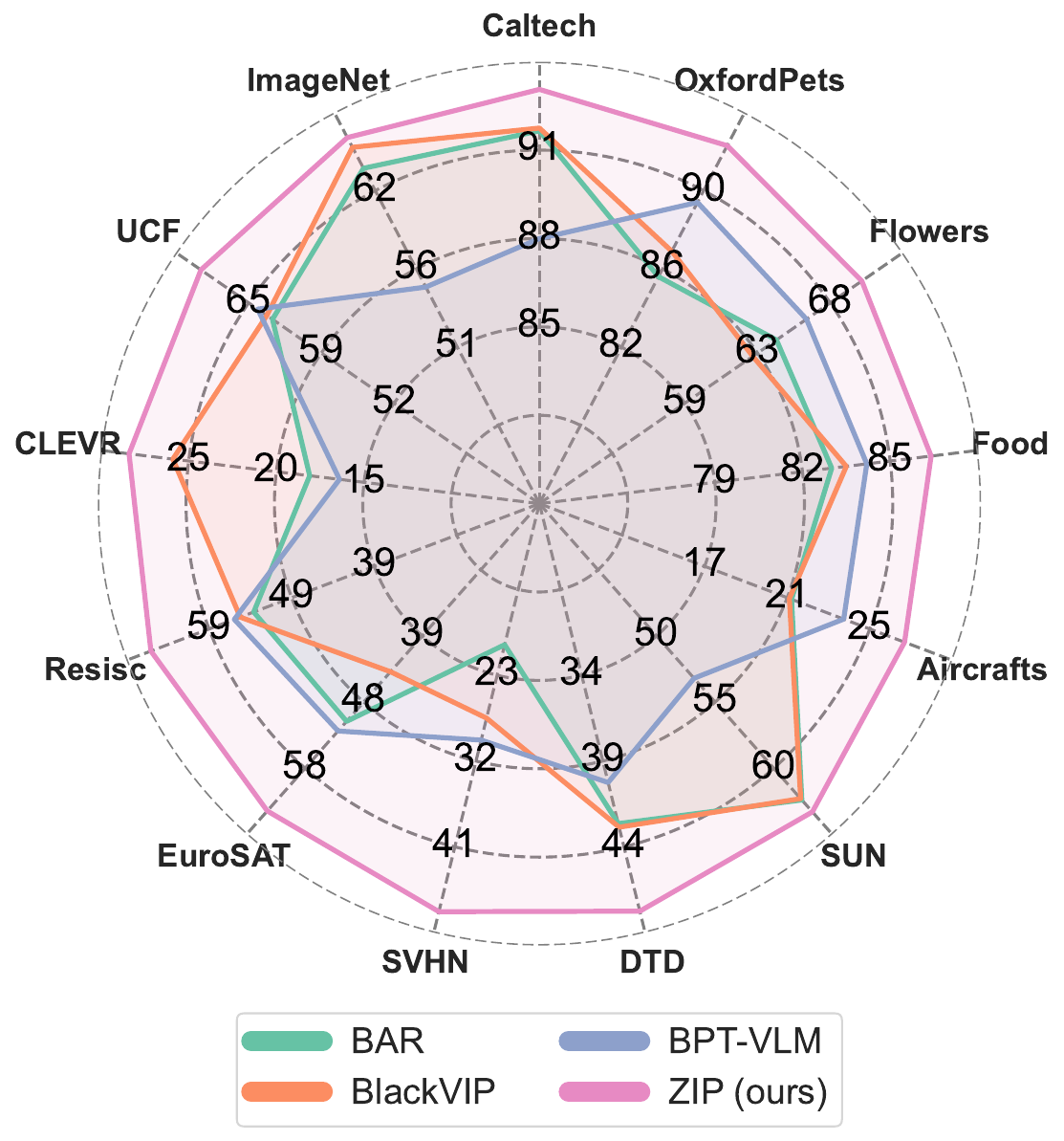}
        \caption{Generalization performance}
    \end{subfigure}
    \caption{
        \zip performance summary:
        (a) number of queries required to reach a given target accuracy,
        (b) training accuracy vs. number of queries,
        and
        (c) test set accuracies, measured on 13 standard vision-language tasks;
        the first two are arithmetic mean.
        \zip shows its outstanding performance compared to other black-box prompt-tuning methods (\eg, \barr~\citep{BAR}, \blackvip~\citep{BlackVIP}, \bptvlm~\citep{BPT-VLM}; the details are provided in \Cref{app_subsubsec:baseline details}) in terms of both training and generalization performances.
    }
    \label{fig:teaser}
\end{figure}
    \section{Background}
\label{sec:background}
Prompt-tuning is an emerging paradigm to update large pre-trained models before utilizing them for various downstream tasks~\citep{PromptTuning,prompt_tuning_survey}.
It works by prepending learnable context tokens to the input prompts embedding and training them on some data for a target task.
Specifically, we can formulate it as an optimization problem as follows:
\begin{equation}
\label{eq:PT}
    \min_{\theta} f(\theta, \omega; \mathcal{D})
\end{equation}
where $\theta \in \mathbb{R}^{d}$ refers to the learnable parameters where $d = p \times m$ denotes the problem dimensionality with $p$ and $m$ being the word embedding dimensions and number of context tokens, respectively, and $\omega$ refers to the pre-trained model;
also, $f$ refers to the loss function, which is the cross-entropy for vision-language tasks in our case, and $\mathcal{D}$ is a given dataset.

However, prompt-tuning is not directly applicable to \emph{black-box} models since back-propagation is not allowed, and thus, the optimization must be done derivative-free.
To this end, many approaches have been developed to address this issue for namely derivative-free optimization~\citep{DFO}.
In particular, a series of evolution strategies such as covariance matrix adaptation \citep{CMA-ES} has been used for black-box prompt tuning or BBPT~\citep{BBT, BBTv2, BPT-VLM}.

An alternative approach to prompt-tuning in a black-box situation is by \emph{zeroth-order} optimization which leverages approximate gradients estimated from only function evaluations.
In particular, one of the foundational techniques is the simultaneous perturbation stochastic approximation (SPSA)~\citep{SPSA,SPSA2}, by which the gradient estimate is computed as follows:
\begin{equation}
    \label{eqn:N-SPSA}
    \hat{\nabla} f(\theta; \cB) = \frac{1}{N}\sum_{i=1}^N\frac{f(\theta + c z_i;\cB) - f(\theta - c z_i;\cB)}{2c} (z_i)^{-1},
\end{equation}
where $z_i$ refers to the perturbation vector randomly drawn from a probability distribution with zero mean and finite inverse moment, $(\cdot)^{-1}$ denotes element-wise reciprocal, and $N$ indicates the number of perturbation vector samples  used for one gradient estimate;
also, $c$ and $\cB$ refer to a small positive scalar controlling the perturbation magnitude and the mini-batch of data points, respectively;
\ie, SPSA estimates gradients by simultaneously perturbing all dimensions in $\theta$. 
Then, the optimization is done iteratively using the zeroth-order stochastic gradient descent (ZO-SGD)~\citep{ZO-SGD} with \eqref{eqn:N-SPSA} as follows:
\begin{equation}
    \label{eqn:ZO-SGD}
    \theta_{t+1} = \theta_t - \eta_t \hat{\nabla} f(\theta_t; \cB_t),
\end{equation}
where $\eta_t$ denotes the step size at iteration $t$.

This approach has been demonstrated to be effective for BBPT tuning of vision-language models \citep{BlackVIP, BAR}, and yet, in theory, zeroth-order methods can suffer from high variance and slow convergence, especially for high-dimensional problems \citep{SPSA, ZO-SGD, ZO-SMD}.
This means that one needs to query the model a high number of times, which we find is critical to secure the feasibility of BBPT in practice.
As we show throughout this work, our key idea to fix this issue is to reduce the dimensionality of $\theta$ and the variance of $\hat{\nabla} f$.
    \section{Motivation}
\label{sec:limitation}
\begin{figure}[t!]
    \centering
    \begin{subfigure}{0.275\linewidth}
        \centering
        \includegraphics[width=\linewidth]{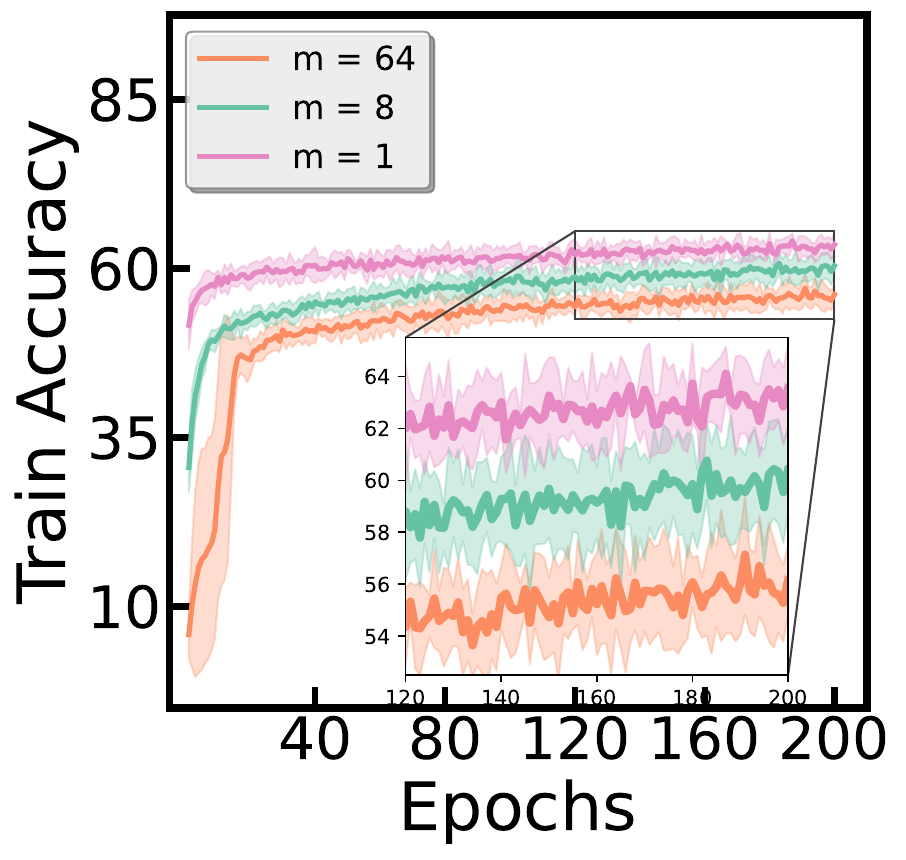}
        \subcaption{Zeroth-order results}
        \label{subfig:limitation zeroth-order flowers102}
    \end{subfigure}
    \begin{subfigure}{0.275\linewidth}
        \centering
        \includegraphics[width=\linewidth]{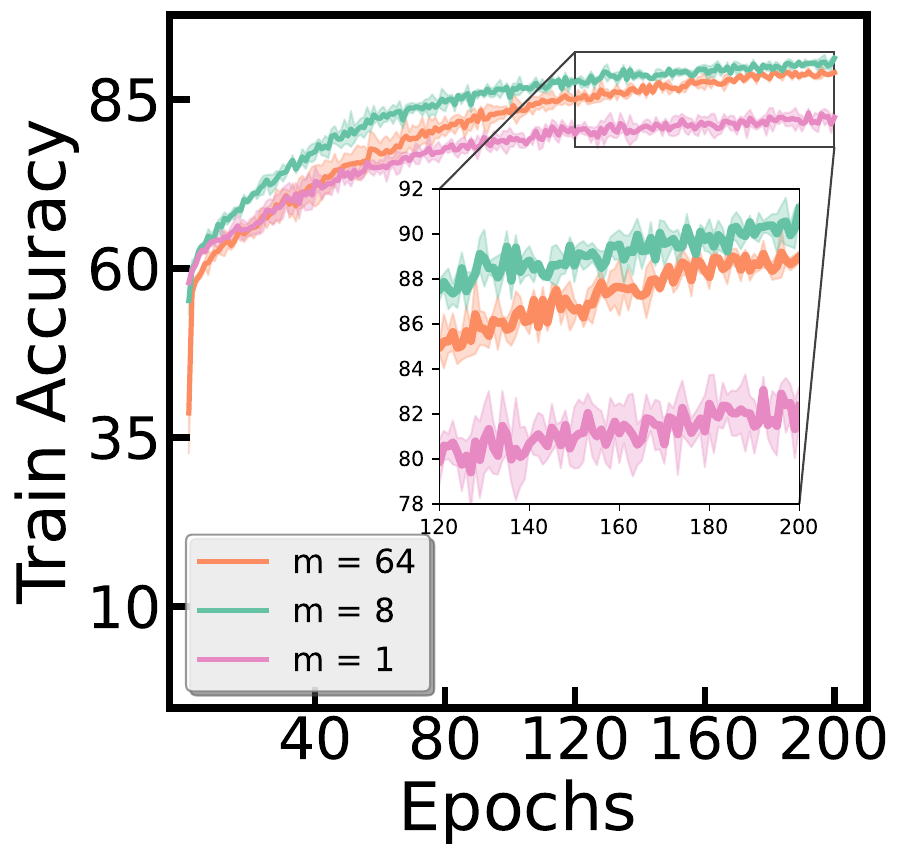}
        \subcaption{First-order results}
        \label{subfig:limitation first-order flowers102}
    \end{subfigure}
    \caption{
        Zeroth-order vs. first-order methods for prompt tuning under varying number of prompt parameters.
        The training progresses are plotted measured on the Flowers102 dataset;
        we provide results on 10 other datasets in \Cref{fig:appendix limitations} of \Cref{FOvsZOfurther} where we observe the same trend that the zeroth-order method suffers from increasing prompt dimensionality.
    }
    \label{fig:limitation}
    \vspace{-1.5em}
\end{figure}
In this section, we motivate our work by disclosing that naively applying a zeroth-order method can lead to a failure of BBPT.
Precisely, we show that its performance deteriorates as the number of context tokens (\ie, dimension of trainable parameters) increases.
This is rather unexpected because increasing their dimensions, in fact, is found to improve performance when optimized with a first-order method for prompt tuning.

To be specific, we optimize a vision-language model using both the basic zeroth-order or ZO \eqref{eqn:ZO-SGD} and first-order or FO (\ie, SGD) methods, with varying the number of context tokens ($1$, $8$, $64$), and measure their training progress.
We used CLIP \citep{CLIP}, a representative pre-trained vision-language model widely used in the literature.
The results are plotted in \Cref{fig:limitation}.
We summarize two key observations as follows:
\vspace{-0.5em}
\begin{itemize}[leftmargin=*]
    \item General prompt tuning can benefit from increased parameters, achieving higher expressive power and performance, especially with a moderate number of context tokens (\eg, 8 tokens).
    \item In contrast, prompt tuning with zeroth-order optimization suffers in both training speed and performance as the number of context tokens increases.
\end{itemize}
\vspace{-0.5em}

These findings reveal a fundamental limitation of directly employing a zeroth-order method in BBPT:
they are potentially applicable to BBPT, but not to the degree to which their performance is compatible with those of first-order methods or they can benefit from an increased number of context tokens.

Additionally, we further provide a convergence analysis of ZO-SGD \eqref{eqn:ZO-SGD} in \Cref{theo:zosgd} of \Cref{app_sec:convergeceZO} to clearly show its dependency on the problem dimensionality $d$ and highlight the limitation of the zeroth-order approach.

We aim to alleviate this discrepancy in this work.
This approach represents a significant departure from previous studies on BBPT, which have primarily focused on applying derivative-free optimization to specific domains, such as LLMs~\citep{BBT} or vision-language models~\citep{BPT-VLM, BlackVIP}.
We directly tackle this limitation of employing zeroth-order method for BBPT by enhancing both its efficiency and effectiveness, even suggesting a potential to bridge the gap between black-box and general prompt tunings.
    \section{\zip: Zeroth-order Intrinsic-dimensional Prompt-tuning}
\label{sec:zip}
\begin{figure}[t]
    \centering
    \includegraphics[width=\linewidth]{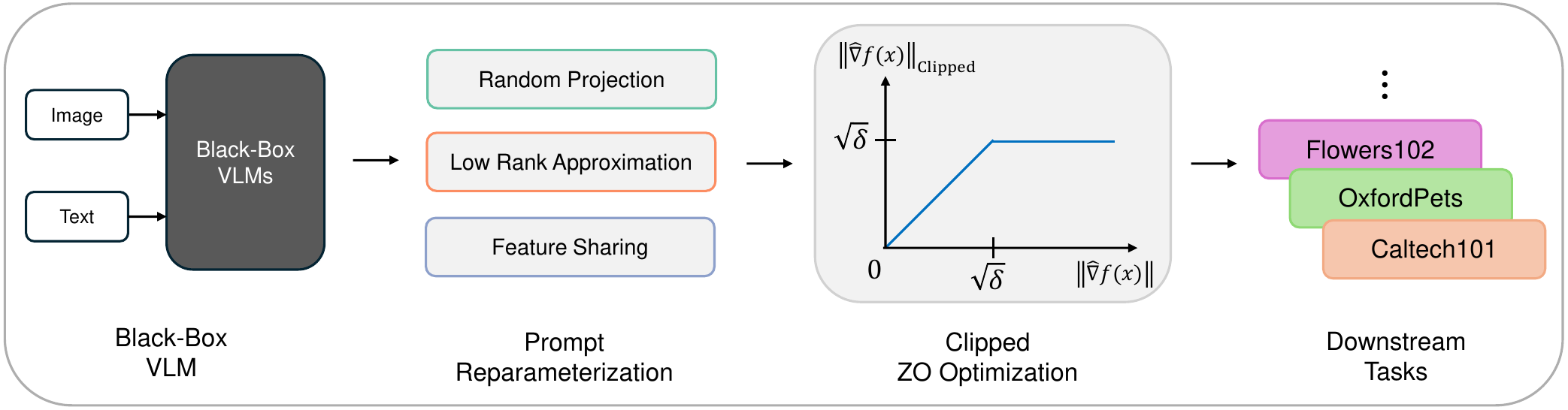}
    \caption{
        Overview of \zip framework.
    }
    \label{fig:Overview_zip}
\end{figure}

In this section, we explain \zip in detail, which is designed to address the fundamental issue of applying a zeroth-order method to BBPT.
To this end, we first suggest reducing the number of learnable parameters in representing the context tokens via a series of reparameterization techniques.
Then, we introduce a gradient clipping technique tailored for zeroth-order optimization in intrinsic dimensions to enhance the efficiency and effectiveness of \zip.
An overview of the \zip framework is provided in \Cref{fig:Overview_zip}.

\subsection{Prompt tuning in lower dimensional spaces}
\label{subsec:intrinsic}

As discussed in previous sections, the convergence, and hence the performance, of zeroth-order methods depends highly on the problem dimensionality~\citep{ZO-SGD, ZO-SMD}. 
We thus first propose reducing the optimization dimensionality to secure applicability of zeorth-order methods to BBPT.
In fact, we are also inspired by the concept of intrinsic dimension~\citep{Intrinsic-Dim, Intrinsic-Dim-Finetuning} which refers to an effective dimensionality of a given problem and suggests that optimizing in that dimension can be as effective as in the full parameter space.

Specifically, we first project the learnable parameters of each context token $\theta_i \in \mathbb{R}^{p}$ onto a lower-dimensional space $\bv_i \in \mathbb{R}^{q}$ using a Fastfood transform matrix $\Mb_i \in \mathbb{R}^{p \times q}$ as in \citet{fastfood,Intrinsic-Dim, Intrinsic-Dim-Finetuning}.
The total number of trainable parameters are then reduced from $d = p \times m$, to $d' = q \times m$ with $d' \ll d$.
Each vector indexed by $i$ corresponds to the $i$-th trainable context token where $m$ indicates the total number of context tokens.
The random projection can then be expressed as follows:
\begin{equation}
    \theta_i = \theta_{0, i} + \Mb_i \bv_i
\end{equation}
where $\theta_{0, i}$ is initial parameters.
With this reparameterization, we can project learnable parameter to much lower dimension, from $d$ to $d'$.  

Further, we apply another reparameterization in a low-rank approximation fashion as below, to get trainable parameter matrix $\Wb$:
\begin{equation}
    \Wb = [\bv_1' | \bv_2' | \cdots | \bv_m'] = \Ub \text{diag}(\bs) \Vb^T .
\end{equation}
We initialize $\Ub$ to zeros, $\Vb$ with standard normal distribution, and $\bs$ to ones, ensuring $\Wb$ starts at zero.
With this reparameterization, the optimization variables become $\Ub \in \mathbb{R}^{q \times r}$, $\Vb \in \mathbb{R}^{r \times m}$, and $\bs \in \mathbb{R}^r$.
Unlike the other conventional low-rank approximation methods \citep{LoRA}, we interpose a diagonal matrix, which can effectively adjust the importance of specific dimensions by letting diagonal parameters directly scaling each dimension independently.
We verify the effectiveness of our approach in experiments \cref{subsec:effects of svd} and also compare with other decomposition methods in \Cref{SVD_further}.
This reparameterization allows us to reduce the number of trainable parameters while preserving the effectiveness as in optimizing the original parameters.

\subsection{Enhancing expressiveness with feature sharing}
\label{subsec:share}
While reducing parameters can accelerate training with zeroth-order methods, it could also reduce the model expressivity.
To address this issue, we introduce a feature sharing technique, which incorporates a vector $\bu \in \RR^{q}$  within $\Wb$ and can serve as a common base across the partitioned vectors. 
This vector $\bu$ is integrated into original vectors $v_i'$, by using a outer product with a vector $\mathds{1} \in \RR^{m}$, forming the final trainable parameter matrix $\Xi$:
\begin{align}
    \Xi &= \Wb + \bu \otimes \mathds{1}\\ \nonumber
        &= \Ub\text{diag}(\bs)\Vb^T + \bu \otimes \mathds{1}\\ \nonumber
        &= [\bv_1' + \bu | \bv_2' + \bu | \cdots | \bv_{m}' + \bu] \\ \nonumber
        &= [\bw_1 | \bw_2 | \cdots | \bw_{m}] \nonumber
\end{align}
where each $\bw_i \in \RR^{q}$ is a mixed vector that blends the original components $\bv_i$ with the feature sharing by $\bu$. 
The updated parameters for context tokens are then computed as:
\begin{equation}
    \theta_i = \theta_{0,i} + \Mb_i\bw_i
\end{equation}
We argue that the model can better learn complex features, leading to improved performance.
This technique only requires a negligible amount of learnable parameters.
We empirically validate the importance of the feature sharing in \Cref{subsec:shared structure}.

\subsection{Reducing variance with intrinsic-dimensional clipping}
\label{method:clipping}
Through a series of reparameterization schemes, we obtained the final trainable parameters matrix $\Xi$, in which there are $\delta = r(q + m + 1) + q$ parameters in total.
Now, the problem \eqref{eq:PT} reduces to
\begin{equation}
    \min_\Xi \ f(\Xi, \omega; \mathcal{D}). 
\end{equation}
One can consider employing ZO-SGD \eqref{eqn:ZO-SGD} to solve this problem, and yet, as demonstrated in Section~\ref{sec:limitation}, it can still cause slow convergence in practice due to its large variance.

To address this issue, we propose a simple yet robust zeroth-order method based on what we call intrinsic-dimensional gradient clipping mechanism defined as follows
\begin{equation}
          \Xi_{t+1} = \Xi_t - \eta_t \alpha_t \hat{\nabla} f(\Xi_t, \omega; \mathcal{B}),
\end{equation}
where $\alpha_t$ is a scaling factor defined as follows
\begin{equation}
\label{eq:alpha}
        \alpha_t = \min \left( \frac{\sqrt{\delta}}{\sqrt{\sum_{i=1}^\delta \hat{\nabla}f(\Xi_t, \omega; \mathcal{B})_i^2}}, 1 \right),
\end{equation}
where $\delta$ refers to the problem dimensionality as mentioned above;
\ie, it clips the zeroth-order stochastic gradient estimates $\hat{\nabla} f$ if its norm exceeds $\sqrt{\delta}$ as a threshold, while iteratively updating $\Xi_t$.
There are several interesting aspects of this method as described below.

First, the immediate advantage of this approach is that there is no need to manually select the clipping threshold (which is prone to be suboptimal) or perform an expensive hyperparameter search.
Considering that gradient clipping can accelerate the optimization process in general \citep{zhang2019gradient}, and yet, that an appropriate choice of the threshold value is required, this advantage is certainly nontrivial.
We validate this adaptivity by showing that the threshold chosen based on \eqref{eq:alpha} is, quite surprisingly, nearly optimal across diverse training workloads in \Cref{subsec:zoclip}.

Also, the threshold being expressed in terms of the problem dimensionality $\delta$, in particular, $\sqrt{\delta}$, should be reasonably inspiring, considering research results in the literature.
Specifically, we can interpret that previous work suggests setting the clipping threshold (for general first-order methods) to be the standard deviation of estimated gradients \citep{zhang2020improved, Pascanu2012UnderstandingTE, zhang2019gradient, zhang2020adaptive}.
While this is again not quite practical to compute, we notice that it can be done relatively straightforwardly for zeroth-order optimization, since the variance of zeroth-order gradients is inherently bounded in terms of the problem dimensionality $\delta$, thus the standard deviation being $\sqrt{\delta}$.
We explicitly show this in \Cref{lem:varZOSGD} of \Cref{app_sec:lemma}.
    \section{Evaluations}
\label{sec:experiments}

\subsection{Experimental setup}
\paragraph{Datasets and tasks.}
To assess the query efficiency and performance of \zip, we conduct evaluations on standard generalization tasks following the protocols of \citet{CoCoOp, CoOp, BlackVIP}. 
These tasks include few-shot learning, base-to-new generalization, cross-dataset transfer, and out-of-distribution (OOD) generalization.
For few-shot learning, base-to-new generalization, and cross-dataset transfer, we evaluate \zip across 13 diverse image classification tasks:
ImageNet~\citep{ImageNet}, Caltech101~\citep{Caltech101}, OxfordPets~\citep{OxfordPets}, Flowers102~\citep{Flower102}, Food101~\citep{Food101}, FGVCAircraft~\citep{FGVCAircraft}, SUN397~\citep{SUN397}, Resisc45~\citep{Resisc45}, DTD~\citep{DTD}, SVHN~\citep{SVHN}, EuroSAT~\citep{EuroSAT}, CLEVR~\citep{CLEVR}, and UCF101~\citep{UCF101}. 
For evaluating OOD generalization, we employ four established OOD datasets to measure the robustness of \zip under distribution shifts: 
ImageNetV2~\citep{ImageNetV2}, ImageNet-Sketch~\citep{ImageNet-Sketch}, ImageNet-A~\citep{ImageNet-A}, and ImageNet-R~\citep{ImageNet-R}.

\paragraph{Baselines.}
To thoroughly evaluate the performance of \zip, we compare it against a variety of baselines:
(1) manual prompt, where manually composed prompts to conduct the evaluation (human-written prompts are detailed in \Cref{appendix:datasets}); 
(2) state-of-the-art BBPT approaches for VLMs including \barr~\citep{BAR}, \blackvip~\citep{BlackVIP} and \bptvlm~\citep{BPT-VLM}.
For all baselines, we follow the standardized few-shot evaluation protocol across datasets, consistent with~\citet{CoOp, BlackVIP}, which includes specific few-shot splits to ensure a fair comparison.

\paragraph{Implementation details.}
\label{app_subsubsec:implementation details}
We mainly experiment using the CLIP~\citep{CLIP} model with vision transformer~\citep{ViT}, keeping the CLIP model frozen.
We consistently set the number of context tokens $m$ as 8 for \zip and use 5,000 queries across all tasks for all BBPT baselines.
The number of the intrinsic dimensionality $d'$ is set to 500, and the rank of low-rank matrices $r = 5$, resulting in a total of 417 learnable parameters $\delta$ with the formula $r(\left\lfloor d'/m \right\rfloor + m + 1) + \left\lfloor d'/m \right\rfloor$.
Following the previous works for transfer learning~\citep{CoCoOp, CoOp, BlackVIP}, we initialize soft prompts from prompts derived from source tasks.
We use the official code to reproduce BBPT baselines, and the results are averaged over three different random seeds. 
The implementation is available at \url{https://github.com/LOG-postech/ZIP}.

\subsection{Generalization performance}
\label{subsec:main results}
\begin{table}[t!]
    \centering
    \caption{
        Few-shot performance on 13 vision-language tasks.
        All the results are based on $16$-shots per class.
        The \tf{bold numbers} denote the highest accuracy of all baselines on each dataset, and the \underline{underlined values} indicate the second.
        \zip clearly outperforms other BBPT baselines.
        }
    \label{tab:few_shot_learning}
    \vspace{-0.75em}
    \resizebox{\textwidth}{!}{%
        \centering
        \begin{tabular}{lccccccccccccccc}
        \toprule
         \textbf{Method} & \textbf{\#Params} & \rotbox{Caltech101} & \rotbox{OxfordPets} & \rotbox{Flowers102} & \rotbox{Food101} & \rotbox{FGVCAircraft} & \rotbox{SUN397} & \rotbox{DTD} & \rotbox{SVHN} & \rotbox{EuroSAT} & \rotbox{Resisc45} & \rotbox{CLEVR} & \rotbox{UCF101} & \rotbox{ImageNet} & \cellcolor{tabgray} \rotbox{\textbf{Average}} \\ 
         
         \midrule
         
        Manual Prompt & 0k & \underline{93.2} & 89.1 & \textbf{70.8} & \underline{85.9} & \underline{24.8} & \underline{62.6} & \underline{44.1} & 19.2 & 48.4 & \underline{57.2} & 15.2 & \underline{67.5} & \textbf{66.7} & \cellcolor{tabgray} \underline{57.3} \\

        \barr & 37.6k & 92.5 & 85.6 & 65.0 & 83.0 & 21.6 & 62.4 & 42.9 & 19.8 & 51.6 & 53.9 & 18.1 & 63.5 & 64.0 & \cellcolor{tabgray} 55.7 \\
        
        \blackvip & 9.9k & 92.6 & 86.9 & 63.5 & 83.5 & 21.5 & 62.3 & 43.1 & 27.5 & 44.4 & 55.5 & \underline{25.9} & 64.0 & 65.5 & \cellcolor{tabgray} 56.6 \\
        
        \bptvlm & 4.0k & 88.6 & \underline{89.4} & 66.9 & 84.2 & 24.0 & 53.2 & 40.6 & \underline{29.8} & \underline{53.0} & 56.2 & 16.4 & 64.8 & 55.5 & \cellcolor{tabgray} 55.6 \\
        
        \tf{\zip} & 0.4k & \textbf{94.0} & \textbf{92.3} & \underline{70.4} & \textbf{86.4} & \textbf{26.8} & \textbf{63.3} & \textbf{47.8} & \textbf{47.8} & \textbf{64.6} & \textbf{66.1} & \textbf{28.4} & \textbf{69.8} & \underline{66.2} & \cellcolor{tabgray} \textbf{63.4} \\
        \bottomrule
        \end{tabular}
    }
    \vspace{-1em}
\end{table}
\begin{table}[t!]
    \centering
    \caption{
        Base-to-new generalization performance.
        H represents the harmonic mean, providing a balanced measure of accuracy across seen and unseen classes~\citep{HarMean}. 
        \zip consistently outperforms \barr, \blackvip, and \bptvlm across base, new, and harmonic mean evaluations.
        }
    \label{tab:b2n}
    \vspace{-0.75em}
    \resizebox{\linewidth}{!}{%
        \begin{tabular}{l ccc ccccccccccc c}
        \toprule
        Method & Set  & \rotbox{Caltech101} & \rotbox{OxfordPets} & \rotbox{Flowers102} & \rotbox{Food101} & \rotbox{FGVCAircraft} & \rotbox{SUN397} & \rotbox{DTD} & \rotbox{SVHN} & \rotbox{EuroSAT} & \rotbox{Resisc45} & \rotbox{CLEVR} & \rotbox{UCF101} & \rotbox{ImageNet} & \cellcolor{tabgray} \rotbox{\tf{Average}}\\
        \midrule
        \barr && 96.5 & 87.3 & 67.5 & 87.5 & 25.6 & \underline{69.2} & 51.2 & 23.5 & 60.2 & 66.1 & 27.5 & 66.4 & 69.6 & \cellcolor{tabgray} 61.4 \\
        \blackvip && \underline{96.6} & 87.7 & \underline{67.9} & 87.6 & 25.8 & 69.0 & 51.8 & 26.4 & 66.4 & 69.9 & 38.9 & 67.0 & \underline{70.3} &\cellcolor{tabgray} 63.5\\
        \bptvlm && 93.2 & \underline{90.6} & 66.9 & \underline{88.7} & \underline{29.1} & 65.3 & \underline{53.2} & \underline{45.4} & \underline{70.3} & \underline{72.0} & \underline{41.5} & \underline{68.3} & 66.3 & \cellcolor{tabgray} \underline{65.4} \\
        \tf{\zip} & \multicolumn{1}{c}{\multirow{-4}{*}{Base}} & \textbf{97.0} & \textbf{94.9} & \textbf{72.1} & \textbf{89.9} & \textbf{29.7} & \textbf{70.3} & \textbf{61.7} & \textbf{52.9} & \textbf{84.0} & \textbf{81.6} & \textbf{50.1} & \textbf{75.1} & \textbf{72.1} & \cellcolor{tabgray} \textbf{71.6} \\
        \midrule
        \barr && \textbf{94.5} & 94.9 & 73.2 & 88.9 & 29.2 & \textbf{74.6} & \textbf{55.8} & 27.3 & \textbf{72.1} & \underline{62.3} & 27.1 & \textbf{73.3} & 64.9 & \cellcolor{tabgray} \underline{64.5} \\
        \blackvip && 93.2 & 90.9 & \textbf{74.5} & \underline{89.4} & 30.9 & \underline{73.9} & \underline{55.4} & 21.8 & 48.8 & 61.2 & \underline{28.0} & \underline{72.6} & \textbf{66.8} & \cellcolor{tabgray} 62.1\\
        \bptvlm && 92.7 & \underline{95.8} & 72.7 & 85.4 & \underline{32.3} & 64.8 & 45.3 & \underline{40.1} & 47.0 & 61.3 & \textbf{28.4} & 65.0 & 55.2 & \cellcolor{tabgray} 60.5\\
        \tf{\zip} & \multicolumn{1}{c}{\multirow{-4}{*}{New}} & \underline{94.3} & \textbf{97.0} & \underline{73.4} & \textbf{90.0} & \textbf{32.9} & 71.5 & 51.0 & \textbf{45.8} & \underline{64.4} & \textbf{65.2} & 26.8 & 69.5 & \underline{65.6} & \cellcolor{tabgray} \textbf{65.2} \\
        \midrule
        \barr && \underline{95.5} & 90.9 & 70.2 & 88.2 & 27.3 & \textbf{71.8} & 53.4 & 25.3 & \underline{65.6} & 64.1 & 27.3 & 67.7 & 67.2 &\cellcolor{tabgray} \underline{62.9} \\
        \blackvip && 94.9 & 89.3 & \underline{71.0} & \underline{88.5} & 28.1 & \underline{71.4} & \underline{53.5} & 23.9 & 56.3 & 65.3 & 32.6 & \underline{69.7} & \underline{68.5} &\cellcolor{tabgray} 62.8\\
        \bptvlm &&  92.9 & \underline{93.1} & 69.7 & 87.0 & \underline{30.6} & 65.0 & 48.9 & \underline{42.6} & 56.3 & \underline{66.2} & \underline{33.7} & 66.6 & 60.2 &\cellcolor{tabgray} 62.9\\
        \tf{\zip} & \multicolumn{1}{c}{\multirow{-4}{*}{Harmonic}} & \textbf{95.6} & \textbf{95.9} & \textbf{72.7} & \textbf{89.9} & \textbf{31.2} & 70.9 & \textbf{55.8} & \textbf{49.1} & \textbf{72.9} & \textbf{72.5} & \textbf{34.9} & \textbf{72.2} & \textbf{68.7} & \cellcolor{tabgray} \textbf{67.9} \\
        \bottomrule
        \end{tabular}
        \vspace{-5.5em}
    }
\end{table}
\begin{table}[t!]
    \centering
    \caption{
        Cross-dataset transfer and out-of-distribution generalization performance. 
        After training on ImageNet (\ie, source) with 16-shot data per class, \zip is evaluated on 12 target datasets for CDT and 4 ImageNet variants for OOD. 
        \zip demonstrates better transferability and generalizability, outperforming \barr, \blackvip, and \bptvlm.
    }
    \label{tab:xd}
    \vspace{-0.75em}
    \resizebox{\linewidth}{!}{%
        \begin{tabular}{l cc cccccccccc cc ccccc}
        \toprule
        & \tf{Source} & \multicolumn{13}{c}{\tf{CDT Target}} & \multicolumn{5}{c}{\tf{OOD Target}} \\ 
        \cmidrule(lr){2-2} \cmidrule(lr){3-15} \cmidrule(lr){16-20}
        \textbf{Method} & \rotbox{ImageNet} & \rotbox{Caltech101} & \rotbox{OxfordPets} & \rotbox{Flowers102} & \rotbox{Food101} & \rotbox{FGVCAircraft} & \rotbox{SUN397} & \rotbox{DTD} & \rotbox{SVHN} & \rotbox{EuroSAT} & \rotbox{Resisc45} & \rotbox{CLEVR} & \rotbox{UCF101} & \cellcolor{tabgray} \rotbox{\textbf{Average}} & \rotbox{ImageNet-A} & \rotbox{ImageNetV2} & \rotbox{ImageNet-R} & \rotbox{ImageNet-Sketch} & \cellcolor{tabgray} \rotbox{\textbf{Average}}\\
        \midrule
        \barr & 64.0 & 92.3 & 84.3 & 64.3 & 83.1 & \underline{20.8} & \underline{61.0} & \underline{42.2} & \underline{20.0} & \textbf{49.6} & 50.6 & 14.5 & \underline{63.0} & \cellcolor{tabgray} 53.8 & 40.2 & 57.5 & 72.0 & 43.8 & \cellcolor{tabgray} 53.4 \\
        \blackvip & \underline{65.5} & \textbf{92.5} & \textbf{86.2} & \underline{64.9} & \underline{83.6} & \textbf{22.3} & \textbf{62.0} & \textbf{43.3} & 18.7 & 40.5 & \textbf{55.7} & \underline{15.2} & \textbf{64.1} & \cellcolor{tabgray} \underline{54.1} & \underline{42.5} & \underline{59.2} & \underline{73.1} & \underline{44.6} & \cellcolor{tabgray} \underline{54.9}\\
        \bptvlm & 55.5 & 80.7 & 77.7 & 50.3 & 77.6 & 16.3 & 43.8 & 30.8 & 15.5 & 34.6 & 37.7 & 12.4 & 54.8 & \cellcolor{tabgray} 44.4 & 32.7 & 46.7 & 61.7 & 33.5 & \cellcolor{tabgray} 43.7\\
        \tf{\zip} & \textbf{66.2} & \underline{92.4} & \underline{85.7} & \textbf{65.9} & \textbf{84.6} & 20.5 & 60.1 & 41.5 & \textbf{26.4} & \underline{46.7} & \underline{54.7} & \textbf{17.4} & 61.3 & \cellcolor{tabgray} \textbf{54.8} & \textbf{47.1} & \textbf{59.7} & \textbf{75.2} & \textbf{45.5} & \cellcolor{tabgray} \textbf{56.9}\\
        \bottomrule
        \end{tabular}
    }
    \vspace{-1.5em}
\end{table}

We present empirical evidence showcasing the effectiveness and robustness of our proposed method, \zip, across 13+ vision-language tasks. 
Our results, summarized in \Cref{tab:few_shot_learning}, \ref{tab:b2n}, and \ref{tab:xd}, cover evaluations on few-shot accuracy, base-to-new generalization, and cross-dataset transfer with out-of-distribution generalization.
The experiments reveal two main insights:
(i) \zip consistently outperforms other BBPT baselines across various tasks;
(ii) \zip achieves better robustness to unseen data distribution compared to existing BBPT methods;
Detailed analyses of these findings are provided below.

\paragraph{Few-shot performance.}
As represented in \Cref{tab:few_shot_learning}, our experimental result indicates that \zip consistently outperforms state-of-the-art BBPT approaches, including \barr, \blackvip, and \bptvlm, across 11 out of 13 datasets.  
On average, \zip achieves accuracy gains of +7.7\%, +6.8\%, and +7.8\% over \barr, \blackvip, and \bptvlm, respectively, demonstrating notable effectiveness in few-shot learning.
In particular, \zip excels on datasets requiring coarse semantic understanding, with improvements of +11.6\% on EuroSAT, and +8.9\% on Resisc45 compared to the second-best method.
Additionally, \zip shows remarkable performance in digit recognition, surpassing the next best method by +18.0\% on the SVHN dataset, further highlighting its capability in few-shot learning.

\paragraph{Base-to-new generalization.}
\label{subsec:base-to-new generalization}
\Cref{tab:b2n} presents the base-to-new generalization results, where models are trained on base classes and evaluated on both base and new classes across 13 datasets.
\zip consistently outperforms all BBPT baselines, achieving the highest base, new, and harmonic mean scores. 
By leveraging its lower parameter count and the robustness of zeroth-order optimization, \zip effectively mitigates overfitting, as its reduced model capacity and rough gradient estimates help avoid fitting to noisy outliers, resulting in better generalization.

\paragraph{Cross-dataset transfer \& Out-of-distribution generalization.}
\label{subsec:cross-dataset}
To assess robustness in challenging scenarios, we evaluate \zip for cross-dataset transfer (CDT) and out-of-distribution (OOD) generalization. 
As shown in \Cref{tab:xd}, \zip, trained on ImageNet (\ie, source), demonstrates competitive generalization capabilities in the CDT setting, achieving slight improvements of 1.0\% over \barr and 0.7\% over \blackvip, with a more notable gain of 10.4\% over \bptvlm across 12 diverse target datasets.
More significantly, in the OOD evaluations on four ImageNet variants, \zip consistently outperforms all baselines, achieving substantial gains of 3.5\% over \barr, 2.0\% over \blackvip, and a remarkable 13.2\% improvement over \bptvlm. 
These results highlight the exceptional robustness and adaptability of \zip in handling domain shifts, making it particularly effective for real-world applications where OOD generalization is critical.

\subsection{Query efficiency}
\label{subsec:query efficiency}
\begin{figure}[t!]
    \centering
    \begin{subfigure}{0.22\linewidth}
        \centering
        \includegraphics[width=\linewidth]{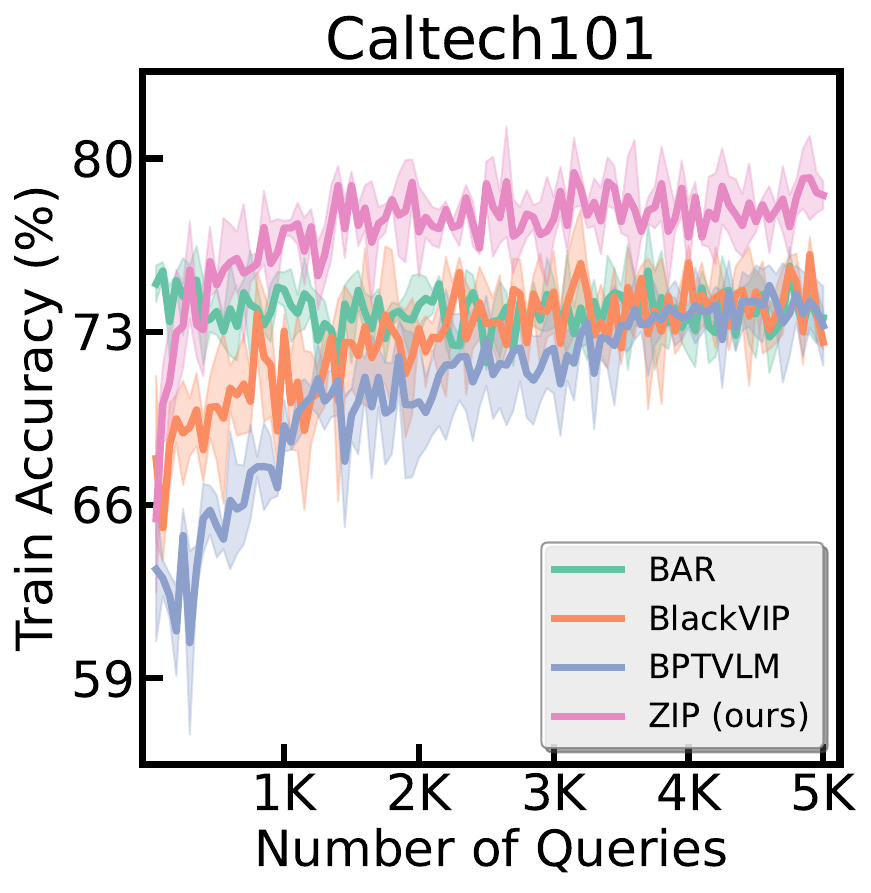}
    \end{subfigure}
    \begin{subfigure}{0.22\linewidth}
        \centering
        \includegraphics[width=\linewidth]{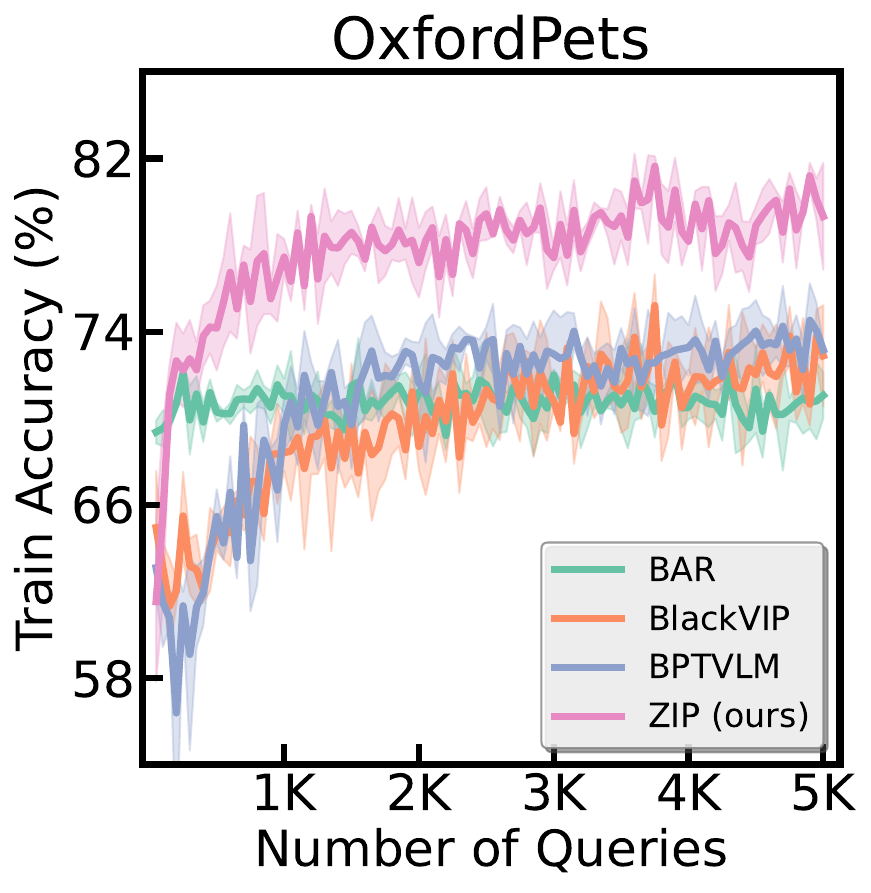}
    \end{subfigure}
    \begin{subfigure}{0.22\linewidth}
        \centering
        \includegraphics[width=\linewidth]{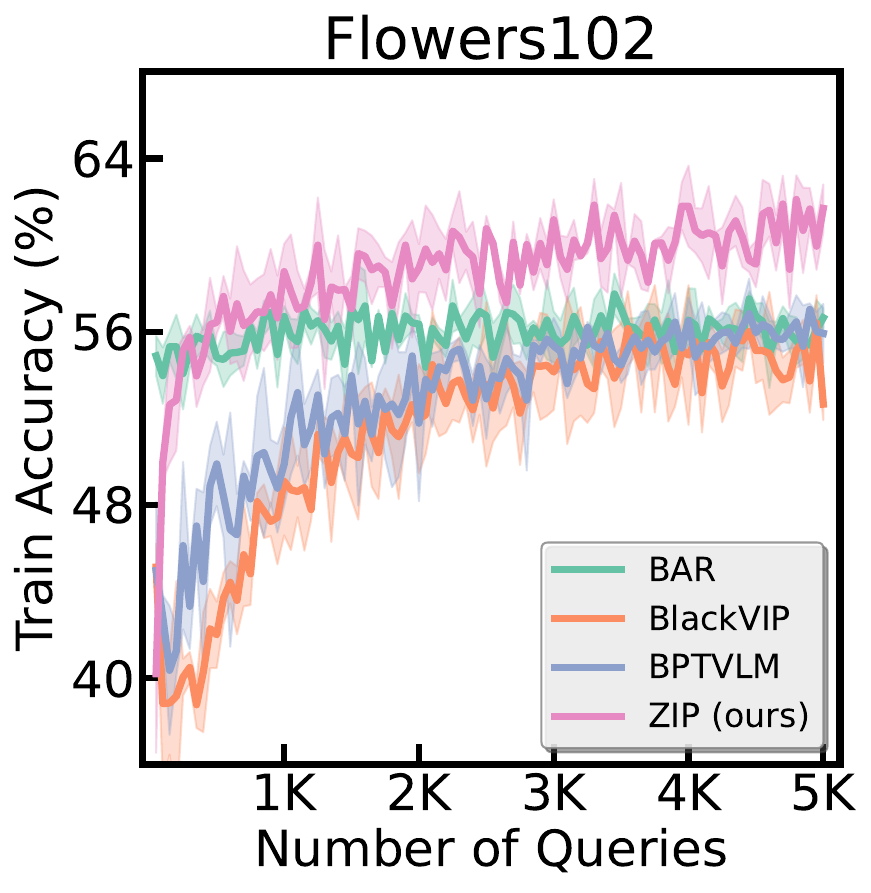}
    \end{subfigure}
    \begin{subfigure}{0.22\linewidth}
        \centering
        \includegraphics[width=\linewidth]{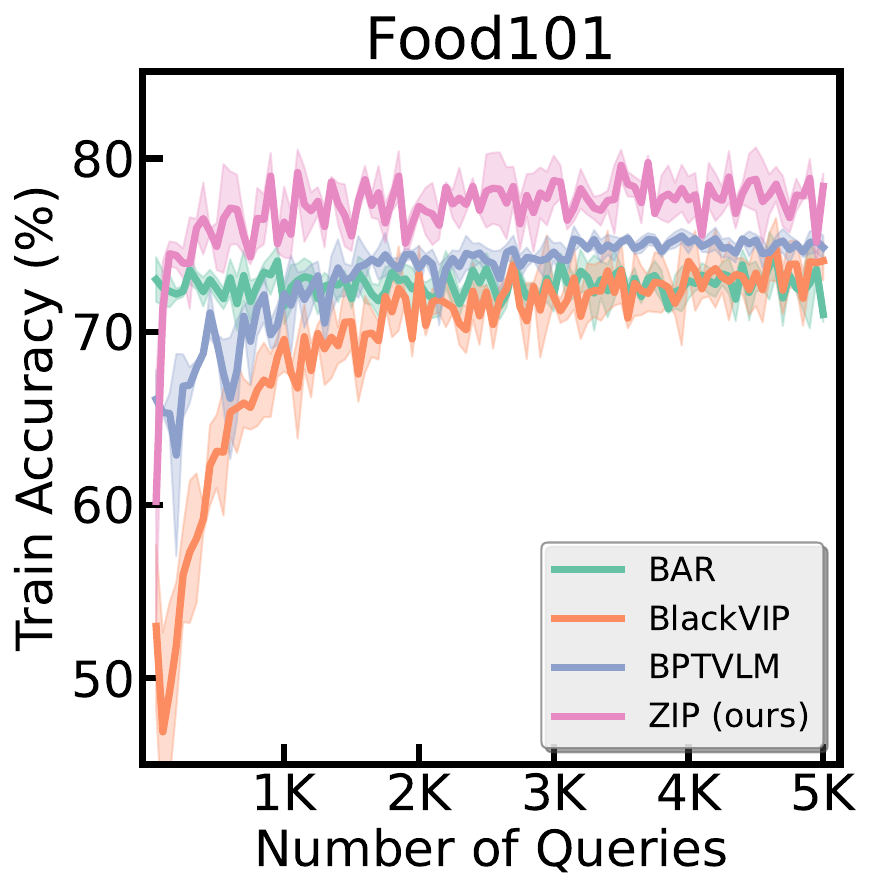}
    \end{subfigure}
    \vspace{-0.75em}
    \caption{
        Training performance measured on Caltech101, OxfordPets, Flowers102, and Food101.
        We provide more results on other datasets in \Cref{fig:appendix_convergence_rate}.
        }
    \label{fig:convergence_rate}
    \vspace{-0.75em}
\end{figure}
\begin{figure}[t!]
    \centering
    \includegraphics[width=0.9\linewidth]{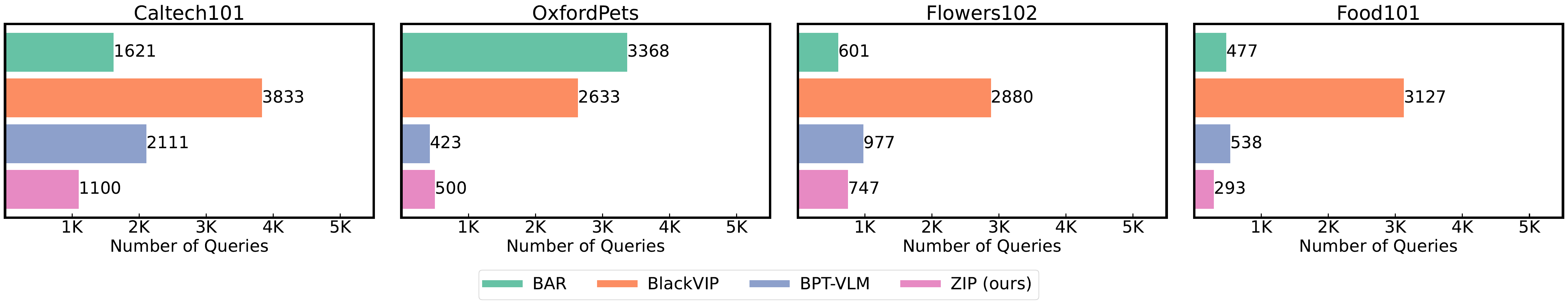}
    \vspace{-0.75em}
    \caption{
        Number of queries to reach a target accuracy.
        For the datasets Caltech101, OxfordPets, Flowers102, Food101, and FGVCAircraft, \zip requires fewer API calls to reach target accuracy thresholds in most cases. 
        This demonstrates considerable query efficiency when compared to other BBPT methods.
        We provide more results on other datasets in \Cref{fig:appendix convergence threshold}.
        }
    \label{fig:convergence threshold}
    \vspace{-1.75em}
\end{figure}

\begin{wrapfigure}{r}{0.35\linewidth}
    \centering
    \vspace{-1.25em}
    \begin{subfigure}{0.8\linewidth}
        \centering
        \includegraphics[width=\linewidth]{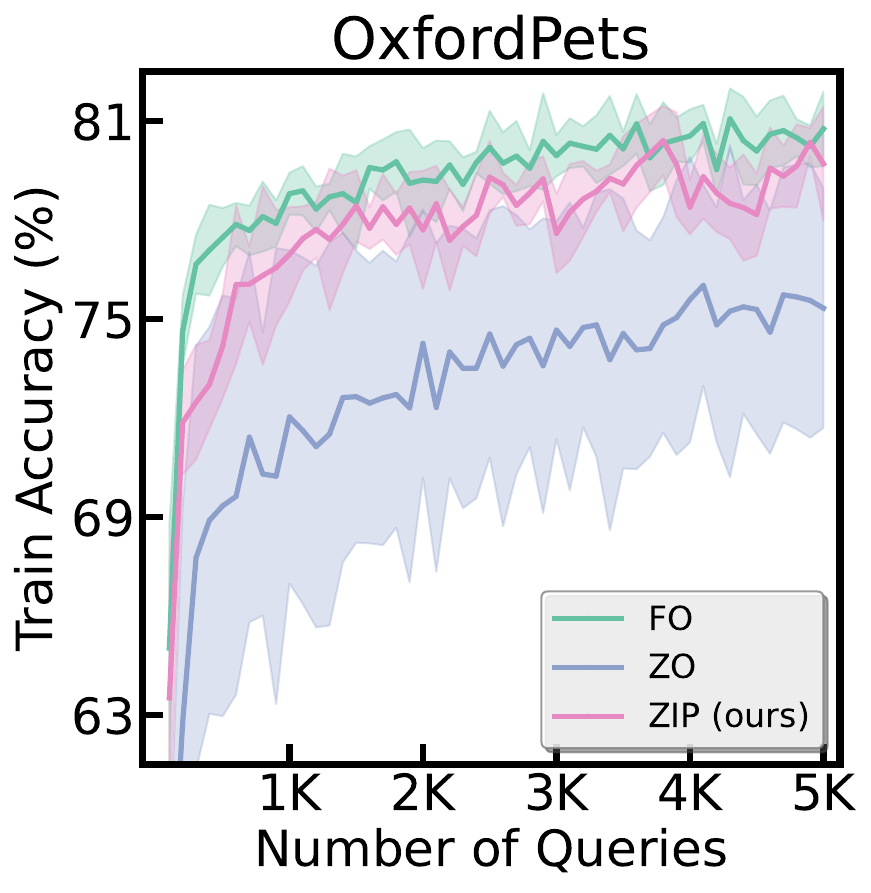}
    \end{subfigure}
    \vspace{-0.5em}
    \caption{
        Training curves of first-order (FO), zeroth-order (ZO) and \zip.
        \zip effectively bridges the gap between FO and ZO with notably faster training and high accuracy.
        }
    \vspace{-1em}
    \label{fig:FOvsZO}
\end{wrapfigure}
This section provides empirical evidence highlighting the query efficiency of \zip.
We start by tracking the training progress of different BBPT methods, ensuring all operate within the same computational budget. 
For a fair comparison, \zip and other baselines are allocated a budget of 5,000 queries.
This query budget was chosen to reflect a more practical scenario, as many existing methods often require thousands of epochs~\citep{BlackVIP}, which is unrealistic for real-world applications with strict API query limitations. 
By setting a more feasible budget, we aim to evaluate efficiency of each methods under conditions that closely resemble practical deployment settings.

\Cref{fig:convergence_rate} shows that \zip consistently achieves faster training speed and higher accuracy than other BBPT methods under identical query budget constraints. 
This efficiency is attributed to the effective combination of low-rank approximation with diagonal matrix and our specialized threshold for zeroth-order optimization, which accelerates training, while the feature sharing and compactness of the low-rank representation enhance overall performance. 
These design elements work synergistically, allowing \zip to achieve rapid training progress and improved accuracy.
Detailed analysis of the module combination is provided in \Cref{app:combination}.
To further analyze query efficiency, we evaluate the number of queries required to reach target accuracy, which is determined as the minimum of the maximum accuracy achieved by all methods. 
As shown in \Cref{fig:convergence threshold}, \zip demonstrates strong query efficiency, achieving the best performance in datasets like Caltech101 and Food101, and maintaining competitive efficiency in OxfordPets and Flowers102, even when not the absolute best. 
The overall results, summarized in \Cref{subfig:query pitch}, show that \zip achieves over a 48\% improvement in query efficiency compared to the second-best BBPT method. 
This indicates that \zip utilizes its query budget effectively, making it particularly suited in resource-constrained scenarios. 

We also compare the query efficiency of \zip with first-order and naive zeroth-order optimization, using 8 tokens and 5,000 queries across all methods. 
As shown in \Cref{fig:FOvsZO}, \zip bridges the gap between first-order and zeroth-order optimization, achieving training speeds similar to first-order on the OxfordPets dataset.
While zeroth-order methods typically exhibit dependence on $d$ for training speed, the efficient design of \zip allows it to match first-order optimization behavior.
This demonstrates the enhanced query efficiency of \zip, making it highly suitable for practical applications where efficient resource utilization is critical.
Further details on query efficiency across additional datasets can be found in \Cref{fig:appendix_convergence_rate}, \ref{fig:appendix convergence threshold} and \ref{fig:appendix FOvsZO}.
    \section{Ablations}
\label{sec:ablation}

\subsection{Intrinsic-dimensional clipping}
\label{subsec:zoclip}
\begin{figure}[t!]
    \centering
    \begin{subfigure}{0.485\linewidth}
        \centering
        \includegraphics[width=0.49\linewidth]{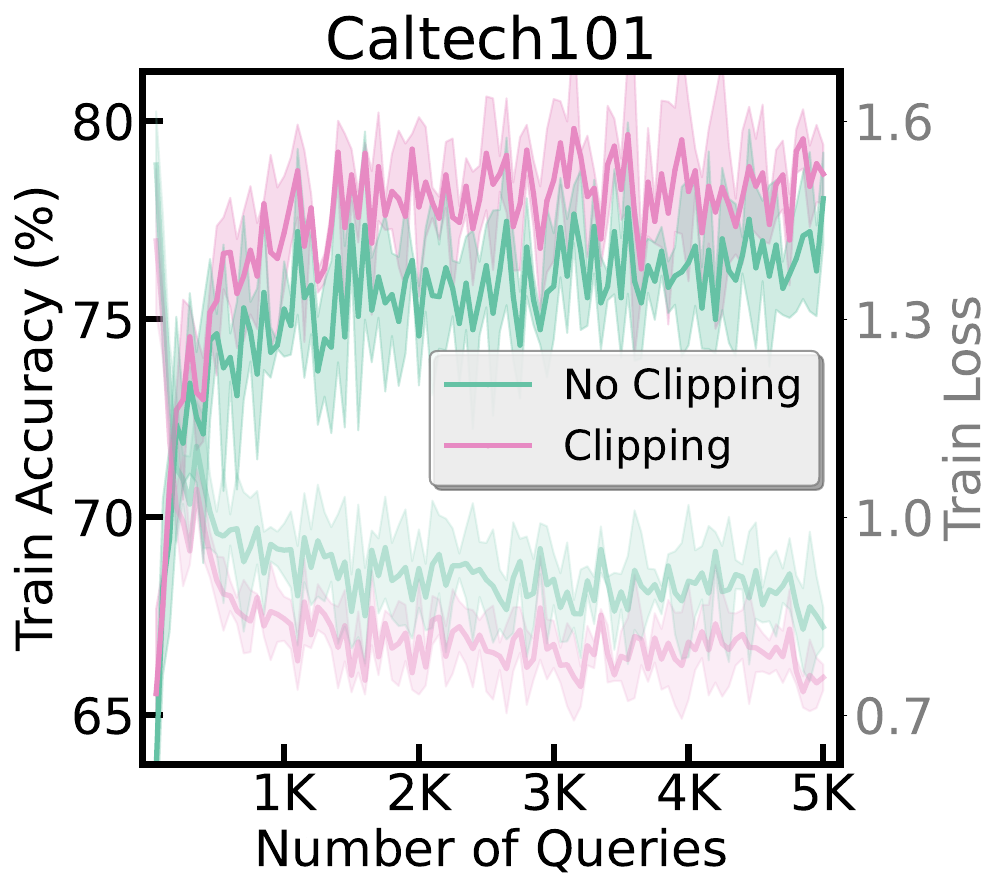}
        \hfill
        \includegraphics[width=0.49\linewidth]{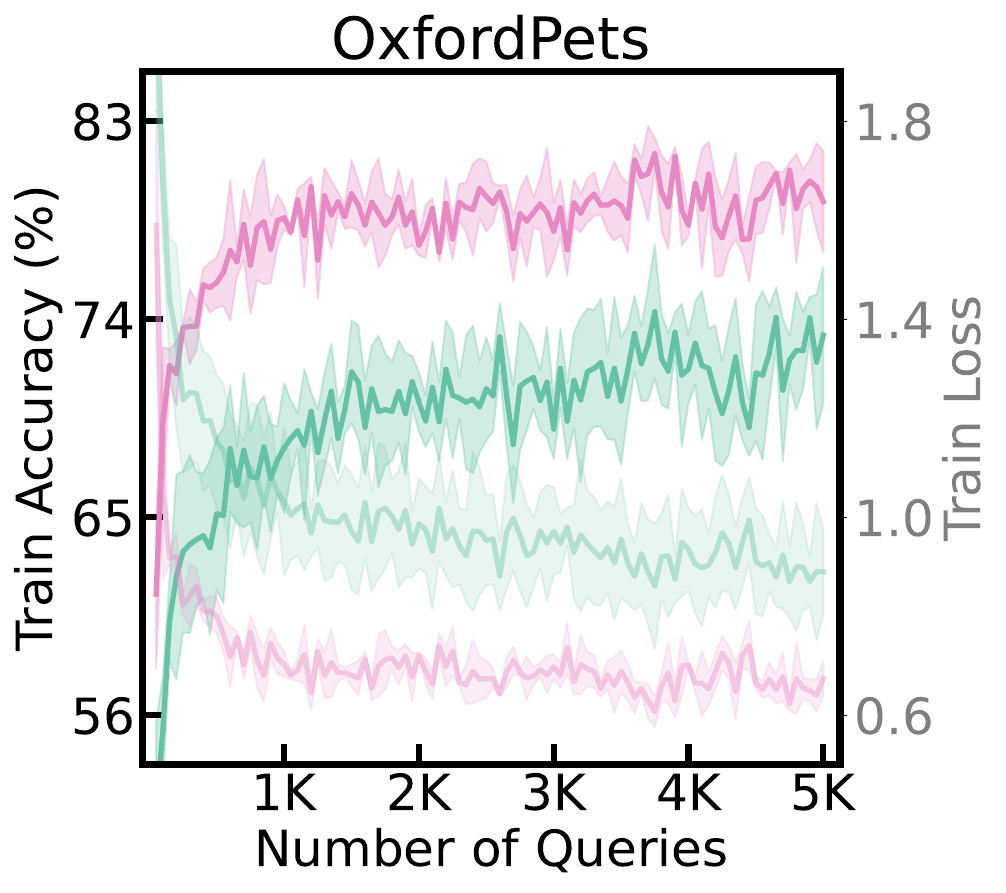}
        \subcaption{Effects of our clipping threshold (in \Cref{subsec:zoclip})}
        \label{subfig:effects of zo-clip}
    \end{subfigure}
    \hspace{1em}
    \begin{subfigure}{0.405\linewidth}
        \centering
        \includegraphics[width=0.49\linewidth]{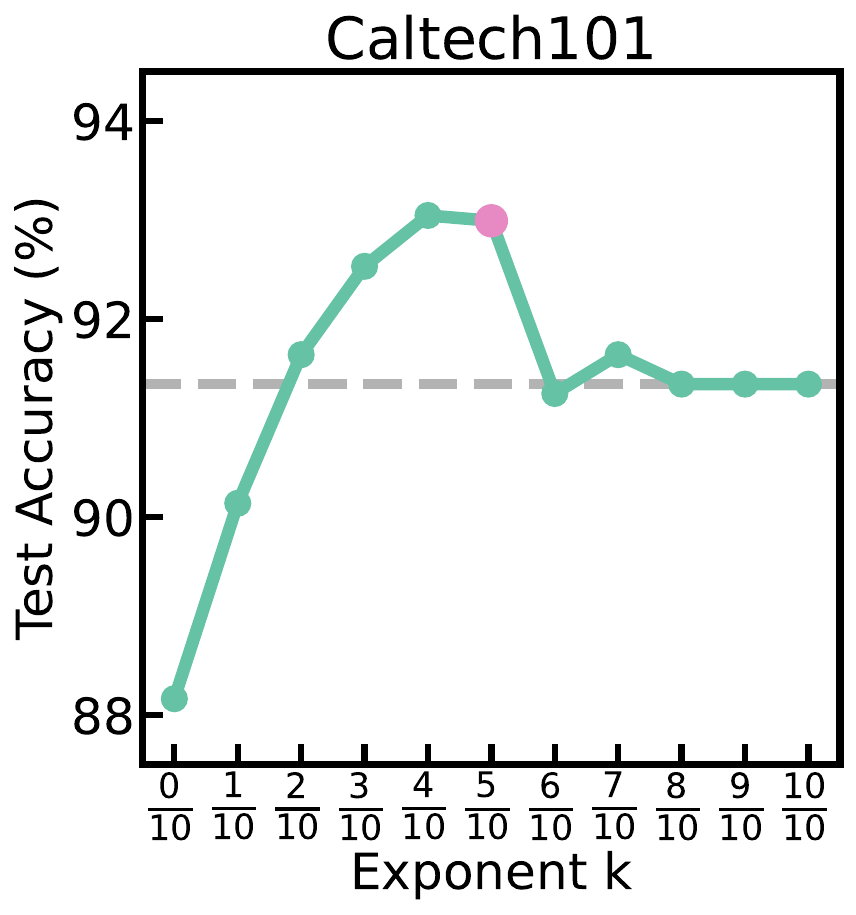}
        \hfill
        \includegraphics[width=0.49\linewidth]{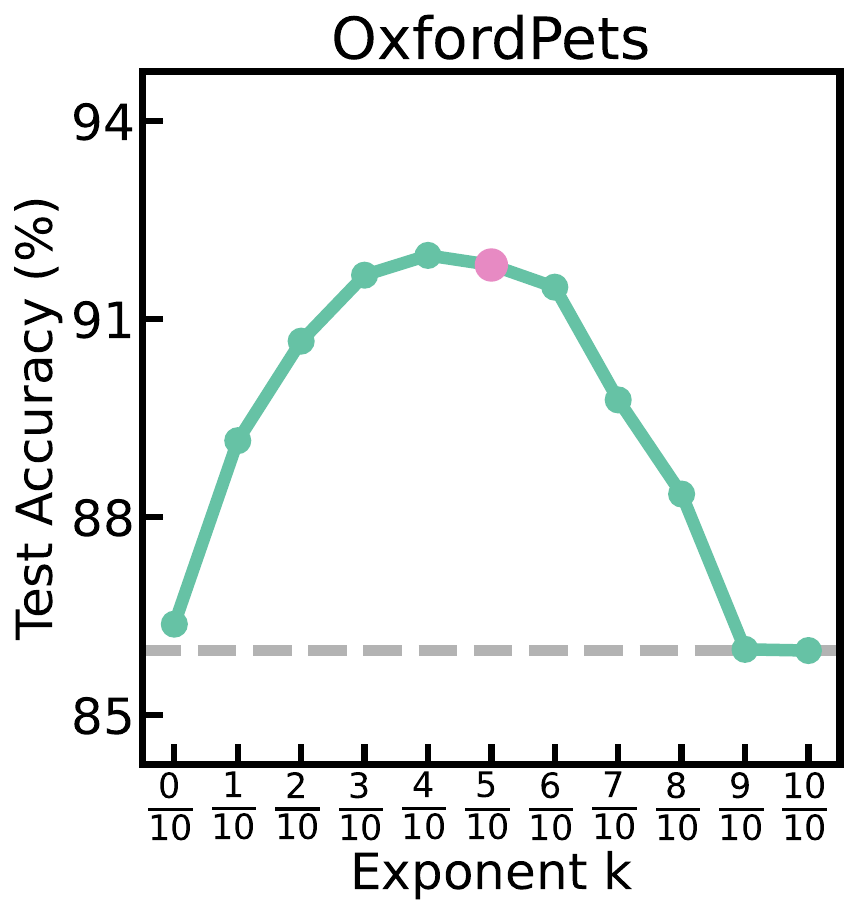}
        \subcaption{clipping method varying threshold $\delta^{k}$}
        \label{subfig:varying delta}
    \end{subfigure}
    \vspace{-0.75em}
    \caption{
        Effects of intrinsic-dimensional clipping.
        (a) Training progress comparison with and without intrinsic-dimensional clipping. 
        (b) Test accuracy with varying thresholds $\delta^{k}$. 
        The red point indicates the chosen threshold of \zip, which consistently achieves near-optimal accuracy.
        }
    \label{fig:grad_diff}
    \vspace{-1em}
\end{figure}
In this section, we evaluate the effectiveness of our clipping method, with setting threshold 
as $\sqrt{\delta}$. 
We begin by tracking the training progress of \zip with intrinsic-dimensional clipping and the one without.
As shown in Figure~\ref{subfig:effects of zo-clip}, \zip with our clipping threshold consistently achieves faster training speeds and higher accuracy, indicating its efficiency in enhancing zeroth-order optimization. 
This improvement is largely due to the variance-reducing nature of clipping, which results in more stable gradient estimates and consequently accelerates the training process.

To further validate the effectiveness of gradient clipping with our threshold, we compared $\sqrt{\delta}$ threshold against various alternative values to ensure its optimality. 
As shown in Figure~\ref{subfig:varying delta}, the $\sqrt{\delta}$ threshold consistently achieved near-optimal performance on Caltech101 and OxfordPets, outperforming other clipping settings ranging from 1 ($=\delta^{0/10}$) to $\delta$ ($=\delta^{10/10}$). 
The gray dashed line, representing no clipping, further underscores the advantage of $\sqrt{\delta}$ threshold.
These results highlight the effectiveness of the $\sqrt{\delta}$ threshold, demonstrating its capability as an efficient clipping strategy for zeroth-order optimization without requiring extensive hyperparameter tuning. 
Additional validation results on other datasets are available in \Cref{fig:appendix_threshold} and \ref{fig:appendix clipping}.

\subsection{Feature sharing}
\label{subsec:shared structure}
\begin{table}[t!]
    \centering
    \caption{
        Benefits of feature sharing over unshared.
        Integrating shared features consistently boosts model expressive power and accuracy across diverse tasks, demonstrating improved performance.
        }
    \label{tab:appendix shared}
    \vspace{-0.75em}
    \resizebox{\textwidth}{!}{%
        \centering
        \begin{tabular}{lcccccccccccccc}
        \toprule
         \textbf{Method} &  \rotbox{Caltech101} & \rotbox{OxfordPets} & \rotbox{Flowers102} & \rotbox{Food101} & \rotbox{FGVCAircraft} & \rotbox{SUN397} & \rotbox{DTD} & \rotbox{SVHN} & \rotbox{EuroSAT} & \rotbox{Resisc45} & \rotbox{CLEVR} & \rotbox{UCF101} & \rotbox{ImageNet} & \cellcolor{tabgray} \rotbox{\textbf{Average}} \\ 
         
         \midrule
        
        Unshared & 93.1 & 90.8 & 67.1 & 86.0 & 25.2 & 59.0 & 44.4 & 40.9 & 60.6 & 63.3 & 20.2 & 67.4 & 65.2 & \cellcolor{tabgray} 60.2 \\

        Shared & \textbf{93.5} & \textbf{91.8} & \textbf{70.6} & \textbf{86.2} & \textbf{26.3} & \textbf{62.2} & \textbf{46.5} & \textbf{43.8} & \textbf{66.2} & \textbf{65.6} & \textbf{24.4} & \textbf{69.0} & \textbf{65.5} & \cellcolor{tabgray} \textbf{62.4} \\
        
        \bottomrule
        \end{tabular}
    }
    \vspace{-1.25em}
\end{table}
To evaluate the expressive power of feature sharing, we compared the performance of models with and without feature sharing. 
As shown in \Cref{tab:appendix shared}, models utilizing feature sharing consistently achieved higher accuracy, increasing the overall average score from 60.2\% to 62.4\%.
These consistent gains across diverse datasets highlight the effectiveness of features sharing in retaining model expressiveness and improving performance, even when parameters are reduced. 

Furthermore, feature sharing shows consistent benefits across diverse datasets, underscoring its robustness as a technique for maintaining accuracy while optimizing parameter efficiency. 
To evaluate whether feature sharing improves generalization to unseen datasets, we analyze its impact on base-to-new generalization, cross-dataset transfer, and out-of-distribution scenarios, as detailed in \Cref{feature_sharing_generalization}. 
These findings validate the role of feature sharing in enhancing generalization and its potential utility in broader domain adaptation tasks.

\begin{table}[t!]
    \centering
    \caption{
        Benefits of low-rank approximation with diagonal matrix.
        Comparing our method against standard dimensionality reduction, demonstrating notable test accuracy improvements.
        }
    \label{tab:abl svd}
    \vspace{-0.75em}
    \resizebox{\textwidth}{!}{%
        \centering
        \begin{tabular}{lcccccccccccccc}
        \toprule
        \textbf{Method} &  \rotbox{Caltech101} & \rotbox{OxfordPets} & \rotbox{Flowers102} & \rotbox{Food101} & \rotbox{FGVCAircraft} & \rotbox{SUN397} & \rotbox{DTD} & \rotbox{SVHN} & \rotbox{EuroSAT} & \rotbox{Resisc45} & \rotbox{CLEVR} & \rotbox{UCF101} & \rotbox{ImageNet} & \cellcolor{tabgray} \rotbox{\textbf{Average}} \\ 
        \midrule

        Standard & 90.9 & 88.1 & \textbf{67.5} & 84.6 & 23.8 & 57.9 & 43.2 & 31.5 & 56.5 & 58.3 & 18.3 & 65.3 & 62.3 & \cellcolor{tabgray} 57.6 \\

        Ours & \textbf{93.1} & \textbf{90.8} & 67.1 & \textbf{86.0} & \textbf{25.2} & \textbf{59.0} & \textbf{44.4} & \textbf{40.9} & \textbf{60.6} & \textbf{63.3} & \textbf{20.2} & \textbf{67.4} & \textbf{64.8} & \cellcolor{tabgray} \textbf{60.2} \\
        
        \bottomrule
        \end{tabular}
    }
    \vspace{-1em}
\end{table}

\subsection{Low rank approximation with diagonal matrix}
\label{subsec:effects of svd}
\begin{wrapfigure}{r}{0.35\linewidth}
    \centering
    \vspace{-1.25em}
    \begin{subfigure}{0.8\linewidth}
        \centering
        \includegraphics[width=\linewidth]{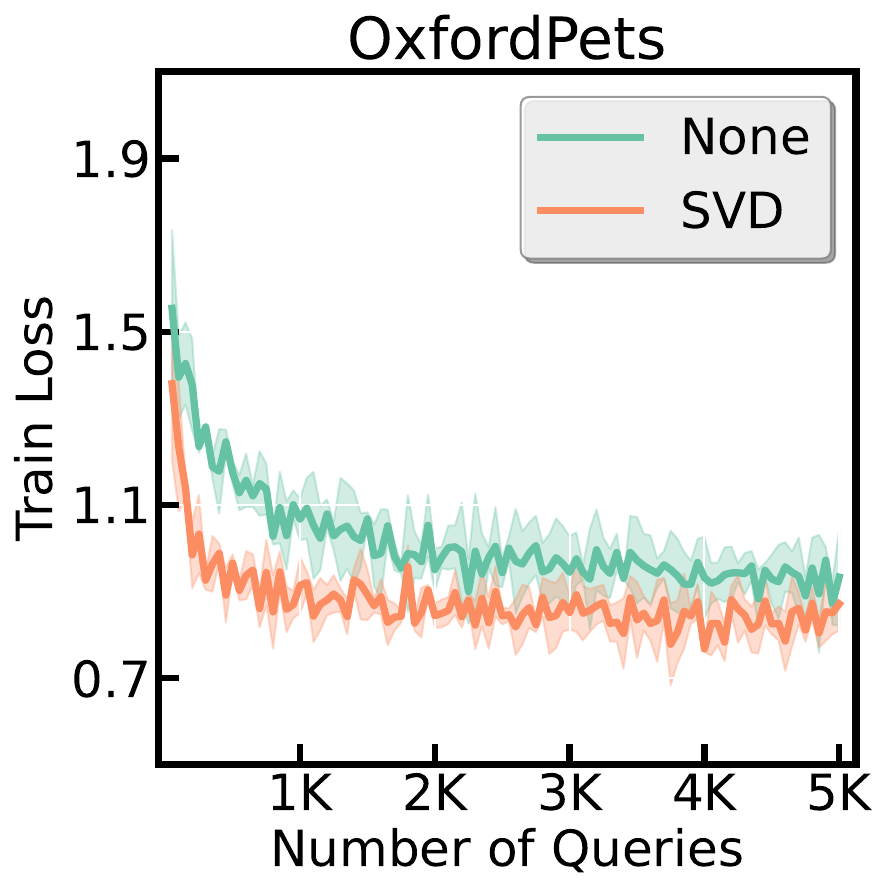}
    \end{subfigure}
    \caption{
        Effects of low-rank approximation with diagonal matrix.
        Our method improves training efficiency compared to standard dimensionality reduction.
        }
    \label{fig:svd effects}
    \vspace{-1em}
\end{wrapfigure}
The low-rank approximation with a diagonal matrix is pivotal in enhancing both the efficiency and performance of our method. 
Unlike naive lower-dimensional projections, this approach effectively preserves the most crucial components of the parameter space, allowing for accelerated training without compromising the expressive power of the model. 

As shown in \Cref{fig:svd effects} and \Cref{tab:abl svd}, this approach not only accelerates the training process but also improves model accuracy. 
For instance, the average accuracy across datasets increased from 57.6\% to 60.2\% with the application of low-rank approximation using a diagonal matrix. 
These gains highlight the effectiveness of the technique in enhancing training efficiency and overall model performance, making it particularly advantageous for optimizing zeroth-order based prompt tuning compared to more straightforward projection methods. 
Additional results on other datasets further validating this improvement can be found in \Cref{fig:appendix svd}.
    \section{Conclusion}
In this paper, we propose \zip, a new method for prompt-tuning black-box vision-language models.
Extensive experiments show that \zip outperforms state-of-the-art BBPT methods in generalization performance while offering faster training with far less number of queries.
We believe that our work unlocks numerous opportunities for future work including, for instance, extending to a broader range of foundation models and addressing diverse prompting schemes in different black-box optimization scenarios.
We intend to explore these ideas in future work.

\section*{Reproducibility statement}
\vspace{-1em}
To ensure reproducibility, we provide detailed information on our experimental setup in \Cref{app_sec:hyperparameters}, including training and evaluation procedures. 
All datasets used in this work are publicly available. 
We conduct our experiments on NVIDIA 3090, A6000, A100 and Intel Gaudi-v2 GPUs.

\section*{Acknowledgement}
\vspace{-1em}
This work was partly supported by the Institute of Information \& communications Technology Planning \& Evaluation (IITP) grant funded by the Korean government (MSIT) (RS-2019-II191906, Artificial Intelligence Graduate School Program (POSTECH), RS-2024-00338140, Development of learning and utilization technology to reflect sustainability of generative language models and up-to-dateness over time, and RS-2022-II220959, (part2) Few-Shot learning of Causal Inference in Vision and Language for Decision Making), the National Research Foundation of Korea (NRF) grant funded by the Korea government (MSIT) (RS-2023-00210466, RS-2023-00265444), Samsung Research (IO240508-09825-01), and the NAVER-Intel Co-Lab.
The work was conducted by POSTECH and reviewed by both NAVER and Intel.
    
    \bibliography{main}

\begin{thebibliography}{59}
\providecommand{\natexlab}[1]{#1}
\providecommand{\url}[1]{\texttt{#1}}
\expandafter\ifx\csname urlstyle\endcsname\relax
  \providecommand{\doi}[1]{doi: #1}\else
  \providecommand{\doi}{doi: \begingroup \urlstyle{rm}\Url}\fi

\bibitem[Aghajanyan et~al.(2021)Aghajanyan, Gupta, and Zettlemoyer]{Intrinsic-Dim-Finetuning}
Armen Aghajanyan, Sonal Gupta, and Luke Zettlemoyer.
\newblock Intrinsic dimensionality explains the effectiveness of language model fine-tuning.
\newblock In \emph{IJCNLP}, 2021.

\bibitem[Bossard et~al.(2014)Bossard, Guillaumin, and Van~Gool]{Food101}
Lukas Bossard, Matthieu Guillaumin, and Luc Van~Gool.
\newblock Food-101 -- mining discriminative components with random forests.
\newblock In \emph{ECCV}, 2014.

\bibitem[Bottou et~al.(2018)Bottou, Curtis, and Nocedal]{bottou2018optimization}
L{\'e}on Bottou, Frank~E Curtis, and Jorge Nocedal.
\newblock Optimization methods for large-scale machine learning.
\newblock \emph{SIAM review}, 2018.

\bibitem[Cheng et~al.(2017)Cheng, Han, and Lu]{Resisc45}
Gong Cheng, Junwei Han, and Xiaoqiang Lu.
\newblock Remote sensing image scene classification: Benchmark and state of the art.
\newblock \emph{Proceedings of the IEEE}, 2017.

\bibitem[Cimpoi et~al.(2014)Cimpoi, Maji, Kokkinos, Mohamed, and Vedaldi]{DTD}
Mircea Cimpoi, Subhransu Maji, Iasonas Kokkinos, Sammy Mohamed, and Andrea Vedaldi.
\newblock Describing textures in the wild.
\newblock In \emph{CVPR}, 2014.

\bibitem[Copet et~al.(2024)Copet, Kreuk, Gat, Remez, Kant, Synnaeve, Adi, and D{\'e}fossez]{Flamingo}
Jade Copet, Felix Kreuk, Itai Gat, Tal Remez, David Kant, Gabriel Synnaeve, Yossi Adi, and Alexandre D{\'e}fossez.
\newblock Simple and controllable music generation.
\newblock \emph{NeurIPS}, 2024.

\bibitem[Deng et~al.(2009)Deng, Dong, Socher, Li, Li, and Fei-Fei]{ImageNet}
Jia Deng, Wei Dong, Richard Socher, Li-Jia Li, Kai Li, and Li~Fei-Fei.
\newblock Imagenet: A large-scale hierarchical image database.
\newblock In \emph{CVPR}, 2009.

\bibitem[Diao et~al.(2023)Diao, Huang, Xu, Li, Lin, Zhou, and Zhang]{BDPL}
Shizhe Diao, Zhichao Huang, Ruijia Xu, Xuechun Li, Yong Lin, Xiao Zhou, and Tong Zhang.
\newblock Black-box prompt learning for pre-trained language models.
\newblock \emph{TMLR}, 2023.

\bibitem[Dosovitskiy et~al.(2021)Dosovitskiy, Beyer, Kolesnikov, Weissenborn, Zhai, Unterthiner, Dehghani, Minderer, Heigold, Gelly, Uszkoreit, and Houlsby]{ViT}
Alexey Dosovitskiy, Lucas Beyer, Alexander Kolesnikov, Dirk Weissenborn, Xiaohua Zhai, Thomas Unterthiner, Mostafa Dehghani, Matthias Minderer, Georg Heigold, Sylvain Gelly, Jakob Uszkoreit, and Neil Houlsby.
\newblock An image is worth 16x16 words: Transformers for image recognition at scale.
\newblock In \emph{ICLR}, 2021.

\bibitem[Duchi et~al.(2015)Duchi, Jordan, Wainwright, and Wibisono]{ZO-SMD}
John~C Duchi, Michael~I Jordan, Martin~J Wainwright, and Andre Wibisono.
\newblock Optimal rates for zero-order convex optimization: The power of two function evaluations.
\newblock \emph{IEEE Transactions on Information Theory}, 2015.

\bibitem[Fei-Fei et~al.(2004)Fei-Fei, Fergus, and Perona]{Caltech101}
Li~Fei-Fei, R.~Fergus, and P.~Perona.
\newblock Learning generative visual models from few training examples: An incremental bayesian approach tested on 101 object categories.
\newblock In \emph{CVPR Workshop}, 2004.

\bibitem[Ghadimi \& Lan(2013)Ghadimi and Lan]{ZO-SGD}
Saeed Ghadimi and Guanghui Lan.
\newblock Stochastic first-and zeroth-order methods for nonconvex stochastic programming.
\newblock \emph{SIAM Journal on Optimization}, 2013.

\bibitem[Google(2023)]{Gemini}
Google.
\newblock Gemini: a family of highly capable multimodal models.
\newblock \emph{arXiv preprint arXiv:2312.11805}, 2023.

\bibitem[Hansen(2016)]{CMA-ES}
Nikolaus Hansen.
\newblock The cma evolution strategy: A tutorial.
\newblock \emph{arXiv preprint arXiv:1604.00772}, 2016.

\bibitem[Hansen \& Ostermeier(2001)Hansen and Ostermeier]{6790628}
Nikolaus Hansen and Andreas Ostermeier.
\newblock Completely derandomized self-adaptation in evolution strategies.
\newblock \emph{Evolutionary Computation}, 2001.

\bibitem[Hansen et~al.(2003)Hansen, Müller, and Koumoutsakos]{6790790}
Nikolaus Hansen, Sibylle~D. Müller, and Petros Koumoutsakos.
\newblock Reducing the time complexity of the derandomized evolution strategy with covariance matrix adaptation (cma-es).
\newblock \emph{Evolutionary Computation}, 2003.

\bibitem[Helber et~al.(2019)Helber, Bischke, Dengel, and Borth]{EuroSAT}
Patrick Helber, Benjamin Bischke, Andreas Dengel, and Damian Borth.
\newblock Eurosat: A novel dataset and deep learning benchmark for land use and land cover classification.
\newblock \emph{IEEE Journal of Selected Topics in Applied Earth Observations and Remote Sensing}, 2019.

\bibitem[Hendrycks et~al.(2021{\natexlab{a}})Hendrycks, Basart, Mu, Kadavath, Wang, Dorundo, Desai, Zhu, Parajuli, Guo, et~al.]{ImageNet-R}
Dan Hendrycks, Steven Basart, Norman Mu, Saurav Kadavath, Frank Wang, Evan Dorundo, Rahul Desai, Tyler Zhu, Samyak Parajuli, Mike Guo, et~al.
\newblock The many faces of robustness: A critical analysis of out-of-distribution generalization.
\newblock In \emph{ICCV}, 2021{\natexlab{a}}.

\bibitem[Hendrycks et~al.(2021{\natexlab{b}})Hendrycks, Zhao, Basart, Steinhardt, and Song]{ImageNet-A}
Dan Hendrycks, Kevin Zhao, Steven Basart, Jacob Steinhardt, and Dawn Song.
\newblock Natural adversarial examples.
\newblock In \emph{CVPR}, 2021{\natexlab{b}}.

\bibitem[Hu et~al.(2022)Hu, yelong shen, Wallis, Allen-Zhu, Li, Wang, Wang, and Chen]{LoRA}
Edward~J Hu, yelong shen, Phillip Wallis, Zeyuan Allen-Zhu, Yuanzhi Li, Shean Wang, Lu~Wang, and Weizhu Chen.
\newblock Lo{RA}: Low-rank adaptation of large language models.
\newblock In \emph{ICLR}, 2022.

\bibitem[Jia et~al.(2021)Jia, Yang, Xia, Chen, Parekh, Pham, Le, Sung, Li, and Duerig]{ALIGN}
Chao Jia, Yinfei Yang, Ye~Xia, Yi-Ting Chen, Zarana Parekh, Hieu Pham, Quoc Le, Yun-Hsuan Sung, Zhen Li, and Tom Duerig.
\newblock Scaling up visual and vision-language representation learning with noisy text supervision.
\newblock In \emph{ICML}, 2021.

\bibitem[Johnson et~al.(2017)Johnson, Hariharan, Van Der~Maaten, Fei-Fei, Lawrence~Zitnick, and Girshick]{CLEVR}
Justin Johnson, Bharath Hariharan, Laurens Van Der~Maaten, Li~Fei-Fei, C~Lawrence~Zitnick, and Ross Girshick.
\newblock Clevr: A diagnostic dataset for compositional language and elementary visual reasoning.
\newblock In \emph{CVPR}, 2017.

\bibitem[Larson et~al.(2019)Larson, Menickelly, and Wild]{DFO}
Jeffrey Larson, Matt Menickelly, and Stefan~M Wild.
\newblock Derivative-free optimization methods.
\newblock \emph{Acta Numerica}, 2019.

\bibitem[Le et~al.(2013)Le, Sarl{\'o}s, Smola, et~al.]{fastfood}
Quoc Le, Tam{\'a}s Sarl{\'o}s, Alex Smola, et~al.
\newblock Fastfood-approximating kernel expansions in loglinear time.
\newblock In \emph{ICML}, 2013.

\bibitem[Lester et~al.(2021)Lester, Al-Rfou, and Constant]{PromptTuning}
Brian Lester, Rami Al-Rfou, and Noah Constant.
\newblock The power of scale for parameter-efficient prompt tuning.
\newblock In \emph{EMNLP}, 2021.

\bibitem[Li et~al.(2018)Li, Farkhoor, Liu, and Yosinski]{Intrinsic-Dim}
Chunyuan Li, Heerad Farkhoor, Rosanne Liu, and Jason Yosinski.
\newblock Measuring the intrinsic dimension of objective landscapes.
\newblock In \emph{ICLR}, 2018.

\bibitem[Liu et~al.(2022)Liu, Tam, Muqeeth, Mohta, Huang, Bansal, and Raffel]{fine-tuning>ICL}
Haokun Liu, Derek Tam, Mohammed Muqeeth, Jay Mohta, Tenghao Huang, Mohit Bansal, and Colin~A Raffel.
\newblock Few-shot parameter-efficient fine-tuning is better and cheaper than in-context learning.
\newblock \emph{NeurIPS}, 2022.

\bibitem[Liu et~al.(2023{\natexlab{a}})Liu, Li, Wu, and Lee]{LLaVA}
Haotian Liu, Chunyuan Li, Qingyang Wu, and Yong~Jae Lee.
\newblock Visual instruction tuning.
\newblock \emph{NeurIPS}, 2023{\natexlab{a}}.

\bibitem[Liu et~al.(2023{\natexlab{b}})Liu, Yuan, Fu, Jiang, Hayashi, and Neubig]{prompt_tuning_survey}
Pengfei Liu, Weizhe Yuan, Jinlan Fu, Zhengbao Jiang, Hiroaki Hayashi, and Graham Neubig.
\newblock Pre-train, prompt, and predict: A systematic survey of prompting methods in natural language processing.
\newblock \emph{ACM Computing Surveys}, 2023{\natexlab{b}}.

\bibitem[Maji et~al.(2013)Maji, Rahtu, Kannala, Blaschko, and Vedaldi]{FGVCAircraft}
Subhransu Maji, Esa Rahtu, Juho Kannala, Matthew~B. Blaschko, and Andrea Vedaldi.
\newblock Fine-grained visual classification of aircraft.
\newblock \emph{CoRR}, 2013.

\bibitem[Nesterov(1983)]{NAG}
Yurii~Evgen'evich Nesterov.
\newblock A method of solving a convex programming problem with convergence rate $o\bigl(\frac1{k^2}\bigr)$.
\newblock In \emph{Doklady Akademii Nauk}, 1983.

\bibitem[Netzer et~al.(2011)Netzer, Wang, Coates, Bissacco, Wu, and Ng]{SVHN}
Yuval Netzer, Tao Wang, Adam Coates, Alessandro Bissacco, Bo~Wu, and Andrew~Y Ng.
\newblock Reading digits in natural images with unsupervised feature learning.
\newblock In \emph{NeurIPS Workshop}, 2011.

\bibitem[Nilsback \& Zisserman(2008)Nilsback and Zisserman]{Flower102}
Maria-Elena Nilsback and Andrew Zisserman.
\newblock Automated flower classification over a large number of classes.
\newblock In \emph{ICVGIP}, 2008.

\bibitem[Oh et~al.(2023)Oh, Hwang, Lee, Lim, Jung, Jung, Choi, and Song]{BlackVIP}
Changdae Oh, Hyeji Hwang, Hee-young Lee, YongTaek Lim, Geunyoung Jung, Jiyoung Jung, Hosik Choi, and Kyungwoo Song.
\newblock Blackvip: Black-box visual prompting for robust transfer learning.
\newblock In \emph{CVPR}, 2023.

\bibitem[OpenAI(2023)]{ChatGPT}
OpenAI.
\newblock Chat{GPT}.
\newblock \url{https://openai.com/blog/chatgpt/}, 2023.

\bibitem[Ouali et~al.(2023)Ouali, Bulat, Matinez, and Tzimiropoulos]{LFA}
Yassine Ouali, Adrian Bulat, Brais Matinez, and Georgios Tzimiropoulos.
\newblock Black box few-shot adaptation for vision-language models.
\newblock In \emph{ICCV}, 2023.

\bibitem[Parkhi et~al.(2012)Parkhi, Vedaldi, Zisserman, and Jawahar]{OxfordPets}
Omkar~M Parkhi, Andrea Vedaldi, Andrew Zisserman, and C.~V. Jawahar.
\newblock Cats and dogs.
\newblock In \emph{CVPR}, 2012.

\bibitem[Pascanu et~al.(2012)Pascanu, Mikolov, and Bengio]{Pascanu2012UnderstandingTE}
Razvan Pascanu, Tomas Mikolov, and Yoshua Bengio.
\newblock Understanding the exploding gradient problem.
\newblock \emph{arXiv preprint arXiv:1211.5063}, 2012.

\bibitem[Radford et~al.(2021)Radford, Kim, Hallacy, Ramesh, Goh, Agarwal, Sastry, Askell, Mishkin, Clark, et~al.]{CLIP}
Alec Radford, Jong~Wook Kim, Chris Hallacy, Aditya Ramesh, Gabriel Goh, Sandhini Agarwal, Girish Sastry, Amanda Askell, Pamela Mishkin, Jack Clark, et~al.
\newblock Learning transferable visual models from natural language supervision.
\newblock In \emph{ICML}, 2021.

\bibitem[Ramesh et~al.(2021)Ramesh, Pavlov, Goh, Gray, Voss, Radford, Chen, and Sutskever]{DALL-E}
Aditya Ramesh, Mikhail Pavlov, Gabriel Goh, Scott Gray, Chelsea Voss, Alec Radford, Mark Chen, and Ilya Sutskever.
\newblock Zero-shot text-to-image generation.
\newblock In \emph{ICML}, 2021.

\bibitem[Recht et~al.(2019)Recht, Roelofs, Schmidt, and Shankar]{ImageNetV2}
Benjamin Recht, Rebecca Roelofs, Ludwig Schmidt, and Vaishaal Shankar.
\newblock Do imagenet classifiers generalize to imagenet?
\newblock In \emph{ICML}, 2019.

\bibitem[Singh et~al.(2022)Singh, Hu, Goswami, Couairon, Galuba, Rohrbach, and Kiela]{FLAVA}
Amanpreet Singh, Ronghang Hu, Vedanuj Goswami, Guillaume Couairon, Wojciech Galuba, Marcus Rohrbach, and Douwe Kiela.
\newblock Flava: A foundational language and vision alignment model.
\newblock In \emph{CVPR}, 2022.

\bibitem[Soomro et~al.(2012)Soomro, Zamir, and Shah]{UCF101}
Khurram Soomro, Amir~Roshan Zamir, and Mubarak Shah.
\newblock {UCF101:} {A} dataset of 101 human actions classes from videos in the wild.
\newblock \emph{CoRR}, 2012.

\bibitem[Spall(1992)]{SPSA}
James~C Spall.
\newblock Multivariate stochastic approximation using a simultaneous perturbation gradient approximation.
\newblock \emph{IEEE transactions on automatic control}, 1992.

\bibitem[Spall(1997)]{SPSA2}
James~C. Spall.
\newblock A one-measurement form of simultaneous perturbation stochastic approximation.
\newblock \emph{Automatica}, 1997.

\bibitem[Sun et~al.(2022{\natexlab{a}})Sun, He, Qian, Zhou, Huang, and Qiu]{BBTv2}
Tianxiang Sun, Zhengfu He, Hong Qian, Yunhua Zhou, Xuan-Jing Huang, and Xipeng Qiu.
\newblock Bbtv2: towards a gradient-free future with large language models.
\newblock In \emph{EMNLP}, 2022{\natexlab{a}}.

\bibitem[Sun et~al.(2022{\natexlab{b}})Sun, Shao, Qian, Huang, and Qiu]{BBT}
Tianxiang Sun, Yunfan Shao, Hong Qian, Xuanjing Huang, and Xipeng Qiu.
\newblock Black-box tuning for language-model-as-a-service.
\newblock In \emph{ICML}, 2022{\natexlab{b}}.

\bibitem[Tsai et~al.(2020)Tsai, Chen, and Ho]{BAR}
Yun-Yun Tsai, Pin-Yu Chen, and Tsung-Yi Ho.
\newblock Transfer learning without knowing: Reprogramming black-box machine learning models with scarce data and limited resources.
\newblock In \emph{ICML}, 2020.

\bibitem[Wang et~al.(2019)Wang, Ge, Lipton, and Xing]{ImageNet-Sketch}
Haohan Wang, Songwei Ge, Zachary Lipton, and Eric~P Xing.
\newblock Learning robust global representations by penalizing local predictive power.
\newblock \emph{NeurIPS}, 2019.

\bibitem[Wang et~al.(2024)Wang, Liang, He, Wang, and Tan]{CraFT}
Zhengbo Wang, Jian Liang, Ran He, Zilei Wang, and Tieniu Tan.
\newblock Connecting the dots: Collaborative fine-tuning for black-box vision-language models.
\newblock In \emph{ICML}, 2024.

\bibitem[Xian et~al.(2017)Xian, Schiele, and Akata]{HarMean}
Yongqin Xian, Bernt Schiele, and Zeynep Akata.
\newblock Zero-shot learning-the good, the bad and the ugly.
\newblock In \emph{CVPR}, 2017.

\bibitem[Xiao et~al.(2010)Xiao, Hays, Ehinger, Oliva, and Torralba]{SUN397}
Jianxiong Xiao, James Hays, Krista~A. Ehinger, Aude Oliva, and Antonio Torralba.
\newblock Sun database: Large-scale scene recognition from abbey to zoo.
\newblock In \emph{CVPR}, 2010.

\bibitem[Yu et~al.(2023)Yu, Chen, Lin, and He]{BPT-VLM}
Lang Yu, Qin Chen, Jiaju Lin, and Liang He.
\newblock Black-box prompt tuning for vision-language model as a service.
\newblock In \emph{IJCAI}, 2023.

\bibitem[Zhai et~al.(2023)Zhai, Mustafa, Kolesnikov, and Beyer]{siglip}
Xiaohua Zhai, Basil Mustafa, Alexander Kolesnikov, and Lucas Beyer.
\newblock Sigmoid loss for language image pre-training.
\newblock In \emph{ICCV}, 2023.

\bibitem[Zhang et~al.(2020{\natexlab{a}})Zhang, Jin, Fang, and Wang]{zhang2020improved}
Bohang Zhang, Jikai Jin, Cong Fang, and Liwei Wang.
\newblock Improved analysis of clipping algorithms for non-convex optimization.
\newblock \emph{NeurIPS}, 2020{\natexlab{a}}.

\bibitem[Zhang et~al.(2020{\natexlab{b}})Zhang, He, Sra, and Jadbabaie]{zhang2019gradient}
Jingzhao Zhang, Tianxing He, Suvrit Sra, and Ali Jadbabaie.
\newblock Why gradient clipping accelerates training: A theoretical justification for adaptivity.
\newblock In \emph{ICLR}, 2020{\natexlab{b}}.

\bibitem[Zhang et~al.(2020{\natexlab{c}})Zhang, Karimireddy, Veit, Kim, Reddi, Kumar, and Sra]{zhang2020adaptive}
Jingzhao Zhang, Sai~Praneeth Karimireddy, Andreas Veit, Seungyeon Kim, Sashank Reddi, Sanjiv Kumar, and Suvrit Sra.
\newblock Why are adaptive methods good for attention models?
\newblock \emph{NeurIPS}, 2020{\natexlab{c}}.

\bibitem[Zhou et~al.(2022{\natexlab{a}})Zhou, Yang, Loy, and Liu]{CoCoOp}
Kaiyang Zhou, Jingkang Yang, Chen~Change Loy, and Ziwei Liu.
\newblock Conditional prompt learning for vision-language models.
\newblock In \emph{CVPR}, 2022{\natexlab{a}}.

\bibitem[Zhou et~al.(2022{\natexlab{b}})Zhou, Yang, Loy, and Liu]{CoOp}
Kaiyang Zhou, Jingkang Yang, Chen~Change Loy, and Ziwei Liu.
\newblock Learning to prompt for vision-language models.
\newblock \emph{IJCV}, 2022{\natexlab{b}}.

\end{thebibliography}
    \bibliographystyle{iclr2025_conference}
    
    \clearpage
    \appendix
    \clearpage
\section{Theoretical analysis}
\subsection{Assumption \& Lemma}
\label{app_sec:lemma}
\begin{assumption}\label{lsmooth}
    On the function $f(\cdot)$, there exists some $L > 0$ such that for all $x, y$, we have $\|\nabla f(x) - \nabla f(y)\| \leq L \|x - y\|$
\end{assumption}

\begin{lemma}
    \label{lem:unbiasdness}
    (Unbiasdness of ZO-SGD) In the $c_t \xrightarrow{} 0$ limit, ZO-SGD is a unbiased estiamtor of FO-SGD in terms of random perturbation vector, which follows a Bernoulli distribution of two different values with equal absolute value and probability. That is,
\begin{equation}
    \label{eqn:unbiasdnessZO}
    \E_{\{z_{n}\}_{n=1}^N}(\hat{\nabla} f(\theta_t; \cB_t)) = \nabla f(\theta_t; \cB_t)
\end{equation}
\end{lemma}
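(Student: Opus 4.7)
The plan is to reduce the lemma to a second-order Taylor expansion plus a straightforward moment calculation on the perturbation distribution. First I would fix an arbitrary sample $z_i$ and apply Taylor's theorem under \Cref{lsmooth}: writing
\begin{equation*}
f(\theta + c z_i; \cB) = f(\theta; \cB) + c \nabla f(\theta; \cB)^\top z_i + \tfrac{c^2}{2} z_i^\top \nabla^2 f(\tilde{\theta}_+; \cB) z_i,
\end{equation*}
and symmetrically for $f(\theta - c z_i; \cB)$, the central difference collapses to
\begin{equation*}
\frac{f(\theta + c z_i; \cB) - f(\theta - c z_i; \cB)}{2c} = \nabla f(\theta; \cB)^\top z_i + O(c),
\end{equation*}
where the $O(c)$ remainder is controlled uniformly by $L$ and by a bounded second moment of $z_i$ (since Bernoulli entries are bounded). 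Passing $c \to 0$ and using dominated convergence (the integrand is bounded once $z_i$ has bounded entries and $f$ is $L$-smooth), it suffices to compute $\E[(\nabla f(\theta; \cB)^\top z_i)\, (z_i)^{-1}]$ entrywise.

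Next I would exploit the assumed structure of $z_i$: entries are i.i.d.\ Bernoulli-like on $\{-a, +a\}$ with equal probability. Examining the $k$-th coordinate of the estimator,
\begin{equation*}
\E\!\left[(\nabla f(\theta; \cB)^\top z_i)\,(z_{i,k})^{-1}\right] = \sum_{j} \nabla_j f(\theta; \cB)\, \E\!\left[z_{i,j}\, (z_{i,k})^{-1}\right],
\end{equation*}
splits into the diagonal term $j=k$, where $z_{i,k}\,(z_{i,k})^{-1} = 1$ almost surely, and off-diagonal terms $j\neq k$, where independence of coordinates and $\E[z_{i,j}]=0$ kill the contribution. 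The diagonal contribution recovers exactly $\nabla_k f(\theta; \cB)$, so the per-sample estimator is unbiased. Averaging over the $N$ i.i.d.\ samples $\{z_n\}_{n=1}^N$ preserves unbiasedness, giving the claim.

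The main potential obstacle is the interchange of the $c \to 0$ limit with the expectation over $z_i$, but this is routine given \Cref{lsmooth} and the fact that the Bernoulli-like distribution has bounded support (so the remainder in Taylor's theorem is uniformly $O(c)$ on the support, and $(z_{i,k})^{-1}$ has bounded absolute value $1/a$). A minor care point is that the lemma presumes a specific perturbation distribution with nonzero entries, so the element-wise inverse $(z_i)^{-1}$ is well-defined and has finite moments of all orders — this is precisely the ``finite inverse moment'' condition mentioned after \eqref{eqn:N-SPSA}, ensuring the expectation exists at every stage of the argument.
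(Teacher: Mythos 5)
Your proposal is correct and follows essentially the same route as the paper's proof: pass to the $c\to 0$ limit so the estimator becomes $(z_i)^{-1}(z_i)^\top\nabla f$, then show the expectation of this matrix is the identity because the diagonal entries are identically $1$ and the off-diagonal entries $\E[z_{i,j}(z_{i,k})^{-1}]$ vanish by the zero-mean and finite-inverse-moment properties of the symmetric Bernoulli perturbation. Your version is somewhat more careful about justifying the limit (explicit Taylor remainder and dominated convergence, which the paper glosses over), but the core computation is identical.
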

\begin{proof}[Proof of Lemma~\ref{lem:unbiasdness}]\label{prf:lem1}
    Note that as $c_t \xrightarrow{} 0$ limit, we have
    \[
    \hat{\nabla}f(\theta_t; \cB_t) = \frac{1}{N}\sum_{i=1}^N (z_i)^{-1}(z_i)^\top \nabla f(\theta_t; \cB_t)
    \]
    let $A^{k}\in R^{d\times d}$ be a matrix of $(z_k)^{-1}(z_k)^\top$, then we get
    \[
        A_{ij}^k = \begin{cases} 
        1 & \text{if } i = j \\
        \frac{z_{kj}}{z_{ki}} & \text{otherwise}
        \end{cases}
    \]
    Note that $z_{ni}$ is a $i$-th element for the vector $z_n$. By taking expectation in terms of $z_n$ over matrix $A$, we can get
    \[
        \E_{\{z_n\}_{n=1}^N}(A_{ij}^n) = \begin{cases} 
        1 & \text{if } i = j \\
        0 & \text{otherwise}
        \end{cases}
    \]
    since $z_t^n$ have zero inverse moment and zero mean as we assumed. Therefore, 
    \[
    \E_{\{z_n\}_{n=1}^N}(\hat{\nabla}f(\theta_t; \cB_t)) = \nabla f(\theta_t; \cB_t)
    \]
    as desired.
\end{proof}

\begin{lemma}\label{lem:varZOSGD}
    (Second moment of ZO-SGD) In the $c_t \xrightarrow{} 0$ limit, second moment of ZO-SGD in terms of random perturbation vector, which follows a Bernoulli distribution of two different values with equal absolute value and probability. That is,
\begin{equation}
    \label{eqn:varianceZO}
    \E_{\{z_n\}_{n=1}^N}(\|\hat{\nabla} f(\theta_t; \cB_t)\|^2) = \frac{d}{N} \|\nabla f(\theta_t; \cB_t)\|^2
\end{equation}
\end{lemma}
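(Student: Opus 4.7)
The plan is to extend the argument of \Cref{lem:unbiasdness} to a second-order calculation. Using the same $c_t \to 0$ Taylor-expansion trick, set $g := \nabla f(\theta_t;\cB_t)$ and write
\begin{equation*}
    \hat\nabla f(\theta_t;\cB_t) \;=\; \frac{1}{N}\sum_{i=1}^N A^i g, \qquad A^i := (z_i)^{-1}(z_i)^\top.
\end{equation*}
The squared norm then expands as $\|\hat\nabla f\|^2 = \frac{1}{N^2}\sum_{i,j=1}^N g^\top (A^i)^\top A^j g$, which I would evaluate in expectation by splitting into off-diagonal ($i \ne j$) and diagonal ($i = j$) terms.

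For the off-diagonal terms, independence of the $z_i$'s together with $\E[A^i] = I$ from \Cref{lem:unbiasdness} gives $\E[(A^i)^\top A^j] = I$, so each such pair contributes $\|g\|^2$. The core step is the diagonal computation $\E[A^\top A]$ for a single perturbation. Writing its entries,
\begin{equation*}
    (A^\top A)_{kl} \;=\; \sum_m (z^{-1})_m z_k (z^{-1})_m z_l \;=\; z_k z_l \sum_m (z_m)^{-2},
\end{equation*}
and using the Bernoulli structure (each coordinate takes $\pm a$ with equal probability, so $(z_m)^{-2} = a^{-2}$ almost surely while $\E[z_k z_l] = a^2 \delta_{kl}$), the cross factors collapse to give $\E[A^\top A] = d\,I$. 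Each diagonal term therefore contributes $d\|g\|^2$.

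Assembling the two regimes and normalizing by $N^2$ then yields the stated identity to leading order in $d$, which is the dominant scale in the BBPT setting $d \gg N$. The main obstacle is the bookkeeping in the diagonal step: one must carefully manage both the reciprocals and the pairwise sign cancellations of the Bernoulli coordinates so that $\sum_m (z_m)^{-2}$ produces the factor of $d$ while the remaining $z_k z_l$ gives the identity in expectation. Once $\E[A^\top A] = d\,I$ is in hand, the remaining linear combinations are routine.
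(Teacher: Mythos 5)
Your proposal follows essentially the same route as the paper: both represent the estimator as $\hat\nabla f = \frac{1}{N}\sum_n A^n g$ with $A^n = (z_n)^{-1}(z_n)^\top$ and reduce everything to the single-sample computation $\E[(A^n)^\top A^n] = d\,I$, which you and the paper both extract from the Bernoulli structure ($(z_m)^{-2}=a^{-2}$ almost surely while $\E[z_k z_l]=a^2\delta_{kl}$). The one genuine difference is that you expand the squared norm over all $N^2$ pairs and retain the off-diagonal terms, whereas the paper writes the second moment as $\frac{1}{N}\sum_n g^\top(A^n)^\top A^n g$, i.e.\ it silently discards the $i\neq j$ cross terms (and uses a $1/N$ rather than $1/N^2$ prefactor so that the diagonal terms alone reproduce $\frac{d}{N}\|g\|^2$). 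Your fuller bookkeeping gives the exact value $\frac{d+N-1}{N}\|g\|^2$, so the identity as stated in the lemma holds exactly only for $N=1$ and otherwise only up to an additive $\frac{N-1}{N}\|g\|^2$; your ``to leading order in $d$'' caveat is therefore the honest formulation, and in that respect your argument is more careful than the paper's own proof rather than less. To match the lemma verbatim you would have to either restrict to $N=1$ or restate the conclusion as $\frac{d+N-1}{N}\|\nabla f(\theta_t;\cB_t)\|^2$; the qualitative point the paper needs --- that the second moment scales linearly in $d$ --- survives either way.
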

\begin{proof}[Proof of Lemma~\ref{lem:varZOSGD}]
    Starting from Lemma~\ref{lem:unbiasdness}, zeroth-order gradient can be represented as below.
    \[
        \hat{\nabla}f(\theta_t; \cB_t) = \frac{1}{N}\sum_{n=1}^N A^n \nabla f(\theta_t; \cB_t)
    \]
    Therefore, the second moment of zeroth-order gradient 
    \[
         \E_{\{z_n\}_{n=1}^N}(\|\hat{\nabla} f(\theta_t; \cB_t) \|^2) =  \E_{\{z_n\}_{n=1}^N}(\frac{1}{N}\sum_{n=1}^{N}\nabla f(\theta_t; \cB_t)^\top (A^n)^\top A^n \nabla f(\theta_t; \cB_t))
    \]
    let $B^n \in R^{d \times d}$ be a result of $(A^n)^\top A^n$, we can get
    \[
        B_{ij}^n = \begin{cases} 
        d & \text{if } i = j \\
        \sum_{k=1}^d\frac{(z_{ni})^2}{z_{nj} z_{ni}} & \text{otherwise} 
        \end{cases}
    \]
    Taking expectation over matrix $B^n$, we can get
    \[
         \E_{\{z_n\}_{n=1}^N}(B_{ij}^n) = \begin{cases} 
        d & \text{if } i = j \\
        0 & \text{otherwise}
        \end{cases}
    \]
    By plugging above results, the second moment of zeroth-order gradient is 
    \[
      \E_{\{z_n\}_{n=1}^N}(\|\hat{\nabla} f(\theta_t; \cB_t) \|^2) = \frac{d}{N} \|\nabla f(\theta_t; \cB_t) \|^2.
    \]
    as desired.
    
\end{proof}

\begin{lemma}\label{lem:SGDdescent}
With assumption \ref{lsmooth}, for any unbiased gradient estimate $\hat{\nabla} f(\theta_t; z, \cB_t)$,
\[
\E(f(\theta_{t+1})|x_t) \leq f(\theta_t) - \eta \|\nabla f(\theta_t) \|^2+ \frac{L}{2} \eta^2 \left\| \hat{\nabla} f(\theta_t; z, \cB_t) \right\|^2
\]
\end{lemma}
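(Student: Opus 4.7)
The plan is to establish the statement via the standard one-step descent argument for stochastic gradient methods, relying on the quadratic upper bound granted by L-smoothness together with the unbiasedness hypothesis on $\hat{\nabla}f$.

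First, I would invoke \Cref{lsmooth} in its equivalent quadratic upper-bound form: for any $x,y$,
\begin{equation*}
f(y) \leq f(x) + \langle \nabla f(x), y - x\rangle + \frac{L}{2}\|y-x\|^2.
\end{equation*}
Specializing this to $x = \theta_t$ and $y = \theta_{t+1} = \theta_t - \eta\,\hat{\nabla}f(\theta_t; z, \cB_t)$ yields
\begin{equation*}
f(\theta_{t+1}) \leq f(\theta_t) - \eta\,\langle \nabla f(\theta_t), \hat{\nabla}f(\theta_t; z, \cB_t)\rangle + \frac{L}{2}\eta^2\,\|\hat{\nabla}f(\theta_t; z, \cB_t)\|^2.
\end{equation*}

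Next, I would take conditional expectation given $\theta_t$ (the randomness comes from the perturbation $z$ and the mini-batch $\cB_t$). By the assumed unbiasedness of $\hat{\nabla}f$ (as established in the ZO setting by \Cref{lem:unbiasdness}, and by assumption for a general unbiased estimator), the cross term collapses:
\begin{equation*}
\E\bigl[\langle \nabla f(\theta_t), \hat{\nabla}f(\theta_t; z, \cB_t)\rangle \,\big|\, \theta_t\bigr] = \langle \nabla f(\theta_t), \E[\hat{\nabla}f(\theta_t; z, \cB_t)\mid \theta_t]\rangle = \|\nabla f(\theta_t)\|^2.
\end{equation*}
Substituting back, the two linear-in-$\eta$ terms combine into $-\eta\|\nabla f(\theta_t)\|^2$, and the quadratic term carries over (in the tightest form one would write $\E[\|\hat{\nabla}f\|^2 \mid \theta_t]$, matching the statement once the norm is interpreted under the same conditional expectation), giving exactly the claimed inequality.

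There is no genuine obstacle here; the argument is a routine application of the descent lemma. The only point requiring a little care is book-keeping on what is random and what is deterministic when conditioning on $\theta_t$, so that unbiasedness can be applied cleanly to the inner-product term while the squared-norm term is left intact on the right-hand side.
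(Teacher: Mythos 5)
Your proof is correct and is exactly the standard one-step descent argument: quadratic upper bound from $L$-smoothness, substitute the update $\theta_{t+1} = \theta_t - \eta\,\hat{\nabla}f$, and condition on $\theta_t$ so that unbiasedness collapses the cross term to $-\eta\|\nabla f(\theta_t)\|^2$. The paper states Lemma~\ref{lem:SGDdescent} without proof and simply invokes it in the proof of Theorem~\ref{theo:zosgd}, so there is no alternative route to compare against; your argument is the one the paper implicitly relies on. You were also right to flag that the final term should properly carry a conditional expectation, $\E\bigl[\|\hat{\nabla}f(\theta_t; z, \cB_t)\|^2 \mid \theta_t\bigr]$ --- that is how the lemma is actually used downstream (it is combined with the second-moment bound of Lemma~\ref{lem:varZOSGD}), and its omission is a minor imprecision in the paper's statement rather than a gap in your reasoning.
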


\subsection{Convergence analysis of zeroth-order optimization}\label{app_sec:convergeceZO}
The high variance of zeroth-order gradient estimates stems from the estimation process involving random perturbations, introducing an additional problem dimension ($d$) related terms in convergence compared with corresponding first-order (FO) methods.
Although the convergence rate was originally proven by \citet{ZO-SGD}, we have also confirmed similar convergence behavior using \citet{SPSA} approach. Note that we assumed $z_i$ has zero inverse moment, as it is sampled from a Bernoulli distribution of two different values with equal absolute value and probability in practice.
The convergence rate of ZO-SGD using~\eqref{eqn:N-SPSA} is as follows:
\begin{theorem}[Convergence rate of ZO-SGD]\label{theo:zosgd}
    Under Assumption \ref{lsmooth}, in the $c_t \to 0$ limit, when $\eta=\sqrt{\frac{2NF}{LGd}} \sqrt{\frac{1}{T}}$ where $F:=f(x_{0})-f(x_{\ast})$ and sampling  the $z_n$ from a Bernoulli distribution of
two different values with equal absolute value and probability convergence rate of ZO-SGD is
    \begin{equation}
    \label{eqn:varianceZOSGD}
    \frac{1}{T}\sum_{t=0}^{T-1} \E_{t,\{z_n\}_{n=1}^N}\|\hat{\nabla} f(\theta_t; \cB_t)\|_{2}^2 = \mathcal{O}\left( \sqrt{\frac{d}{T}}\right).
    \end{equation}
\end{theorem}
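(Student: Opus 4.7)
The plan is to combine the one-step descent estimate from Lemma~\ref{lem:SGDdescent} with the second-moment identity from Lemma~\ref{lem:varZOSGD}, and then optimize the step size. Starting from Lemma~\ref{lem:SGDdescent} applied to the unbiased ZO estimator (unbiasedness supplied by Lemma~\ref{lem:unbiasdness}), I would take a further expectation over the perturbations $\{z_n\}_{n=1}^N$ conditional on $\theta_t$ and $\cB_t$ and substitute $\E_{\{z_n\}}\|\hat{\nabla} f(\theta_t;\cB_t)\|^2 = \frac{d}{N}\|\nabla f(\theta_t;\cB_t)\|^2$ from Lemma~\ref{lem:varZOSGD}. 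Taking a tower expectation over $\cB_t$ then yields
\[
\E[f(\theta_{t+1})] \le \E[f(\theta_t)] - \eta\,\E\|\nabla f(\theta_t)\|^2 + \frac{L\eta^2 d}{2N}\,\E\|\nabla f(\theta_t;\cB_t)\|^2.
\]

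Next, I would invoke a bounded-second-moment condition $\E\|\nabla f(\theta_t;\cB_t)\|^2 \le G$ on the stochastic gradient (this is the constant $G$ appearing in the prescribed step size) to decouple the noise term from the trajectory. Telescoping the inequality from $t=0$ to $T-1$, using $\E[f(\theta_T)]\ge f(x_\ast)$, and dividing through by $\eta T$ gives
\[
\frac{1}{T}\sum_{t=0}^{T-1} \E\|\nabla f(\theta_t)\|^2 \;\le\; \frac{F}{\eta T} + \frac{L\eta d G}{2N}.
\]
Minimizing the right-hand side in $\eta$ gives exactly the prescribed step size $\eta = \sqrt{\tfrac{2NF}{LGd}}\sqrt{\tfrac{1}{T}}$, at which the two terms balance and produce the rate $\sqrt{\tfrac{2LFGd}{NT}} = \mathcal{O}(\sqrt{d/T})$. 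The conclusion in terms of $\|\hat{\nabla} f(\theta_t;\cB_t)\|^2$ as stated then follows by one more application of Lemma~\ref{lem:varZOSGD} in expectation, which only rescales the right-hand side by the constant $d/N$ and leaves the $\mathcal{O}(\sqrt{d/T})$ dependence intact.

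The main obstacle is bookkeeping for the two nested sources of randomness: the data sampling $\cB_t$ and the perturbation draws $\{z_n\}_{n=1}^N$. Lemma~\ref{lem:varZOSGD} is conditional on $\cB_t$, so the factor $d/N$ must be propagated carefully via the tower law before invoking the bounded-variance assumption on the FO stochastic gradient. Once that is handled correctly, the rest of the argument is the standard nonconvex SGD telescoping trick, and the explicit dependence on $d$ (which is precisely the pathology motivating \zip) surfaces from the $d/N$ factor in Lemma~\ref{lem:varZOSGD}.
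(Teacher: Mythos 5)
Your proposal is correct and follows essentially the same route as the paper: Lemma~\ref{lem:SGDdescent} plus unbiasedness, substitution of the $\tfrac{d}{N}$ second-moment factor from Lemma~\ref{lem:varZOSGD}, the bounded-variance constant $G$, telescoping, and balancing the two terms with $\eta = \Theta(\sqrt{1/(dT)})$. The one place you are actually more careful than the paper is the final remark converting the bound on $\E\|\nabla f(\theta_t)\|^2$ into one on $\E\|\hat{\nabla} f(\theta_t;\cB_t)\|^2$ (the paper silently leaves its conclusion in terms of the true gradient), though note that multiplying by $d/N$ would in fact inflate the $d$-dependence beyond $\sqrt{d/T}$ if $d$ is tracked rather than treated as a constant.
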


\begin{proof}[Proof of Theorem ~\ref{theo:zosgd}]
With Lemma~\ref{lem:unbiasdness}, we can start from Lemma~\ref{lem:SGDdescent}.
By assuming that FO-SGD has finite variance bound as $\mathbb{E}_t \left[ \left\| \widetilde{\nabla} f(x_t) \right\|_2^2 \right] \leq G$ and reformulate Lemma~\ref{lem:SGDdescent} then we get :
\[
\left\| \nabla f(x_t) \right\|_2^2 \leq \frac{1}{\eta} \mathbb{E}_{t, \{z_n\}} \left[ f(x_t) - f(x_{t+1}) \right] + \frac{Ld}{2N} \eta G.
\]
Summing over from $t=0$ to $t=T$ :
\[
\sum_{t=0}^{T-1} \left\| \nabla f(x_t) \right\|_2^2 \leq \frac{1}{\eta}  \left[ f(x_0) - \mathbb{E} f(x_{T}) \right] + \frac{Ld}{2N} \eta GT.
\]
Remind that $f$ is lower bounded with  $f_{\ast}$ and divide with $T$ :
\[
\frac{1}{T}\sum_{t=0}^{T-1} \left\| \nabla f(x_t) \right\|_2^2 \leq \frac{f(x_0) - f_{\ast}}{\eta T}  + \frac{Ld}{2N} \eta G.
\]
Let $\eta = \mathcal{O}\left(\sqrt{\frac{1}{dT}}\right)$ then,
\[
\frac{1}{T}\sum_{t=0}^{T-1} \left\| \nabla f(x_t) \right\|_2^2 = \mathcal{O}\left(\sqrt{\frac{d}{T}}\right).
\]
\end{proof}

\clearpage
    \section{Further analysis}\label{app_sec:further analysis}
We conducted additional supplementary experiments to further validate and gain deeper insights into our proposed method, \zip. 
To ensure a comprehensive analysis, we extended our evaluation to include all remaining standard classification tasks mentioned in \Cref{sec:experiments} and \ref{sec:ablation}. 
This extended evaluation provides a more detailed understanding of the performance of \zip across a diverse range of datasets.

\subsection{Impact of optimization methods varying context token counts}
\label{FOvsZOfurther}
\begin{figure}[h!]
    \centering
    \begin{subfigure}{0.19\linewidth}
        \centering
        \includegraphics[width=\linewidth]{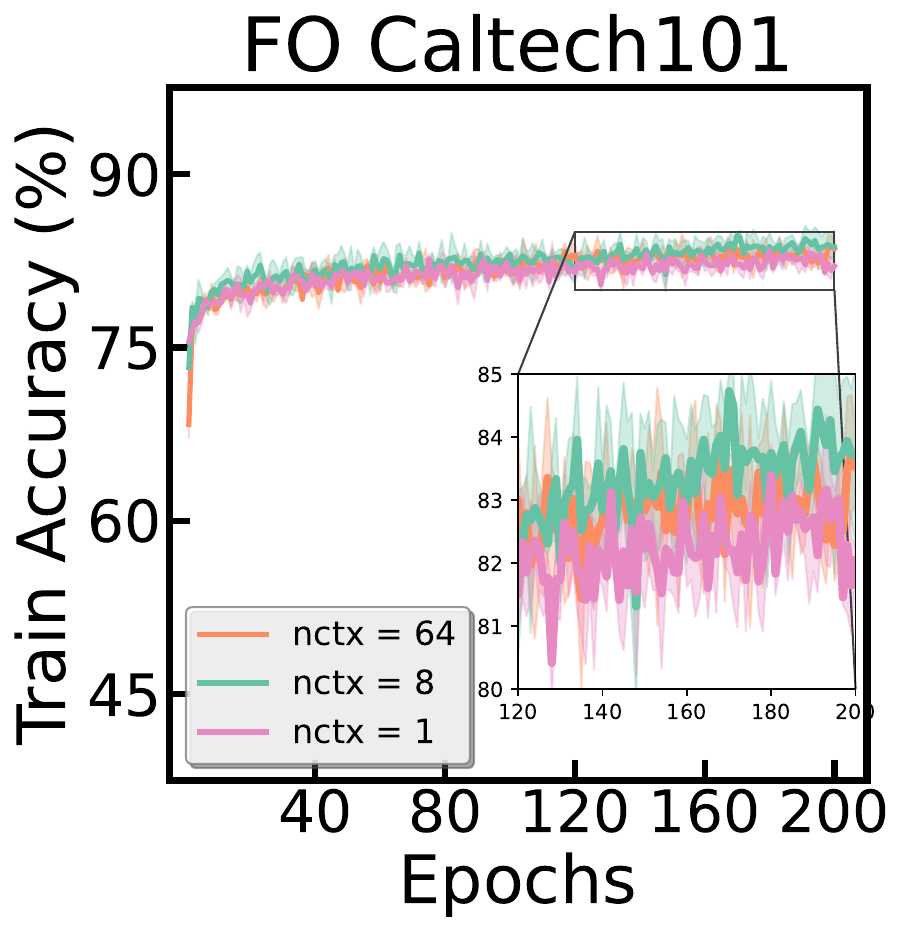}
    \end{subfigure}
    \begin{subfigure}{0.19\linewidth}
        \centering
        \includegraphics[width=\linewidth]{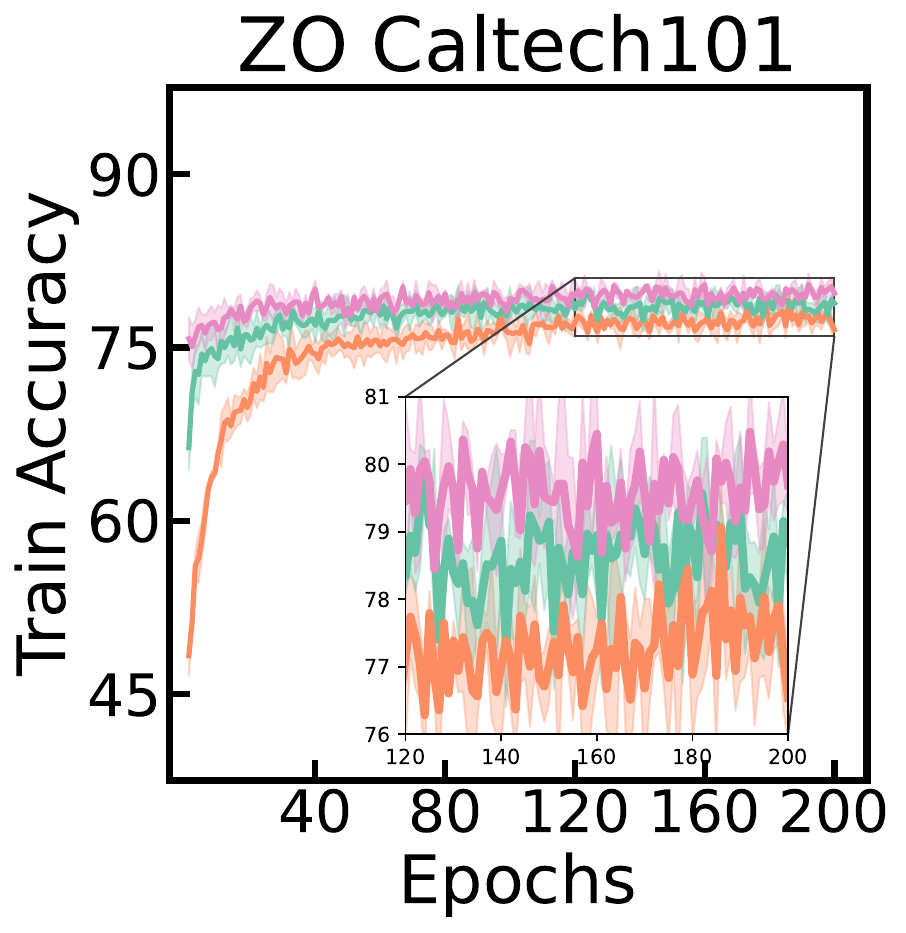}
    \end{subfigure}
    \begin{subfigure}{0.19\linewidth}
        \centering
        \includegraphics[width=\linewidth]{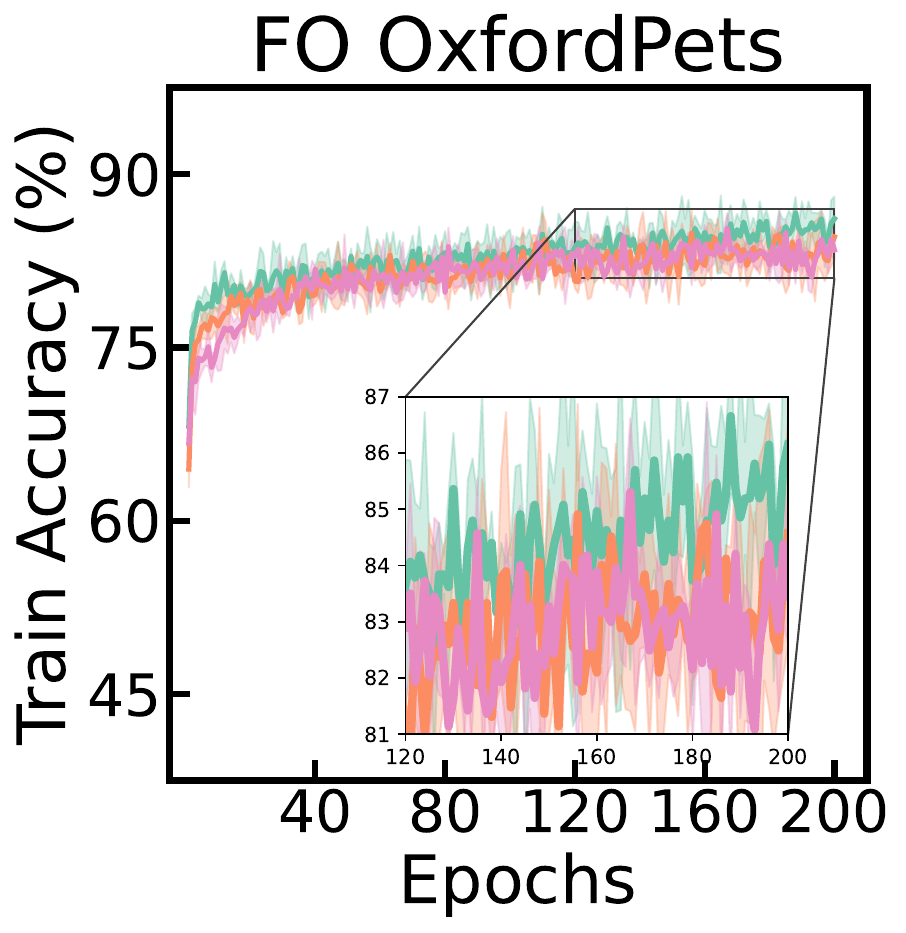}
    \end{subfigure}
    \begin{subfigure}{0.19\linewidth}
        \centering
        \includegraphics[width=\linewidth]{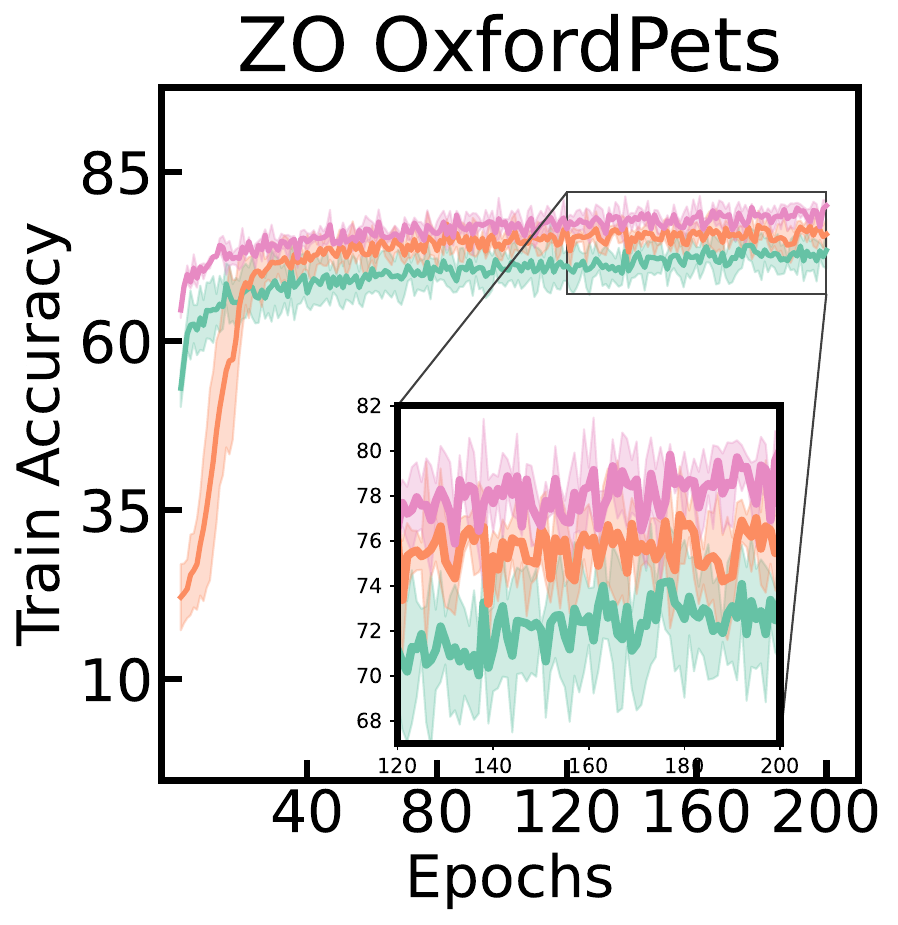}
    \end{subfigure}


    \begin{subfigure}{0.19\linewidth}
        \centering
        \includegraphics[width=\linewidth]{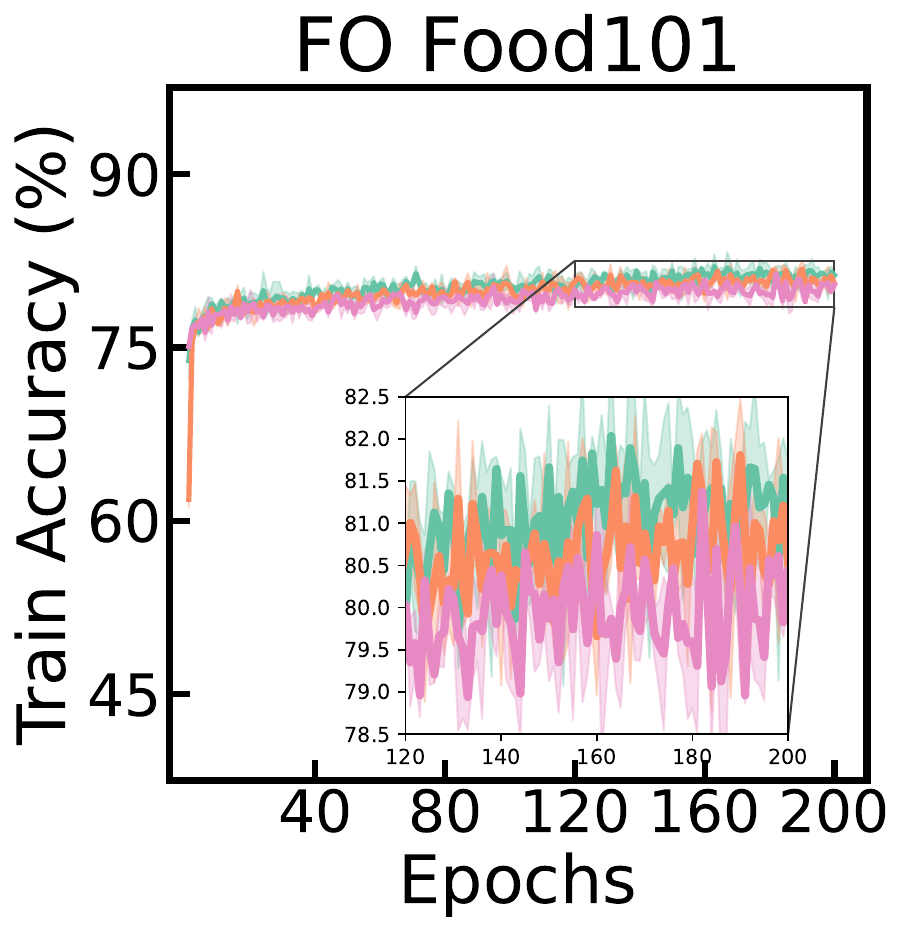}
    \end{subfigure}
    \begin{subfigure}{0.19\linewidth}
        \centering
        \includegraphics[width=\linewidth]{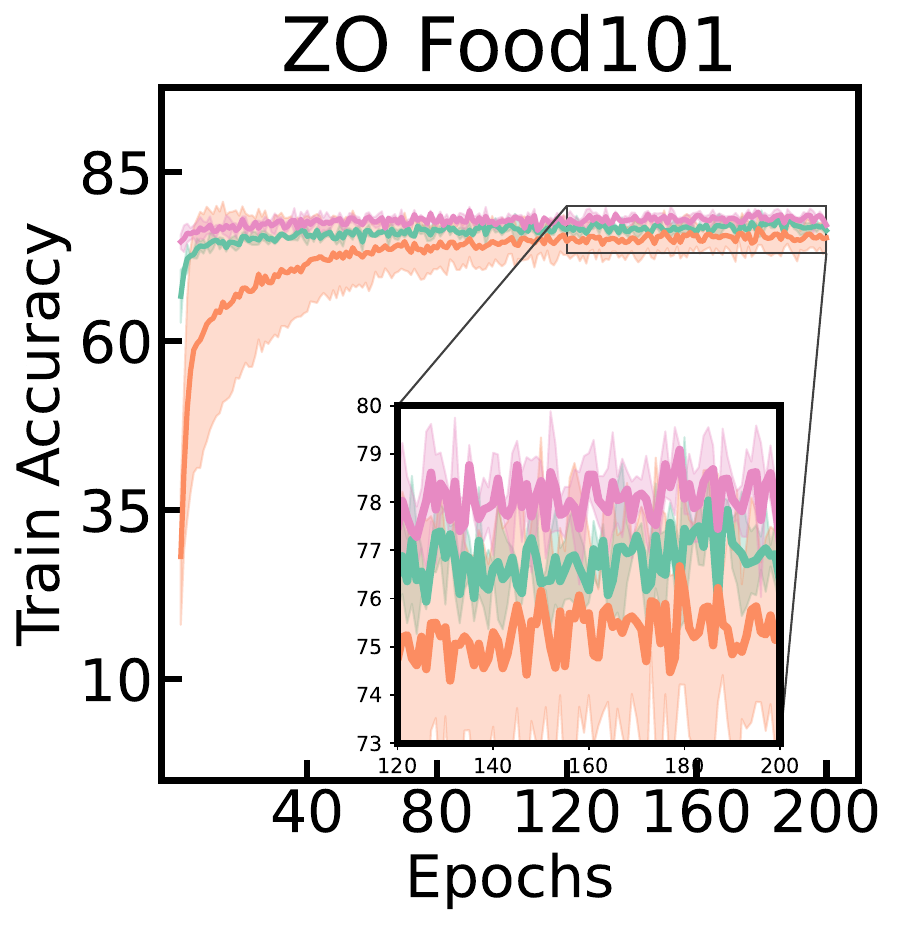}
    \end{subfigure}
    \begin{subfigure}{0.19\linewidth}
        \centering
        \includegraphics[width=\linewidth]{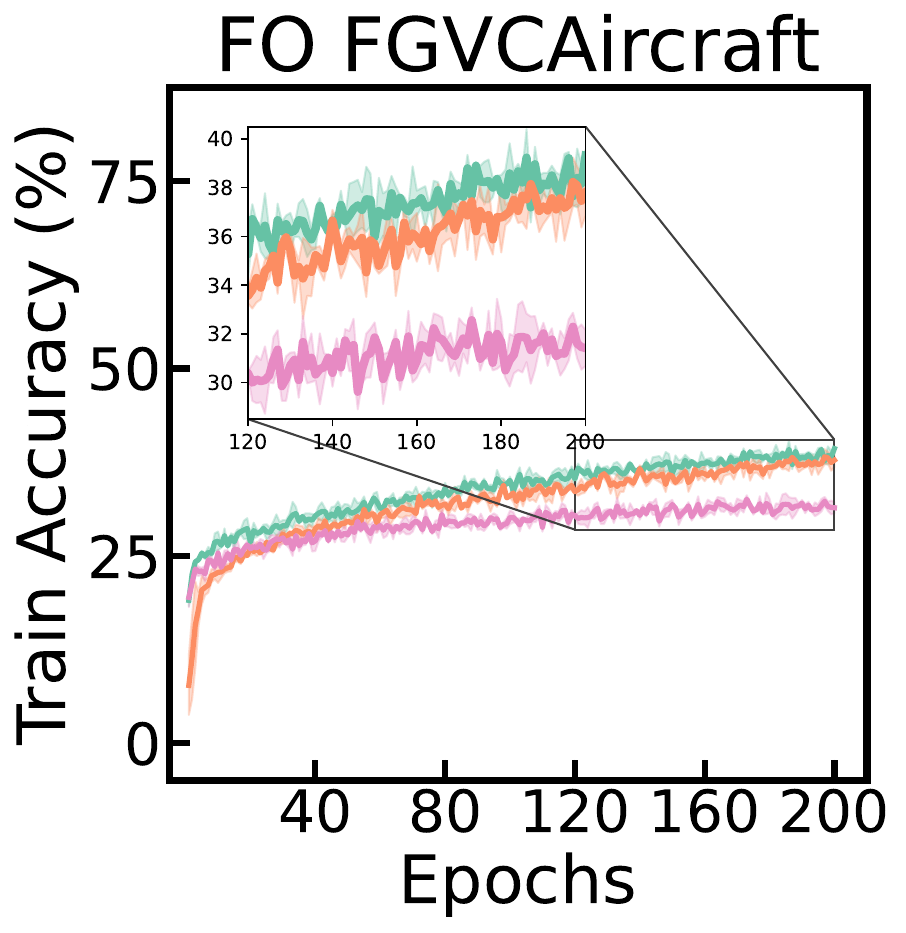}
    \end{subfigure}
    \begin{subfigure}{0.19\linewidth}
        \centering
        \includegraphics[width=\linewidth]{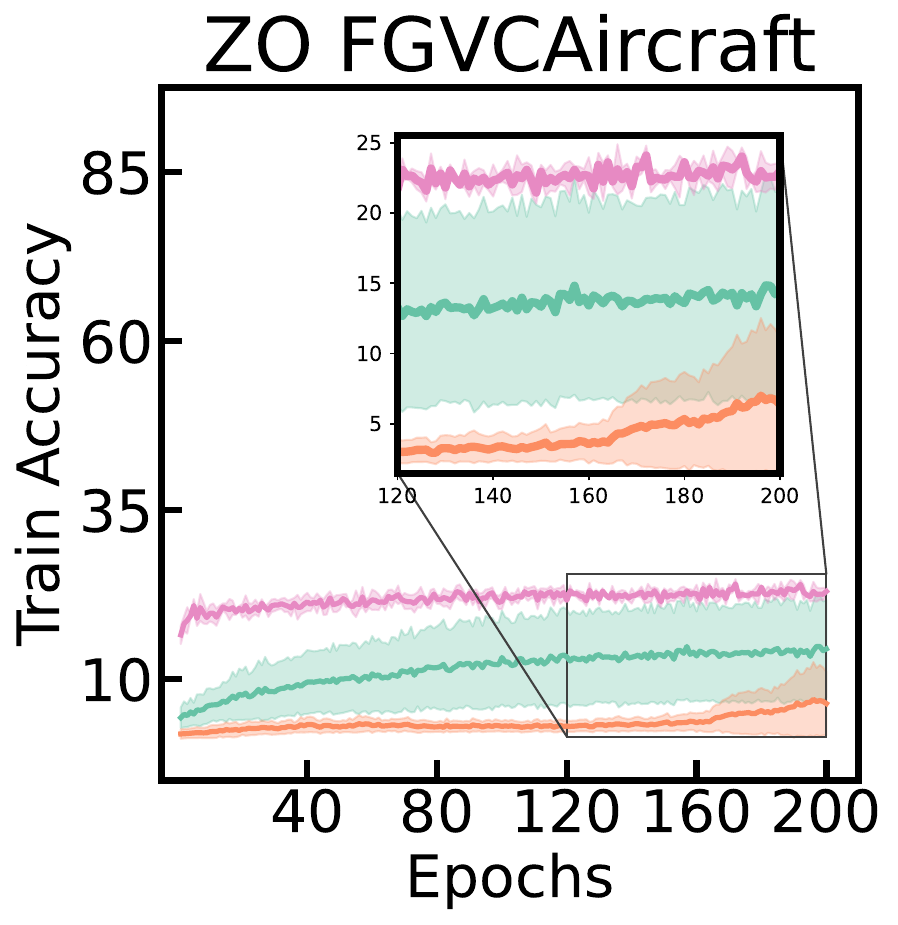}
    \end{subfigure}

    
    \begin{subfigure}{0.19\linewidth}
        \centering
        \includegraphics[width=\linewidth]{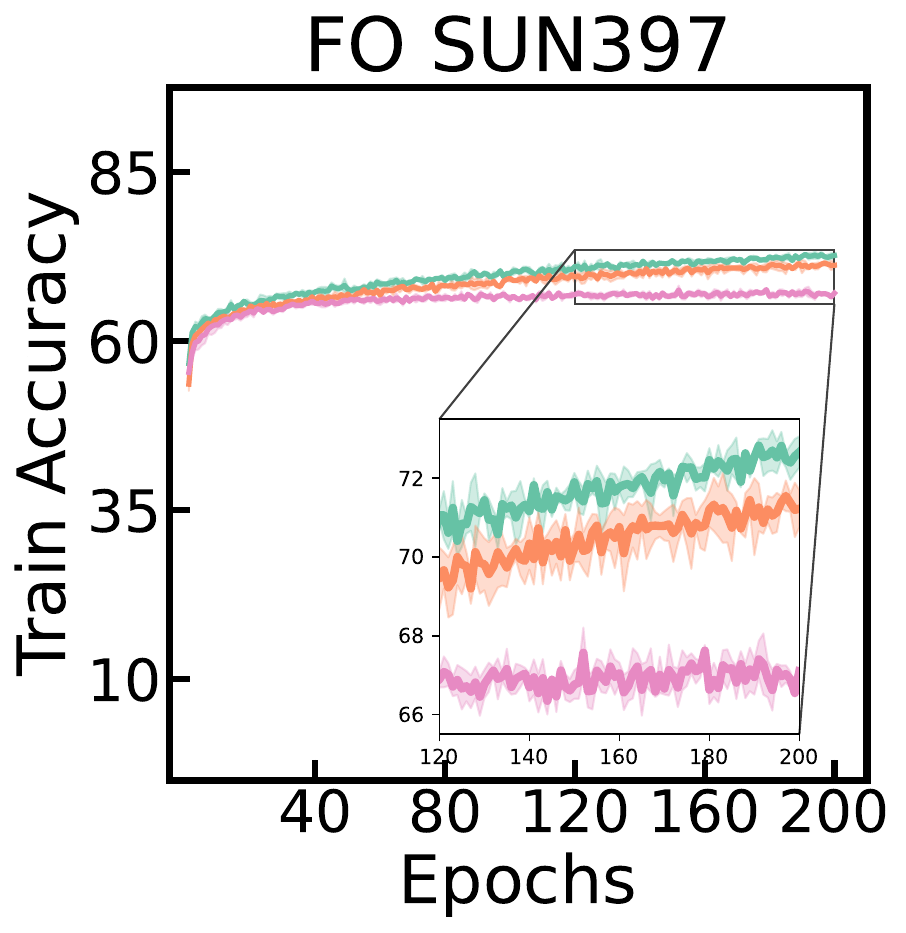}
    \end{subfigure}
    \begin{subfigure}{0.19\linewidth}
        \centering
        \includegraphics[width=\linewidth]{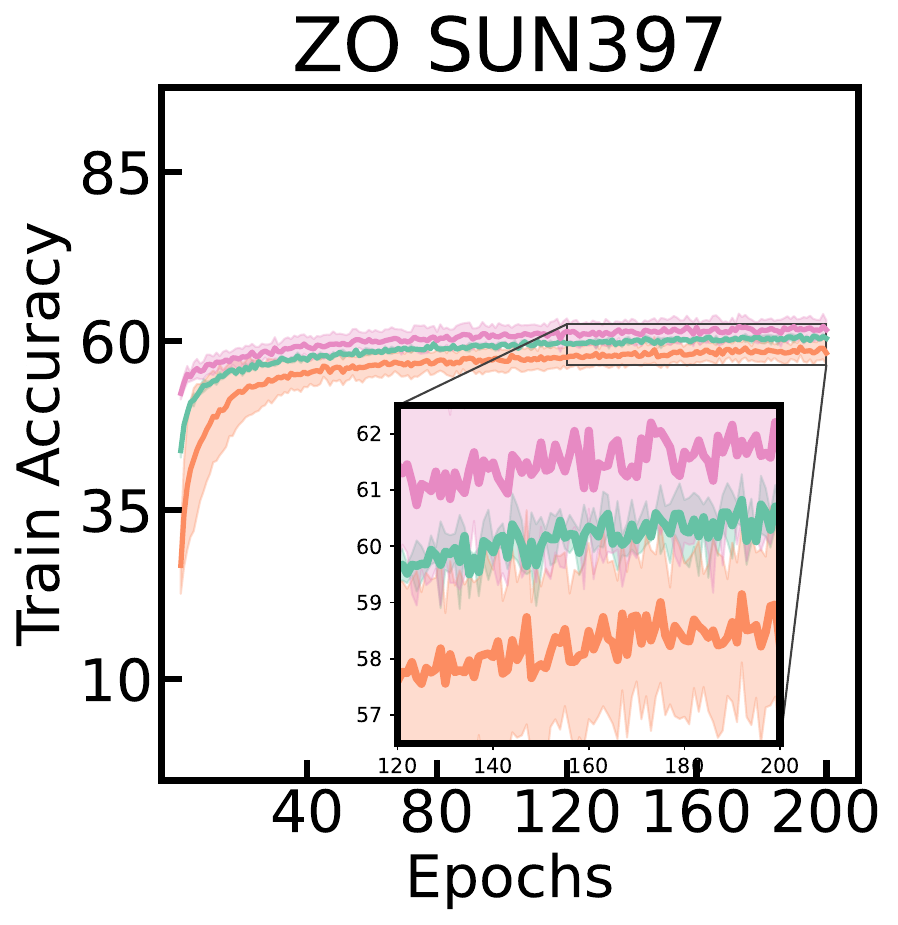}
    \end{subfigure}
    \begin{subfigure}{0.19\linewidth}
        \centering
        \includegraphics[width=\linewidth]{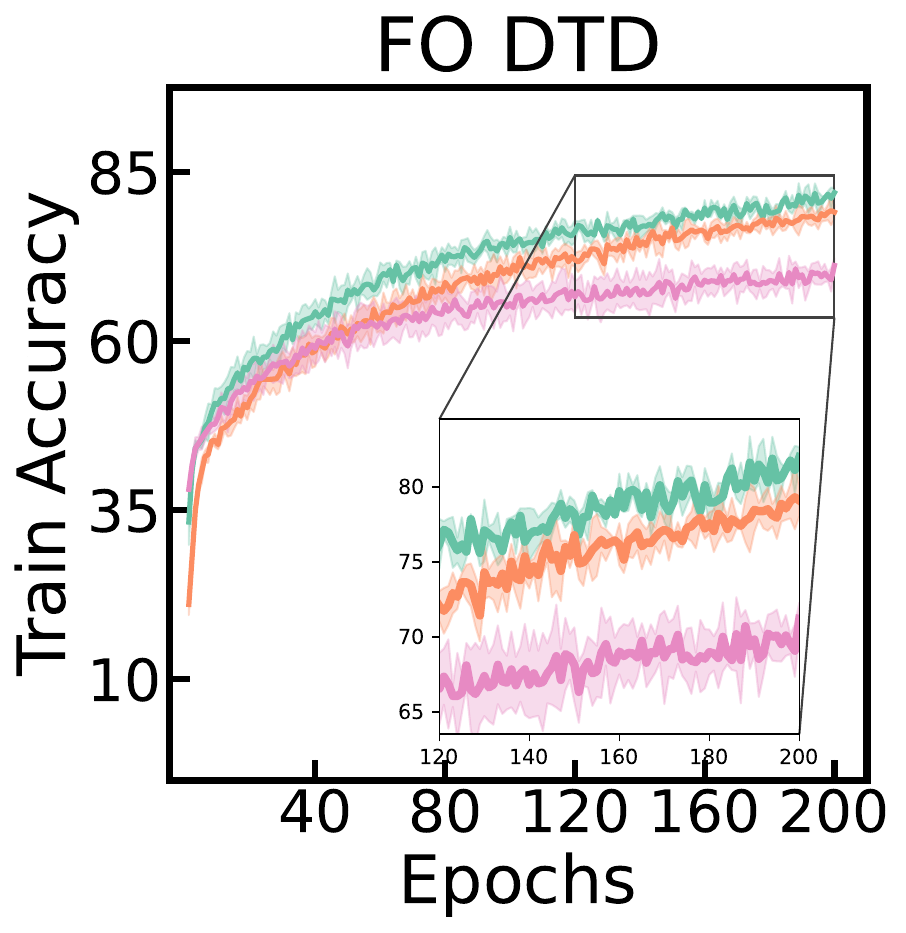}
    \end{subfigure}
    \begin{subfigure}{0.19\linewidth}
        \centering
        \includegraphics[width=\linewidth]{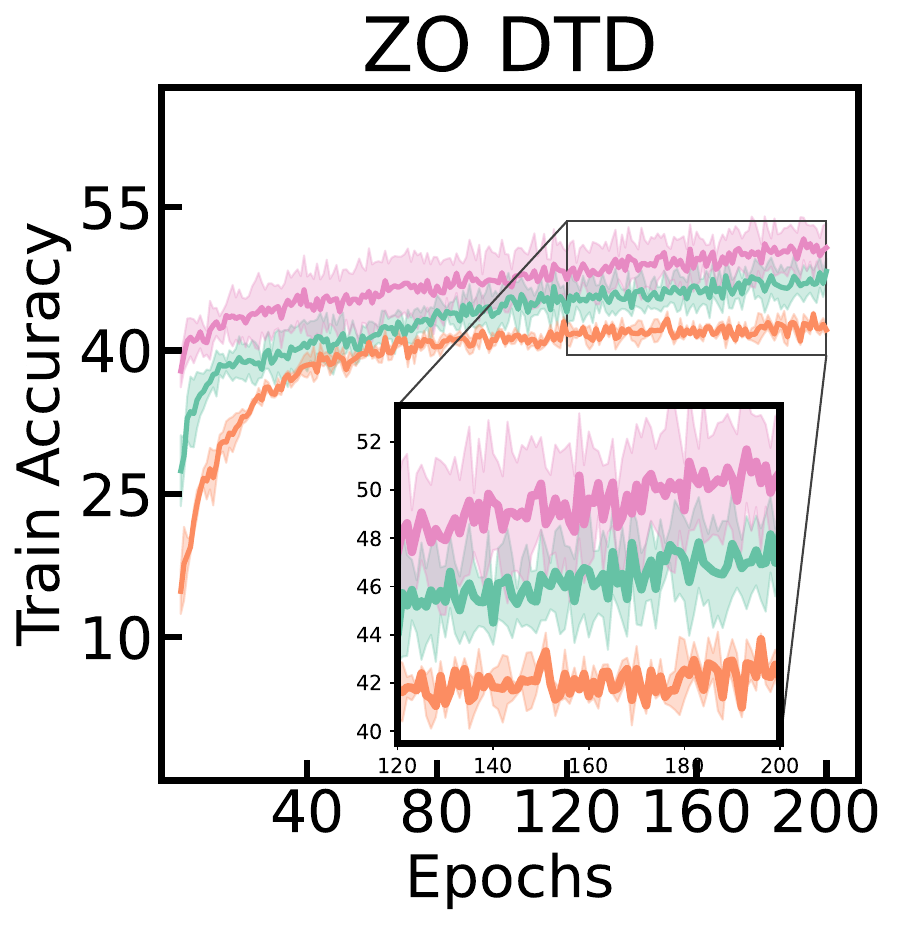}
    \end{subfigure}

    
    \begin{subfigure}{0.19\linewidth}
        \centering
        \includegraphics[width=\linewidth]{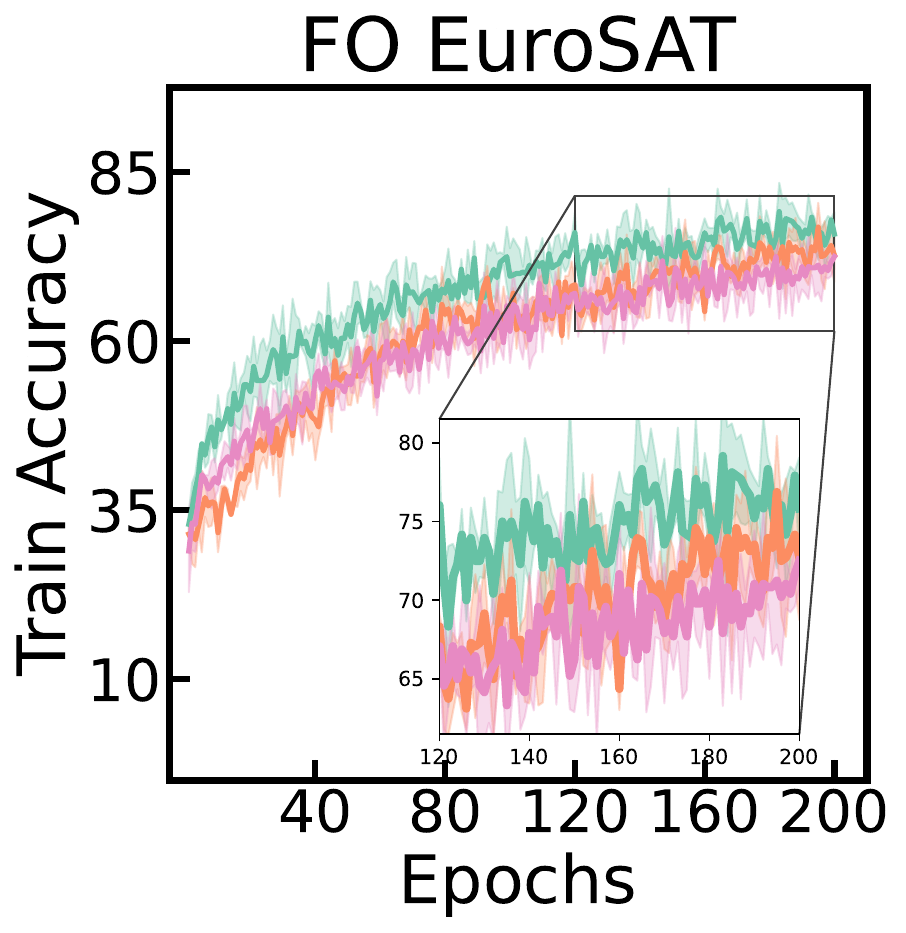}
    \end{subfigure}
    \begin{subfigure}{0.19\linewidth}
        \centering
        \includegraphics[width=\linewidth]{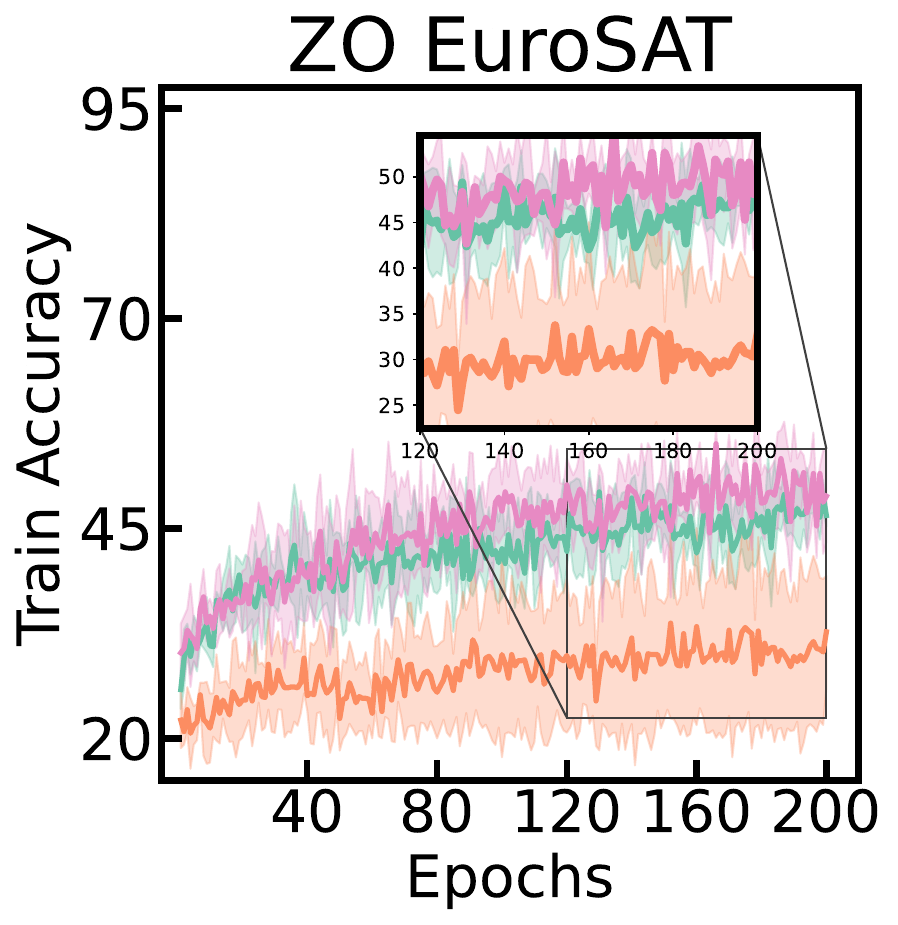}
    \end{subfigure}
    \begin{subfigure}{0.19\linewidth}
        \centering
        \includegraphics[width=\linewidth]{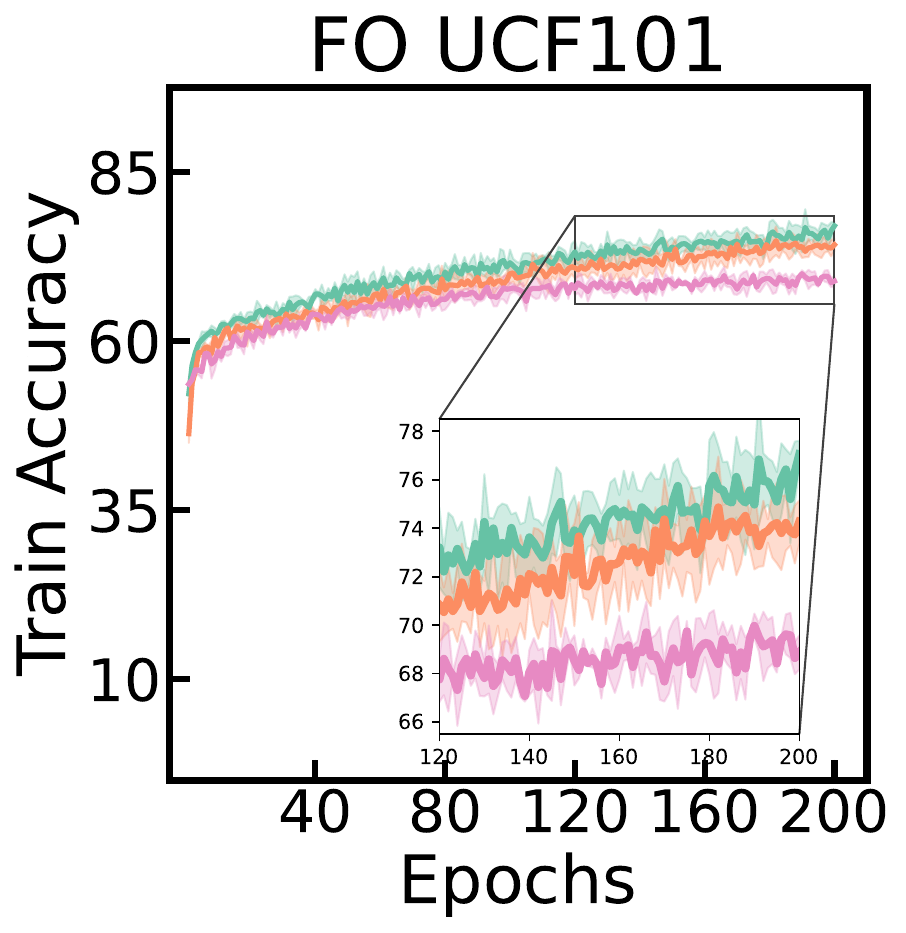}
    \end{subfigure}
    \begin{subfigure}{0.19\linewidth}
        \centering
        \includegraphics[width=\linewidth]{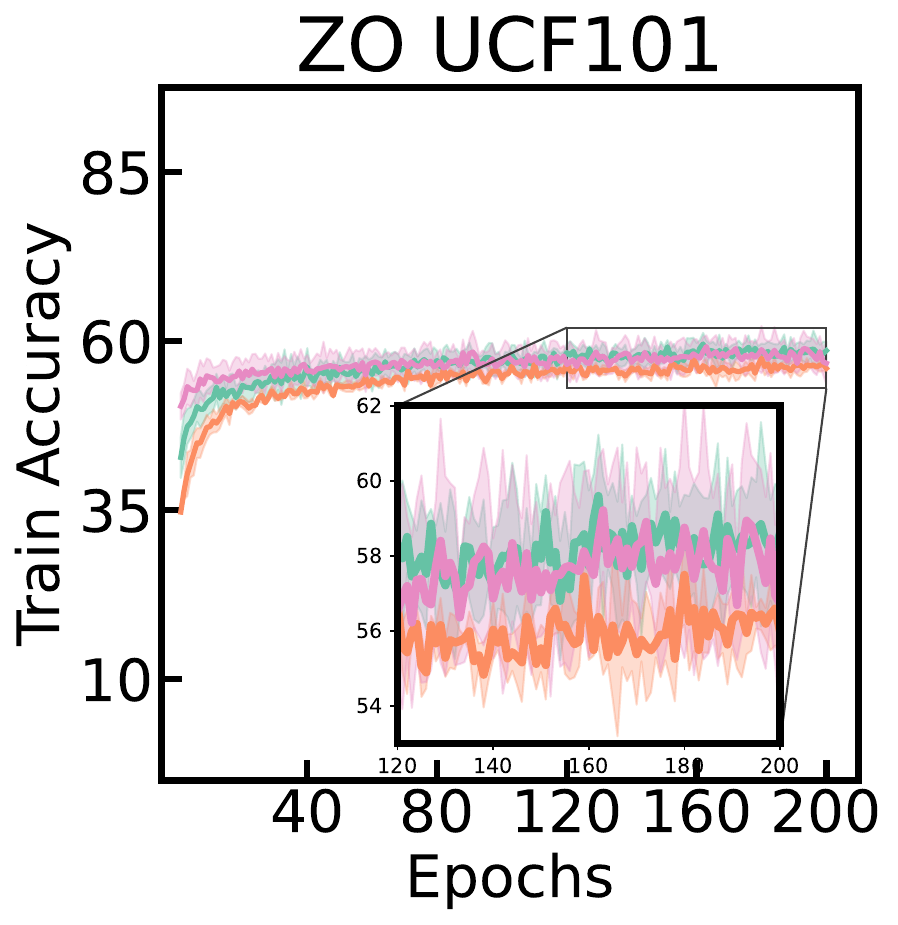}
    \end{subfigure}

    
    \begin{subfigure}{0.19\linewidth}
        \centering
        \includegraphics[width=\linewidth]{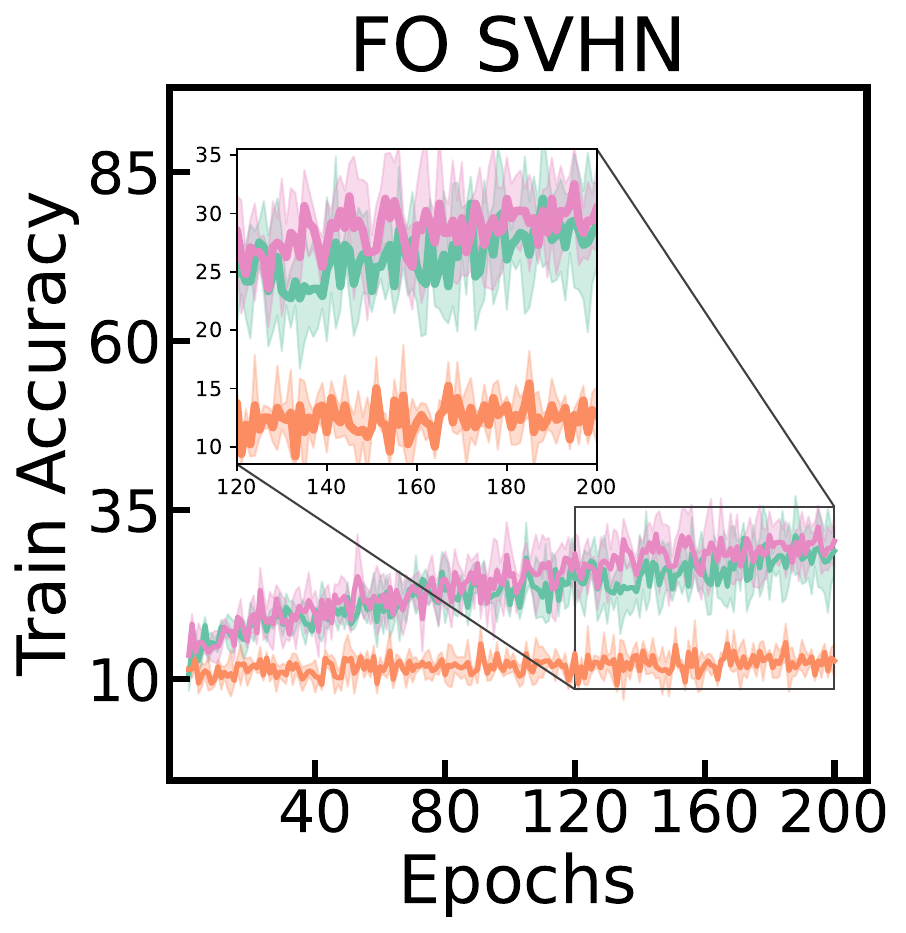}
    \end{subfigure}
    \begin{subfigure}{0.19\linewidth}
        \centering
        \includegraphics[width=\linewidth]{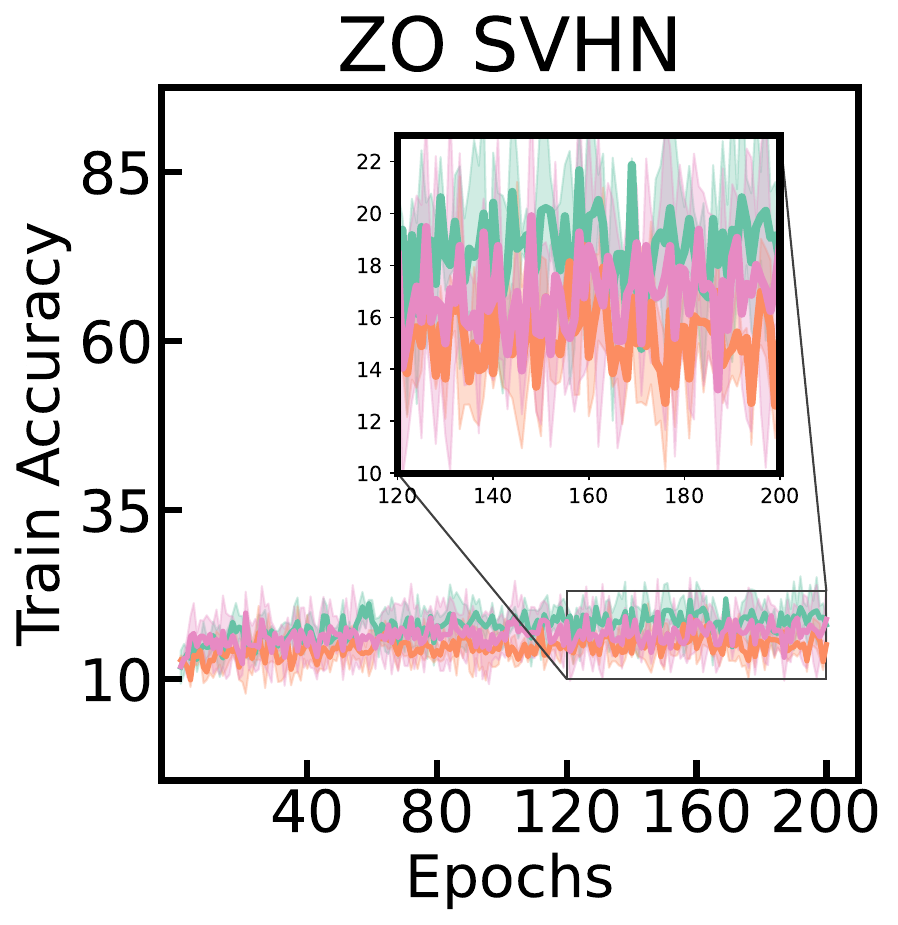}
    \end{subfigure}
    \begin{subfigure}{0.19\linewidth}
        \centering
        \includegraphics[width=\linewidth]{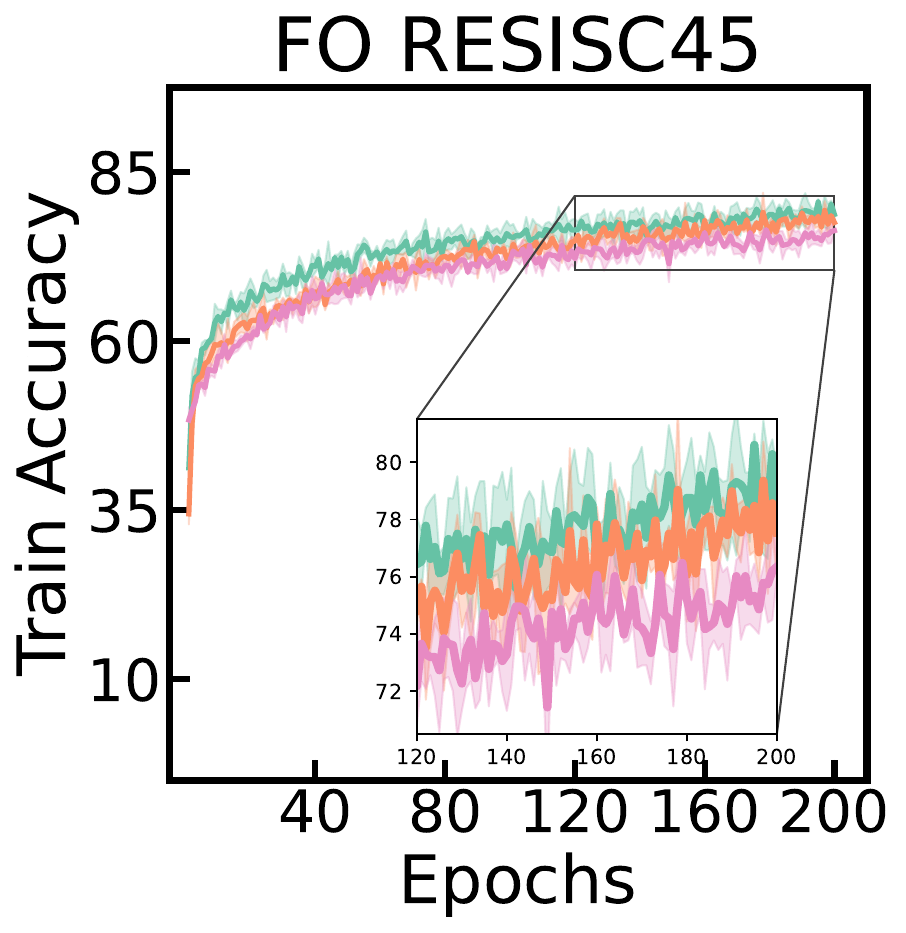}
    \end{subfigure}
    \begin{subfigure}{0.19\linewidth}
        \centering
        \includegraphics[width=\linewidth]{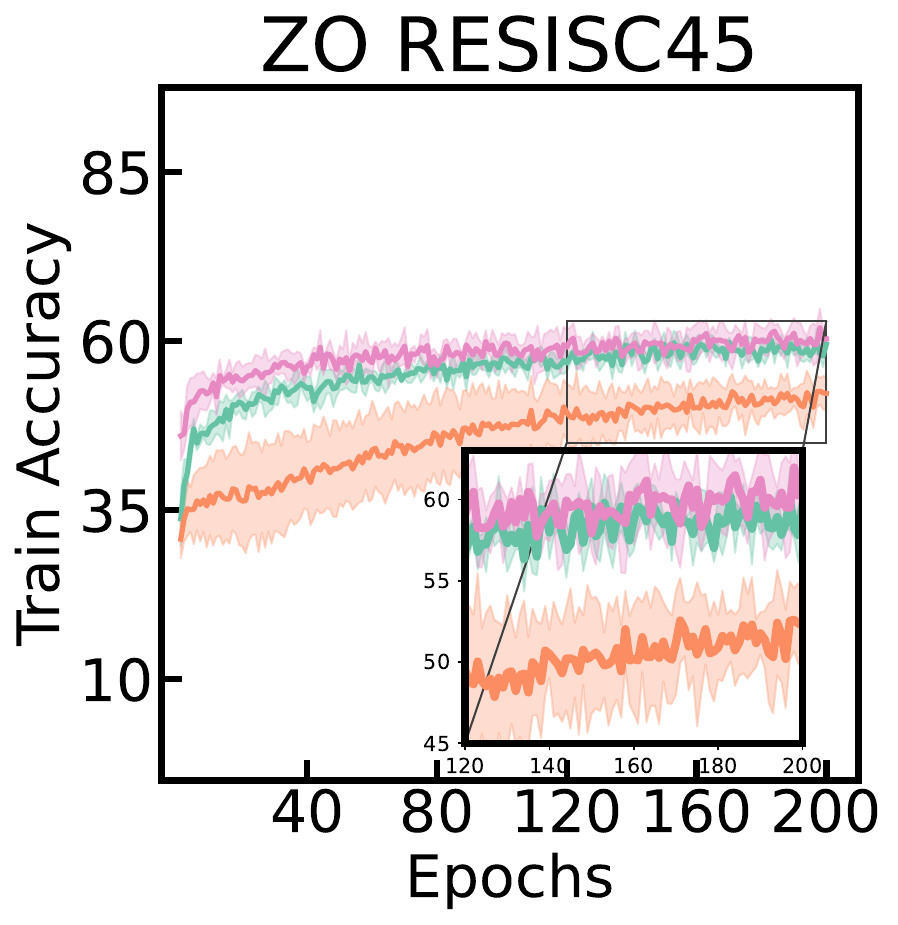}
    \end{subfigure}


    \begin{subfigure}{0.19\linewidth}
        \centering
        \includegraphics[width=\linewidth]{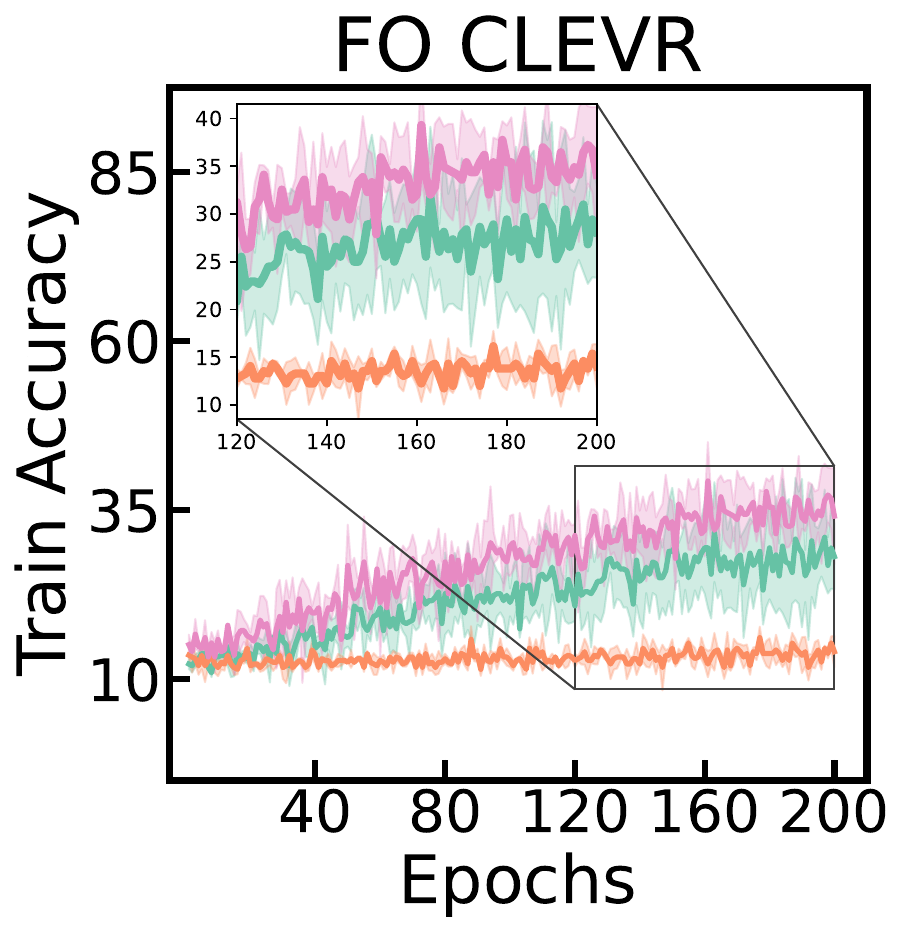}
    \end{subfigure}
    \begin{subfigure}{0.19\linewidth}
        \centering
        \includegraphics[width=\linewidth]{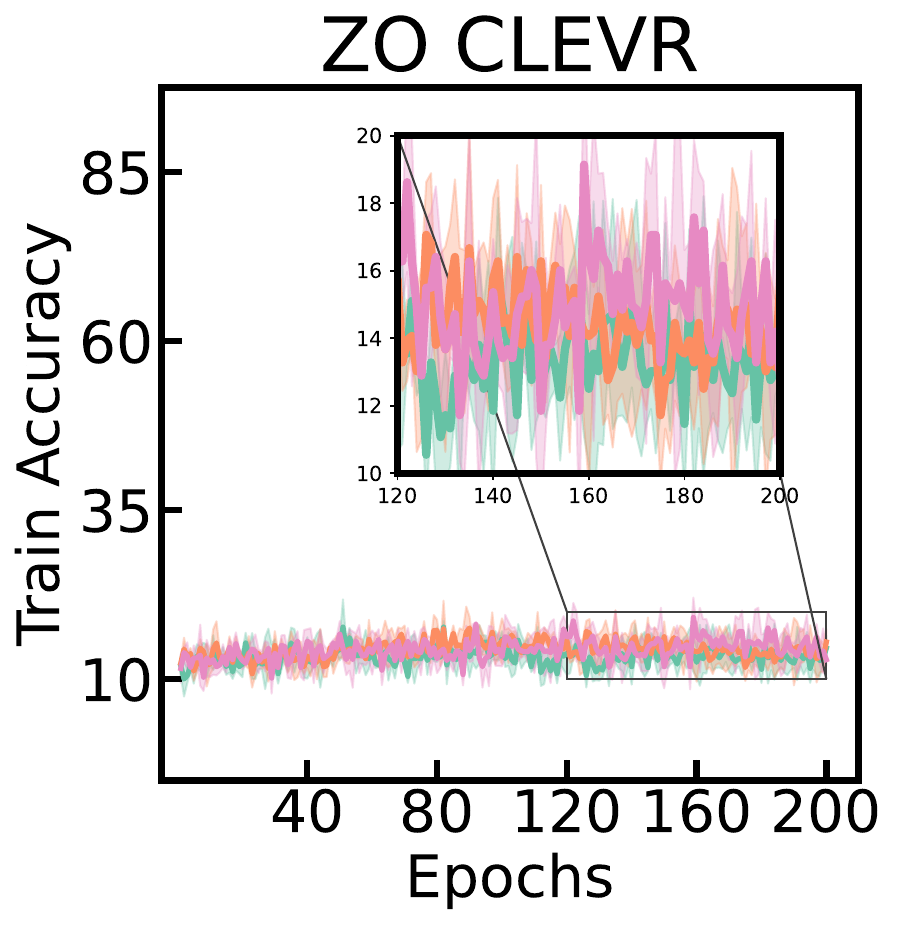}
    \end{subfigure}
    
    \caption{
        Effect of optimization methods in prompt tuning across various vision-language tasks.
        }
    \label{fig:appendix limitations}
    \vspace{-0.75em}
\end{figure}
We conduct a series of experiments to examine how varying the number of context tokens affects both first-order and zeroth-order optimization methods across multiple datasets, as illustrated in \Cref{fig:limitation} and \ref{fig:appendix limitations}.

Our findings reveal that zeroth-order optimization generally performs better with fewer context tokens (\eg, 1 token). 
However, certain datasets such as UCF101, SVHN, and CLEVR deviate from this trend.
In contrast, first-order optimization typically aligns with the trends shown in \Cref{sec:limitation}, displaying improved accuracy with a moderate number of context tokens across most datasets, except for SVHN and CLEVR, which demonstrate variations in optimal token counts.

These results suggest that while the optimal number of context tokens depends on the dataset, first-order optimization generally benefits from a larger context token count, whereas zeroth-order optimization tends to be more effective with fewer tokens.

\subsection{Query efficiency}
\label{Trainingcurves_further}
\begin{figure}[h!]
    \centering
    \begin{subfigure}{0.19\linewidth}
        \centering
        \includegraphics[width=\linewidth]{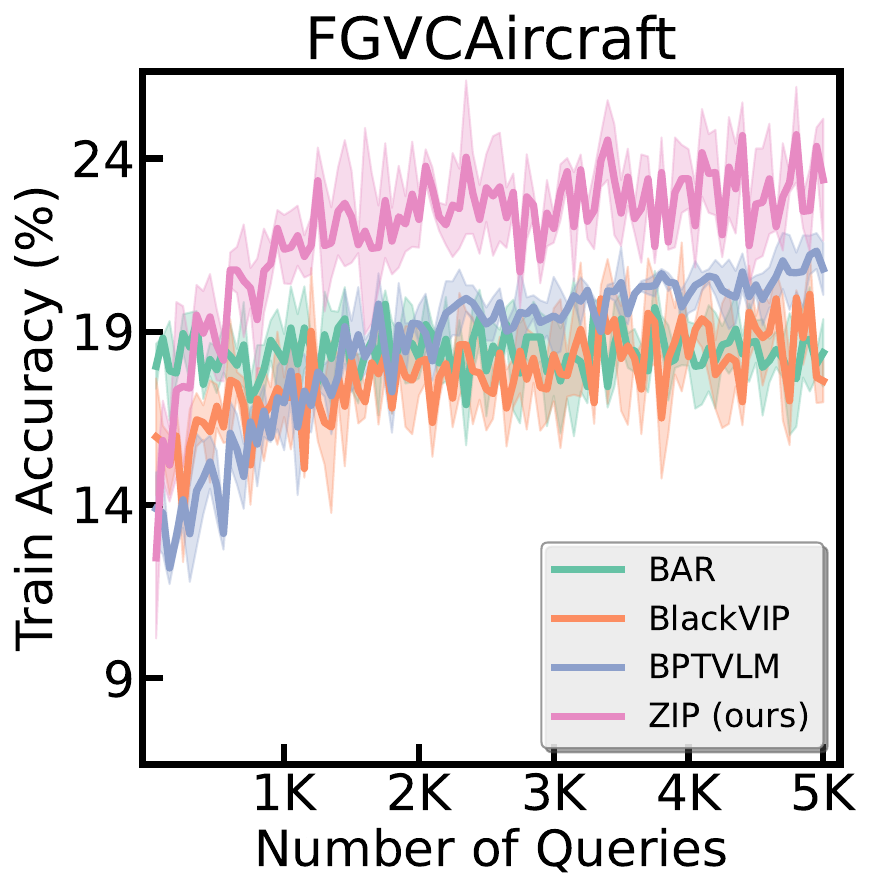}
    \end{subfigure}
    \begin{subfigure}{0.19\linewidth}
        \centering
        \includegraphics[width=\linewidth]{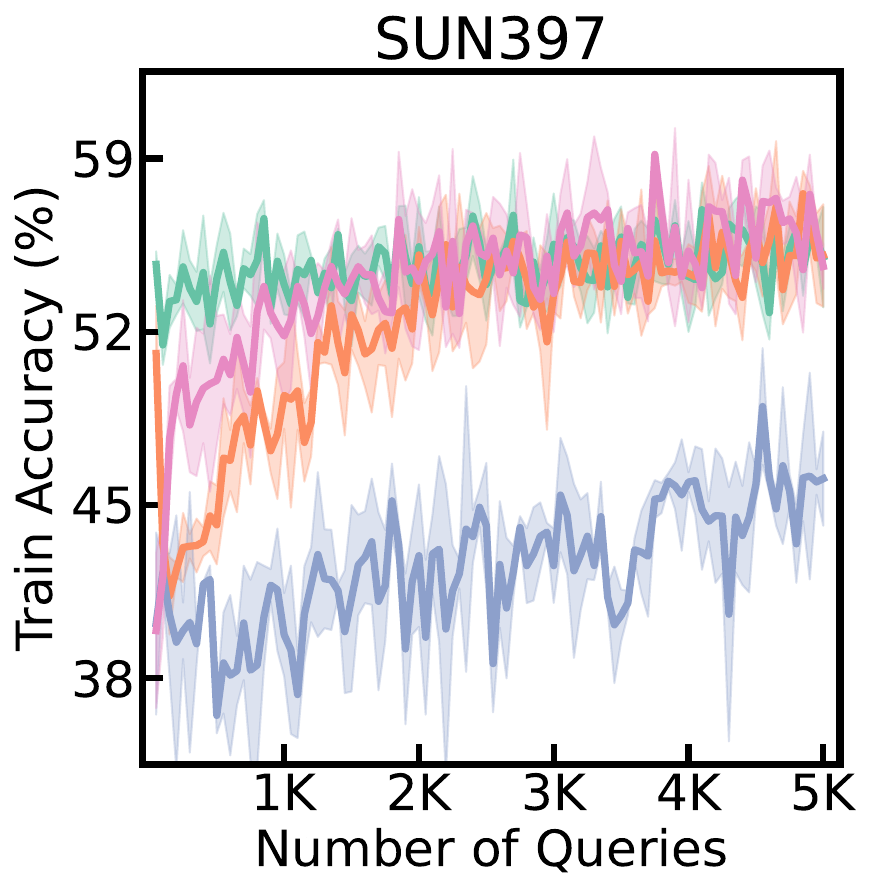}
    \end{subfigure}
    \begin{subfigure}{0.19\linewidth}
        \centering
        \includegraphics[width=\linewidth]{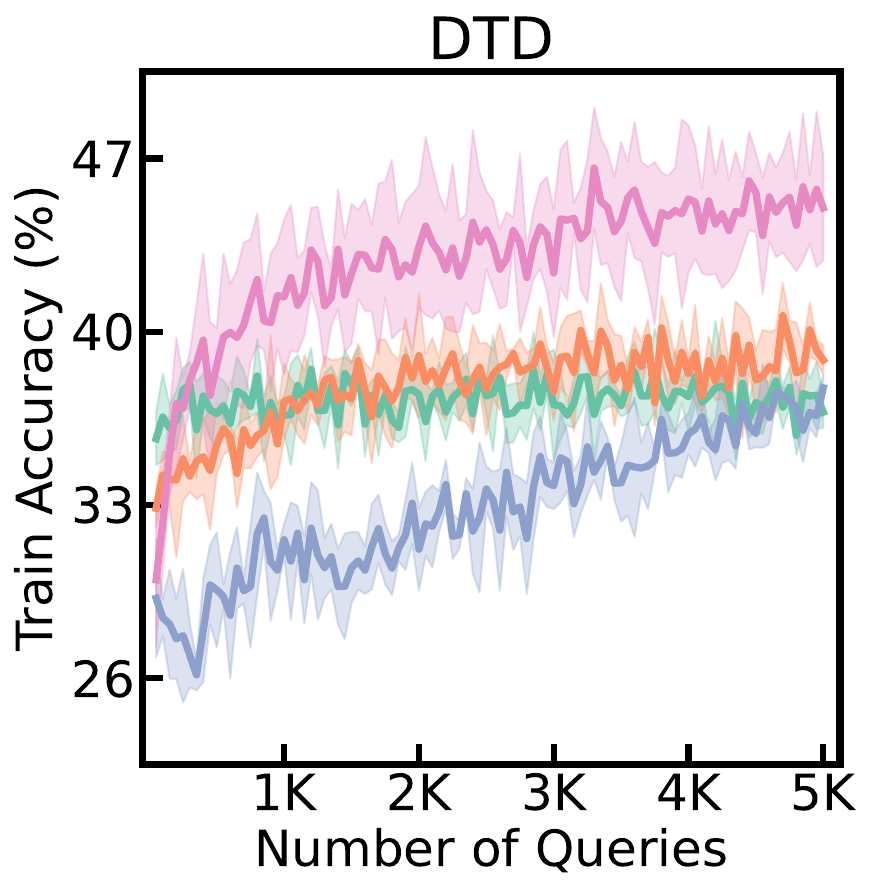}
    \end{subfigure}
    \begin{subfigure}{0.19\linewidth}
        \centering
        \includegraphics[width=\linewidth]{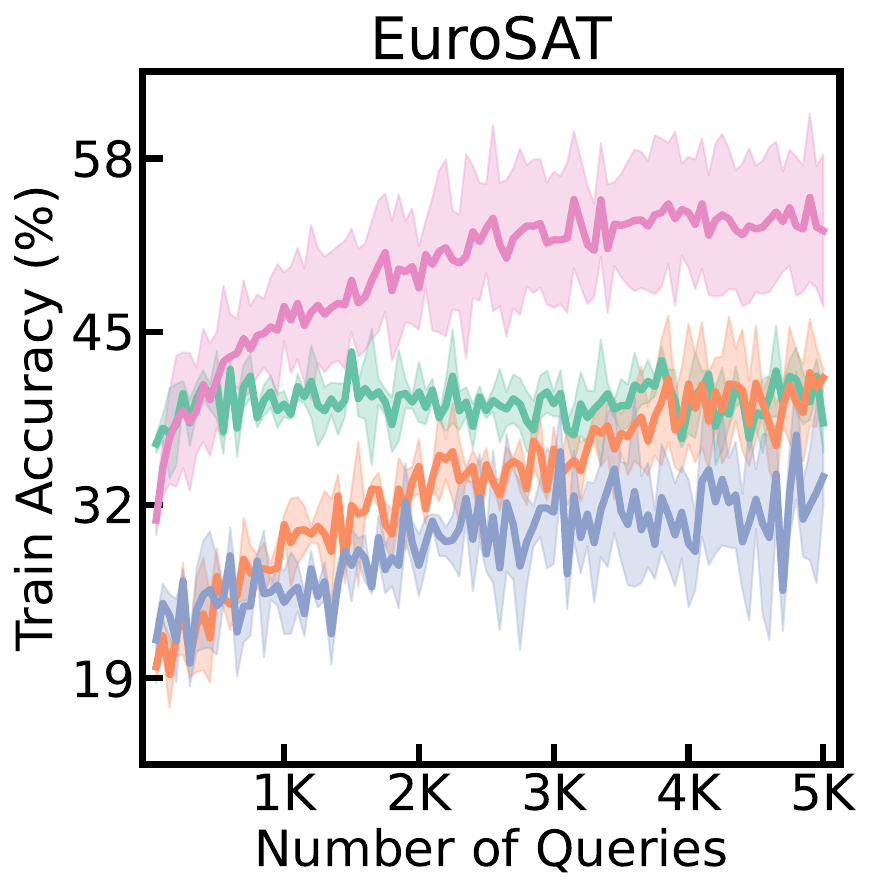}
    \end{subfigure}

    
    \begin{subfigure}{0.19\linewidth}
        \centering
        \includegraphics[width=\linewidth]{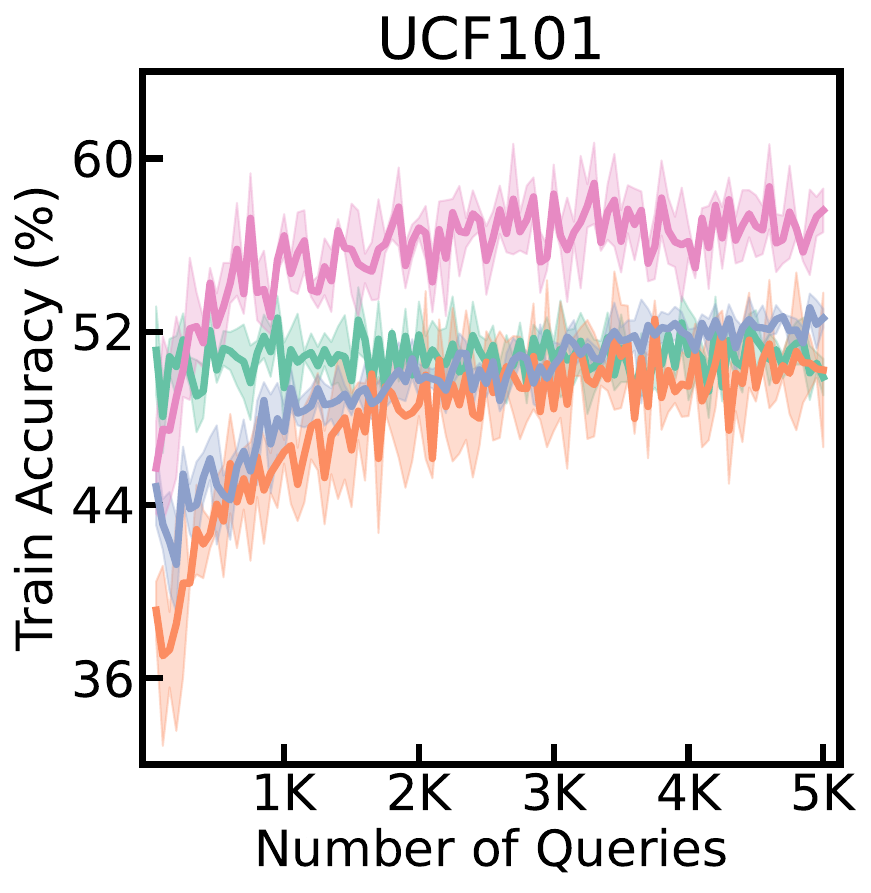}
    \end{subfigure}
    \begin{subfigure}{0.19\linewidth}
        \centering
        \includegraphics[width=\linewidth]{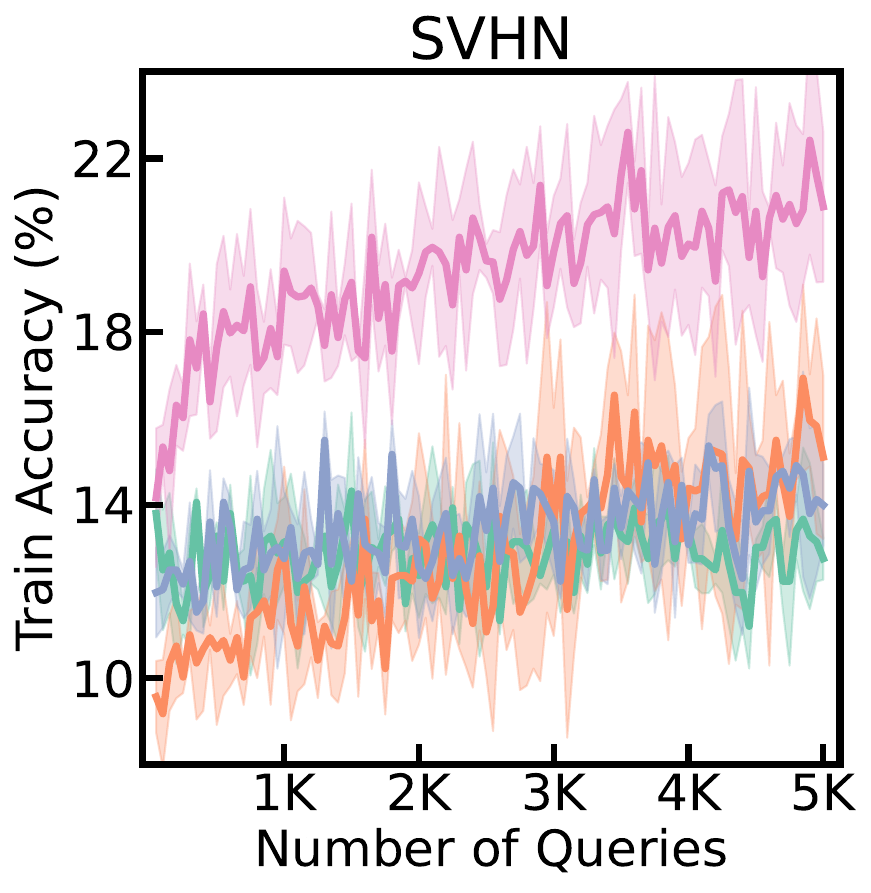}
    \end{subfigure}
    \begin{subfigure}{0.19\linewidth}
        \centering
        \includegraphics[width=\linewidth]{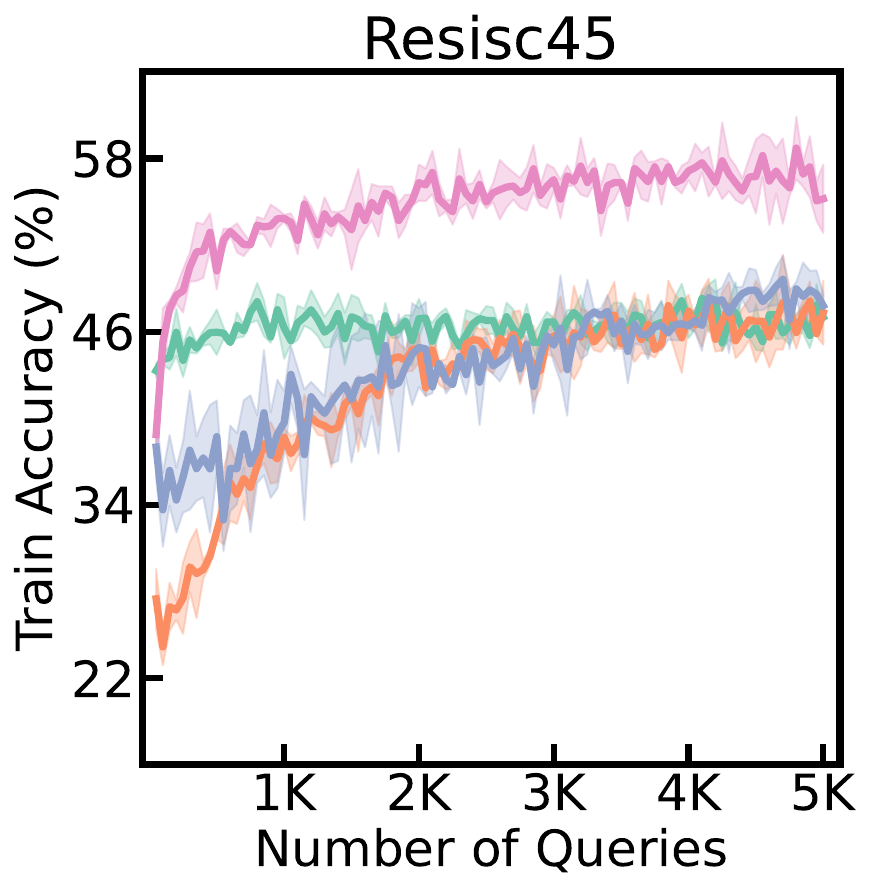}
    \end{subfigure}
    \begin{subfigure}{0.19\linewidth}
        \centering
        \includegraphics[width=\linewidth]{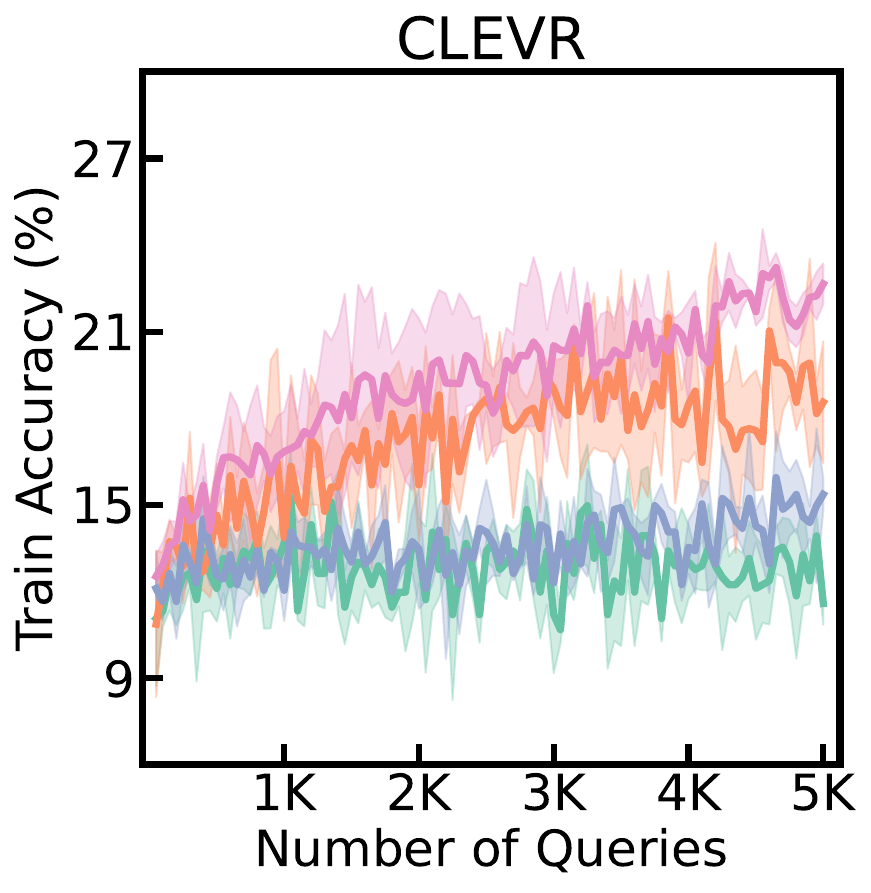}
    \end{subfigure}

    
    \begin{subfigure}{0.19\linewidth}
        \centering
        \includegraphics[width=\linewidth]{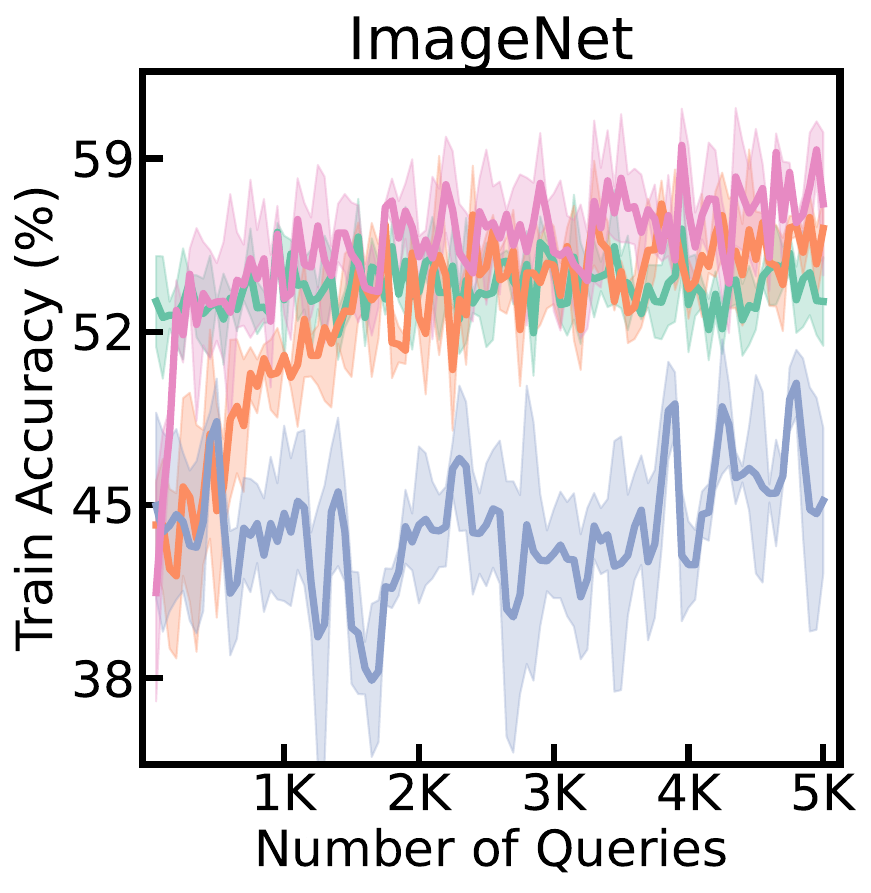}
    \end{subfigure}
    
    \caption{
        Training curves with 5,000 query budgets across various vision-language tasks.
        }
        \label{fig:appendix_convergence_rate}
    \vspace{-0.75em}
\end{figure}
\Cref{fig:convergence_rate} and \ref{fig:appendix_convergence_rate} display the training accuracy curves of \zip under a 5,000 query budget across various tasks. 
Throughout the training process, \zip consistently demonstrates faster training speeds and achieves higher accuracy compared to other BBPT methods across most datasets, highlighting its capability to utilize the available query budget more efficiently.

\begin{figure}[!ht]
    \centering
    \includegraphics[width=0.8\linewidth]{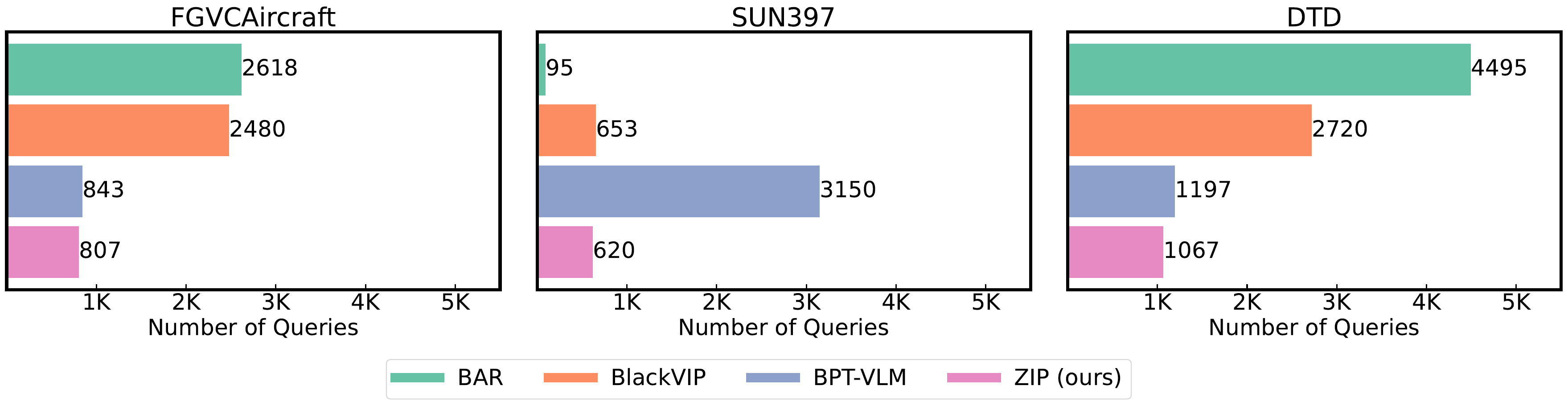}
    \includegraphics[width=0.8\linewidth]{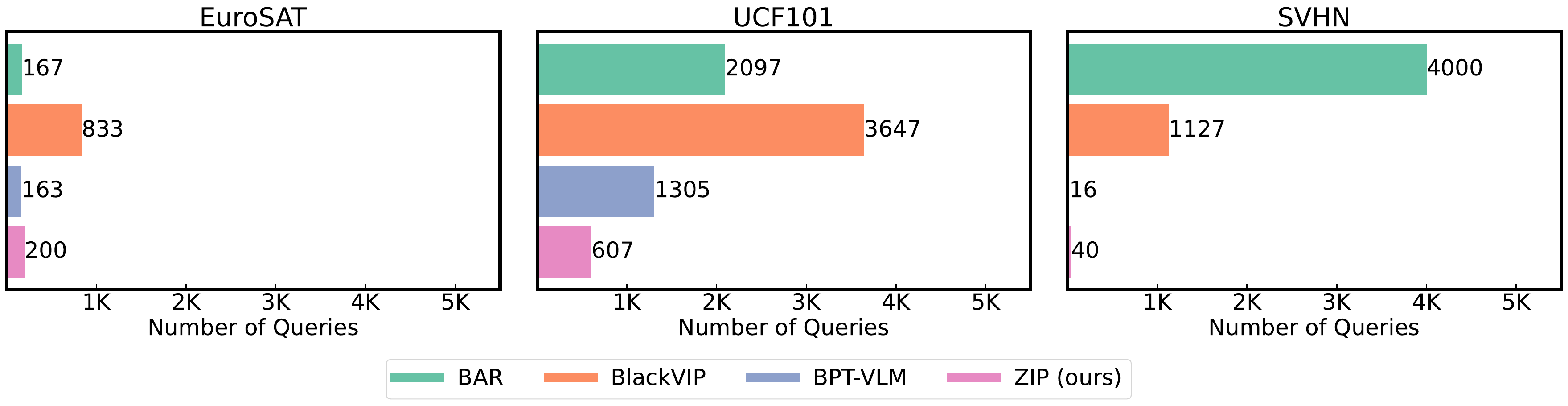}
    \includegraphics[width=0.8\linewidth]{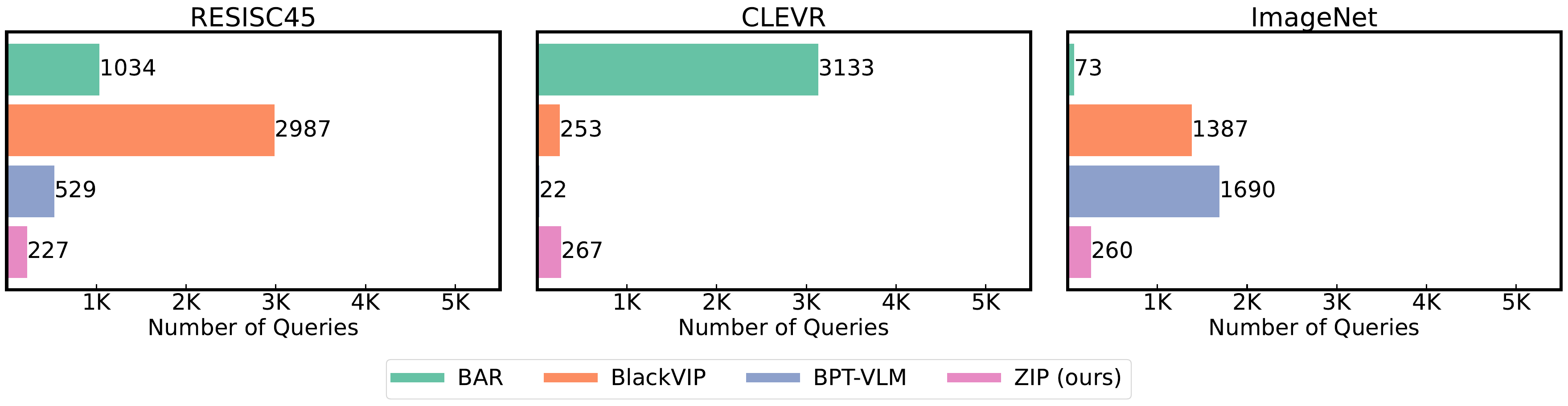}
    \caption{
        Queries to reach target accuracy across various vision-language tasks.
        }
    \label{fig:appendix convergence threshold}
\end{figure}
In \Cref{fig:convergence threshold} and \ref{fig:appendix convergence threshold}, we further analyze the number of API calls required to reach specific accuracy targets across various datasets. 
The target accuracy is determined as the minimum of the maximum accuracy achieved by all methods. 
The results indicate that \zip consistently reaches these accuracy milestones with fewer queries than other methods, underscoring its query-efficient design and adaptability across a diverse range of tasks.

\begin{figure}[h!]
    \centering
    \begin{subfigure}{0.19\linewidth}
        \centering
        \includegraphics[width=\linewidth]{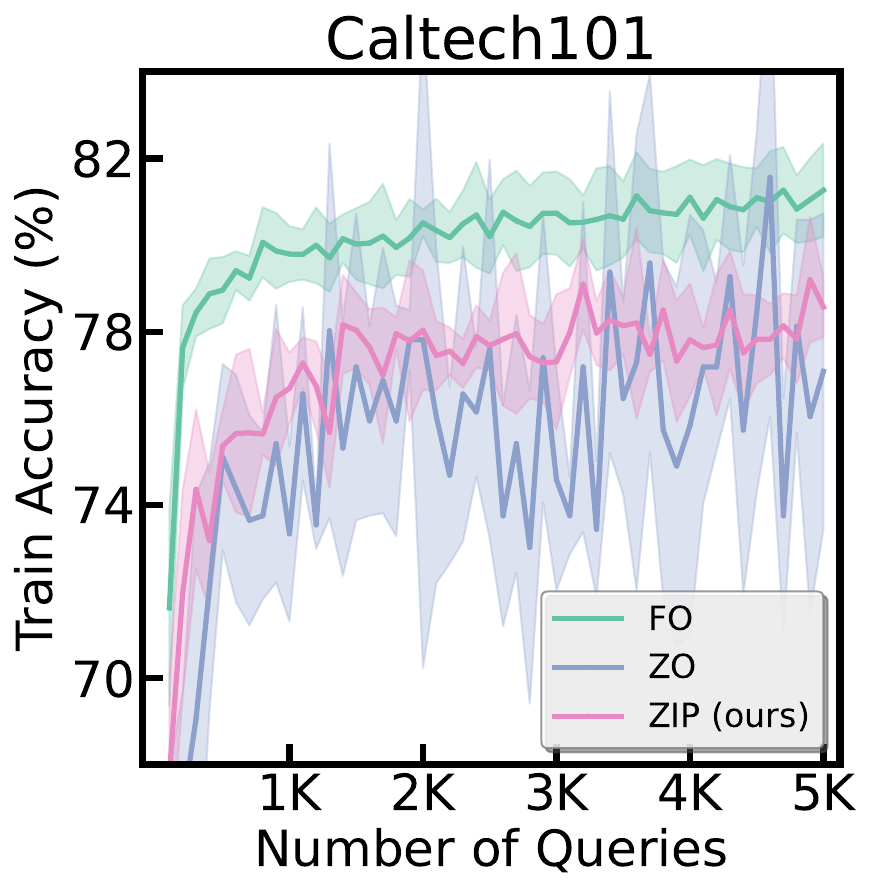}
    \end{subfigure}
    \begin{subfigure}{0.19\linewidth}
        \centering
        \includegraphics[width=\linewidth]{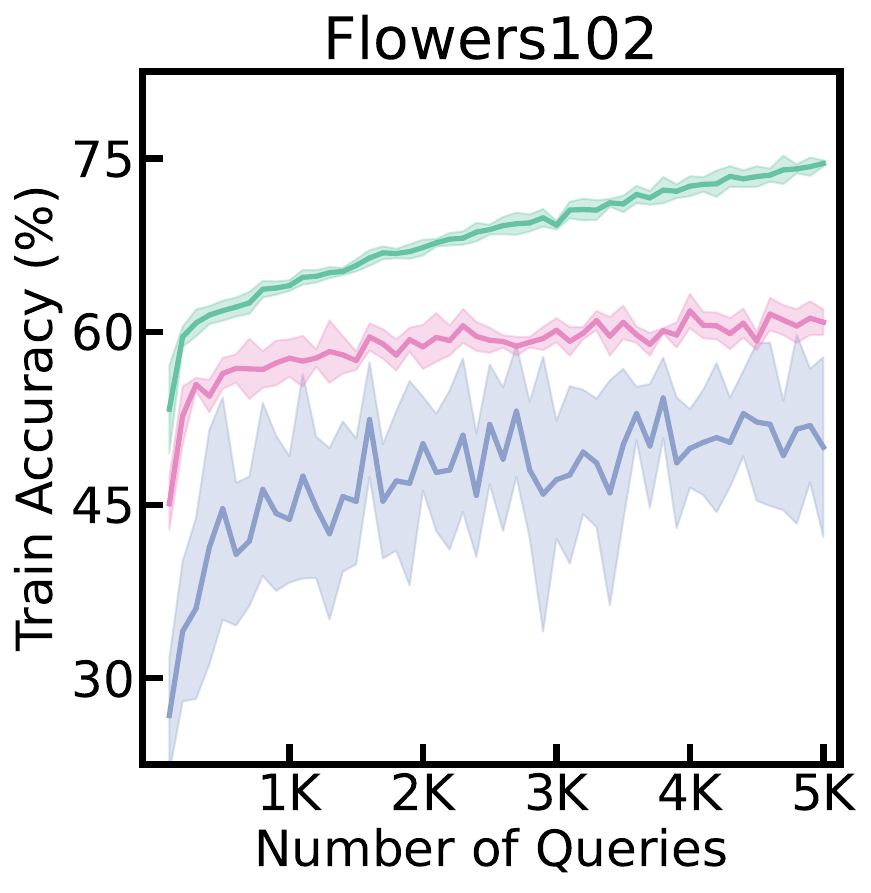}
    \end{subfigure}
    \begin{subfigure}{0.19\linewidth}
        \centering
        \includegraphics[width=\linewidth]{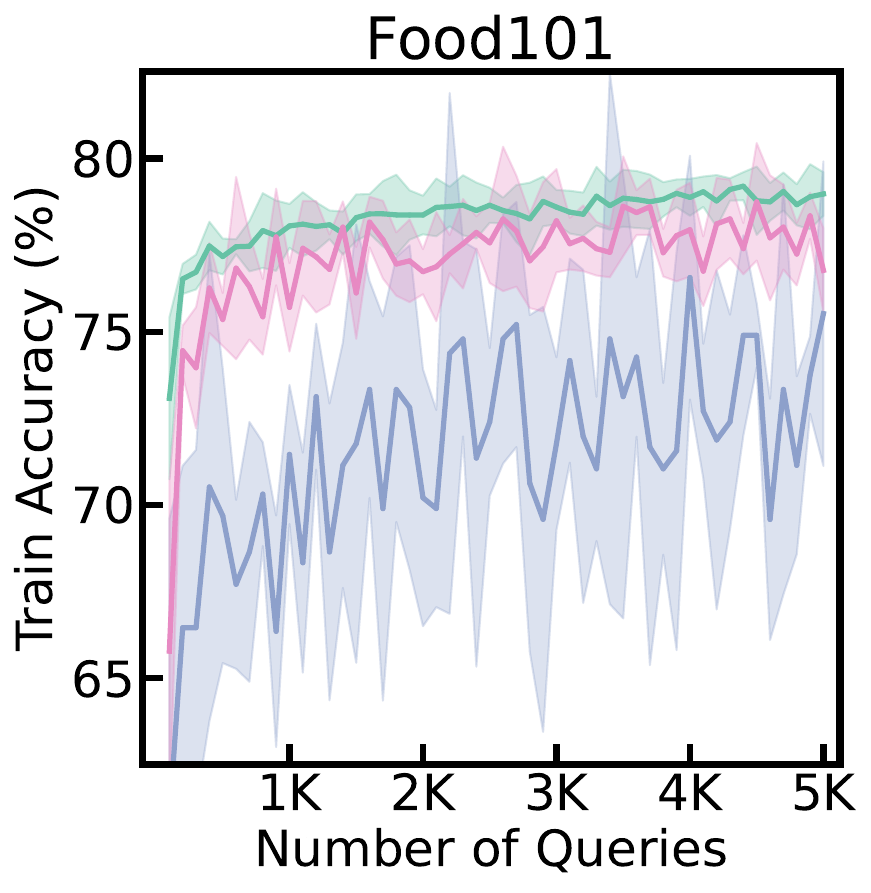}
    \end{subfigure}
    \begin{subfigure}{0.19\linewidth}
        \centering
        \includegraphics[width=\linewidth]{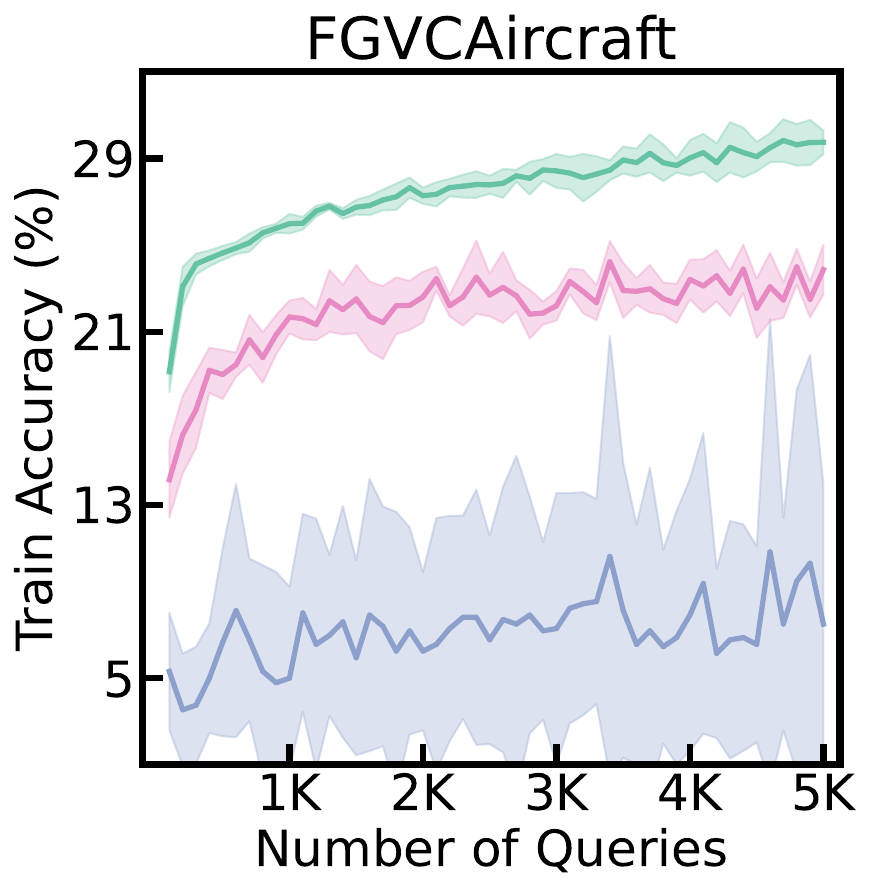}
    \end{subfigure}
    
    
    \begin{subfigure}{0.19\linewidth}
        \centering
        \includegraphics[width=\linewidth]{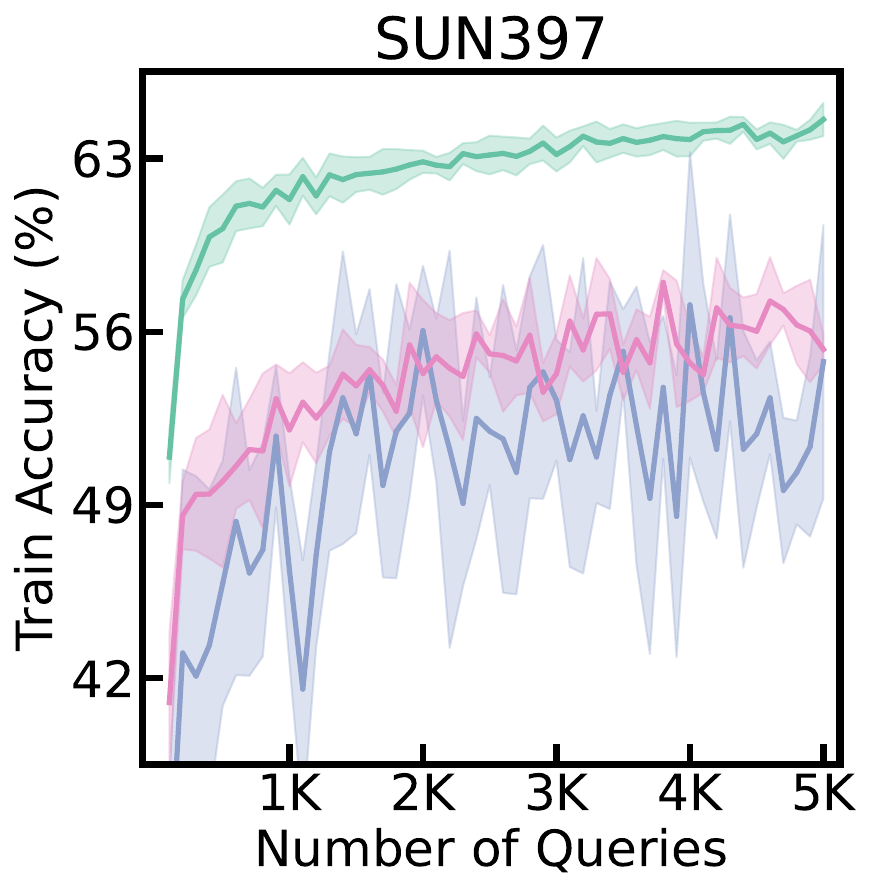}
    \end{subfigure}
    \begin{subfigure}{0.19\linewidth}
        \centering
        \includegraphics[width=\linewidth]{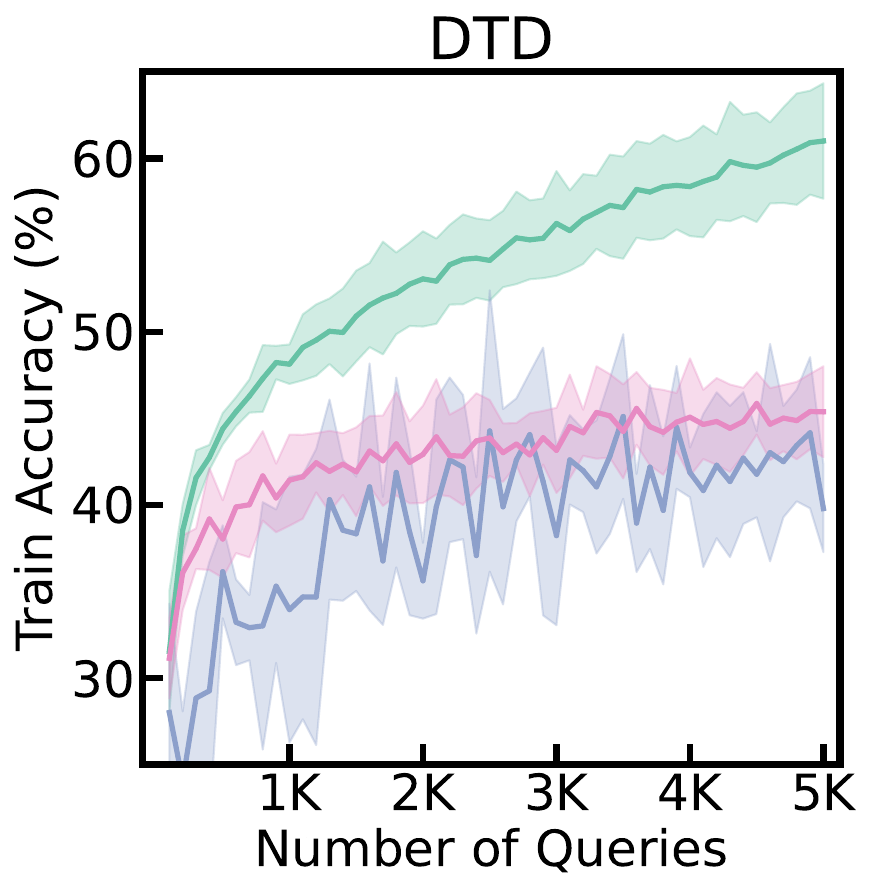}
    \end{subfigure}
    \begin{subfigure}{0.19\linewidth}
        \centering
        \includegraphics[width=\linewidth]{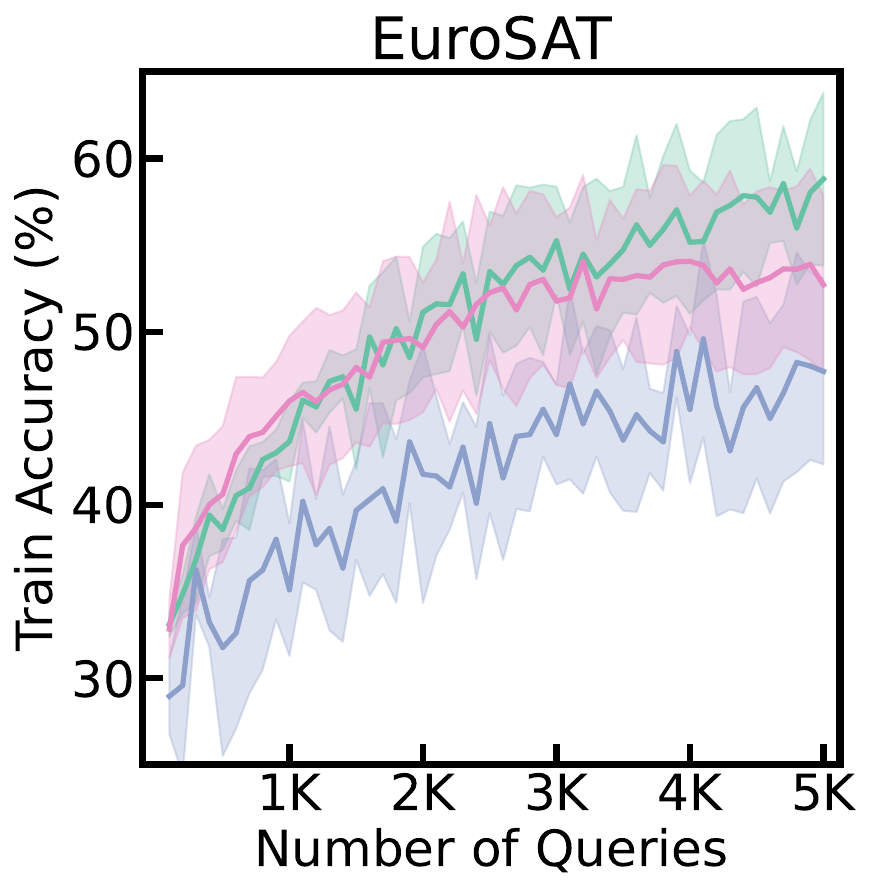}
    \end{subfigure}
    \begin{subfigure}{0.19\linewidth}
        \centering
        \includegraphics[width=\linewidth]{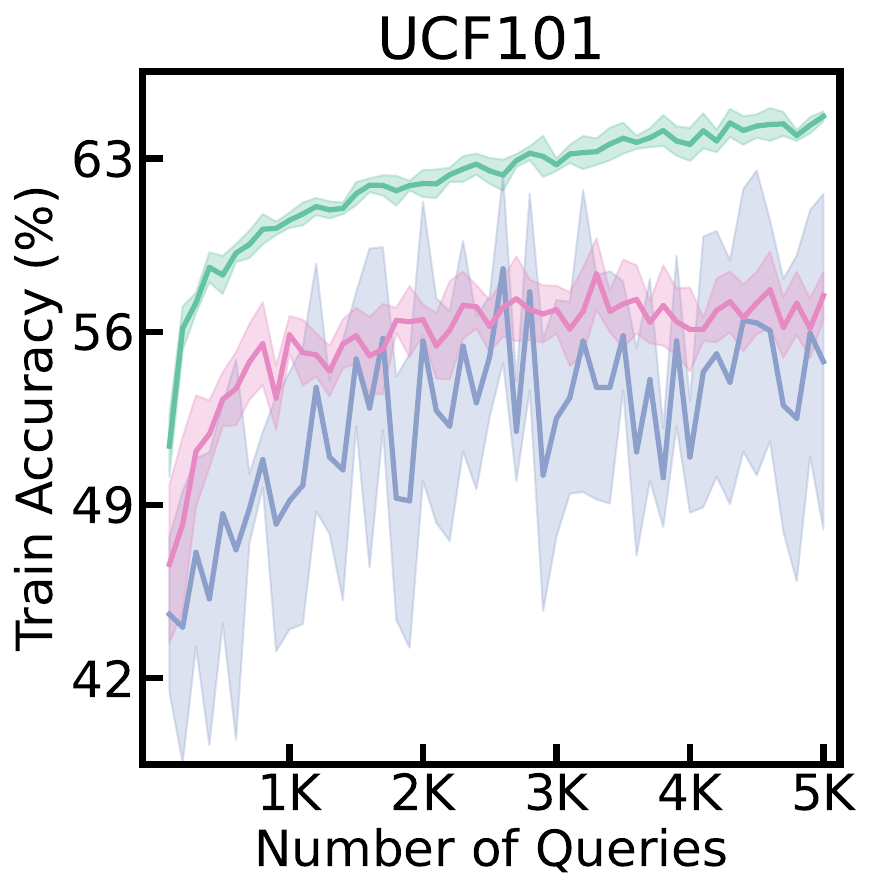}
    \end{subfigure}

    
    \begin{subfigure}{0.19\linewidth}
        \centering
        \includegraphics[width=\linewidth]{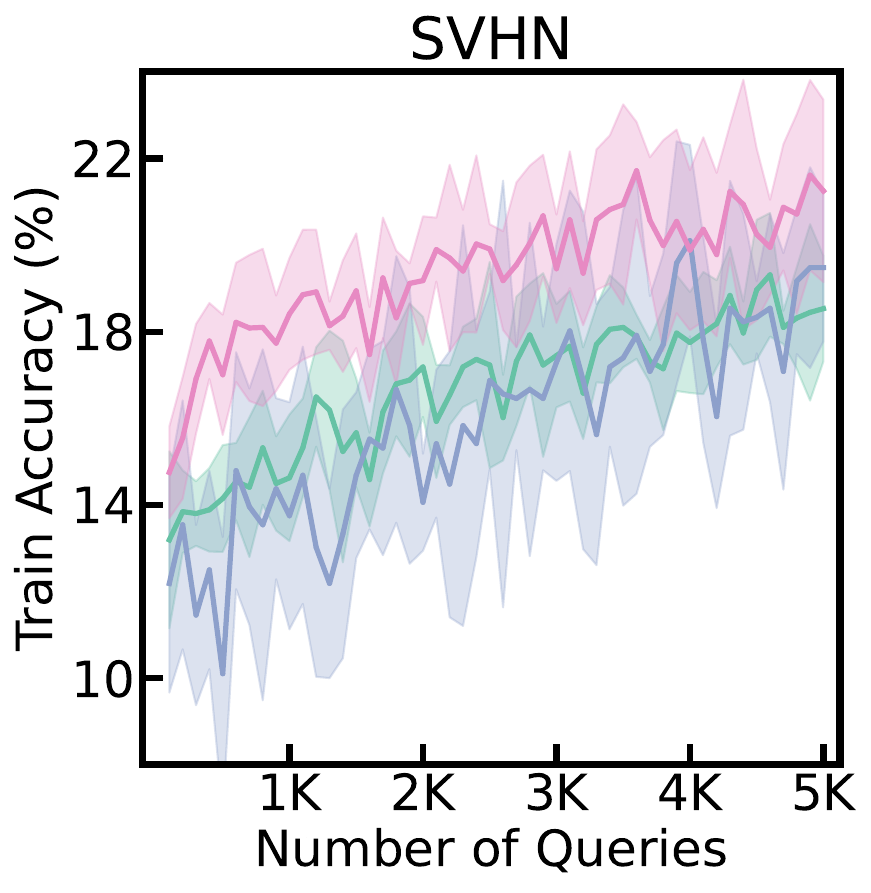}
    \end{subfigure}
    \begin{subfigure}{0.19\linewidth}
        \centering
        \includegraphics[width=\linewidth]{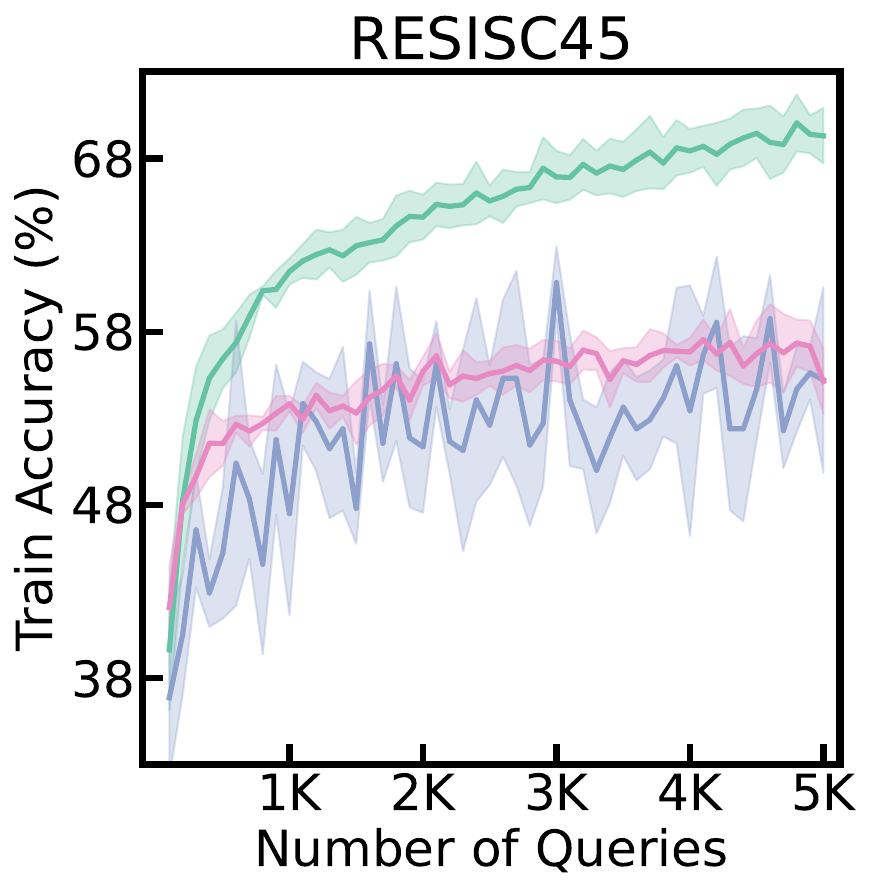}
    \end{subfigure}
    \begin{subfigure}{0.19\linewidth}
        \centering
        \includegraphics[width=\linewidth]{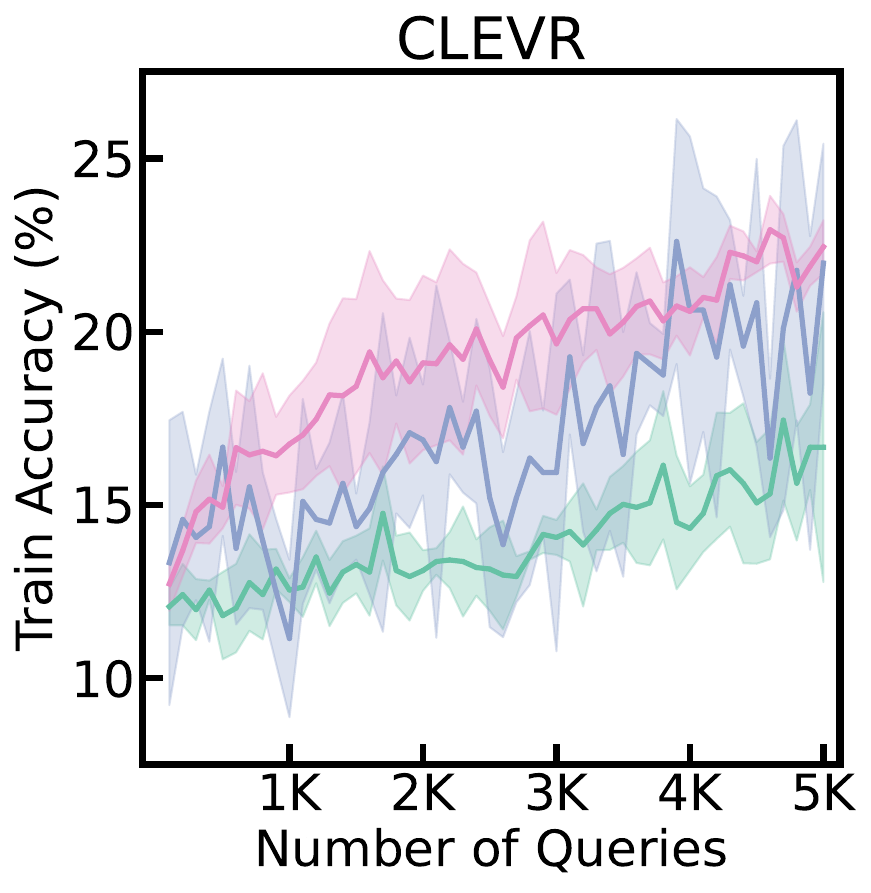}
    \end{subfigure}
    \begin{subfigure}{0.19\linewidth}
        \centering
        \includegraphics[width=\linewidth]{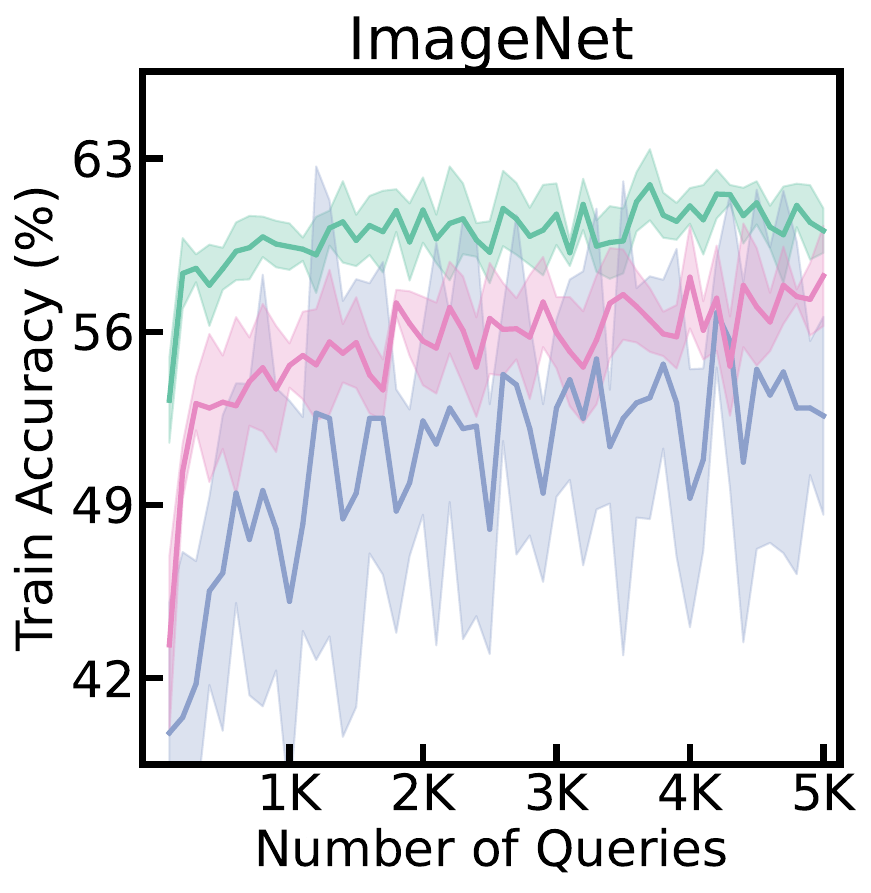}
    \end{subfigure}

    
    \caption{
        Training curves of first-order, zeroth-order and \zip across various vision-language tasks.
        }
    \vspace{-1em}
    \label{fig:appendix FOvsZO}
\end{figure}
Additionally, in \Cref{fig:FOvsZO} and \ref{fig:appendix FOvsZO}, we compare the performance of first-order, zeroth-order optimization, and \zip across multiple datasets. 
These results further validate our claim in \Cref{subsec:query efficiency} that \zip effectively bridges the gap between first-order and zeroth-order optimization. 
\zip not only consistently outperforms standard zeroth-order methods in test accuracy across all evaluated datasets but also frequently surpasses first-order optimization, demonstrating its outstanding training efficiency.

\begin{figure}[h!]
    \centering
    \begin{subfigure}{0.19\linewidth}
        \centering
        \includegraphics[width=\linewidth]{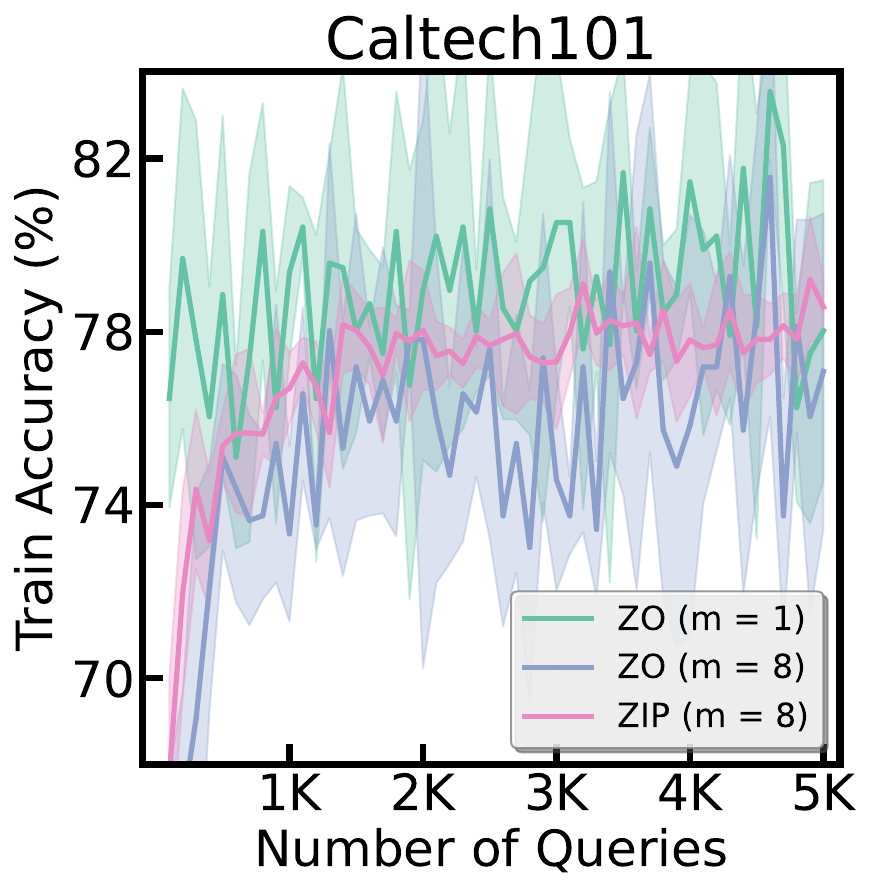}
    \end{subfigure}
    \begin{subfigure}{0.19\linewidth}
        \centering
        \includegraphics[width=\linewidth]{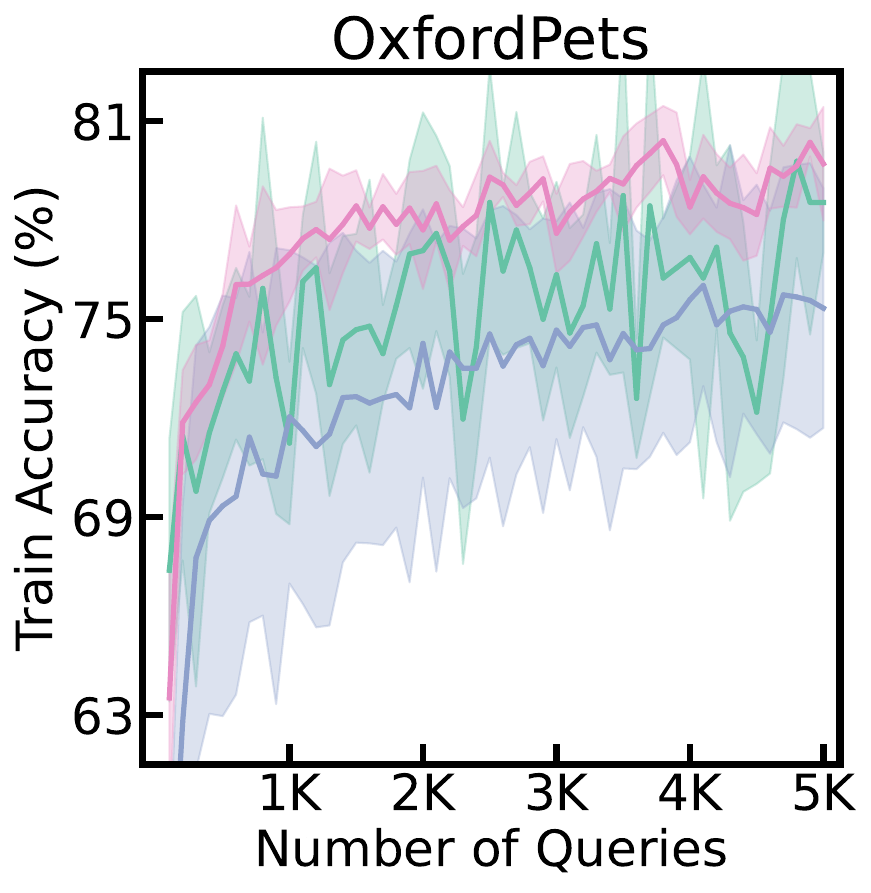}
    \end{subfigure}
    \begin{subfigure}{0.19\linewidth}
        \centering
        \includegraphics[width=\linewidth]{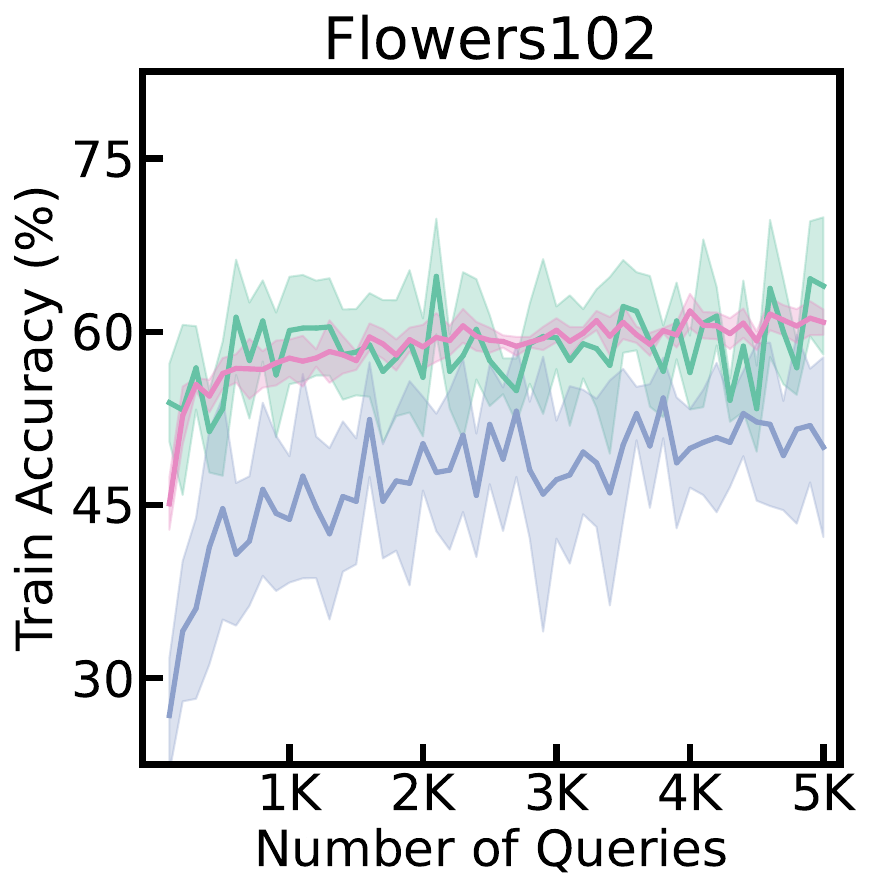}
    \end{subfigure}
    \begin{subfigure}{0.19\linewidth}
        \centering
        \includegraphics[width=\linewidth]{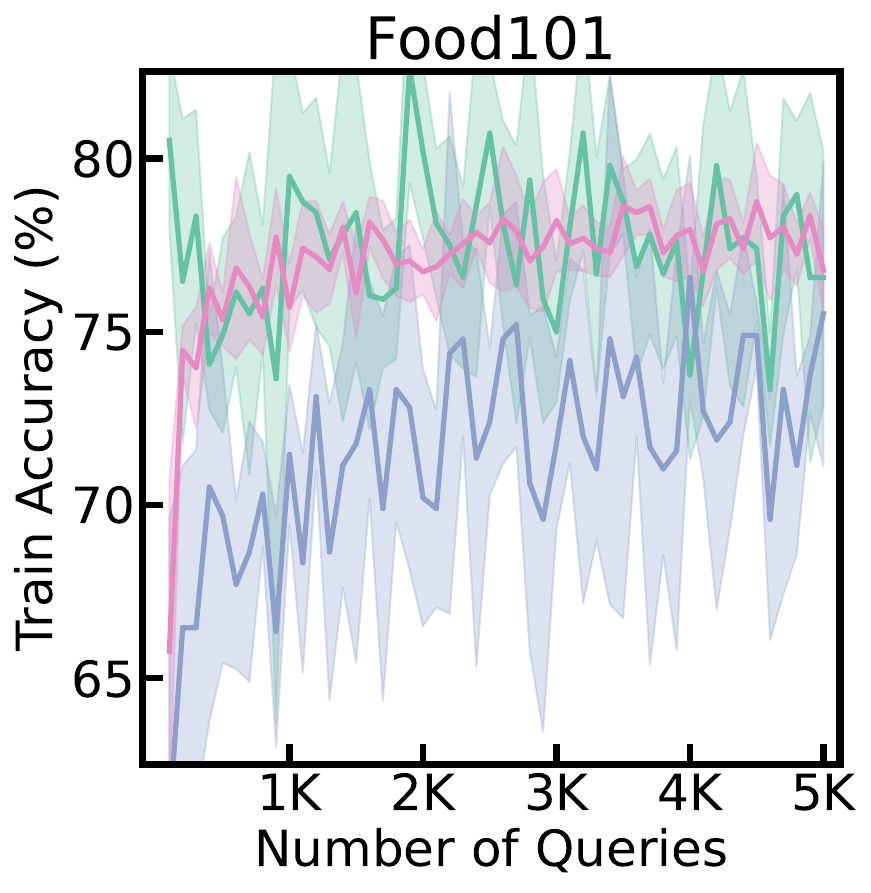}
    \end{subfigure}


    \begin{subfigure}{0.19\linewidth}
        \centering
        \includegraphics[width=\linewidth]{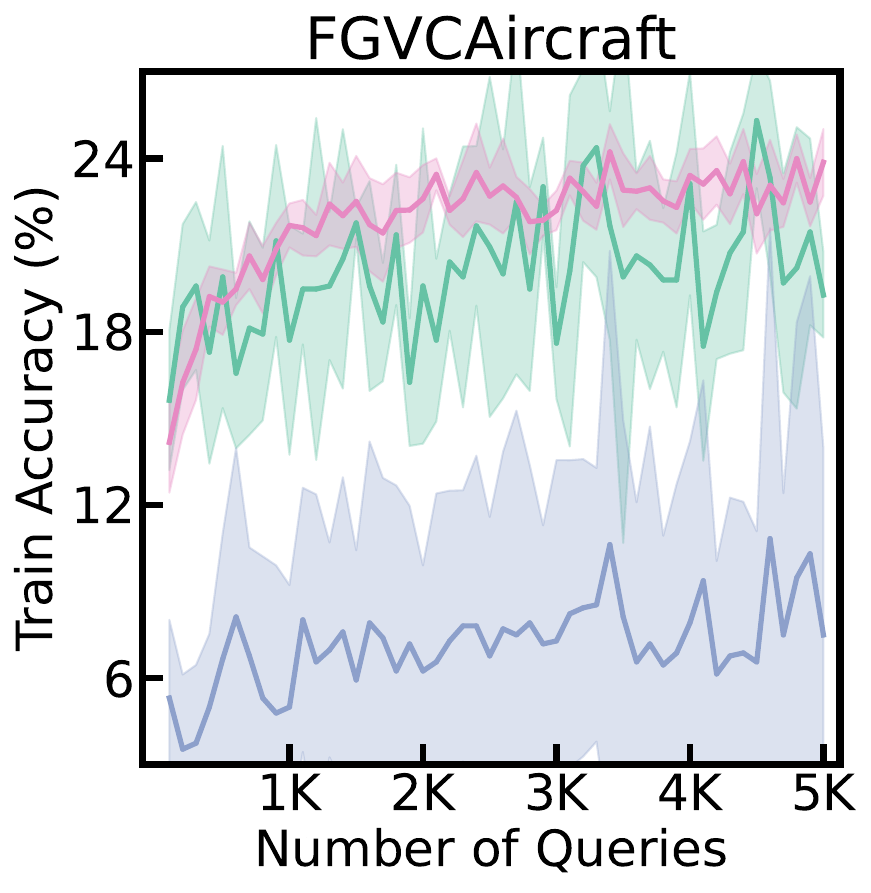}
    \end{subfigure}
    \begin{subfigure}{0.19\linewidth}
        \centering
        \includegraphics[width=\linewidth]{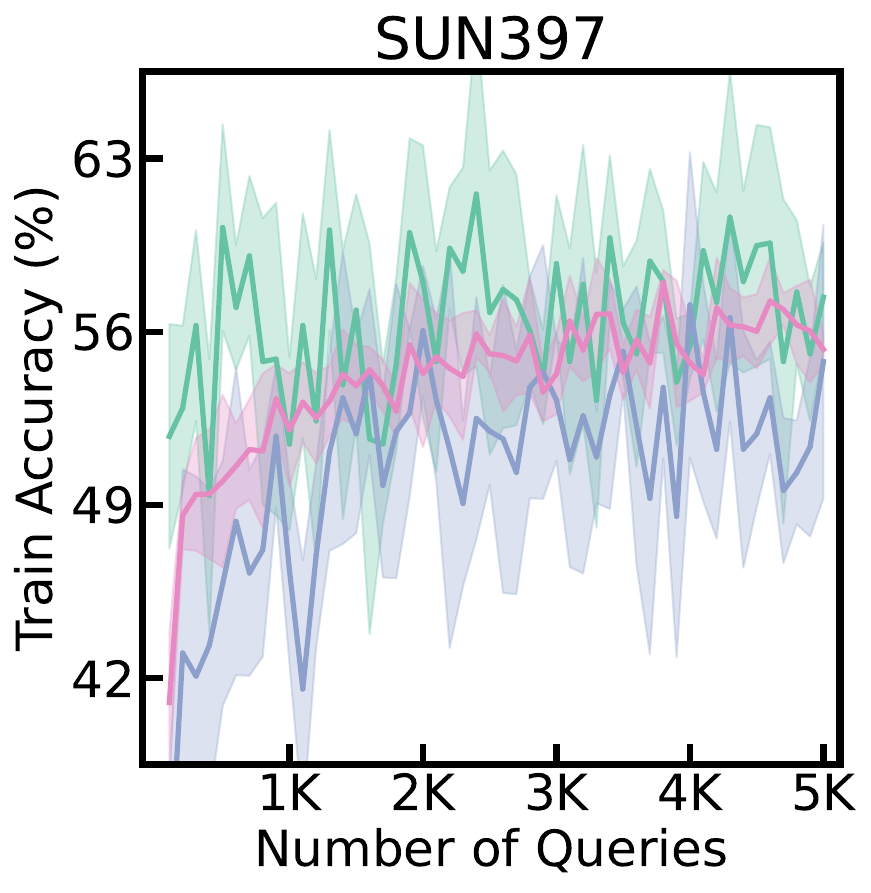}
    \end{subfigure}
    \begin{subfigure}{0.19\linewidth}
        \centering
        \includegraphics[width=\linewidth]{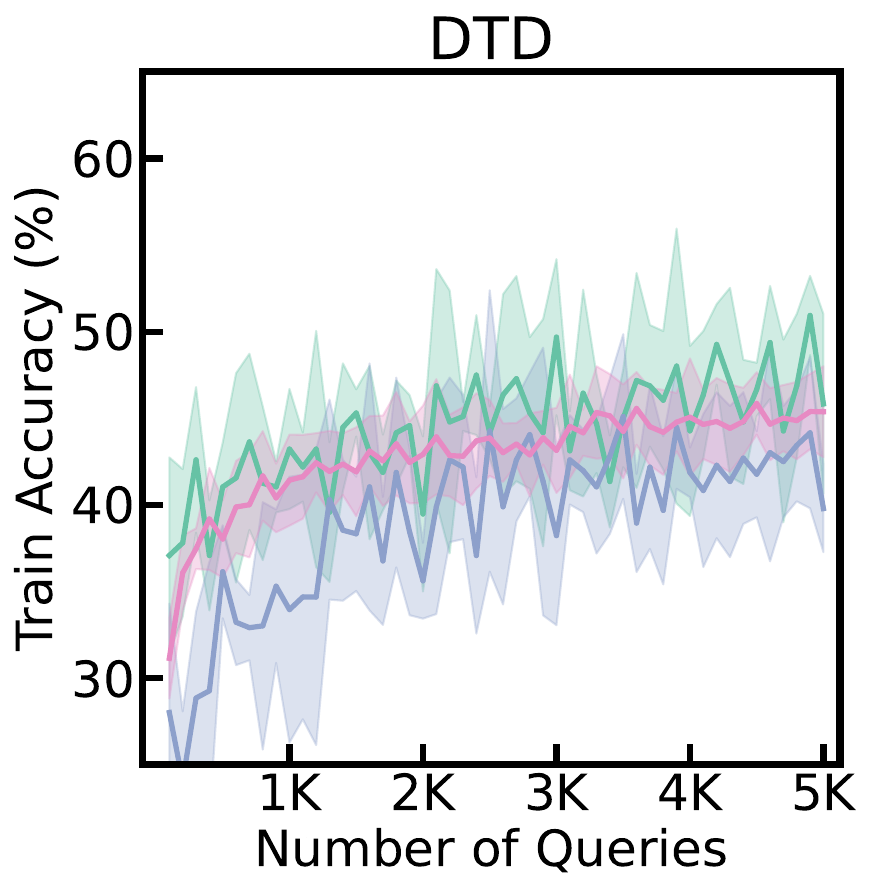}
    \end{subfigure}
    \begin{subfigure}{0.19\linewidth}
        \centering
        \includegraphics[width=\linewidth]{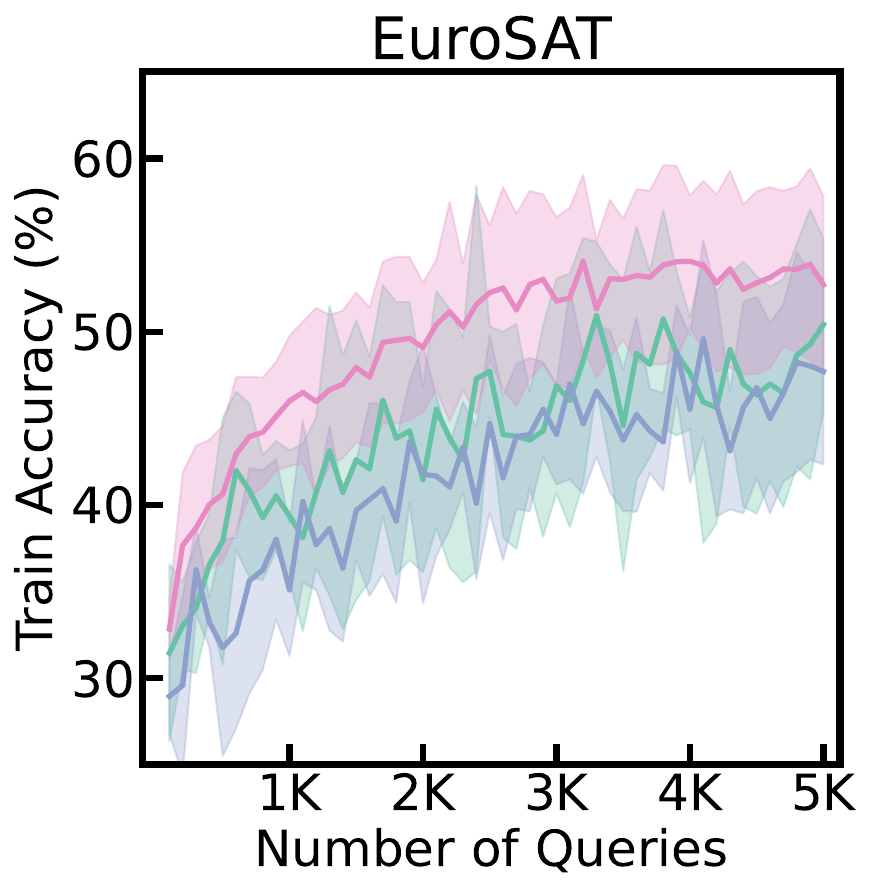}
    \end{subfigure}

    
    \begin{subfigure}{0.19\linewidth}
        \centering
        \includegraphics[width=\linewidth]{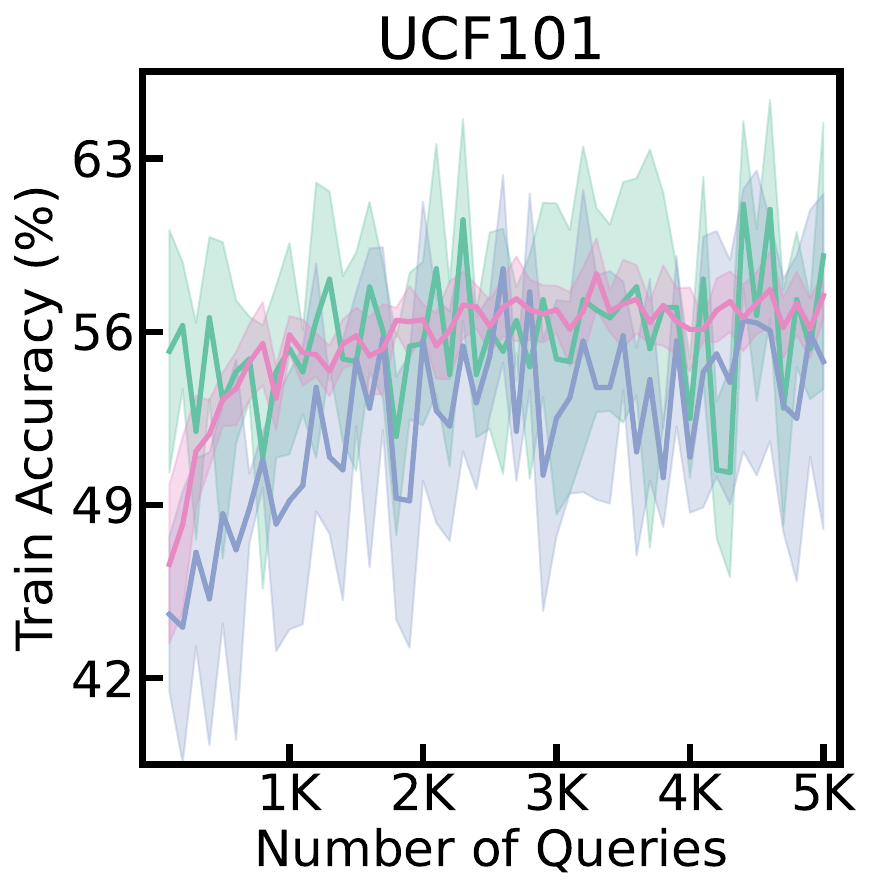}
    \end{subfigure}
    \begin{subfigure}{0.19\linewidth}
        \centering
        \includegraphics[width=\linewidth]{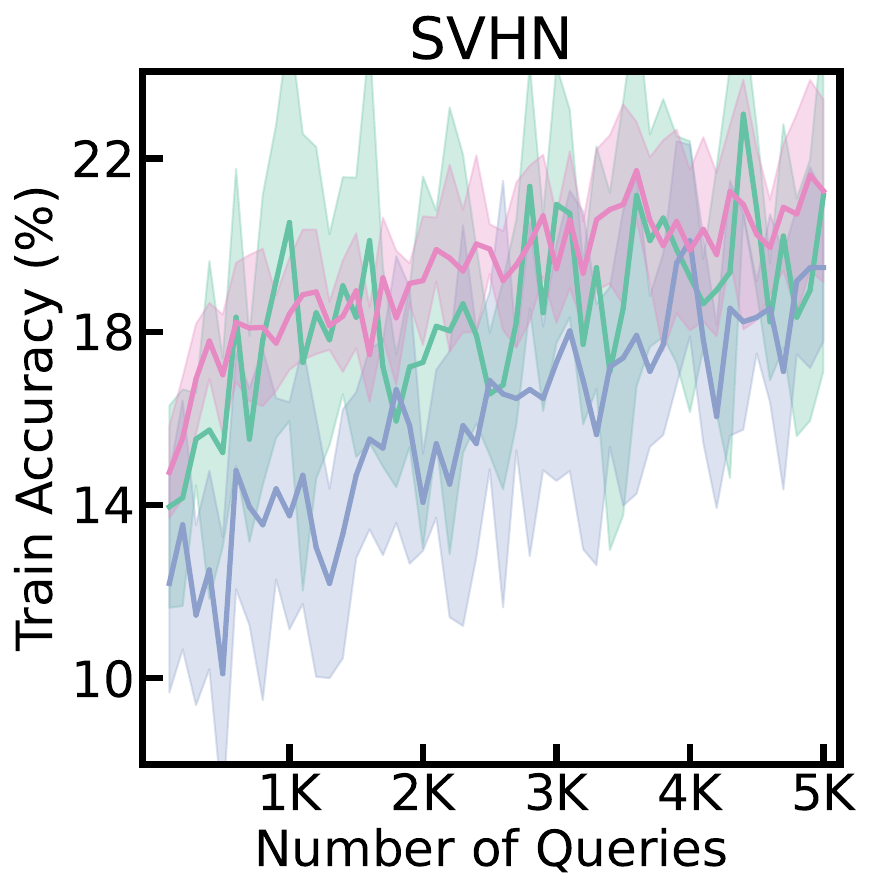}
    \end{subfigure}
    \begin{subfigure}{0.19\linewidth}
        \centering
        \includegraphics[width=\linewidth]{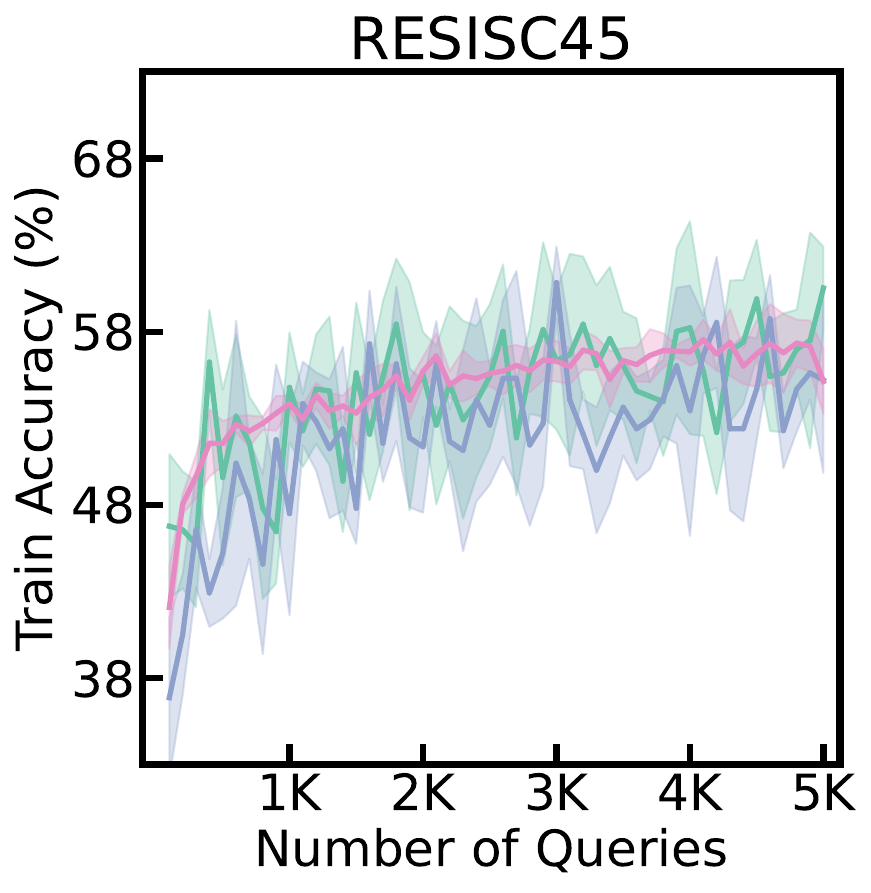}
    \end{subfigure}
    \begin{subfigure}{0.19\linewidth}
        \centering
        \includegraphics[width=\linewidth]{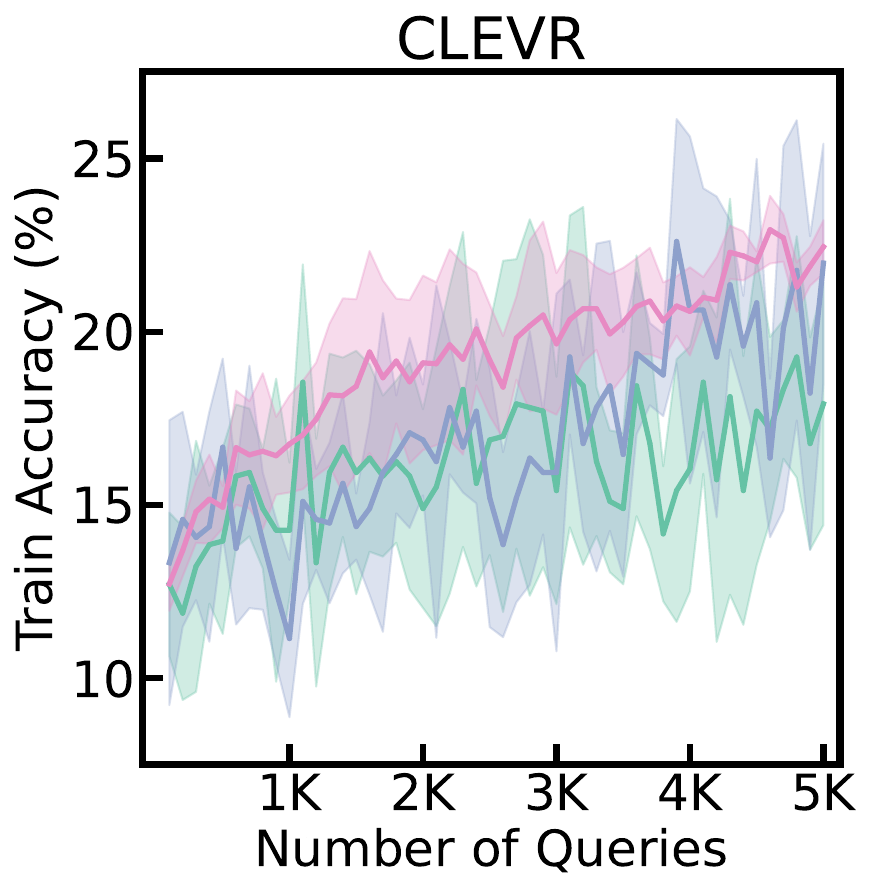}
    \end{subfigure}

    
    \begin{subfigure}{0.19\linewidth}
        \centering
        \includegraphics[width=\linewidth]{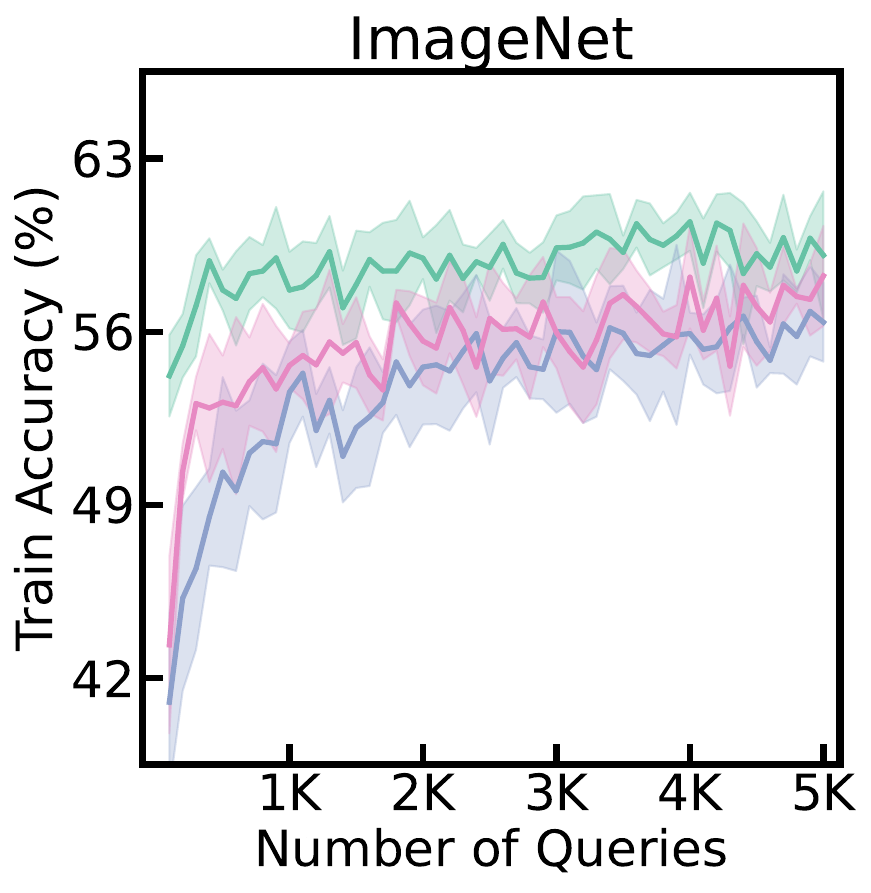}
    \end{subfigure}
    
    \caption{
        Training curves of zeroth-order ($m = 1$), zeroth-order ($m = 8$) and \zip ($m = 8$) across various vision-language tasks.
        }
    \vspace{-1em}
    \label{fig:appendix FOvsZO 1 token}
\end{figure}
Moreover, we include results for context token $m = 1$ as a reference (See \Cref{fig:appendix FOvsZO 1 token}), demonstrating that naive zeroth-order optimization with one token often struggles to match the performance of \zip with 8 tokens, particularly in maintaining stable training accuracy.
\zip significantly outperforms the naive method on OxfordPets, FGVCAircraft, EuroSAT, and CLEVR.
While the naive method shows comparable results on some other datasets, it is worth noting that even the first-order method with 8 tokens does not yield substantial improvements over the first-order method with 1 token on Caltech101, OxfordPets, and Food101 (See \Cref{fig:appendix limitations}). 
Additionally, using 1 token performs better on CLEVR and SVHN, highlighting that the optimal number of prompt tokens remains an important factor for performance.

\begin{table}[t!]
    \centering
    \caption{
        Few-shot performance on 13 vision-language tasks with 20,000 API query budget.
        All the results are based on $16$-shots per class.
        The \tf{bold numbers} denote the highest accuracy of all baselines on each dataset, and the \underline{underlined values} indicate the second.
        \zip consistently outperforms other BBPT baselines, achieving outstanding accuracy across diverse tasks and showcasing its scalability and efficiency under an expanded query budget.
        }
    \label{tab:20000 apis}
    \vspace{-0.75em}
    \resizebox{\textwidth}{!}{%
        \centering
        \begin{tabular}{lccccccccccccccc}
        \toprule
         \textbf{Method} & \textbf{\#Params} & \rotbox{Caltech101} & \rotbox{OxfordPets} & \rotbox{Flowers102} & \rotbox{Food101} & \rotbox{FGVCAircraft} & \rotbox{SUN397} & \rotbox{DTD} & \rotbox{SVHN} & \rotbox{EuroSAT} & \rotbox{Resisc45} & \rotbox{CLEVR} & \rotbox{UCF101} & \rotbox{ImageNet} & \cellcolor{tabgray} \rotbox{\textbf{Average}} \\ 
         
         \midrule
         
        Manual Prompt & 0k & \underline{93.0} & 89.1 & \underline{70.6} & \underline{85.9} & 24.8 & 62.6 & 44.1 & 18.8 & 48.1 & 58.1 & 14.5 & \underline{67.5} & \underline{66.7} & \cellcolor{tabgray} 57.2 \\

        \barr & 37.6k & 92.5 & 88.1 & 67.7 & 83.2 & 23.0 & \underline{63.8} & 47.1 & 32.3 & 54.0 & \underline{64.1} & 25.4 & 65.7 & 65.5 &  \cellcolor{tabgray} 59.4 \\
        
        \blackvip & 9.9k & \underline{93.0} & 88.0 & 64.8 & 85.0 & 22.5 & 62.3 & 44.4 & \underline{42.3} & 57.2 & 57.1 & \textbf{28.8} & 66.1 & 66.3 &  \cellcolor{tabgray} 59.8 \\
        
        \bptvlm & 4.0k & 91.6 & \underline{90.3} & 69.9 & 85.1 & \underline{26.3} & 57.2 & \underline{50.1} & 34.4 & \underline{65.8} & 63.6 & 27.8 & 67.4 & 61.6 & \cellcolor{tabgray} \underline{60.9} \\
        
        \tf{\zip} & 0.4k & \textbf{94.1} & \textbf{92.4} & \textbf{71.8} & \textbf{86.9} & \textbf{27.3} & \textbf{64.4} & \textbf{52.9} & \textbf{49.9} & \textbf{66.6} & \textbf{68.4} & \underline{28.6} & \textbf{70.0} & \textbf{67.2} & \cellcolor{tabgray}\textbf{64.7} \\
        
        \bottomrule
        \end{tabular}
    }
    \vspace{-1em}
\end{table}
We also conduct an extended evaluation of the performance of \zip by increasing the API query budget to 20,000, as detailed in \Cref{tab:20000 apis}. 
Several notable insights emerged from this analysis. 
With a 5,000-query budget, \zip achieves an ImageNet accuracy of 66.2\%, performing on par with strong baselines such as Manual Prompt (66.7\%) and \blackvip (65.5\%). 
When the query budget is increased to 20,000, \zip further improves its accuracy to 67.2\%, surpassing Manual Prompt and \blackvip by margins of 0.5\% and 0.9\%, respectively.
These results highlight the ability of \zip to leverage additional query budgets effectively, scaling performance significantly with increased resources. 
Notably, compared to previous work, such as \blackvip, which achieved 67.1\% on ImageNet using substantially higher query budgets (625,000 API queries), \zip delivers competitive and often outstanding performance using only 20,000 API queries. 
This demonstrates exceptional efficiency and adaptability of \zip, particularly in computationally constrained environments.

These supplementary findings reinforce our assertions in \Cref{subsec:query efficiency}, confirming that \zip not only accelerates training but also makes highly efficient use of query budgets, making it exceptionally suited for resource-constrained scenarios.

\subsection{Performance on Alternative Model Families.}
\label{SIGLIP}
\begin{table}[t]
    \centering
    \caption{
        Few-shot performance on 13 vision-language tasks with SigLIP~\citep{siglip}.
        All results are based on $16$-shot data per class.
        \tf{Bold numbers} represent the highest accuracy among all baselines for each dataset, while \underline{underlined values} indicate the second-best.
        On average, \zip outperforms other BBPT baselines, demonstrating its strong generalization and adaptability across diverse datasets, even when applied to a vision-language model distinct from CLIP.
        }
    \label{tab:siglip}
    \vspace{-0.75em}
    \resizebox{\textwidth}{!}{%
        \centering
        \begin{tabular}{lccccccccccccccc}
        \toprule
         \textbf{Method} & \textbf{\#Params} & \rotbox{Caltech101} & \rotbox{OxfordPets} & \rotbox{Flowers102} & \rotbox{Food101} & \rotbox{FGVCAircraft} & \rotbox{SUN397} & \rotbox{DTD} & \rotbox{SVHN} & \rotbox{EuroSAT} & \rotbox{Resisc45} & \rotbox{CLEVR} & \rotbox{UCF101} & \rotbox{ImageNet} & \cellcolor{tabgray} \rotbox{\textbf{Average}} \\ 
         
         \midrule

        \barr & 37.6k & \textbf{85.8} & \textbf{82.2} & \textbf{68.0} & \textbf{58.3} & 10.7 & 23.3 & 47.2 & 15.2 & 39.6 & 23.6 & 26.1 & 23.3 & \textbf{54.9} & \cellcolor{tabgray} 42.9 \\
        
        \blackvip & 9.9k & 81.6 & 78.9 & 56.6 & 47.8 & 9.6 & \textbf{46.3} & 41.1 & 30.7 & 22.5 & 37.0 & 26.3 & 37.0 & 49.4 & \cellcolor{tabgray} 43.4 \\
        
        \bptvlm & 4.0k & 63.9 & 71.7 & 47.7 & 37.2 & 9.9 & 35.5 & 45.6 & 14.7 & 44.8 & 37.0 & 24.8 & 35.5 & 39.5 & \cellcolor{tabgray} 39.1 \\
        
        \tf{\zip} & 0.4k & 72.1 & 82.0 & 59.3 & 40.9 & \textbf{11.8} & 45.3 & \textbf{49.4} & \textbf{34.2} & \textbf{48.7} & \textbf{44.3} & \textbf{31.4} & \textbf{40.6} & 49.7 & \cellcolor{tabgray} \textbf{46.9} \\
        \bottomrule
        \end{tabular}
    }
    \vspace{-1em}
\end{table}
We extend our evaluation to SigLIP~\citep{siglip}, a vision-language model distinct from CLIP, to assess the versatility of \zip across different model families. 
The results, presented in \Cref{tab:siglip}, indicate that while \zip does not achieve the best performance on all datasets, it records significantly higher accuracy on several tasks, including DTD, SVHN, EuroSAT, Resisc45, CLEVR, and UCF101. 
Notably, \zip achieves the highest average accuracy across datasets, underscoring its robustness and adaptability.

These findings highlight the generality of our method, demonstrating that \zip consistently delivers strong performance across diverse datasets and model architectures, outperforming existing BBPT methods in key scenarios. 
This reinforces the potential of \zip as a reliable approach for black-box prompt tuning in varied vision-language models.

\subsection{Impact of feature sharing on generalization performance}
\label{feature_sharing_generalization}
\begin{table}[t!]
    \centering
    \caption{
        Base-to-new generalization performance.
        H represents the harmonic mean, providing a balanced measure of accuracy across seen and unseen classes~\citep{HarMean}. 
        Feature sharing consistently surpasses unshared models in base, new, and harmonic mean evaluations, demonstrating its effectiveness in improving generalization to novel classes.
        }
    \label{tab:feature_sharing_b2n}
    \vspace{-0.75em}
    \resizebox{\linewidth}{!}{%
        \begin{tabular}{l ccc ccccccccccc c}
        \toprule
        Method & Set  & \rotbox{Caltech101} & \rotbox{OxfordPets} & \rotbox{Flowers102} & \rotbox{Food101} & \rotbox{FGVCAircraft} & \rotbox{SUN397} & \rotbox{DTD} & \rotbox{SVHN} & \rotbox{EuroSAT} & \rotbox{Resisc45} & \rotbox{CLEVR} & \rotbox{UCF101} & \rotbox{ImageNet} & \cellcolor{tabgray} \rotbox{\tf{Average}}\\
        \midrule
        Unshared && 96.3 & 94.5 & 68.6 & \textbf{89.9} & 29.4 & 67.9 & 56.6 & 51.9 & 81.2 & 77.4 & 43.2 & 72.5 & 71.0 & \cellcolor{tabgray} 69.3 \\
        Shared & \multicolumn{1}{c}{\multirow{-2}{*}{Base}} & \textbf{96.6} & \textbf{94.9} & \textbf{72.1} & 89.9 & \textbf{29.8} & \textbf{70.3} & \textbf{61.7} & \textbf{52.9} & \textbf{84.0} & \textbf{81.6} & \textbf{50.1} & \textbf{75.1} & \textbf{72.1} & \cellcolor{tabgray} \textbf{71.6} \\
        \midrule
        Unshared && \textbf{93.9} & 93.8 & 71.5 & 89.5 & 31.3 & 70.5 & 47.1 & \textbf{46.1} & \textbf{66.1} & 60.1 & 25.0 & 67.4 & 64.5 & \cellcolor{tabgray} 63.6 \\
        Shared & \multicolumn{1}{c}{\multirow{-2}{*}{New}} & 93.2 & \textbf{97.0} & \textbf{73.4} & \textbf{90.0} & \textbf{32.0} & \textbf{71.5} & \textbf{51.0} & 45.8 & 64.4 & \textbf{65.2} & \textbf{26.8} & \textbf{69.5} & \textbf{65.6} & \cellcolor{tabgray} \textbf{65.0} \\
        \midrule
        Unshared && \textbf{95.1} & 94.1 & 70.0 & 89.7 & 30.3 & 69.2 & 51.4 & 48.8 & \textbf{72.9} & 67.7 & 31.7 & 69.9 & 67.6 & \cellcolor{tabgray} 66.0 \\
        Shared & \multicolumn{1}{c}{\multirow{-2}{*}{Harmonic}} & 94.9 & \textbf{95.9} & \textbf{72.8} & \textbf{89.9} & \textbf{30.9} & \textbf{70.9} & \textbf{55.8} & \textbf{49.1} & \textbf{72.9} & \textbf{72.5} & \textbf{34.9} & \textbf{72.2} & \textbf{68.7} & \cellcolor{tabgray} \textbf{68.2} \\
        \bottomrule
        \end{tabular}
        \vspace{-5.5em}
    }
\end{table}
\begin{table}[t!]
    \centering
    \caption{
        Cross-dataset transfer and out-of-distribution generalization performance. 
        After training on ImageNet (\ie, source) with 16-shot data per class, \zip is evaluated on 12 target datasets for CDT and 4 ImageNet variants for OOD. 
        Feature sharing consistently enhances transferability and robustness, outperforming the unshared models across both CDT and OOD tasks, highlighting its value in adapting to diverse and unseen distributions.
    }
    \label{tab:feature_sharing_xd}
    \vspace{-0.75em}
    \resizebox{\linewidth}{!}{%
        \begin{tabular}{l cc cccccccccc cc ccccc}
        \toprule
        & \tf{Source} & \multicolumn{13}{c}{\tf{CDT Target}} & \multicolumn{5}{c}{\tf{OOD Target}} \\ 
        \cmidrule(lr){2-2} \cmidrule(lr){3-15} \cmidrule(lr){16-20}
        \textbf{Method} & \rotbox{ImageNet} & \rotbox{Caltech101} & \rotbox{OxfordPets} & \rotbox{Flowers102} & \rotbox{Food101} & \rotbox{FGVCAircraft} & \rotbox{SUN397} & \rotbox{DTD} & \rotbox{SVHN} & \rotbox{EuroSAT} & \rotbox{Resisc45} & \rotbox{CLEVR} & \rotbox{UCF101} & \cellcolor{tabgray} \rotbox{\textbf{Average}} & \rotbox{ImageNet-A} & \rotbox{ImageNetV2} & \rotbox{ImageNet-R} & \rotbox{ImageNet-Sketch} & \cellcolor{tabgray} \rotbox{\textbf{Average}}\\
        \midrule
        Unshared & 65.2 & \textbf{91.3} & 85.4 & 64.2 & \textbf{84.2} & 19.6 & 58.3 & 38.3 & \textbf{27.9} & \textbf{46.3} & 53.3 & \textbf{17.6} & 61.3 & \cellcolor{tabgray} 54.0 & 47.2 & 59.0 & \textbf{75.2} & 45.0 & \cellcolor{tabgray} 56.6 \\
        Shared & \textbf{66.0} & 90.4 & \textbf{85.6} & \textbf{65.6} & 83.6 & \textbf{20.5} & \textbf{60.6} & \textbf{40.9} & 27.0 & 42.3 & \textbf{55.6} & 14.5 & \textbf{63.6} & \cellcolor{tabgray} \textbf{54.2} & \textbf{47.8} & \textbf{59.5} & 74.7 & \textbf{45.4} & \cellcolor{tabgray} \textbf{56.9}\\
        \bottomrule
        \end{tabular}
    }
    \vspace{-1.5em}
\end{table}
To further assess the impact of feature sharing on generalization performance, particularly in unseen domains, we conducted experiments with and without feature sharing. 
The results are presented in \Cref{tab:feature_sharing_b2n} and \Cref{tab:feature_sharing_xd}.

From these results, we observe that feature sharing significantly enhances generalization across various tasks. For base-to-new generalization, feature sharing yields substantial performance improvements, with average gains of +2.3\%, +1.4\%, and +2.2\% across different datasets. 
In cross-dataset transfer and out-of-distribution generalization tasks, the improvements are more moderate, averaging +0.2\% and +0.3\%, respectively.

These findings highlight the effectiveness of feature sharing in improving generalization to unseen datasets, especially in tasks requiring robust representations across varying distributions. 
While the gains in OOD settings are smaller, they still indicate the potential benefits of this approach. 
Future work will focus on an in-depth analysis of the relationship between feature sharing and generalization, aiming to uncover the mechanisms driving these improvements. 
This could offer valuable insights for broader domain generalization research and practical applications.

\subsection{Low-rank approximation with diagonal matrix}
\label{SVD_further}
\begin{figure}[h!]
    \centering
    \begin{subfigure}{0.19\linewidth}
        \centering
        \includegraphics[width=\linewidth]{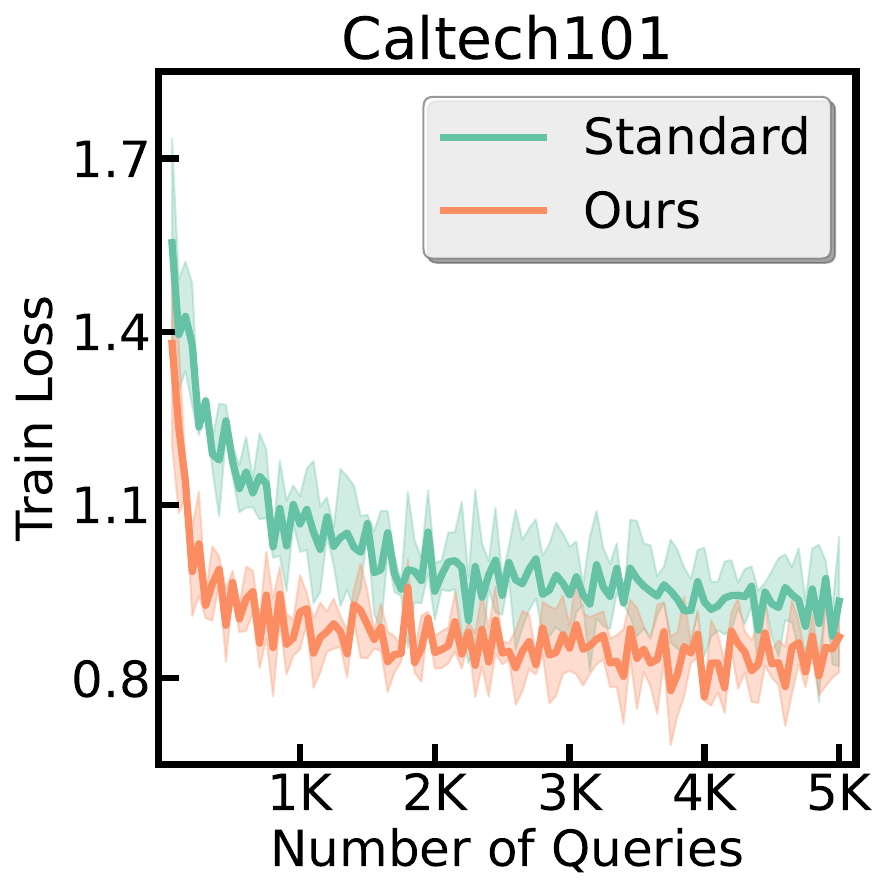}
    \end{subfigure}
    \begin{subfigure}{0.19\linewidth}
        \centering
        \includegraphics[width=\linewidth]{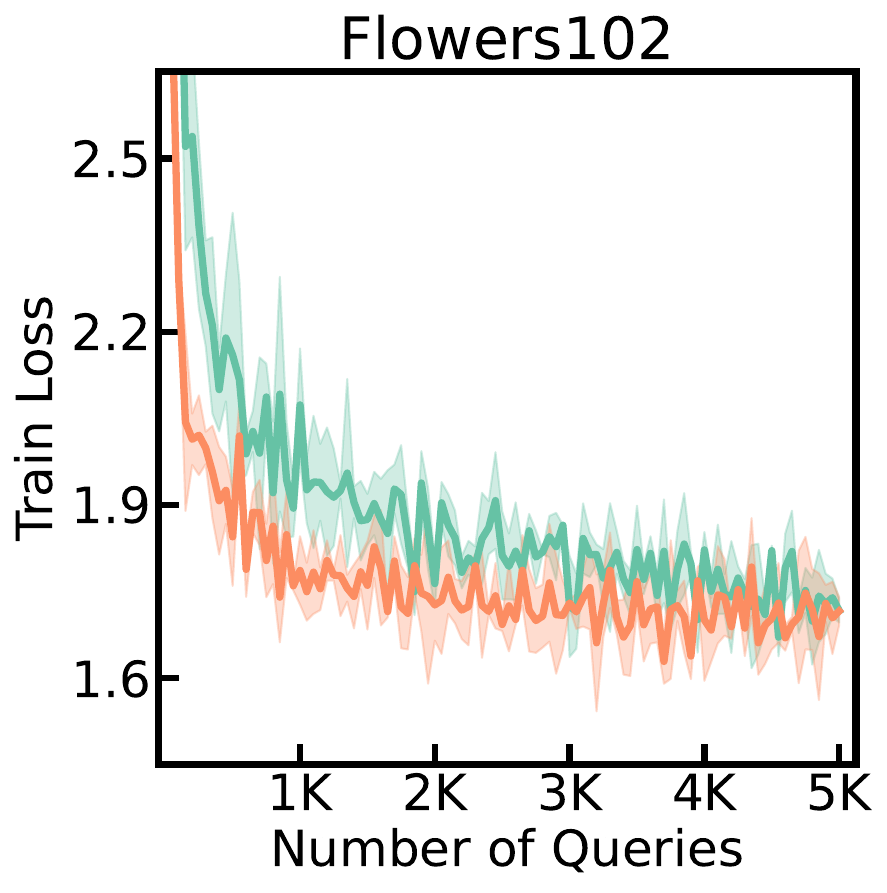}
    \end{subfigure}
    \begin{subfigure}{0.19\linewidth}
        \centering
        \includegraphics[width=\linewidth]{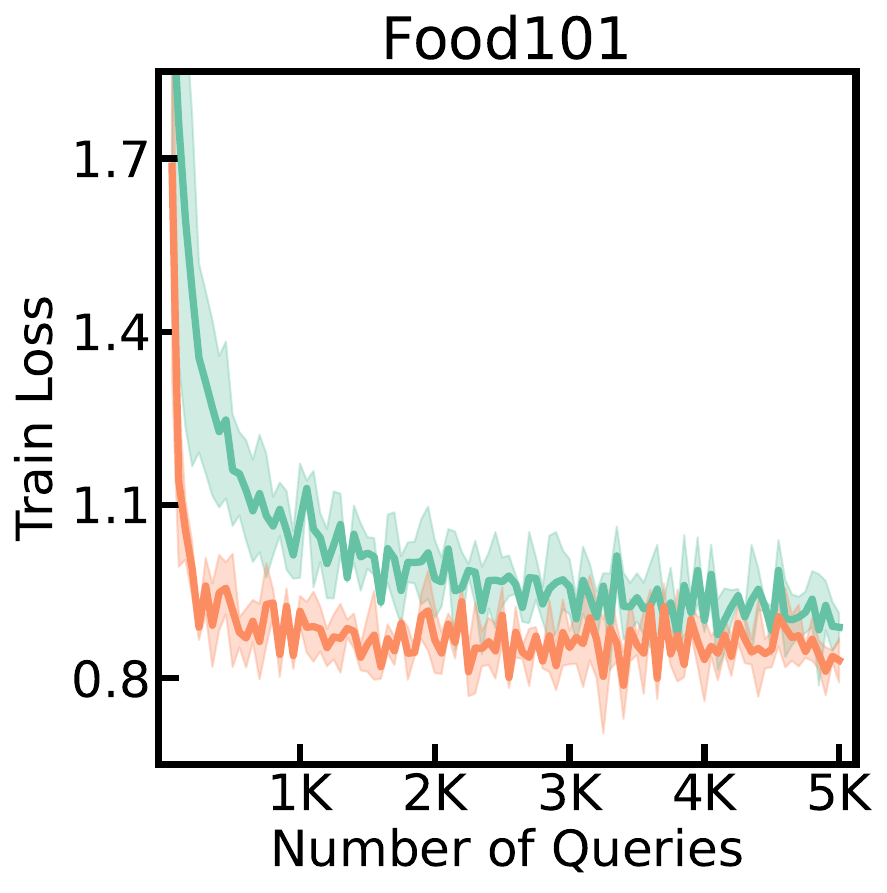}
    \end{subfigure}
    \begin{subfigure}{0.19\linewidth}
        \centering
        \includegraphics[width=\linewidth]{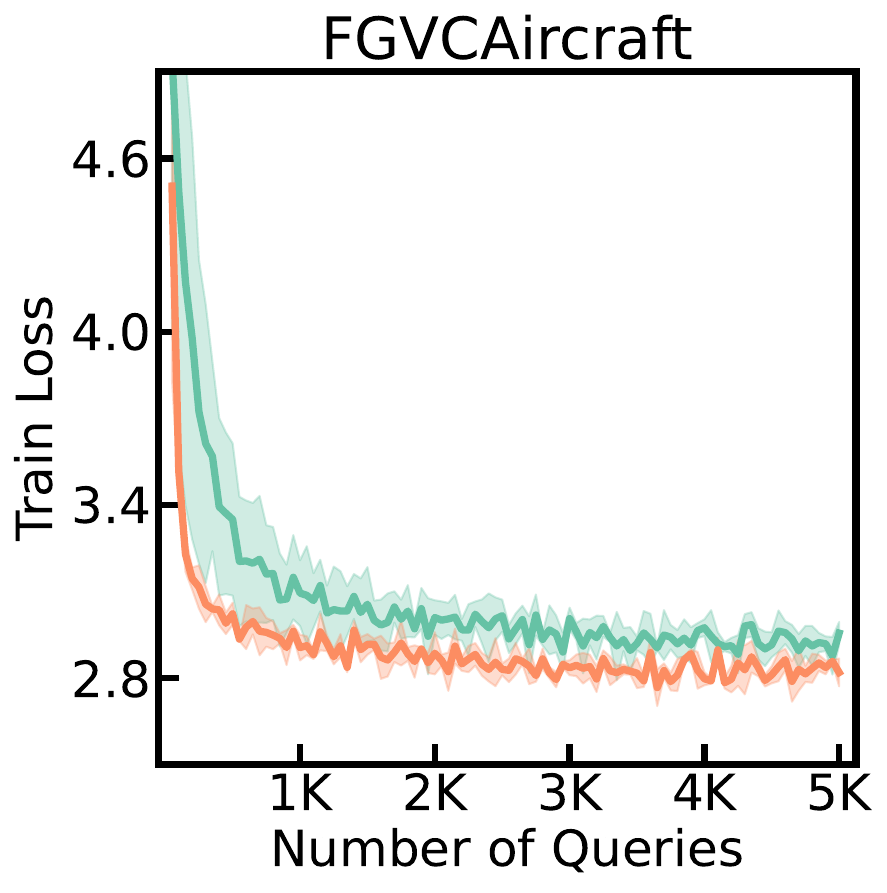}
    \end{subfigure}

    
    \begin{subfigure}{0.19\linewidth}
        \centering
        \includegraphics[width=\linewidth]{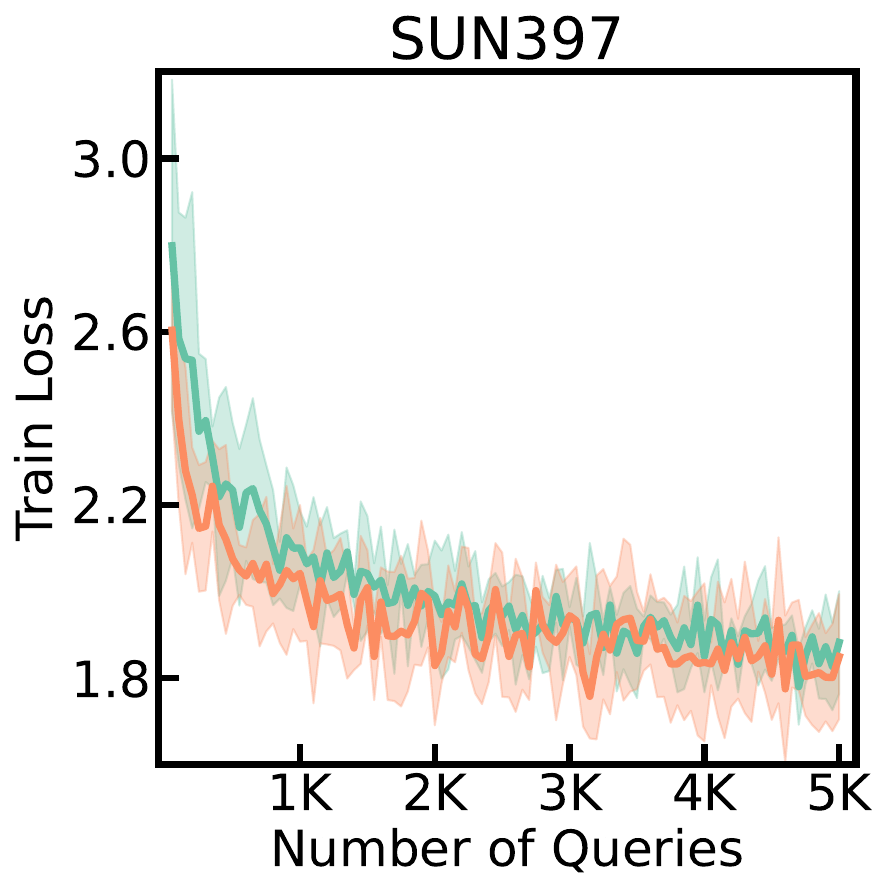}
    \end{subfigure}
    \begin{subfigure}{0.19\linewidth}
        \centering
        \includegraphics[width=\linewidth]{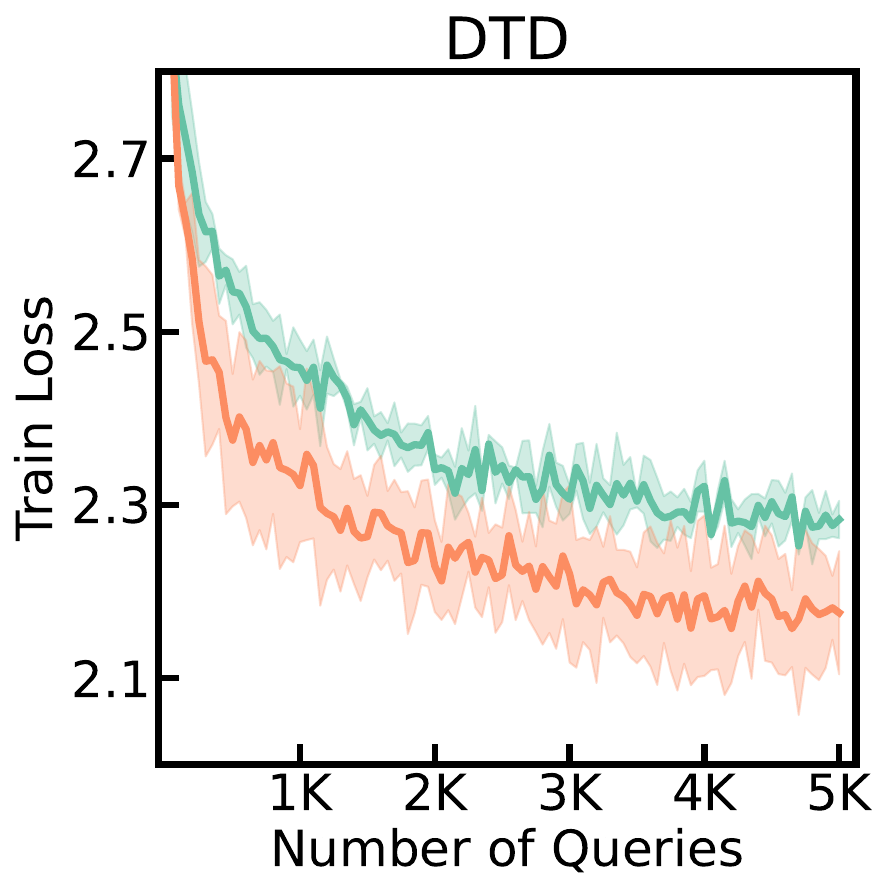}
    \end{subfigure}
    \begin{subfigure}{0.19\linewidth}
        \centering
        \includegraphics[width=\linewidth]{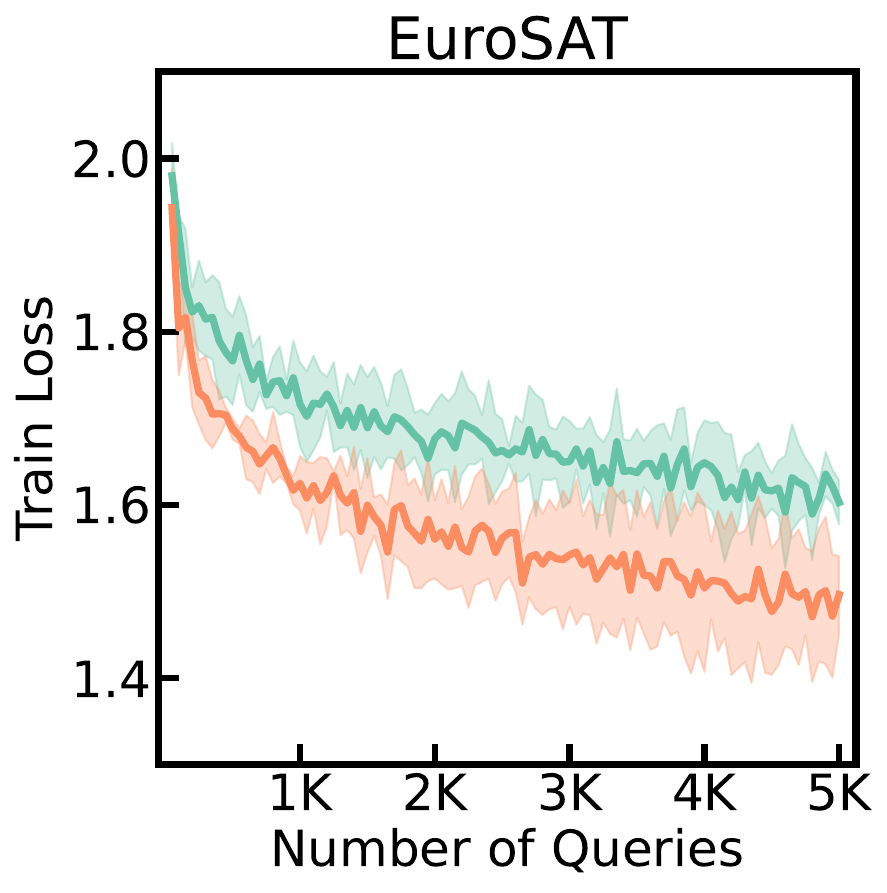}
    \end{subfigure}
    \begin{subfigure}{0.19\linewidth}
        \centering
        \includegraphics[width=\linewidth]{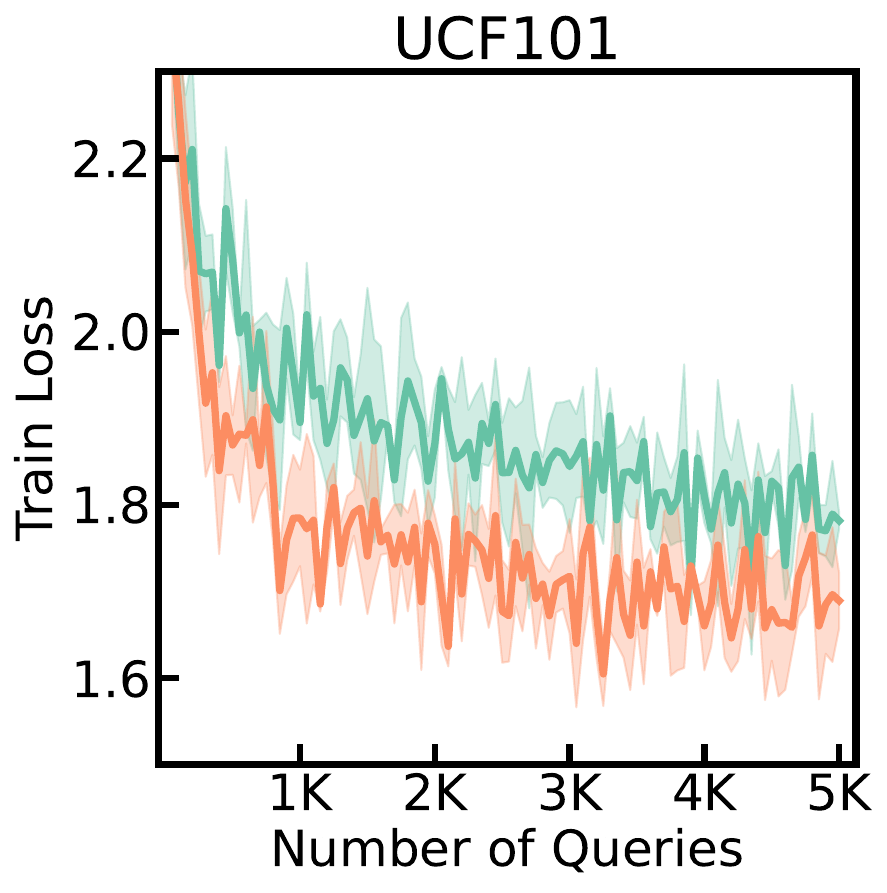}
    \end{subfigure}

    
    \begin{subfigure}{0.195\linewidth}
        \centering
        \includegraphics[width=\linewidth]{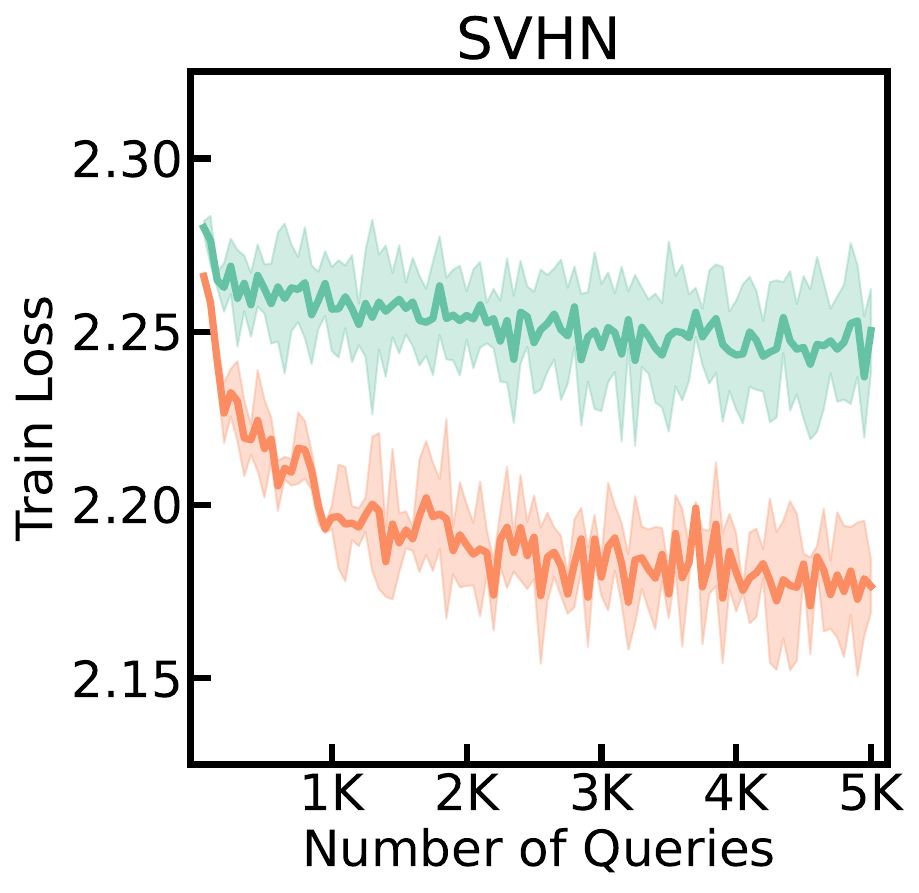}
    \end{subfigure}
    \begin{subfigure}{0.19\linewidth}
        \centering
        \includegraphics[width=\linewidth]{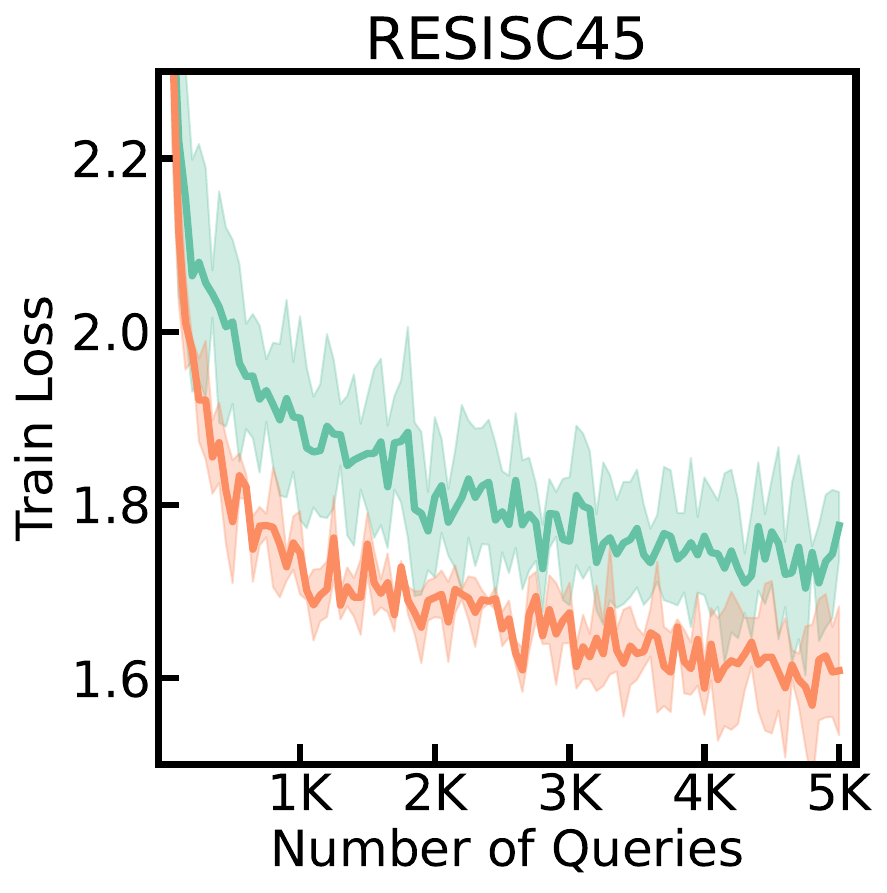}
    \end{subfigure}
    \begin{subfigure}{0.195\linewidth}
        \centering
        \includegraphics[width=\linewidth]{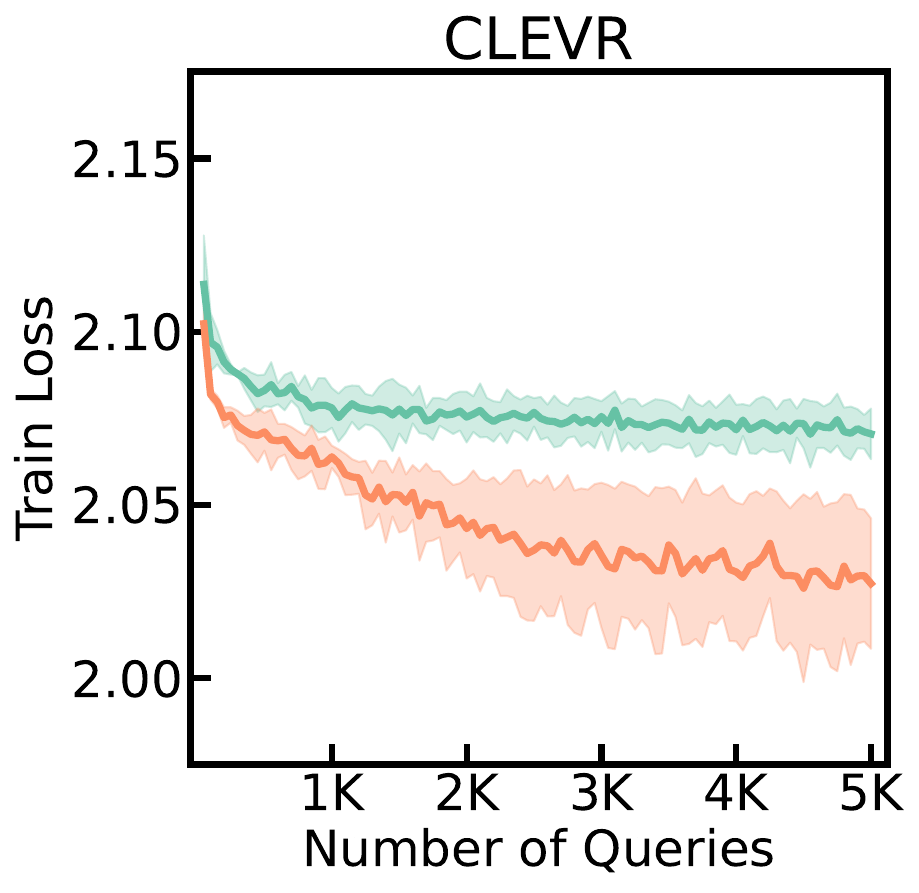}
    \end{subfigure}
    \begin{subfigure}{0.19\linewidth}
        \centering
        \includegraphics[width=\linewidth]{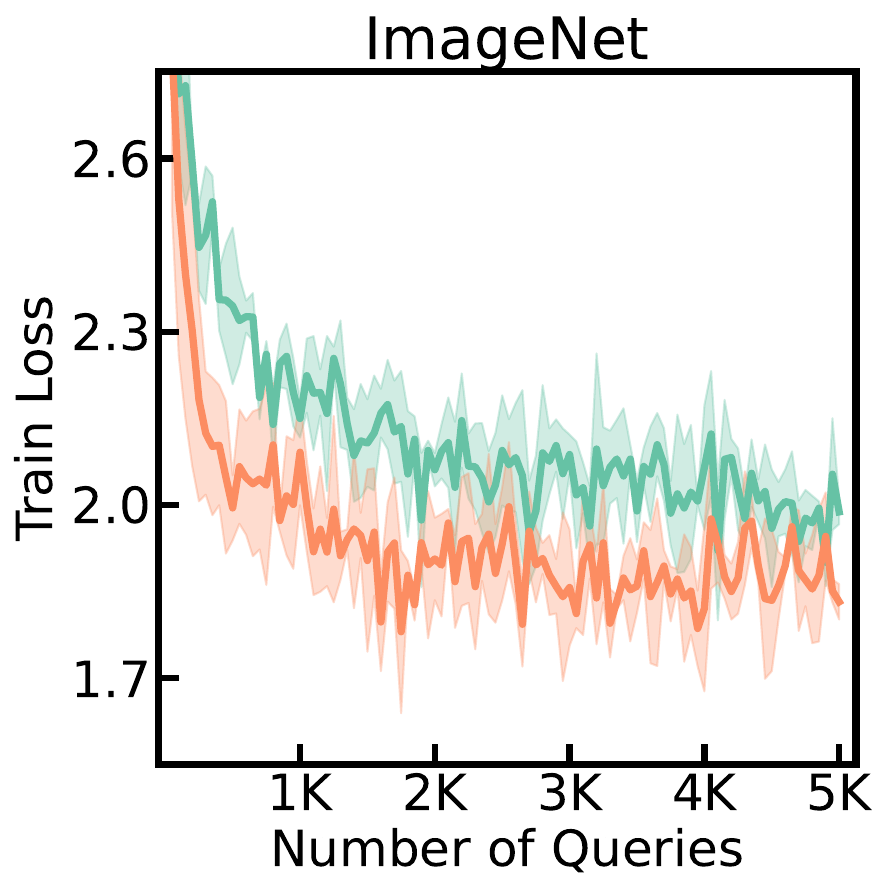}
    \end{subfigure}
    \caption{
        Effects of low-rank approximation with diagonal matrix across various vision-language tasks.
        }
        
    \label{fig:appendix svd}
\end{figure}
\begin{figure}[h!]
    \centering
    \begin{subfigure}{0.19\linewidth}
        \centering
        \includegraphics[width=\linewidth]{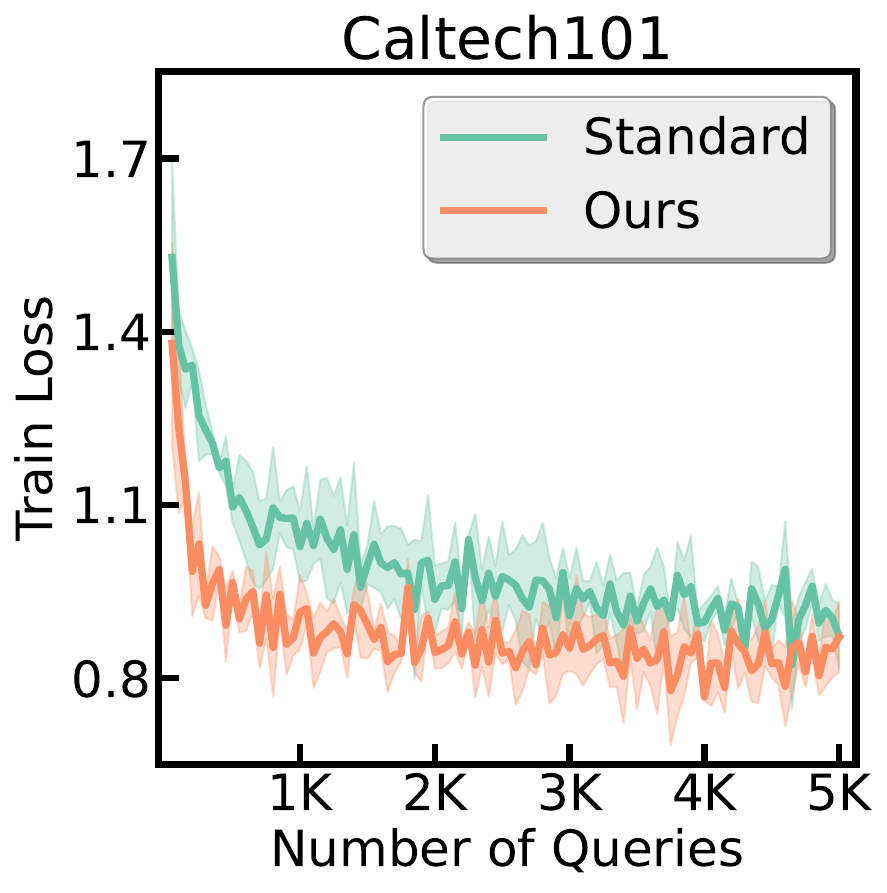}
    \end{subfigure}
    \begin{subfigure}{0.19\linewidth}
        \centering
        \includegraphics[width=\linewidth]{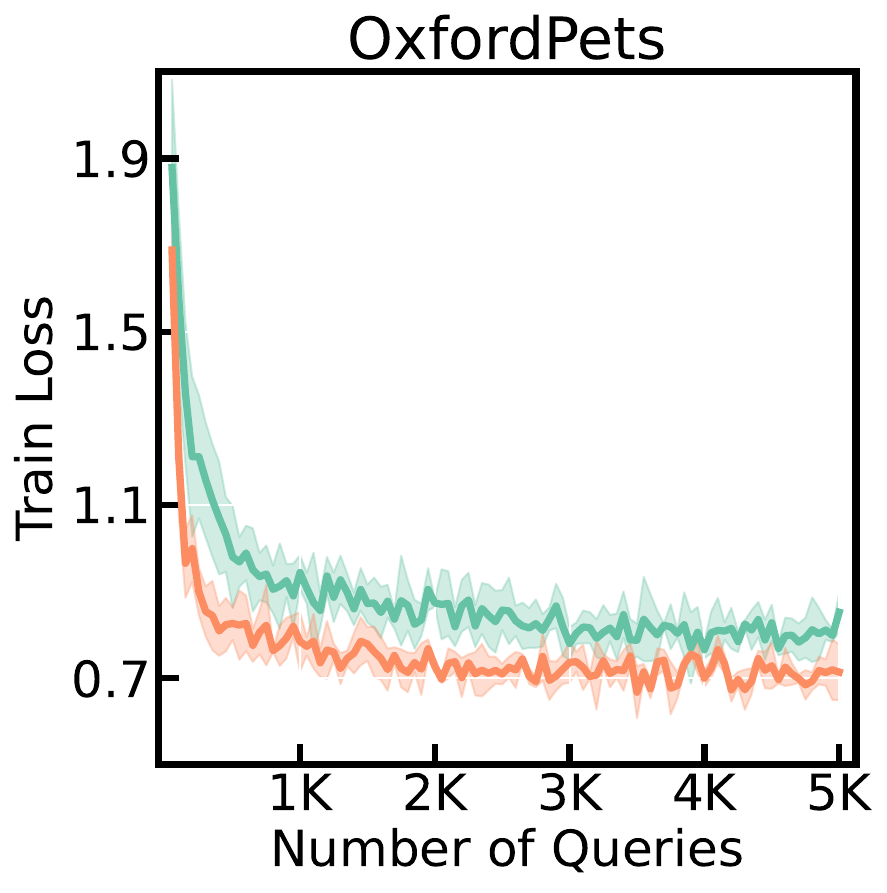}
    \end{subfigure}
    \begin{subfigure}{0.19\linewidth}
        \centering
        \includegraphics[width=\linewidth]{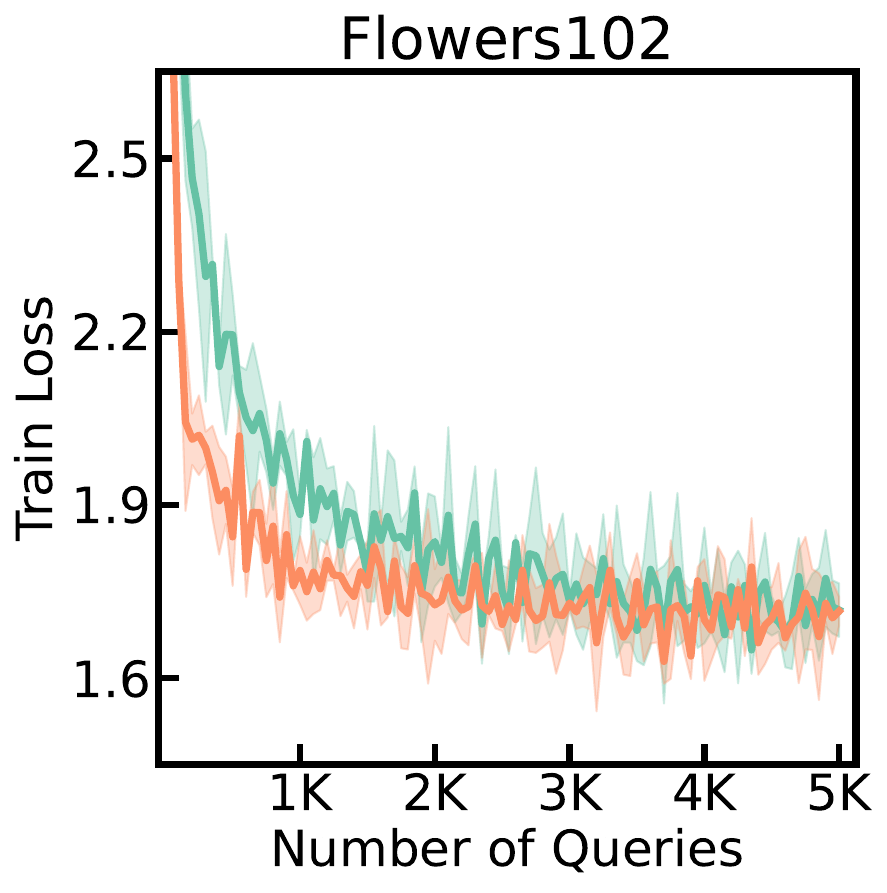}
    \end{subfigure}
    \begin{subfigure}{0.19\linewidth}
        \centering
        \includegraphics[width=\linewidth]{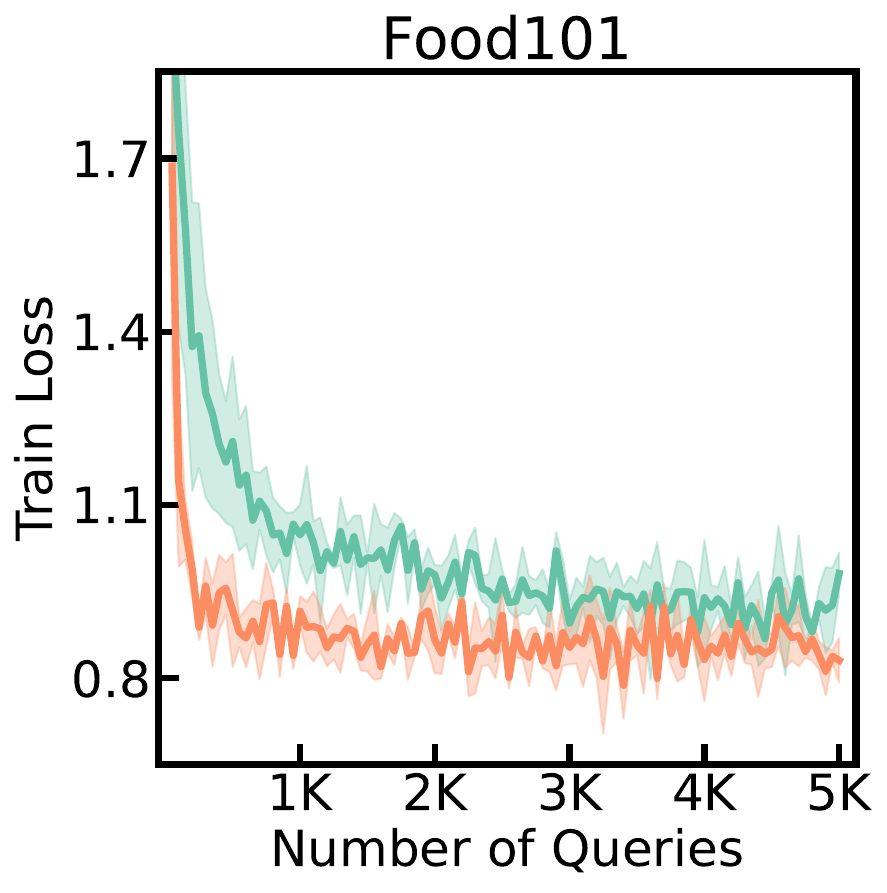}
    \end{subfigure}
    

    \begin{subfigure}{0.19\linewidth}
        \centering
        \includegraphics[width=\linewidth]{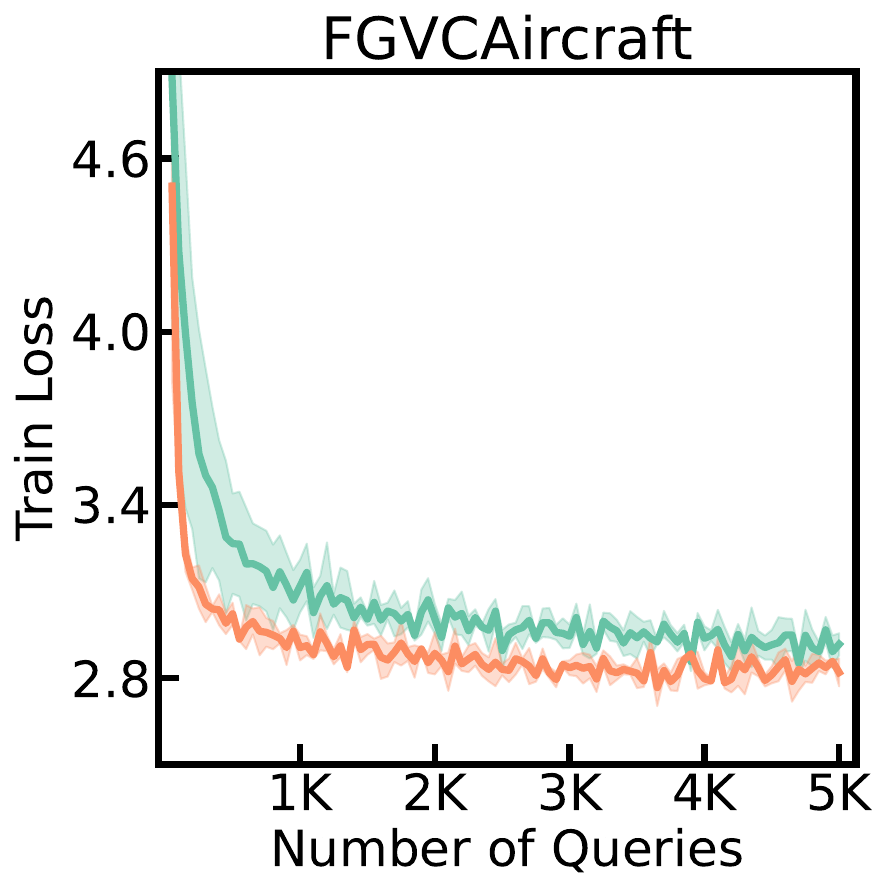}
    \end{subfigure}
    \begin{subfigure}{0.19\linewidth}
        \centering
        \includegraphics[width=\linewidth]{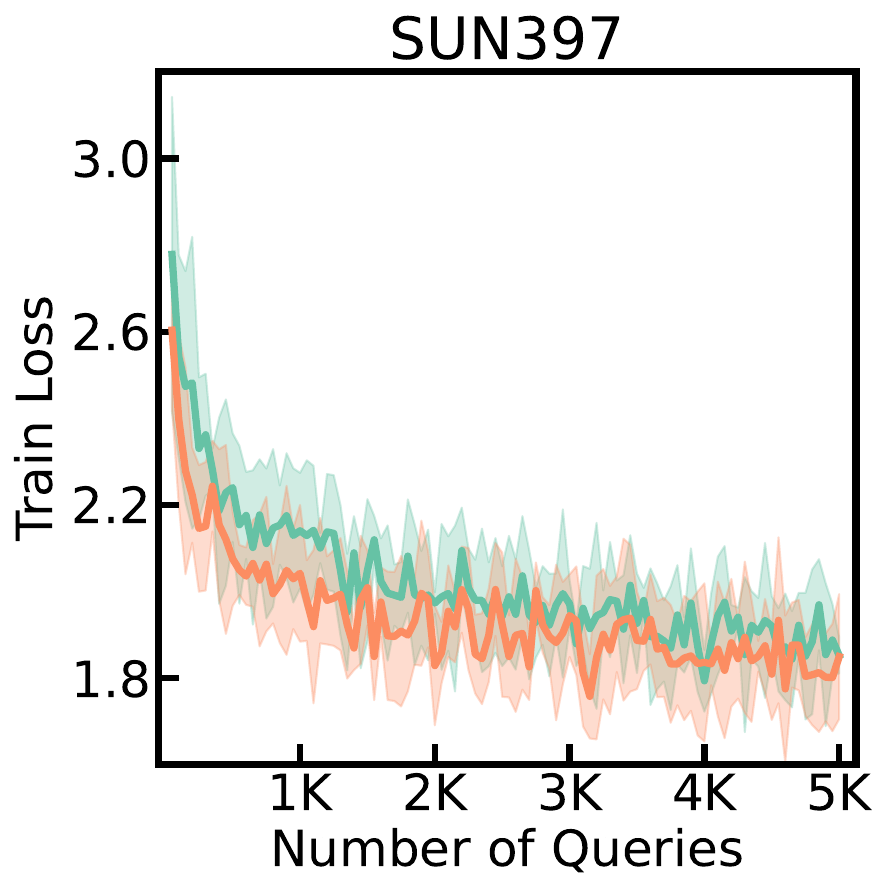}
    \end{subfigure}
    \begin{subfigure}{0.19\linewidth}
        \centering
        \includegraphics[width=\linewidth]{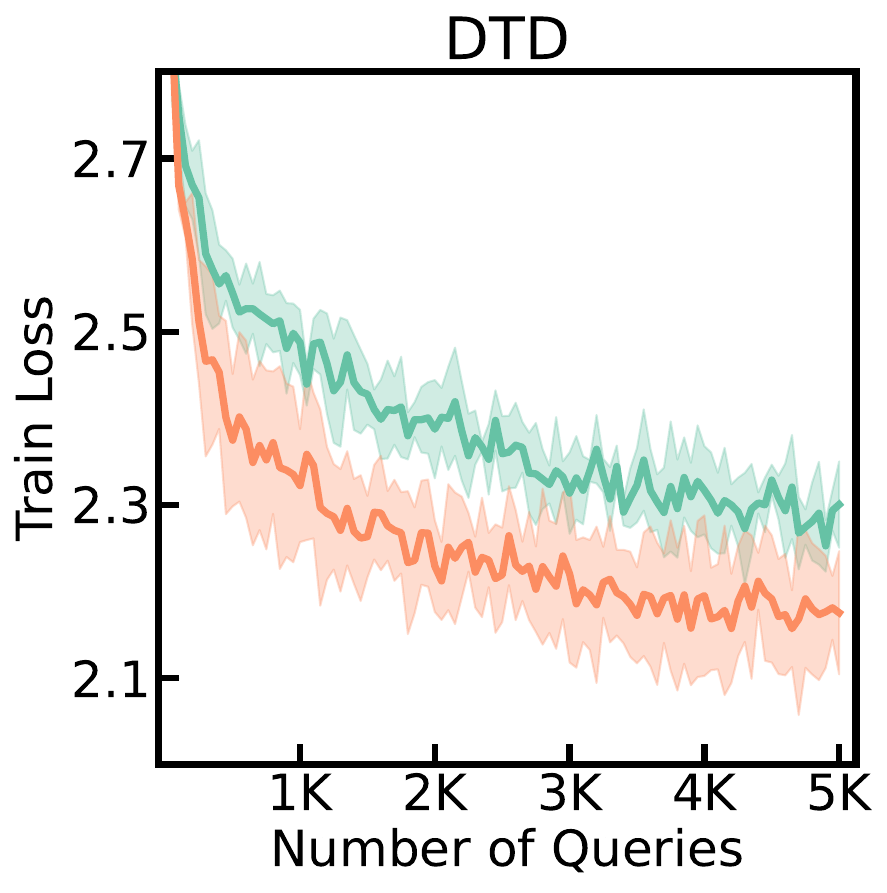}
    \end{subfigure}
    \begin{subfigure}{0.19\linewidth}
        \centering
        \includegraphics[width=\linewidth]{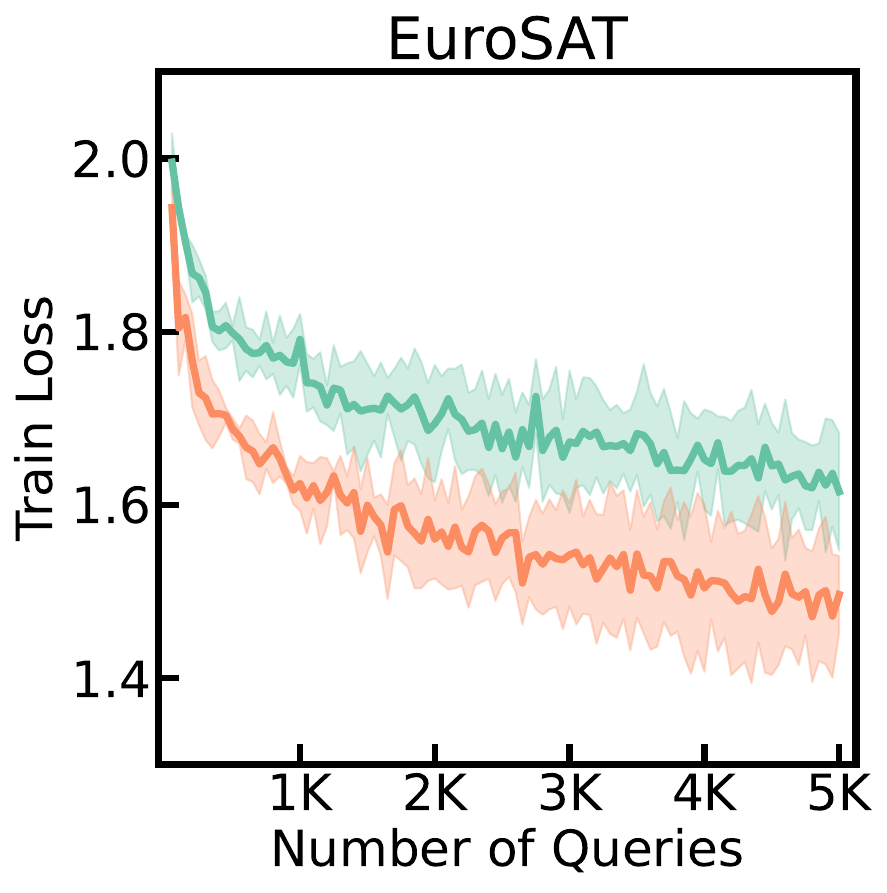}
    \end{subfigure}

    
    \begin{subfigure}{0.19\linewidth}
        \centering
        \includegraphics[width=\linewidth]{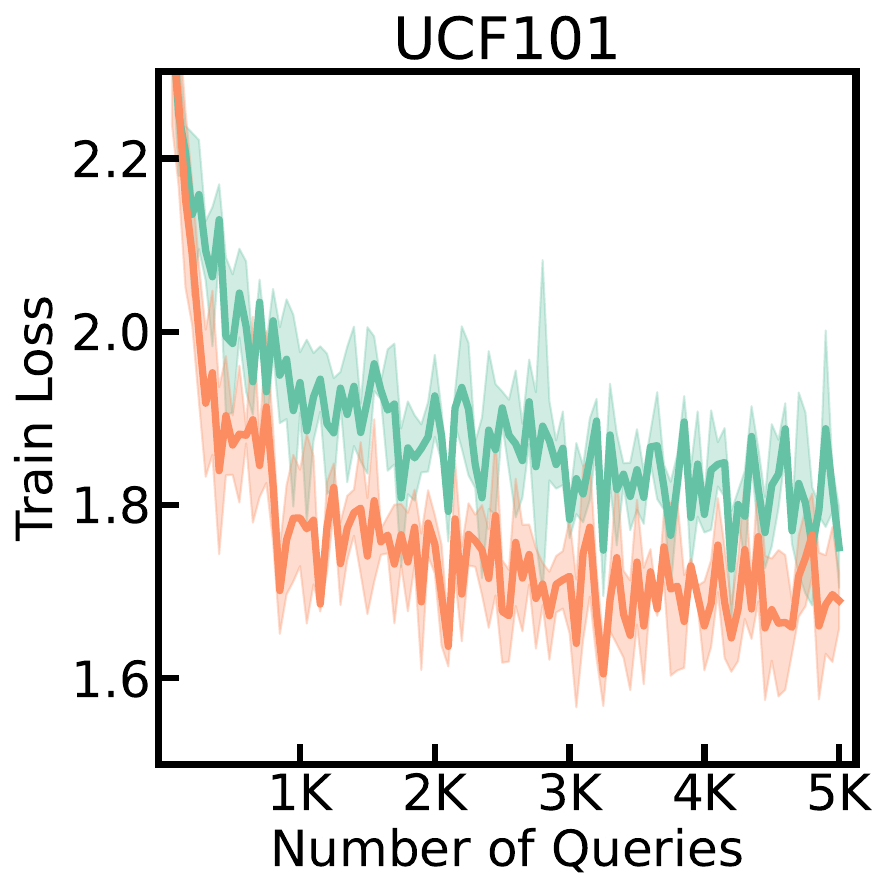}
    \end{subfigure}
    \begin{subfigure}{0.19\linewidth}
        \centering
        \includegraphics[width=\linewidth]{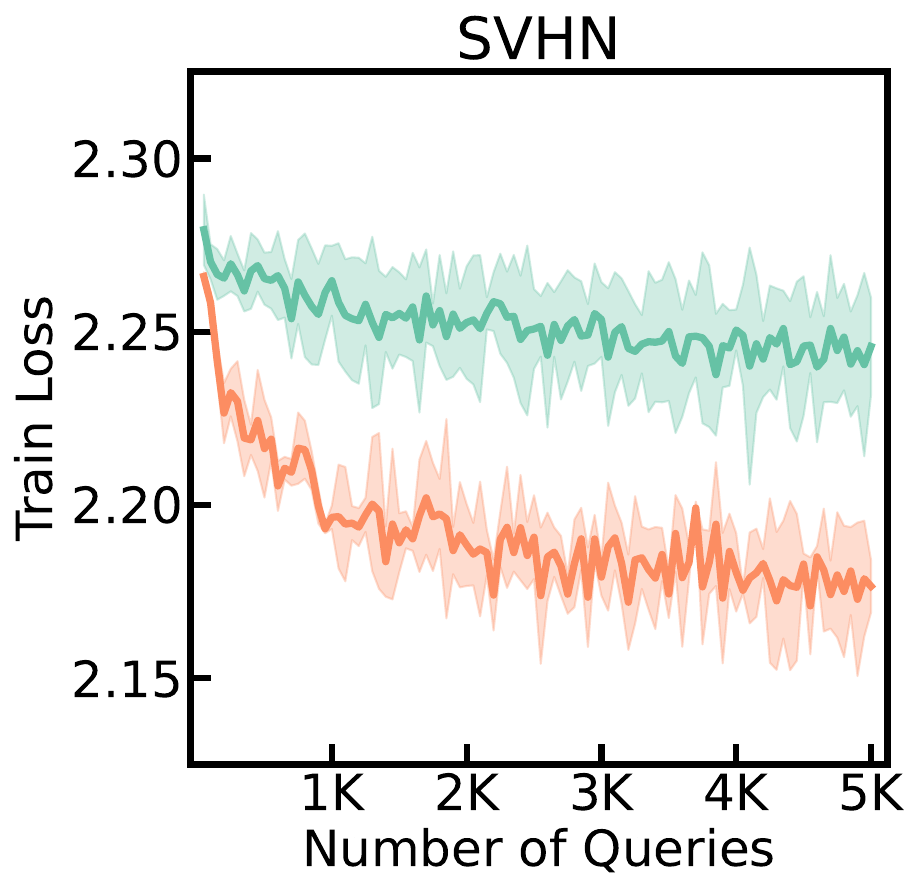}
    \end{subfigure}
    \begin{subfigure}{0.185\linewidth}
        \centering
        \includegraphics[width=\linewidth]{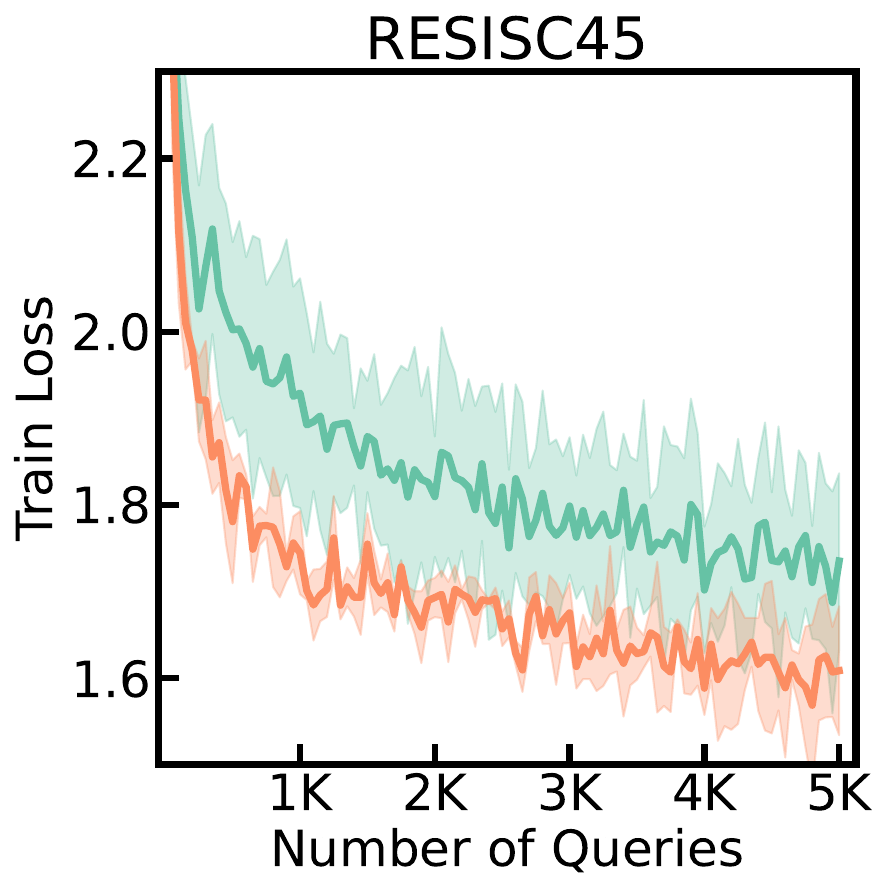}
    \end{subfigure}
    \begin{subfigure}{0.19\linewidth}
        \centering
        \includegraphics[width=\linewidth]{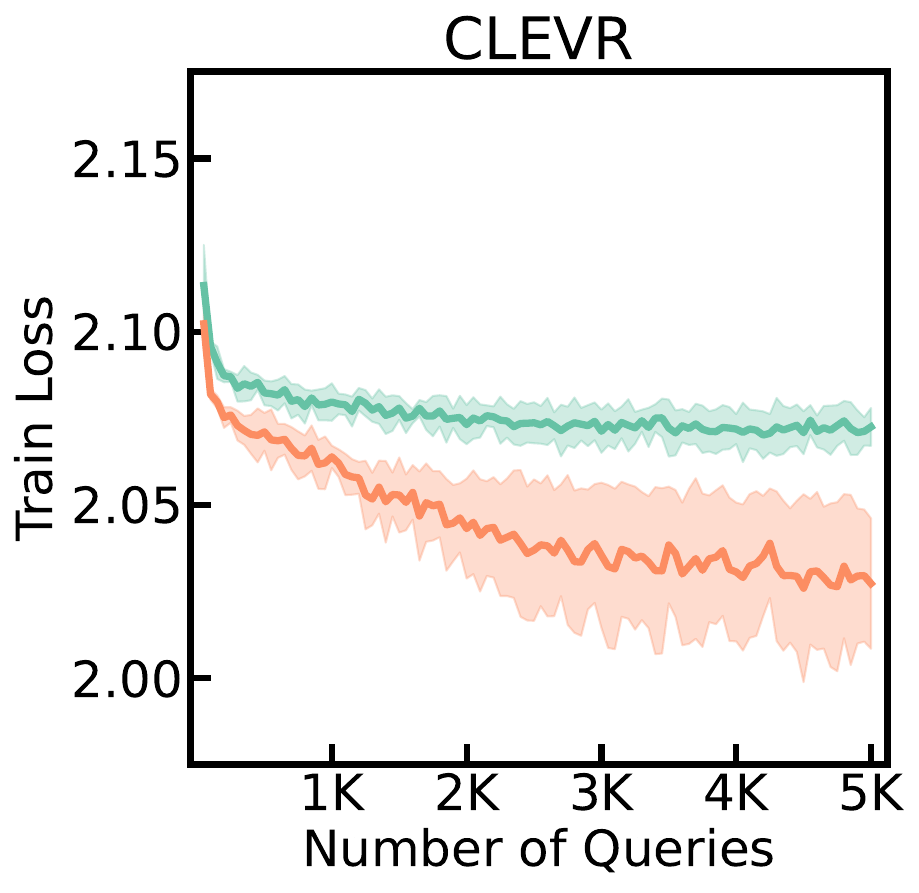}
    \end{subfigure}

    
    \begin{subfigure}{0.185\linewidth}
        \centering
        \includegraphics[width=\linewidth]{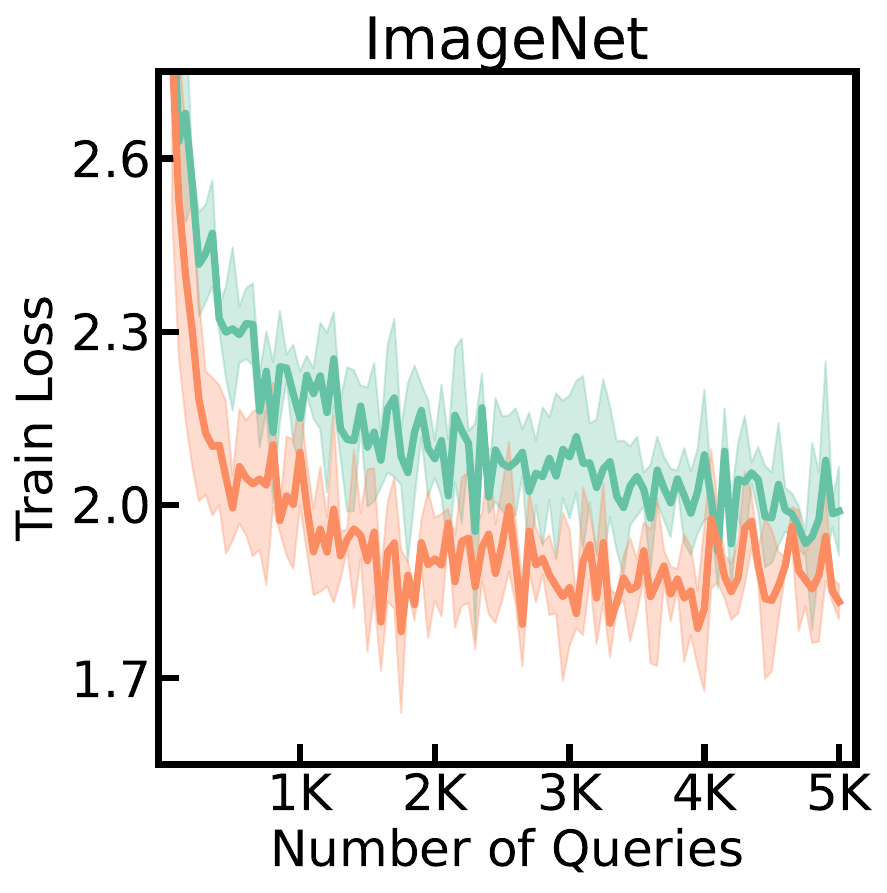}
    \end{subfigure}
    
    \caption{
        Effects of low-rank approximation with diagonal matrix at fixed parameter size (\ie, 417) across various vision-language tasks.
        }
    \label{fig:appendix low}
    \vspace{-0.75em}
\end{figure}
To further validate the effectiveness of our low-rank approximation with a diagonal matrix, introduced in \Cref{subsec:intrinsic}, we conducted a comprehensive ablation study. 
This study compares the standard dimensionality reduction technique with our proposed low-rank approximation, evaluated in two settings.

First, we fixed the intrinsic dimensionality at 500 for both the standard method and our approach. 
However, our method applies an additional low-rank approximation with a diagonal matrix, reducing the parameter size to 417. 
As shown in \Cref{fig:appendix svd}, this results in improved training speed.

Next, to isolate the effects of the low-rank approximation, we set the parameter size to 417 for both methods, demonstrating that hyper-parameter size alone is not the key factor driving the efficiency gains. 
As illustrated in \Cref{fig:appendix low}, our low-rank approximation method retains core information while reducing parameters, significantly enhancing both training speed and performance.

\begin{table}[t]
    \centering
    \caption{
        Benefits of low-rank approximation with diagonal matrix.
        Our method outperforms both standard dimensionality reduction and LoRA, showing significant improvements in test accuracy.
        }
    \label{tab:appendix lora}
    \vspace{-0.75em}
    \resizebox{\textwidth}{!}{%
        \centering
        \begin{tabular}{lcccccccccccccc}
        \toprule
         \textbf{Method} &  \rotbox{Caltech101} & \rotbox{OxfordPets} & \rotbox{Flowers102} & \rotbox{Food101} & \rotbox{FGVCAircraft} & \rotbox{SUN397} & \rotbox{DTD} & \rotbox{SVHN} & \rotbox{EuroSAT} & \rotbox{Resisc45} & \rotbox{CLEVR} & \rotbox{UCF101} & \rotbox{ImageNet} & \cellcolor{tabgray} \rotbox{\textbf{Average}} \\ 
         
         \midrule

         Standard & \underline{90.9} & 88.1 & \underline{67.5} & 84.6 & \underline{23.8} & \underline{57.9} & 43.2 & 31.5 & 56.5 & \underline{58.3} & 18.3 & \underline{65.3} & 62.3 & \cellcolor{tabgray} 57.6 \\
        
        LoRA & 90.7 & \underline{89.3} & \textbf{68.1} & \underline{85.0} & 23.7 & 57.4 & \underline{43.9} & \underline{36.0} & \underline{59.2} & 57.0 & \textbf{21.2} & 65.2 & \underline{62.6} & \cellcolor{tabgray} \underline{58.4} \\

        LoRA + Diagonal & \textbf{93.1} & \textbf{90.8} & 67.1 & \textbf{86.0} & \textbf{25.2} & \textbf{59.0} & \textbf{44.4} & \textbf{40.9} & \textbf{60.6} & \textbf{63.3} & \underline{20.2} & \textbf{67.4} & \textbf{64.8} & \cellcolor{tabgray} \textbf{60.2} \\
        
        \bottomrule
        \end{tabular}
    }
    \vspace{-1em}
\end{table}

Additionally, we compared our technique to the LoRA-style approximation~\citep{LoRA}. 
Our method, which introduces only $r$ parameters in the diagonal matrix, effectively captures essential information from the parameter space, boosting the expressive power of the model without significant parameter overhead. 
\Cref{tab:appendix lora} presents the test accuracy comparison between our approach, the standard dimensionality reduction method, and LoRA. 
Our method consistently outperforms both alternatives, demonstrating the clear advantage of integrating a diagonal matrix with low-rank approximation.
These findings highlight the effectiveness of our approach in preserving model expressiveness while optimizing parameter efficiency, making it a compelling solution for efficient model training.

\subsection{Intrinsic-dimensional clipping}
\label{clipping_further}
\begin{figure}[h!]
    \centering
    \begin{subfigure}{0.19\linewidth}
        \centering
        \includegraphics[width=\linewidth]{figure/figures/clipping/threshold_caltech101.pdf}
    \end{subfigure}
    \begin{subfigure}{0.19\linewidth}
        \centering
        \includegraphics[width=\linewidth]{figure/figures/clipping/threshold_oxford_pets.pdf}
    \end{subfigure}
    \begin{subfigure}{0.19\linewidth}
        \centering
        \includegraphics[width=\linewidth]{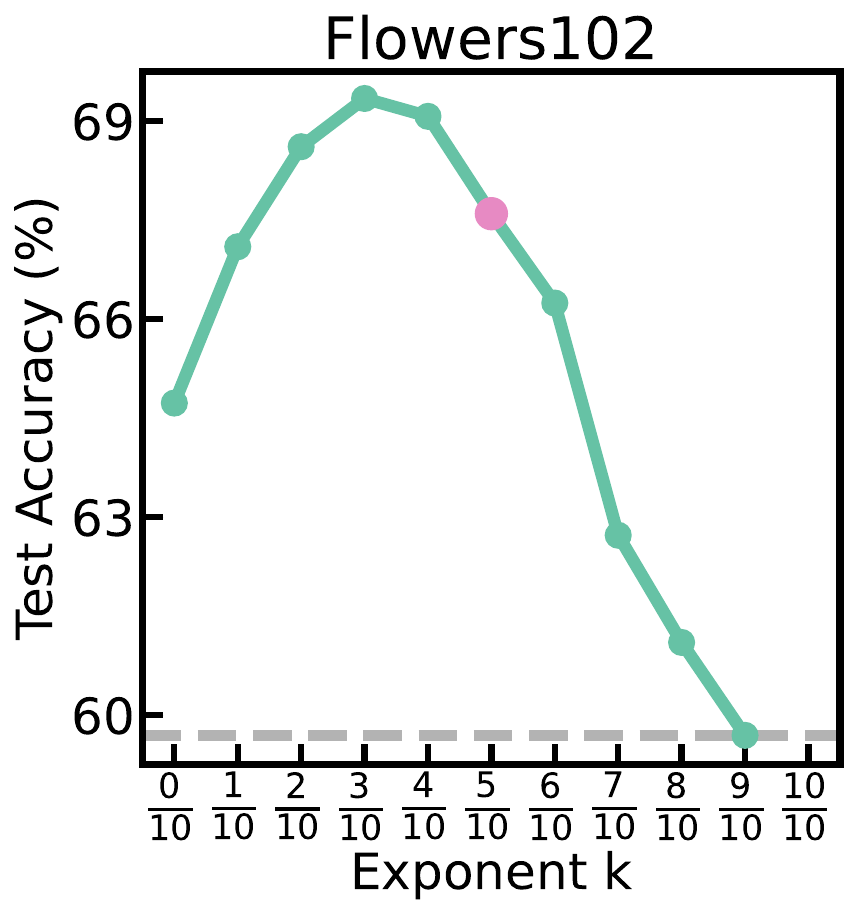}
    \end{subfigure}
    \begin{subfigure}{0.19\linewidth}
        \centering
        \includegraphics[width=\linewidth]{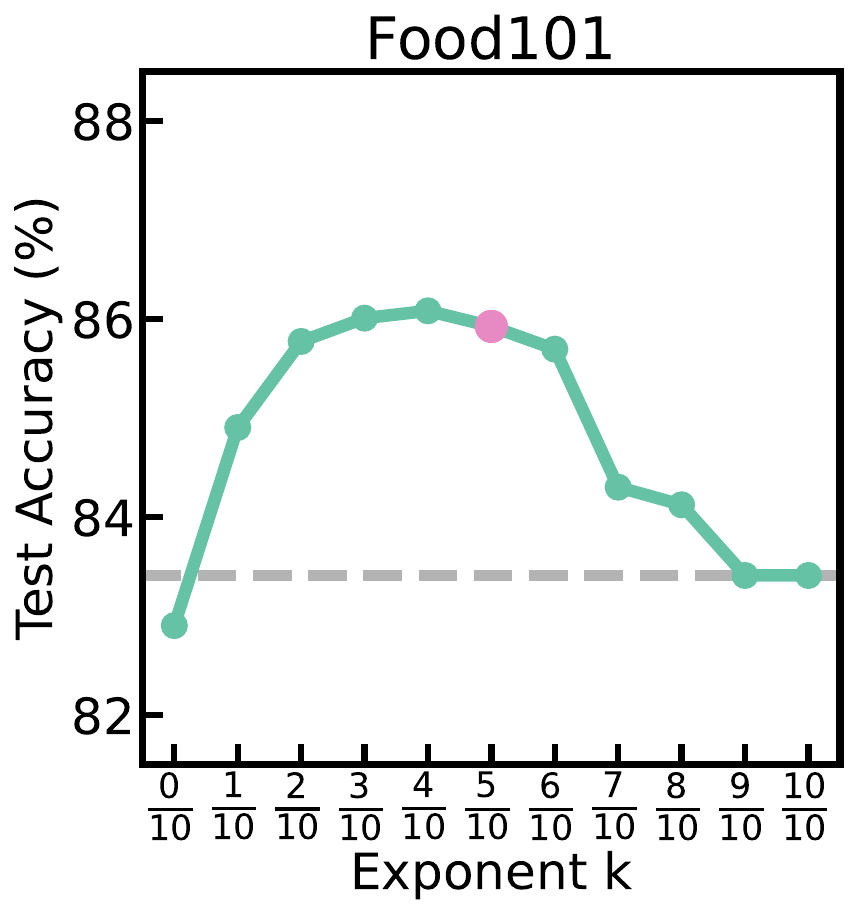}
    \end{subfigure}


    \begin{subfigure}{0.19\linewidth}
        \centering
        \includegraphics[width=\linewidth]{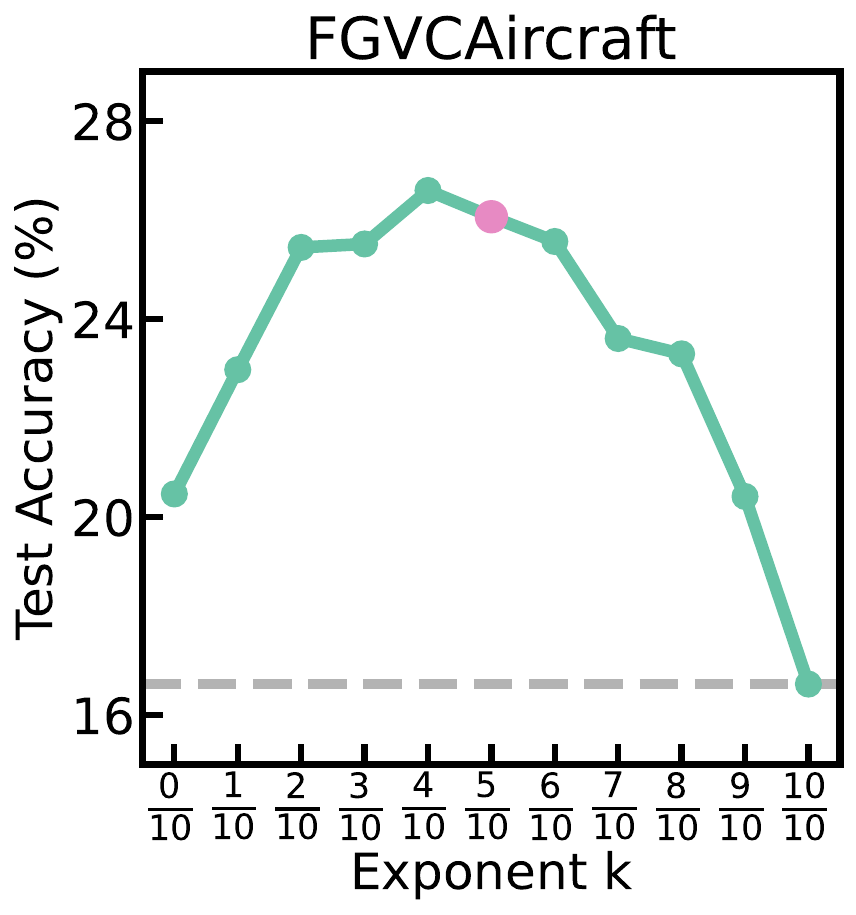}
    \end{subfigure}
    \begin{subfigure}{0.19\linewidth}
        \centering
        \includegraphics[width=\linewidth]{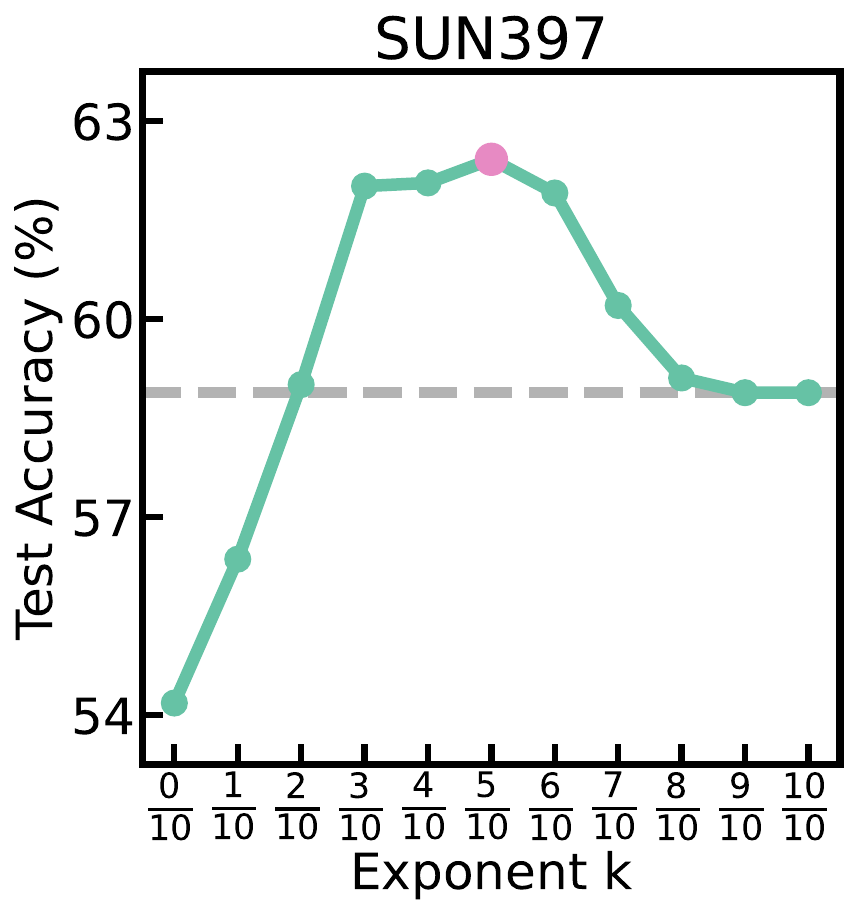}
    \end{subfigure}
    \begin{subfigure}{0.19\linewidth}
        \centering
        \includegraphics[width=\linewidth]{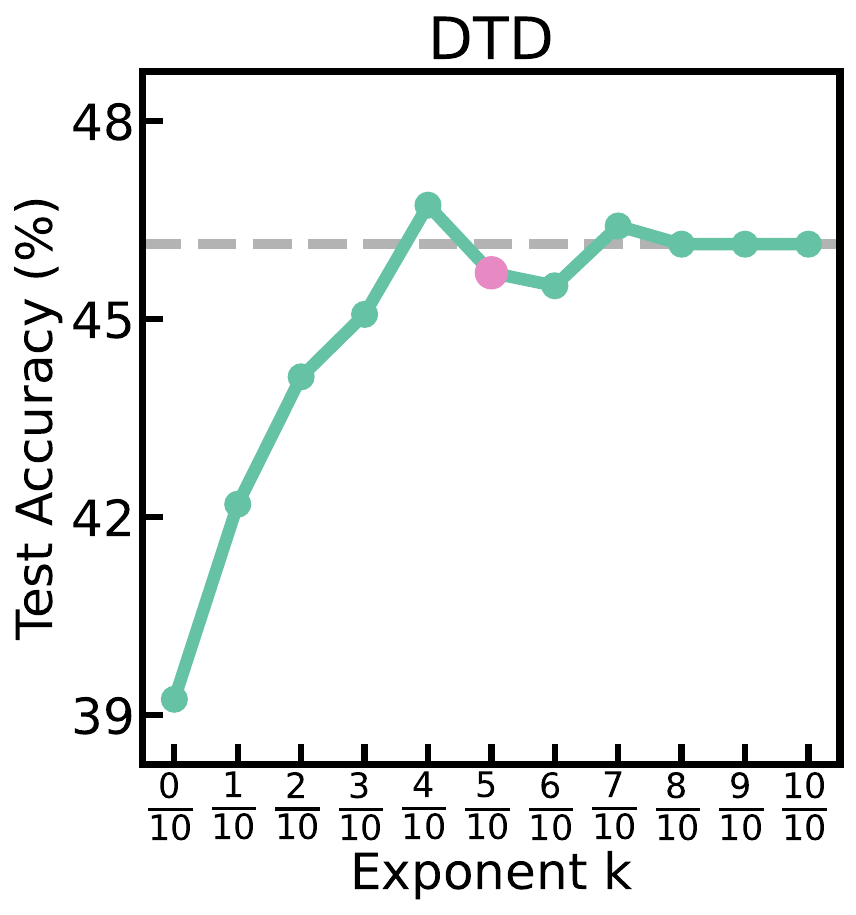}
    \end{subfigure}
    \begin{subfigure}{0.19\linewidth}
        \centering
        \includegraphics[width=\linewidth]{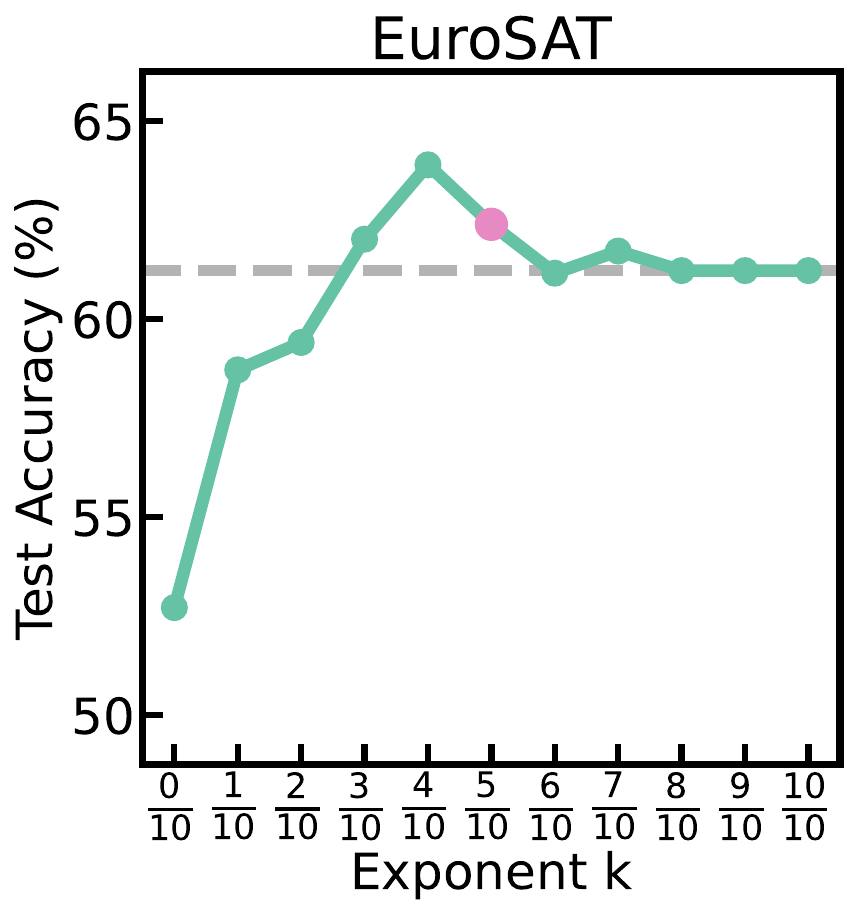}
    \end{subfigure}

    
    \begin{subfigure}{0.19\linewidth}
        \centering
        \includegraphics[width=\linewidth]{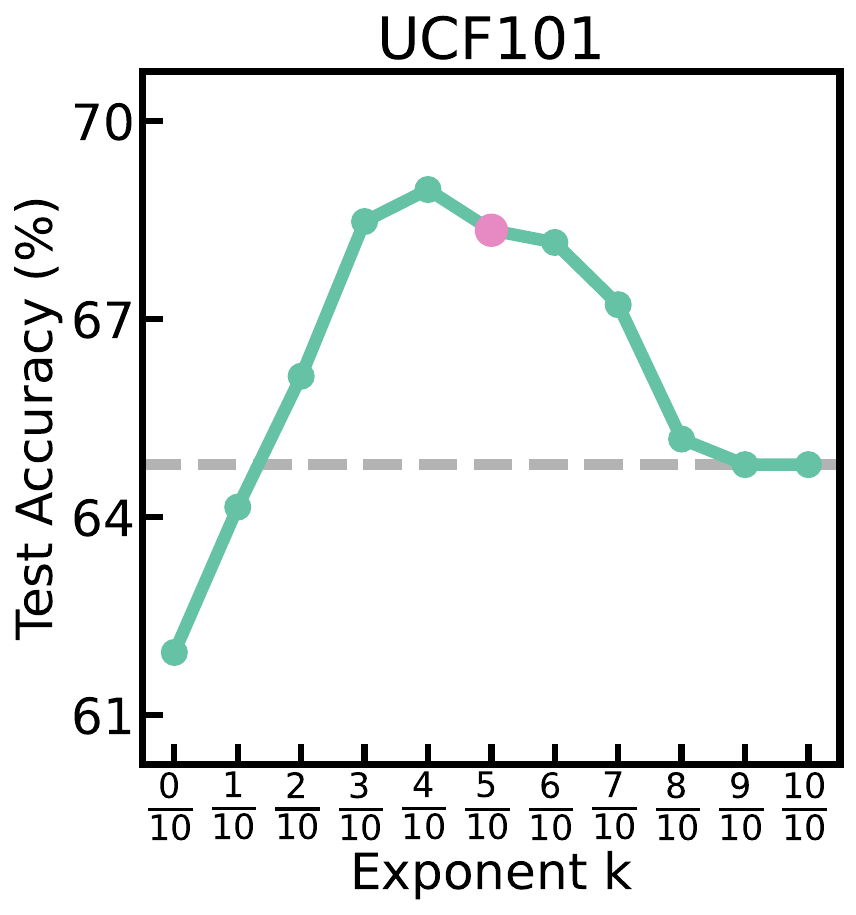}
    \end{subfigure}
    \begin{subfigure}{0.19\linewidth}
        \centering
        \includegraphics[width=\linewidth]{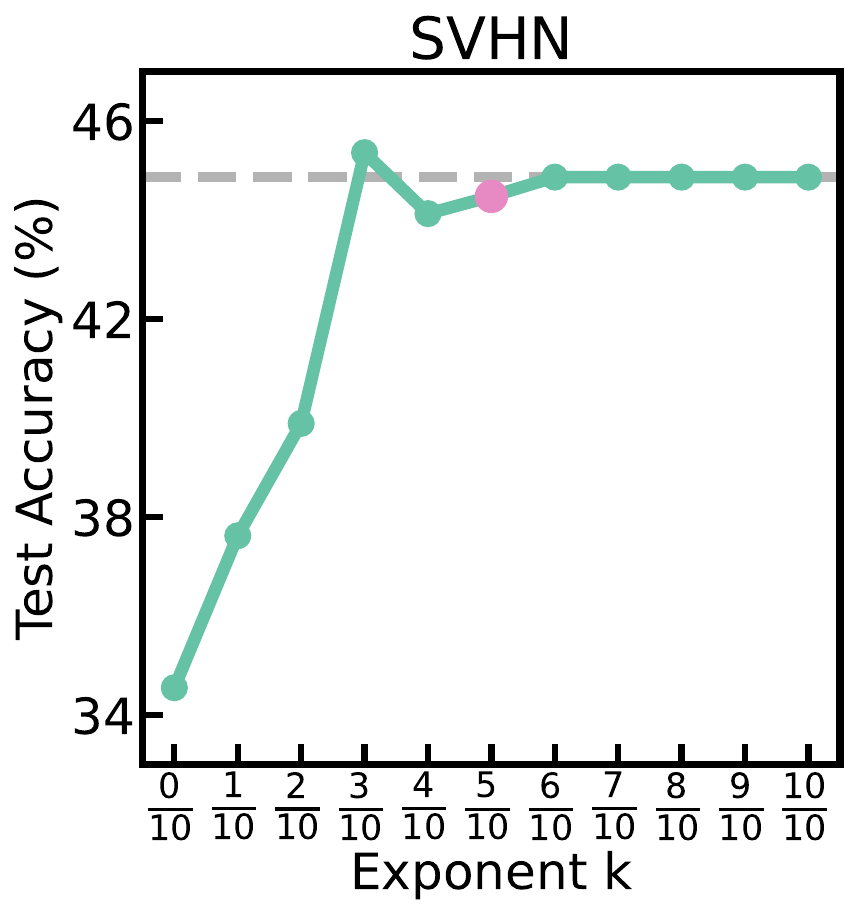}
    \end{subfigure}
    \begin{subfigure}{0.19\linewidth}
        \centering
        \includegraphics[width=\linewidth]{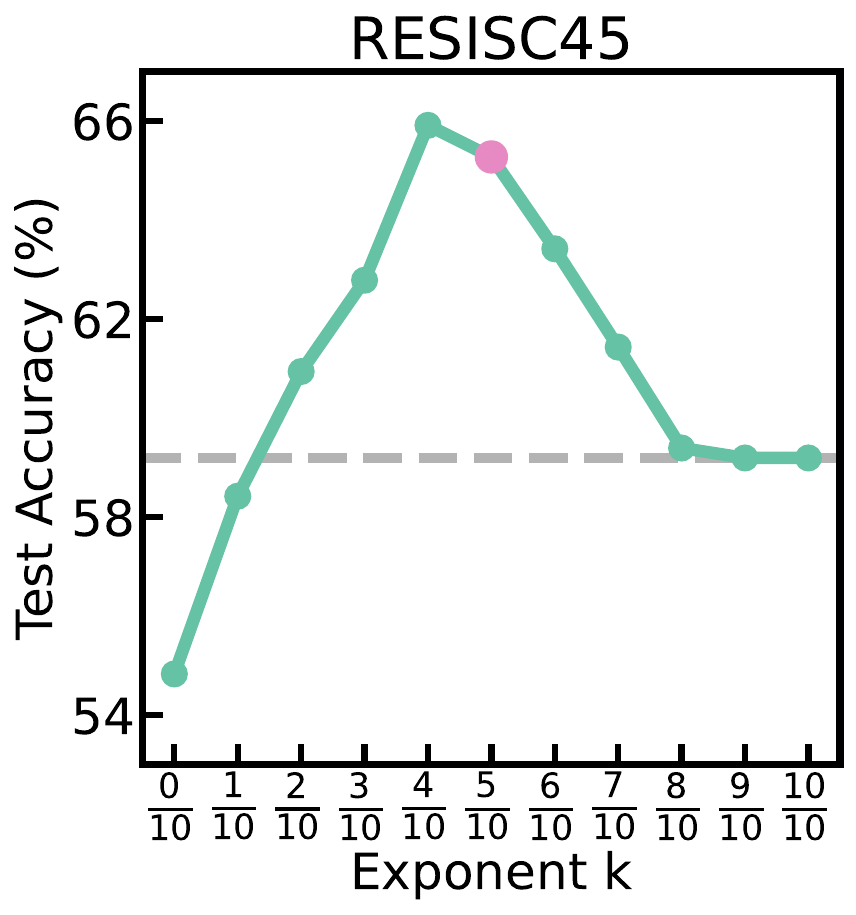}
    \end{subfigure}
    \begin{subfigure}{0.19\linewidth}
        \centering
        \includegraphics[width=\linewidth]{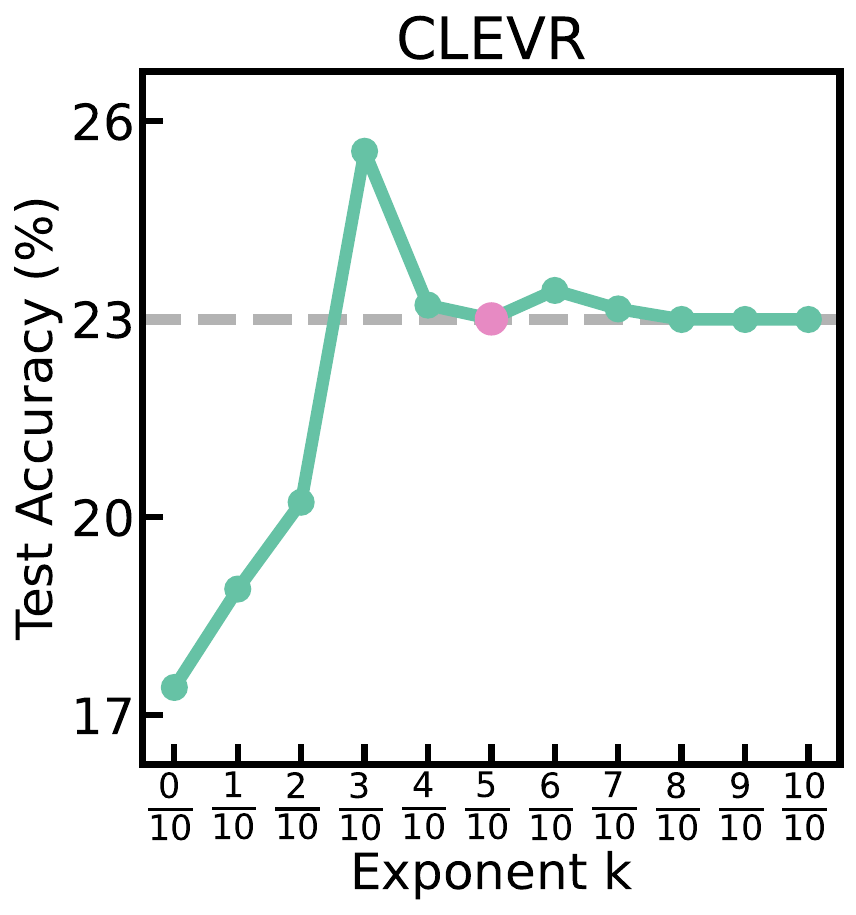}
    \end{subfigure}

    
    \begin{subfigure}{0.19\linewidth}
        \centering
        \includegraphics[width=\linewidth]{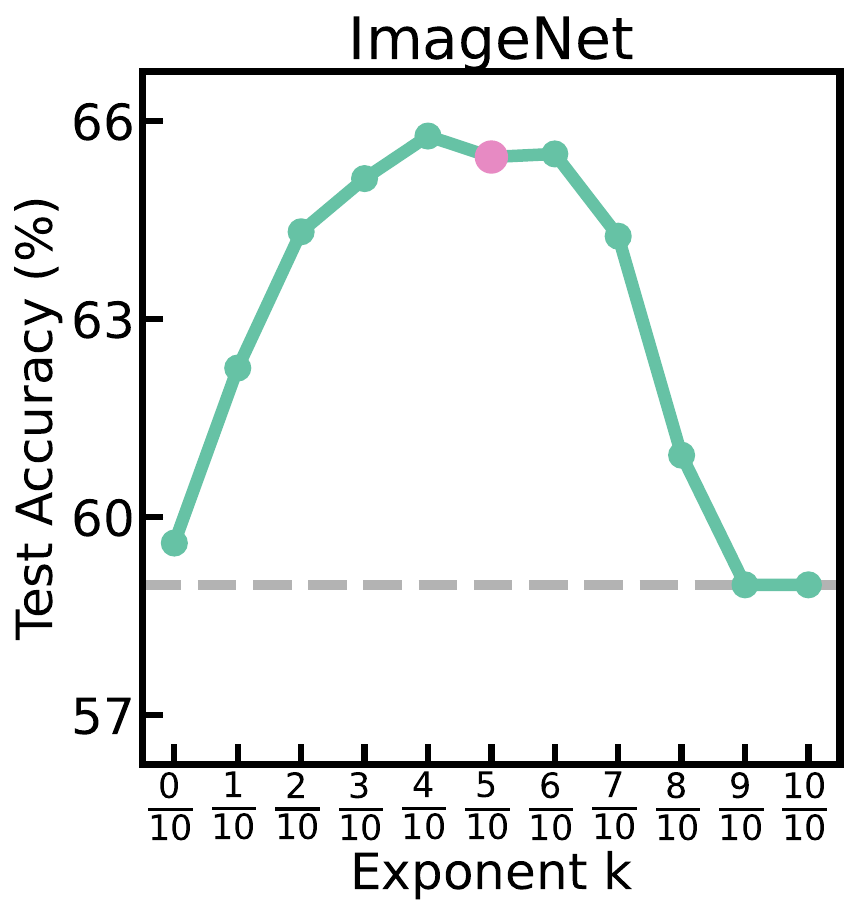}
    \end{subfigure}
    
    \caption{
        Effects of optimal threshold across various vision-language tasks.
        }
    \label{fig:appendix_threshold}
    \vspace{-0.75em}
\end{figure}
\begin{figure}[h!]
    \centering
    \begin{subfigure}{0.23\linewidth}
        \centering
        \includegraphics[width=\linewidth]{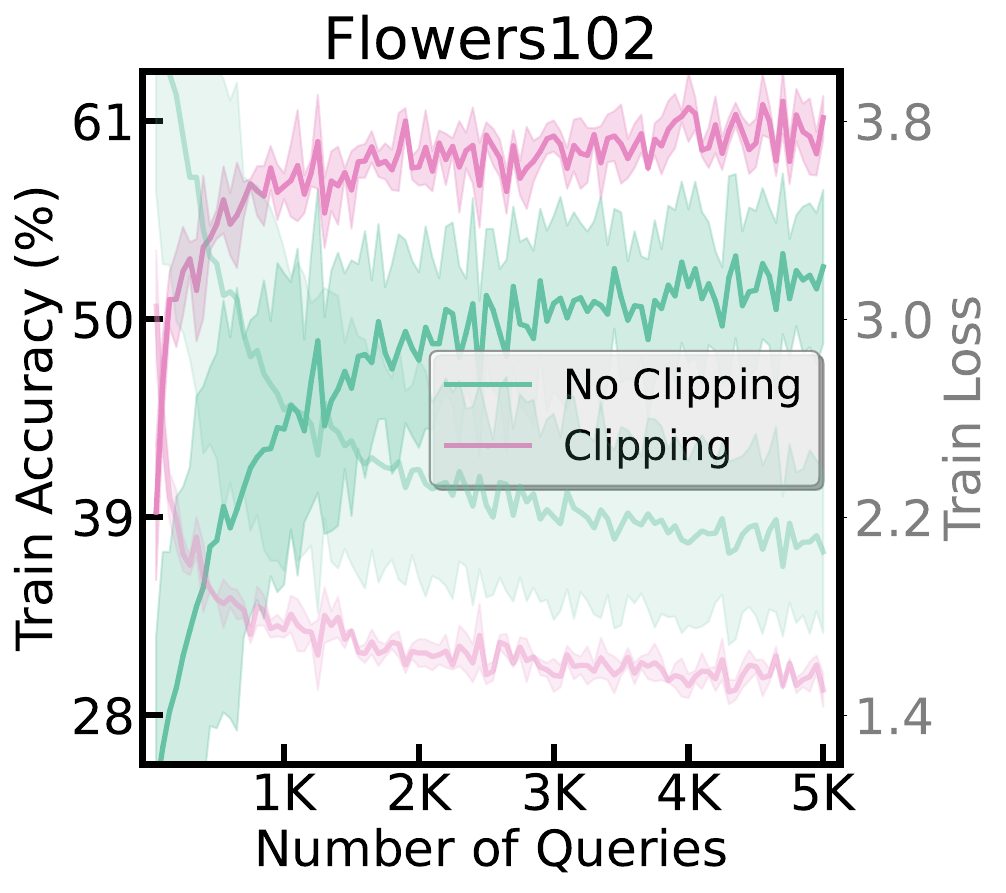}
    \end{subfigure}
    \begin{subfigure}{0.23\linewidth}
        \centering
        \includegraphics[width=\linewidth]{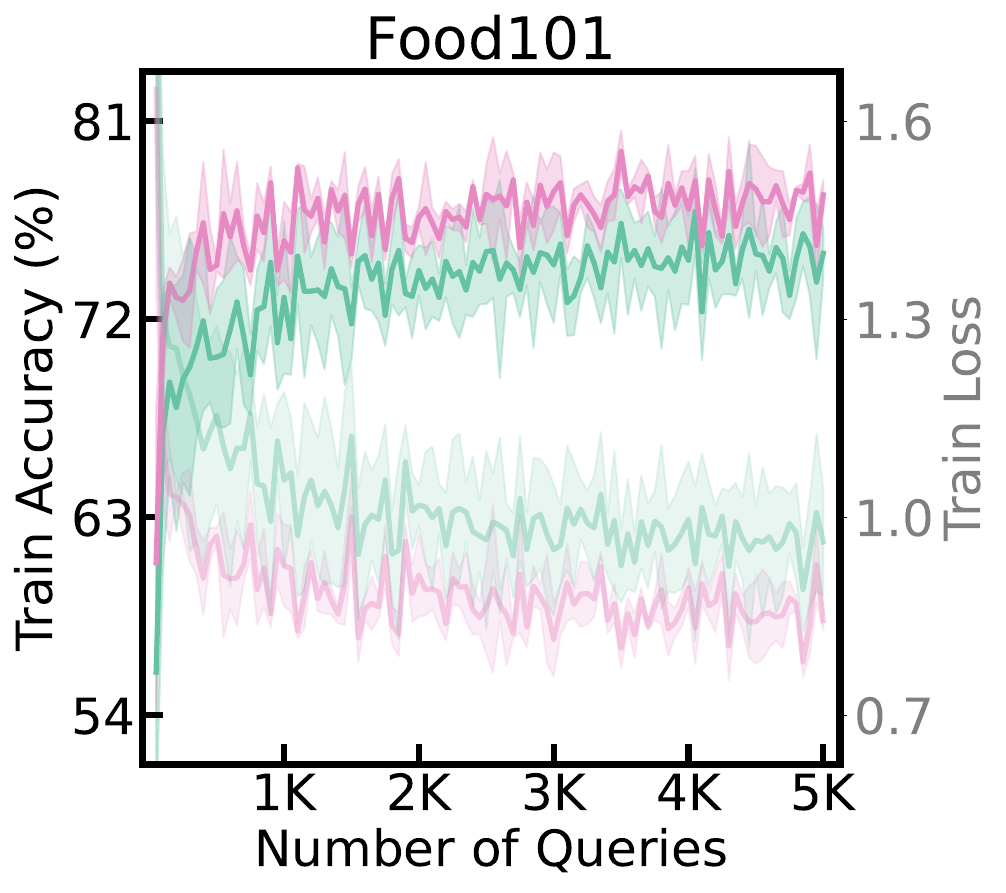}
    \end{subfigure}
    \begin{subfigure}{0.23\linewidth}
        \centering
        \includegraphics[width=\linewidth]{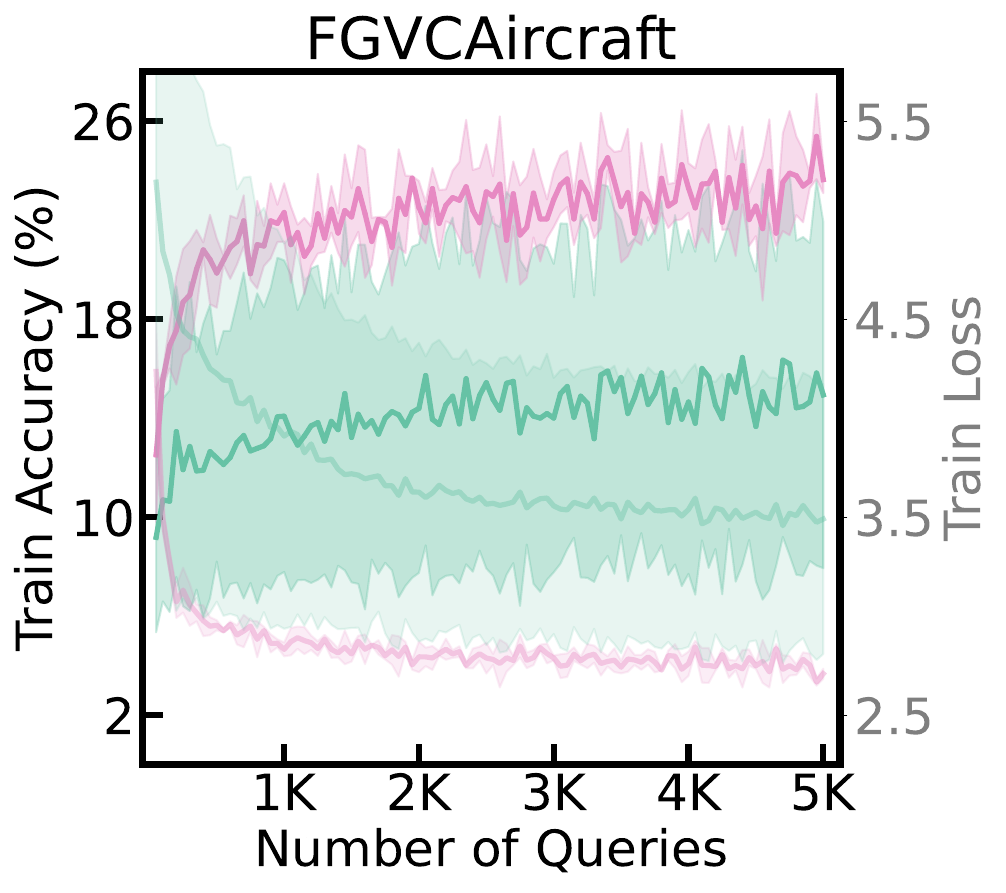}
    \end{subfigure}
    \begin{subfigure}{0.23\linewidth}
        \centering
        \includegraphics[width=\linewidth]{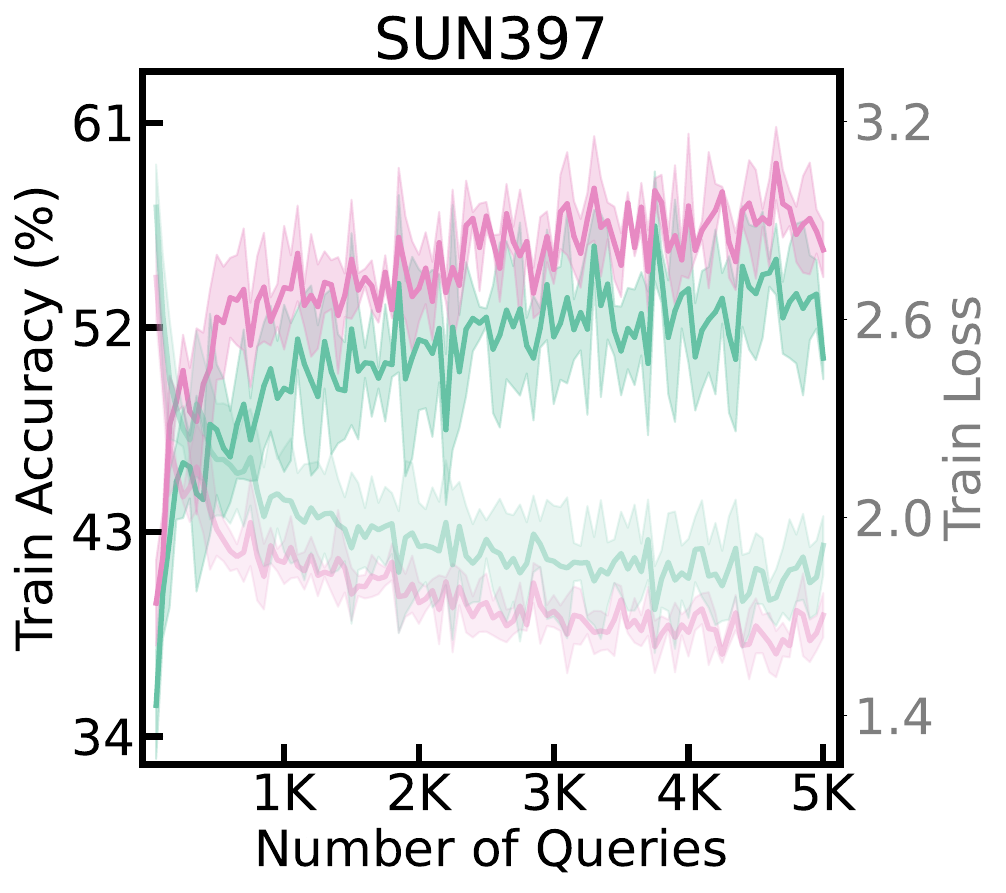}
    \end{subfigure}


    \begin{subfigure}{0.23\linewidth}
        \centering
        \includegraphics[width=\linewidth]{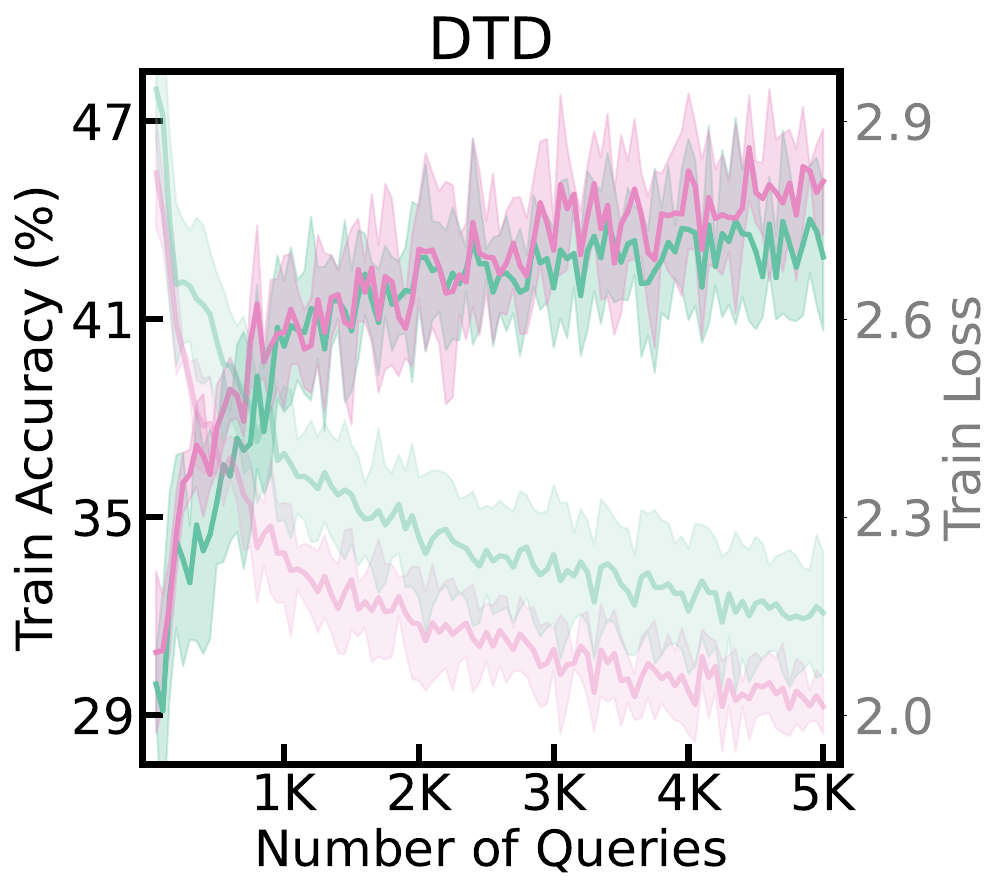}
    \end{subfigure}
    \begin{subfigure}{0.23\linewidth}
        \centering
        \includegraphics[width=\linewidth]{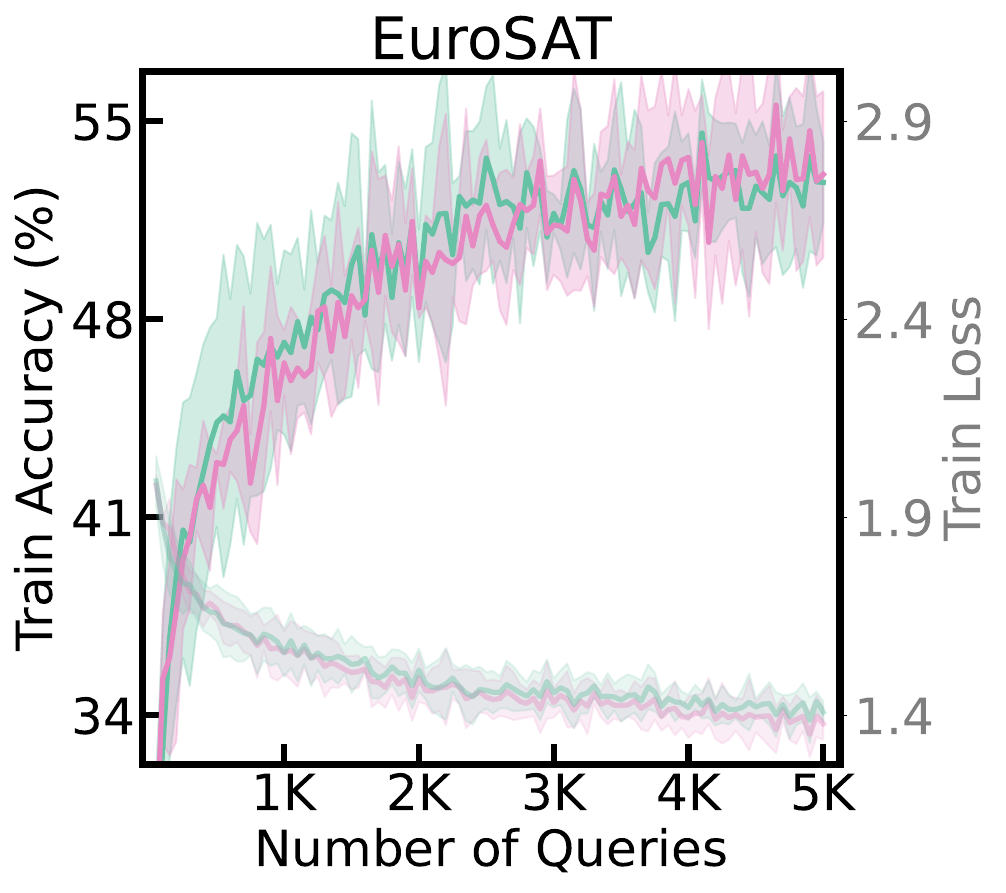}
    \end{subfigure}
    \begin{subfigure}{0.23\linewidth}
        \centering
        \includegraphics[width=\linewidth]{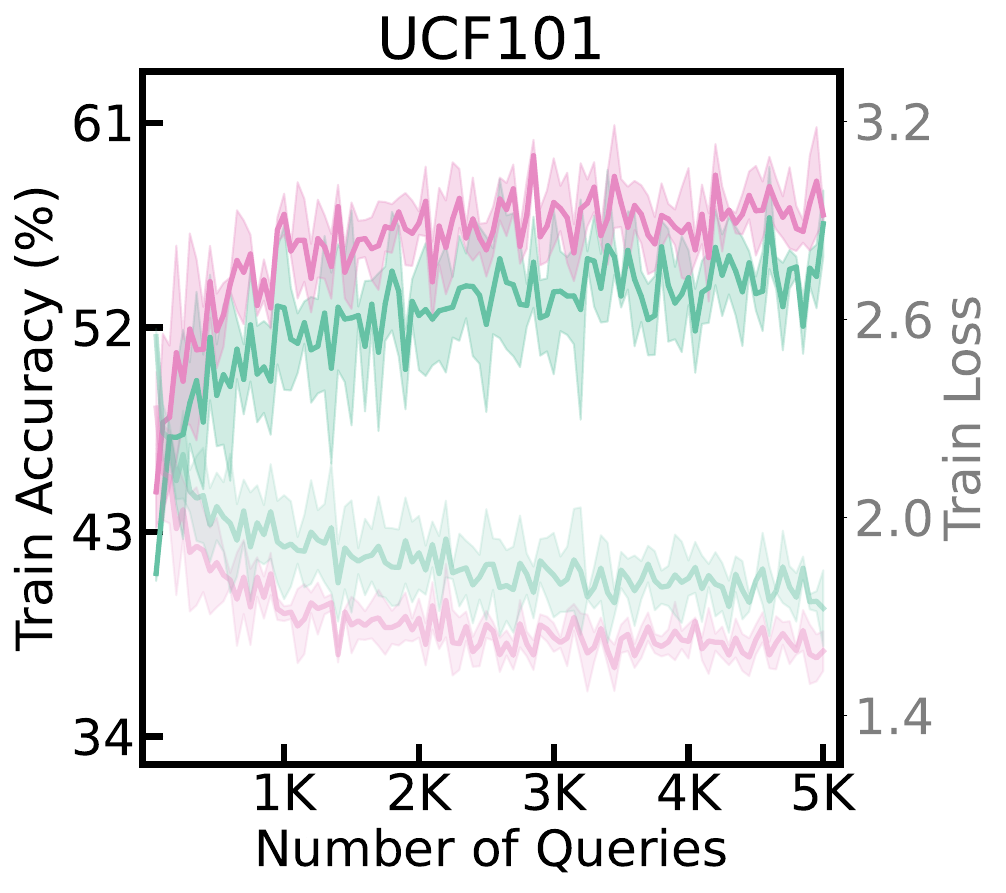}
    \end{subfigure}
    \begin{subfigure}{0.23\linewidth}
        \centering
        \includegraphics[width=\linewidth]{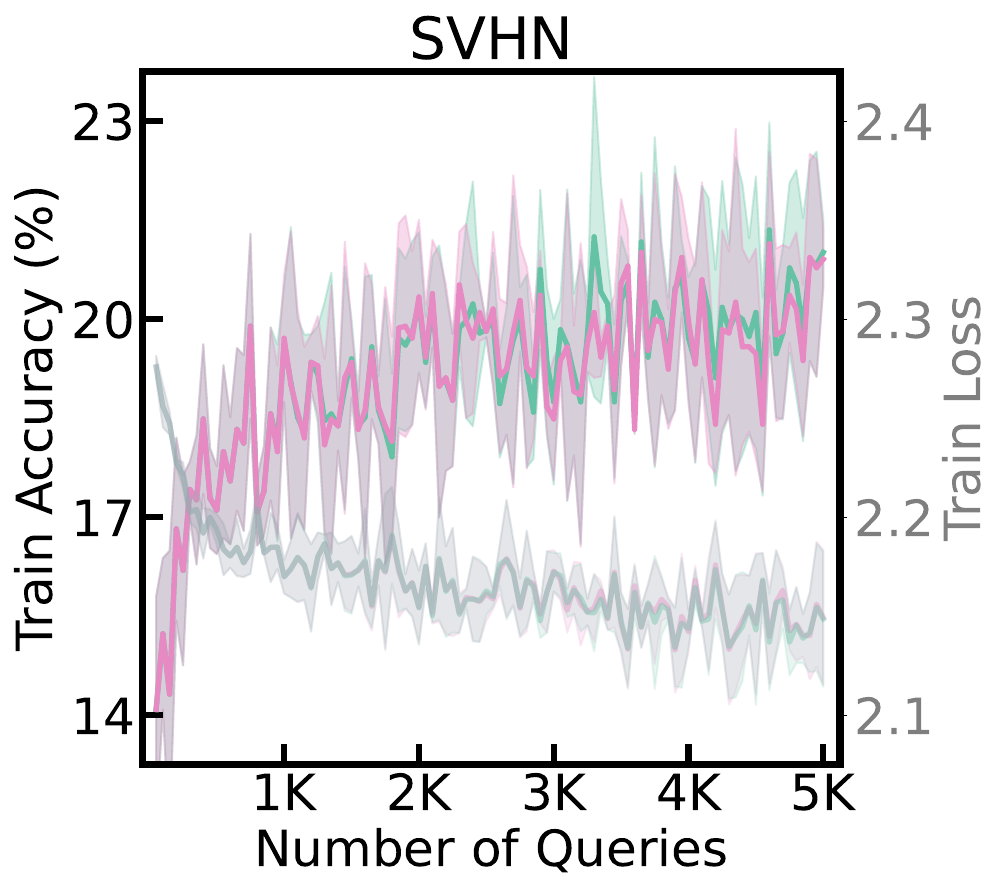}
    \end{subfigure}

    
    \begin{subfigure}{0.23\linewidth}
        \centering
        \includegraphics[width=\linewidth]{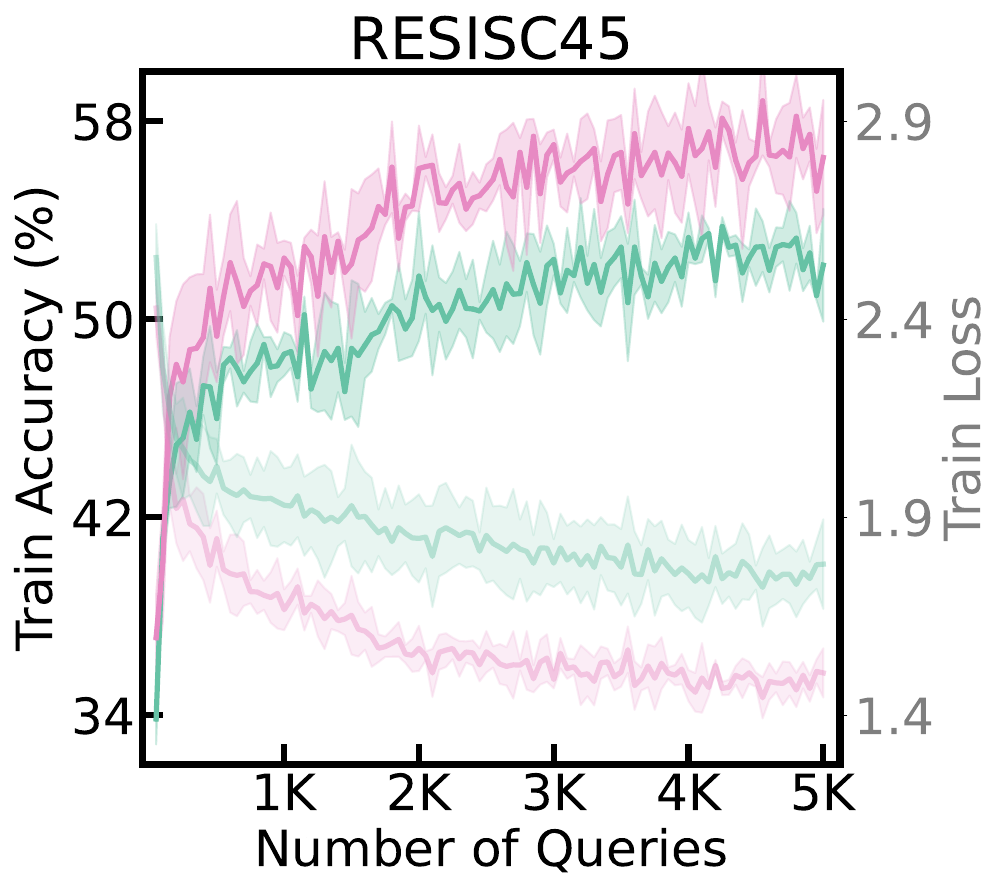}
    \end{subfigure}
    \begin{subfigure}{0.23\linewidth}
        \centering
        \includegraphics[width=\linewidth]{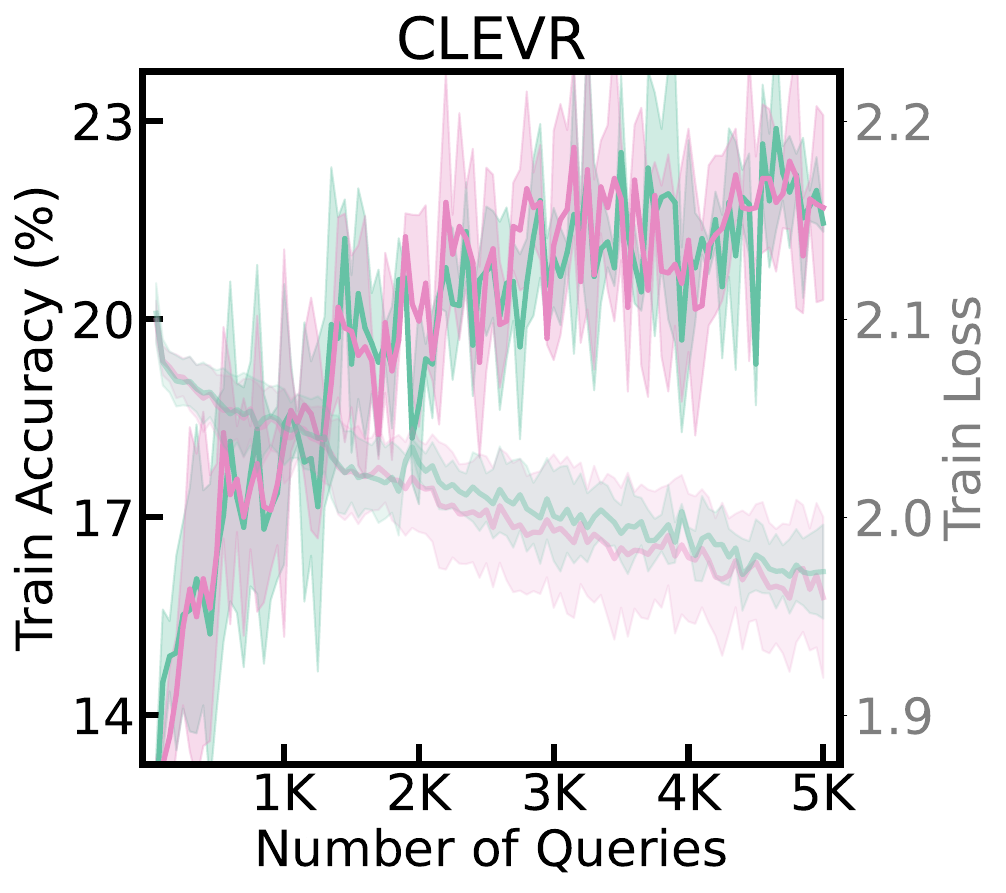}
    \end{subfigure}
    \begin{subfigure}{0.23\linewidth}
        \centering
        \includegraphics[width=\linewidth]{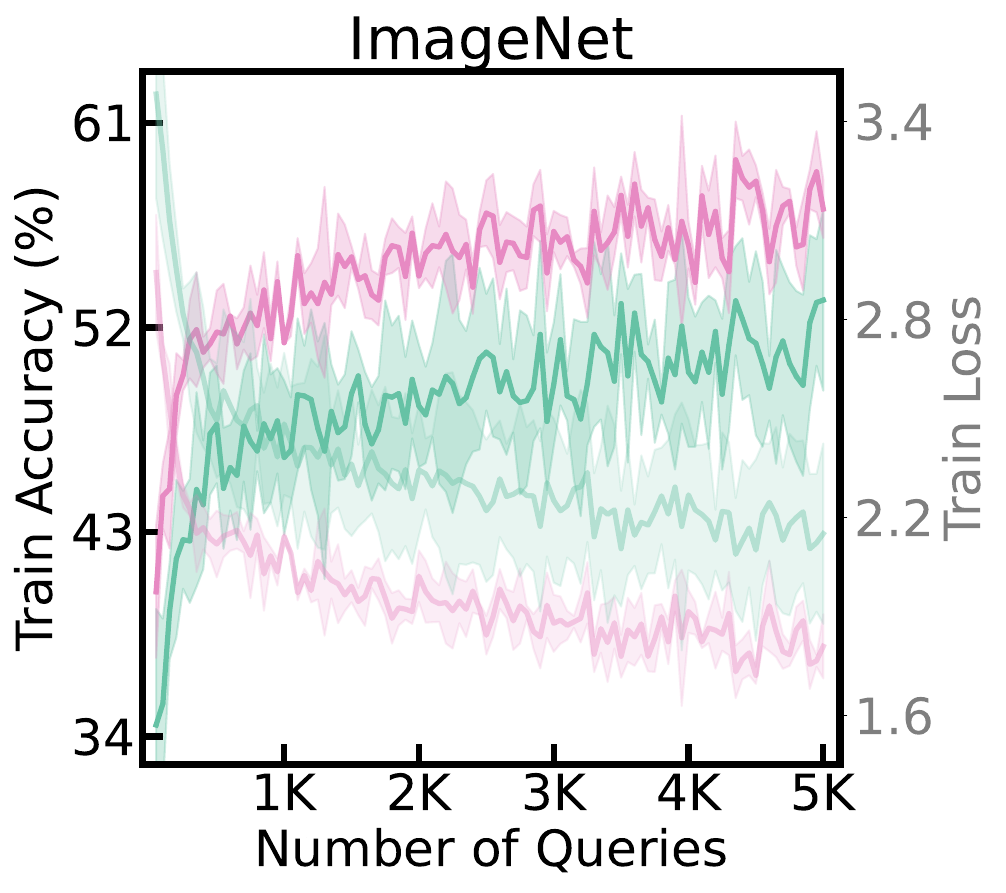}
    \end{subfigure}
    
    \caption{
        Effects of intrinsic-dimensional clipping across various vision-language tasks.
        }
    \label{fig:appendix clipping}
\end{figure}
In \Cref{fig:appendix_threshold} and \ref{fig:appendix clipping}, we further investigate the impact of intrinsic-dimensional clipping and the effect of varying the optimal clipping threshold across multiple datasets. 
The results indicate that applying intrinsic-dimensional clipping consistently enhances training accuracy and reduces loss across most datasets, demonstrating its effectiveness in stabilizing the training process.

When evaluating test accuracy with varying gradient clipping thresholds, \zip achieves near-optimal performance across the majority of datasets, consistently outperforming cases where no gradient clipping is applied. 
Although there are some exceptions, such as SVHN, DTD, and CLEVR, where gradient clipping does not yield significant improvements in test accuracy, the results remain comparable to \zip without clipping, indicating that the technique does not hinder performance in these cases.

These findings substantiate that our intrinsic-dimensional clipping approach significantly improves the overall performance of zeroth-order optimization, and the selected $\sqrt{\delta}$ threshold effectively serves as a reliable and practical choice for enhancing training efficiency.

\subsection{Analysis of module combinations}
\label{app:combination}
\begin{table}[t!]
    \centering
    \caption{
        Few-shot performance on 13 vision-language tasks with varying combinations of the proposed modules (\eg, diagonal matrix, feature sharing (FS), and intrinsic-dimensional clipping). 
        All the results are based on 16-shots per class. 
        The \tf{bold numbers} denote the highest accuracy of all baselines on each dataset, and the \underline{underlined values} indicate the second.
        }
    \label{tab:few_shot_learning combination}
    \vspace{-0.75em}
    \resizebox{\textwidth}{!}{%
        \centering
        \begin{tabular}{ccccccccccccccccccc}
        \toprule
         Number & Diagonal & FS & Clipping & \rotbox{Caltech101} & \rotbox{OxfordPets} & \rotbox{Flowers102} & \rotbox{Food101} & \rotbox{FGVCAircraft} & \rotbox{SUN397} & \rotbox{DTD} & \rotbox{SVHN} & \rotbox{EuroSAT} & \rotbox{Resisc45} & \rotbox{CLEVR} & \rotbox{UCF101} & \rotbox{ImageNet} & \cellcolor{tabgray} \rotbox{\textbf{Average}} \\ 
         
         \midrule
    
        1 & \checkmark & \xmark & \xmark & 91.2 & 82.3 & 56.9 & 83.4 & 13.2 & 56.8 & 41.0 & 38.9 & 58.8 & 58.5 & \underline{23.1} & 64.4 & 61.5 & \cellcolor{tabgray} 56.2 \\
        2 & \xmark & \checkmark & \xmark & 90.1 & 89.3 & 65.3 & 84.6 & 22.7 & \underline{60.6} & 42.4 & 38.4 & 59.3 & 59.8 & 18.9 & 66.7 & 63.4 & \cellcolor{tabgray} 58.6 \\
        3 & \xmark & \xmark & \checkmark & 90.7 & 89.3 & \underline{68.1} & 85.0 & 23.7 & 57.4 & 43.9 & 36.0 & 59.2 & 57.1 & 21.2 & 65.2 & 62.6 & \cellcolor{tabgray} 58.4 \\
        4 & \checkmark & \checkmark & \xmark & 91.3 & 86.0 & 59.7 & 83.4 & 16.6 & 58.9 & \underline{46.1} & \textbf{44.9} & \underline{61.2} & 59.2 & 23.0 & 64.8 & 59.0 & \cellcolor{tabgray} 58.0 \\
        5 & \xmark & \checkmark & \checkmark & 89.8 & 89.5 & 66.4 & 85.3 & 25.1 & 58.5 & 44.7 & 38.3 & 61.0 & 58.9 & 18.9 & 65.9 & 63.4 & \cellcolor{tabgray} 58.9 \\
        6 & \checkmark & \xmark & \checkmark & \underline{93.1} & \underline{90.8} & 67.1 & \underline{86.0} & \underline{25.2} & 59.0 & 44.4 & 40.9 & 60.6 & \underline{63.3} & 20.2 & \underline{67.4} & \underline{64.8} & \cellcolor{tabgray} \underline{60.2} \\
        7 & \checkmark & \checkmark & \checkmark & \textbf{93.4} & \textbf{91.7} & \textbf{70.0} & \textbf{86.3} & \textbf{26.6} & \textbf{62.2} & \textbf{47.8} & \underline{44.2} & \textbf{64.2} & \textbf{65.2} & \textbf{25.1} & \textbf{69.8} & \textbf{66.0} & \cellcolor{tabgray} \textbf{62.5} \\
        
        \bottomrule
        \end{tabular}
    }
    \vspace{-1em}
\end{table}
We evaluate all combinations of the proposed modules, including diagonal matrix, feature sharing (FS), and intrinsic-dimensional clipping. 
The results are presented in \Cref{tab:few_shot_learning combination}. 
First, we observe that using all the proposed modules together results in significantly better performance compared to using individual modules or pairs of modules. 
This demonstrates that each component works harmoniously to contribute to the generation of effective results.
Additionally, from the transitions $1 \rightarrow 6$, $4 \rightarrow 7$ and $5\rightarrow 7$, we find that combining the low-rank approximation with diagonal matrix with intrinsic dimensional clipping yields more pronounced performance improvements (+4\%, +4.5\%, +3.6\%) compared to other combinations.
These findings suggest that while each component is effective on its own, their combination creates a complementary synergy that maximizes overall performance. 
In future work, we plan to conduct an in-depth analysis to uncover the underlying mechanisms behind this synergy. This will provide deeper insights into its practical utility, paving the way for its application to a broader range of tasks.

\subsection{Significance test of \zip}
To ensure the robustness of our results, we conduct a statistical significance analysis comparing \zip and competing methods. 
While the average CDT accuracies of \zip and the second-best method, \blackvip, appear close (\Cref{tab:xd}), we extend the evaluation using 10 random seeds (1–10) to provide more reliable conclusions. 
A t-test was then perform to compute p-values for statistical significance. 
The results reveal that \zip demonstrates statistically significant improvements in OOD tasks, while its performance in CDT tasks is comparable to \blackvip. 
Detailed results are presented in \Cref{tab:xd significance cdt} and \Cref{tab:xd significance ood}.

In CDT tasks, \zip achieves statistically significant improvements on specific datasets such as Flowers ($p=0.009$), Food101 ($p=0.008$), SVHN ($p=0.018$), and EuroSAT ($p=0.003$). 
Conversely, \blackvip shows significantly higher performance on Aircraft ($p=6.45e-08$), DTD ($p=3.87e-04$), CLEVR ($p=0.023$), and UCF101 ($p=0.016$). 
These findings underscore that while \zip delivers strong and consistent performance, dataset-specific characteristics can influence the effectiveness of each method.

\begin{table}[t!]
    \centering
    \caption{
        Cross-dataset transfer performance comparison between \zip and \blackvip, including significance test results.  
        The p-values from t-tests highlight statistically significant differences on specific datasets.  
        \zip demonstrates notable improvements in query efficiency and accuracy on datasets like Flowers and EuroSAT, while \blackvip excels in others such as Aircraft and DTD.
    }
    \label{tab:xd significance cdt}
    \vspace{-0.75em}
    \resizebox{\linewidth}{!}{%
        \begin{tabular}{l cc cccccccccc cc}
        \toprule
        & \tf{Source} & \multicolumn{13}{c}{\tf{CDT Target}} \\ 
        \cmidrule(lr){2-2} \cmidrule(lr){3-15} 
        \textbf{Method} & \rotbox{ImageNet} & \rotbox{Caltech101} & \rotbox{OxfordPets} & \rotbox{Flowers102} & \rotbox{Food101} & \rotbox{FGVCAircraft} & \rotbox{SUN397} & \rotbox{DTD} & \rotbox{SVHN} & \rotbox{EuroSAT} & \rotbox{Resisc45} & \rotbox{CLEVR} & \rotbox{UCF101} & \cellcolor{tabgray} \rotbox{\textbf{Average}} \\
        \midrule
        \blackvip & 65.19 & 92.69 & 86.28 & 64.72 & 83.36 & 22.31 & 62.04 & 42.88 & 17.68 & 39.71 & 55.98 & 15.70 & 64.07 & \cellcolor{tabgray} 54.82\\
        \zip & 65.91 & 91.69 & 85.74 & 65.64 & 84.54 & 20.67 & 60.10 & 39.39 & 21.42 & 44.43 & 54.52 & 14.39 & 61.99 & \cellcolor{tabgray} 54.65\\
        t-stats & 1.745 & -1.879 & -0.690 & 2.942 & 2.974 & -8.771 & -2.755 & -4.349 & 2.615 & 3.409 & -1.588 & -2.479 & -2.650 & \cellcolor{tabgray}\\
        p-value & 0.098 & 0.076 & 0.498 & 0.009 & 0.008 & 6.45e-08 & 0.013 & 3.87e-04 & 0.018 & 0.003 & 0.130 & 0.023 & 0.016 & \cellcolor{tabgray}\\
        \bottomrule
        \end{tabular}
    }
    \vspace{-1em}
\end{table}

\begin{table}[t!]
    \centering
    \caption{
        Out-of-distribution (OOD) generalization performance comparison between \zip and \blackvip, including significance test results.  
        The p-values from t-tests reveal statistically significant improvements by \zip on datasets such as ImageNet-A, ImageNet-R, and ImageNet-Sketch, demonstrating its robustness under domain shifts.  
        While \zip achieves higher average performance, the results on ImageNetV2 show comparable performance with no statistical significance.  
    }
    \label{tab:xd significance ood}
    \vspace{-0.75em}
    \resizebox{0.6\linewidth}{!}{%
        \begin{tabular}{l cc ccccc}
        \toprule
        & \tf{Source} & \multicolumn{5}{c}{\tf{OOD Target}} \\ 
        \cmidrule(lr){2-2} \cmidrule(lr){3-7}
        \textbf{Method} & \rotbox{ImageNet} & \rotbox{ImageNet-A} & \rotbox{ImageNetV2} & \rotbox{ImageNet-R} & \rotbox{ImageNet-Sketch} & \cellcolor{tabgray} \rotbox{\textbf{Average}}\\
        \midrule
        \blackvip & 65.19 & 41.90 & 58.85 & 72.81 & 44.32 & \cellcolor{tabgray} 54.47 \\
        \zip & 65.91 & 47.94 & 59.48 & 74.64 & 45.25 & \cellcolor{tabgray} 56.82 \\
        t-stats & 1.745 & 10.299 & 1.549 & 4.359 & 2.619 & \cellcolor{tabgray} \\
        p-value & 0.098 & 5.66e-09 & 0.139 & 3.78e-4 & 0.017 & \cellcolor{tabgray}\\
        \bottomrule
        \end{tabular}
    }
    \vspace{-1.5em}
\end{table}

In OOD tasks, \zip consistently outperforms \blackvip in average performance (\zip: 56.82\%, \blackvip: 54.47\%). 
Statistically significant improvements are observed on ImageNet-A ($p=5.66e-09$), ImageNet-R ($p=3.78e-4$), and ImageNet-Sketch ($p=0.017$). 
For ImageNetV2, the performance difference ($p=0.139$) is not statistically significant, suggesting similar performance on this dataset.

The strong generalization performance of \blackvip stems from its image-dependent prompting strategy, drawing from prior works like CoCoOp~\citep{CoCoOp}. 
This design is tailored to improve generalization capabilities. 
In contrast, \zip focuses on query efficiency, making it particularly effective in scenarios with limited API budgets. 
Despite differing objectives, \zip demonstrates superior performance in OOD and base-to-new generalization tasks, while maintaining competitive performance in CDT tasks. 
These results highlight the balanced capabilities and adaptability of \zip across various generalization settings.

\subsection{Validation accuracy}
\begin{figure}[h!]
    \centering
    \begin{subfigure}{0.19\linewidth}
        \centering
        \includegraphics[width=\linewidth]{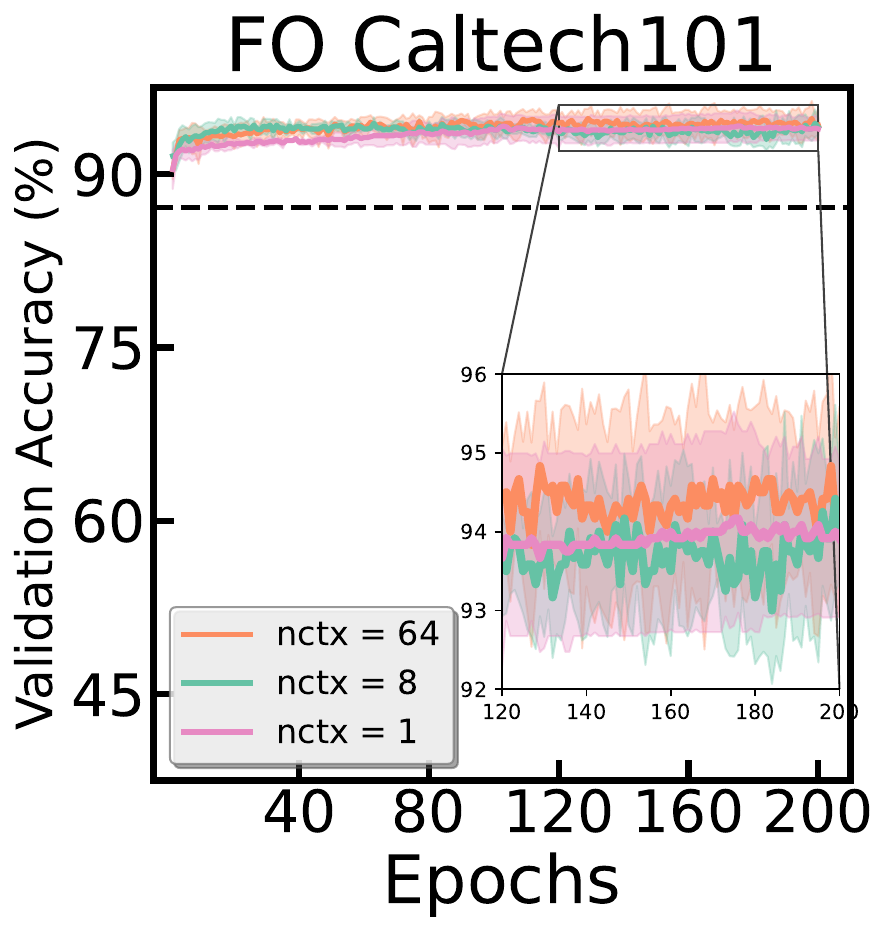}
    \end{subfigure}
    \begin{subfigure}{0.19\linewidth}
        \centering
        \includegraphics[width=\linewidth]{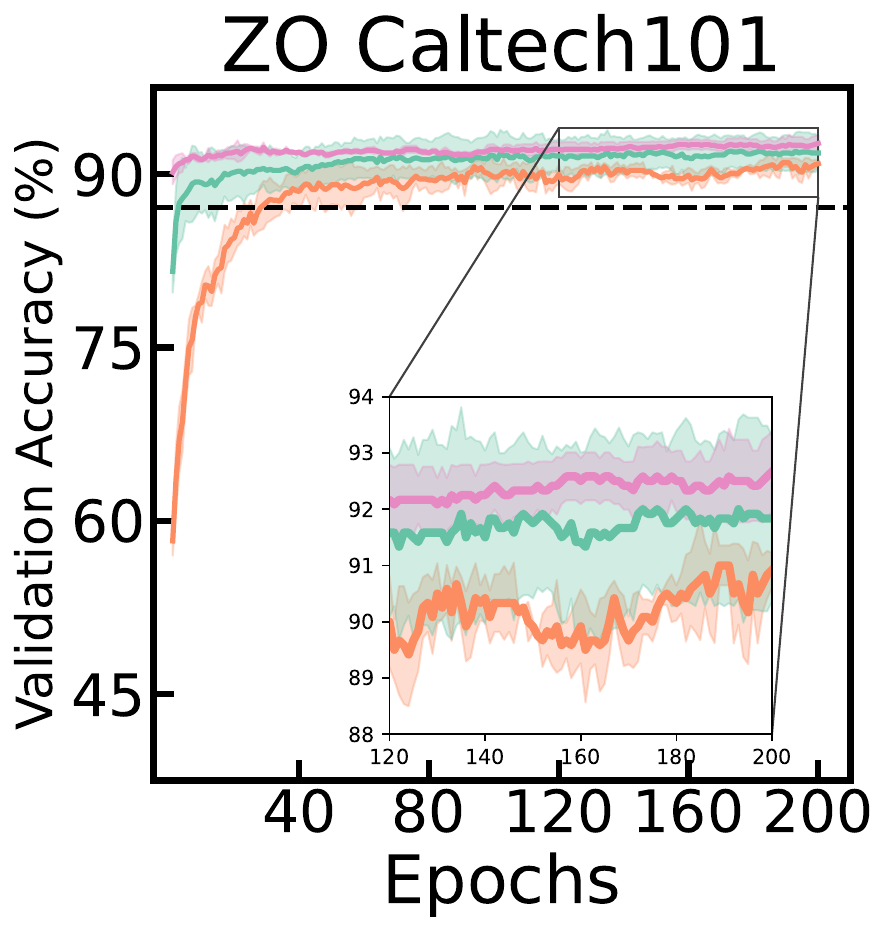}
    \end{subfigure}
    \begin{subfigure}{0.19\linewidth}
        \centering
        \includegraphics[width=\linewidth]{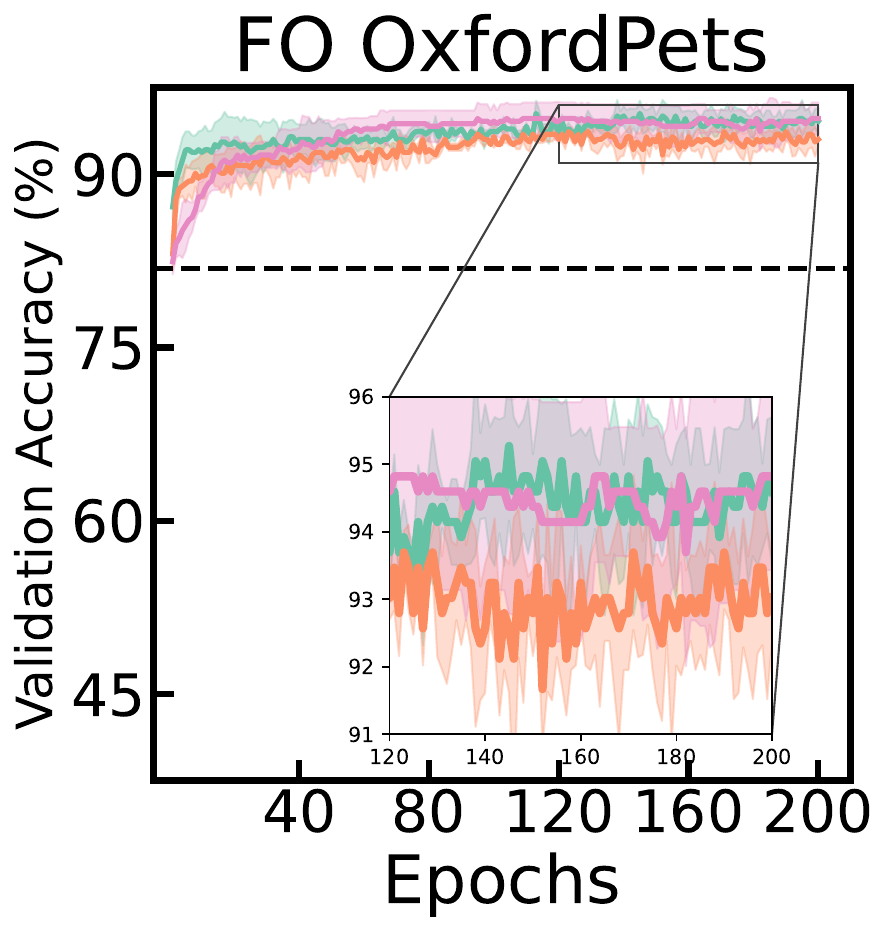}
    \end{subfigure}
    \begin{subfigure}{0.19\linewidth}
        \centering
        \includegraphics[width=\linewidth]{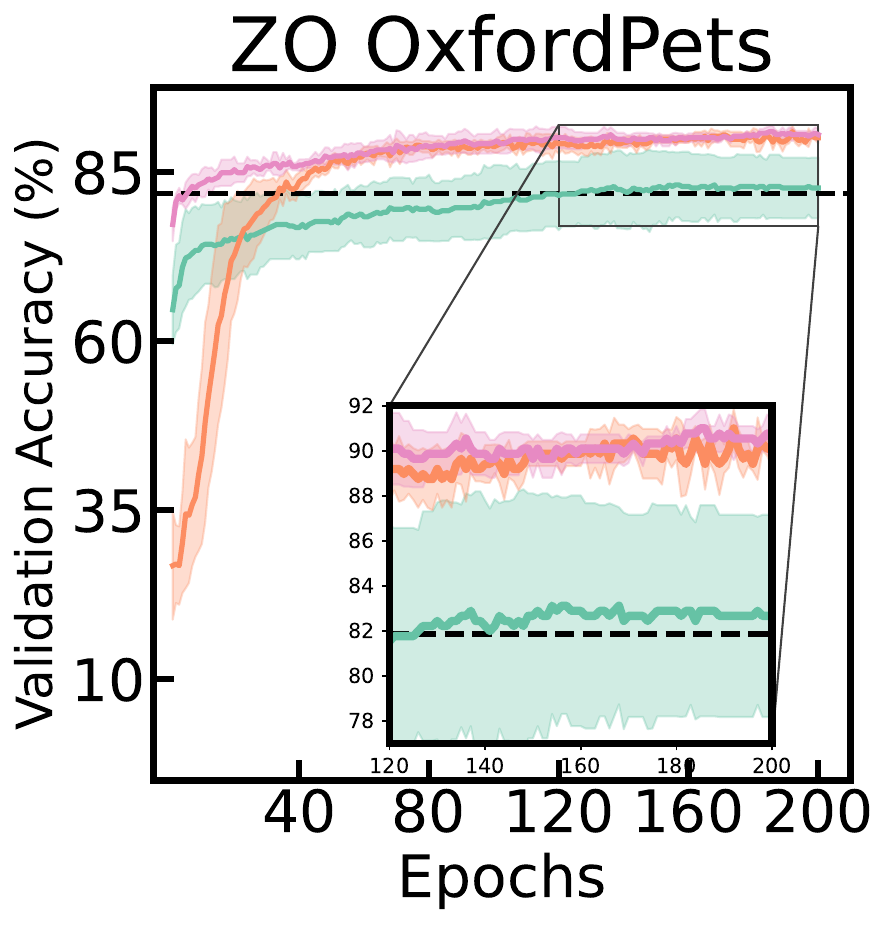}
    \end{subfigure}


    \begin{subfigure}{0.19\linewidth}
        \centering
        \includegraphics[width=\linewidth]{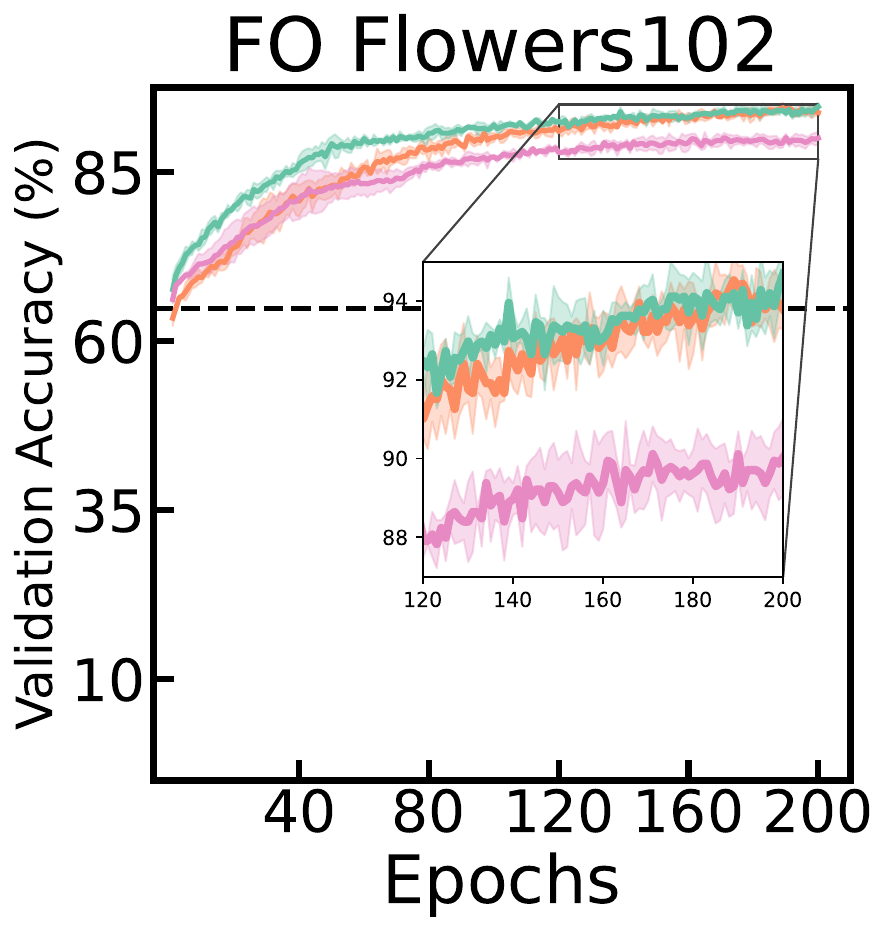}
    \end{subfigure}
    \begin{subfigure}{0.19\linewidth}
        \centering
        \includegraphics[width=\linewidth]{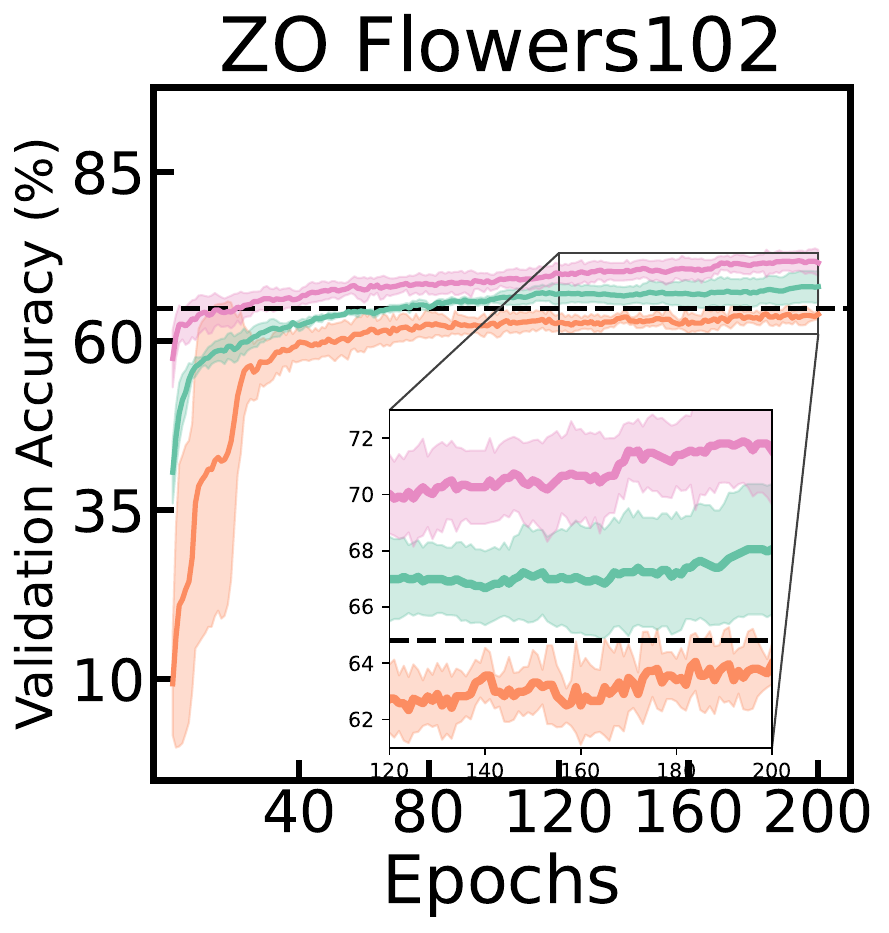}
    \end{subfigure}
    \begin{subfigure}{0.19\linewidth}
        \centering
        \includegraphics[width=\linewidth]{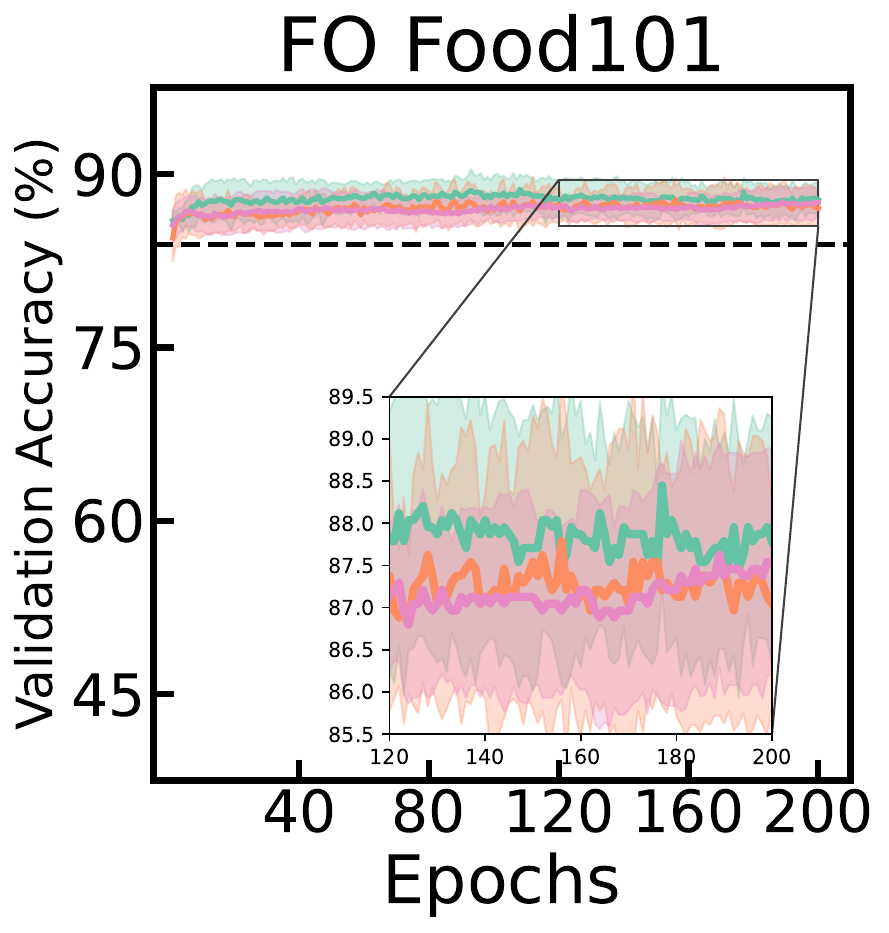}
    \end{subfigure}
    \begin{subfigure}{0.19\linewidth}
        \centering
        \includegraphics[width=\linewidth]{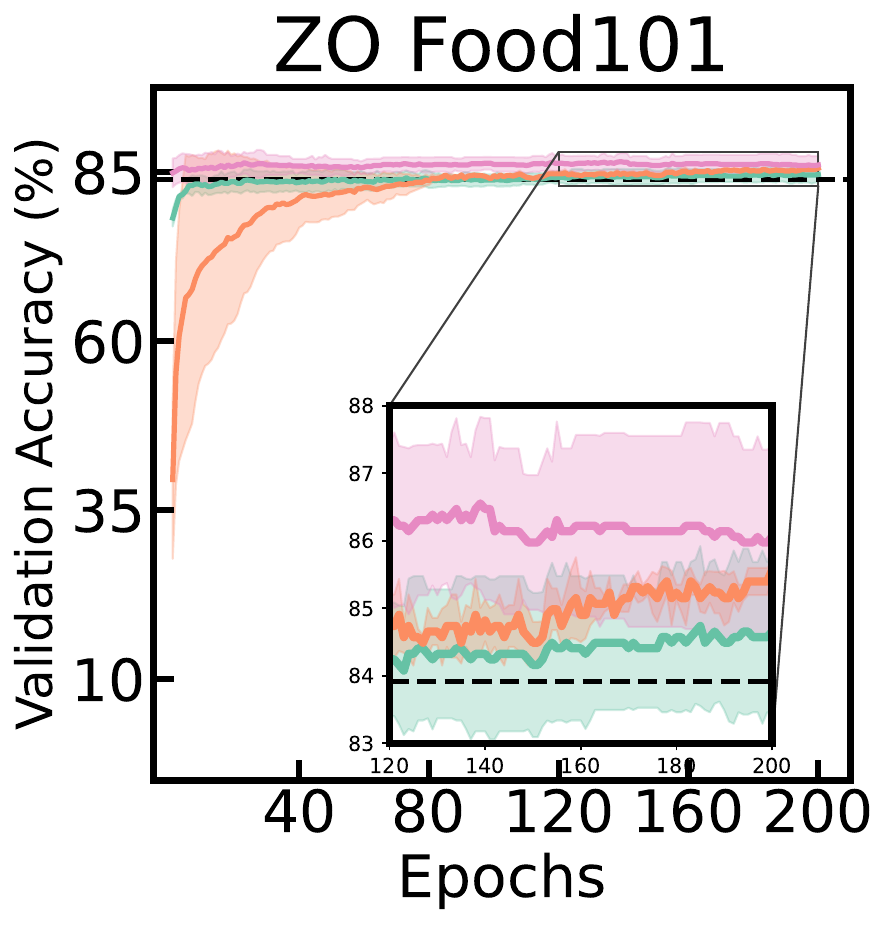}
    \end{subfigure}

    
    \begin{subfigure}{0.19\linewidth}
        \centering
        \includegraphics[width=\linewidth]{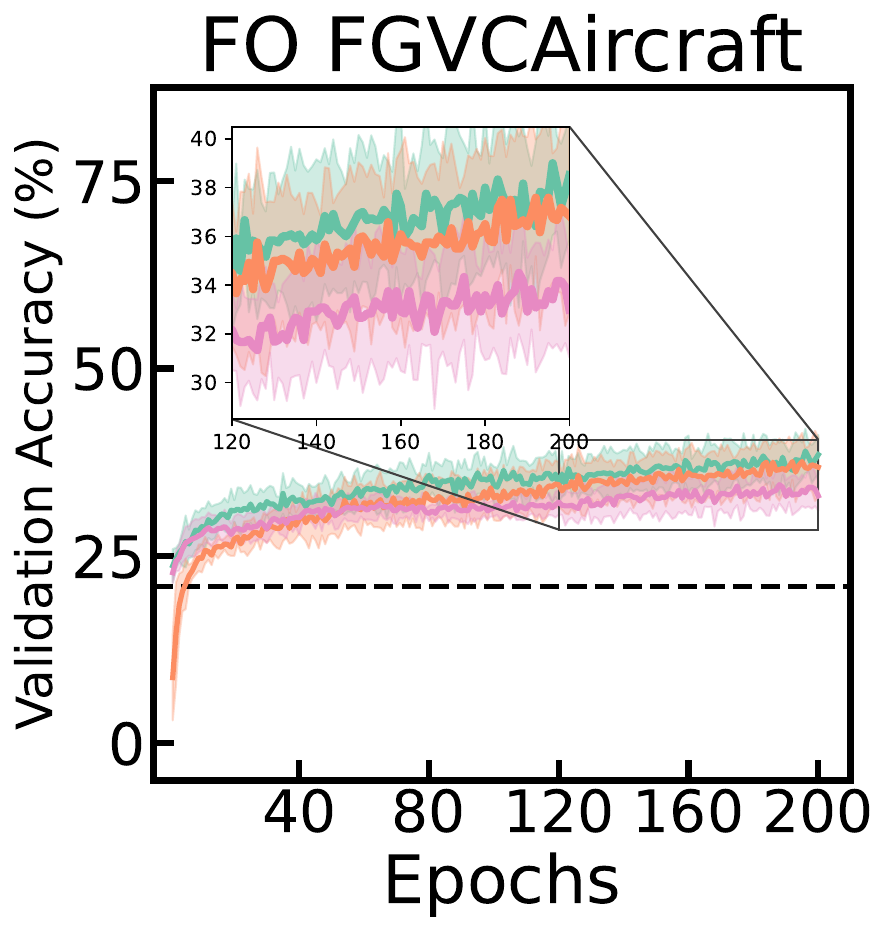}
    \end{subfigure}
    \begin{subfigure}{0.19\linewidth}
        \centering
        \includegraphics[width=\linewidth]{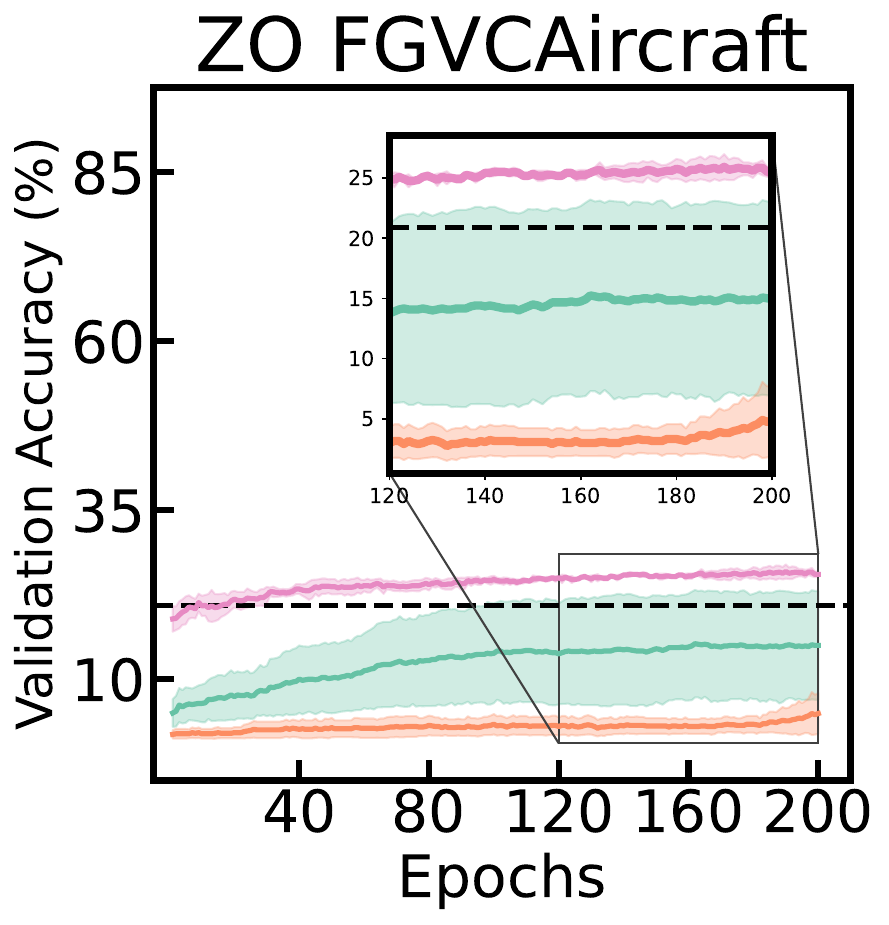}
    \end{subfigure}
    \begin{subfigure}{0.19\linewidth}
        \centering
        \includegraphics[width=\linewidth]{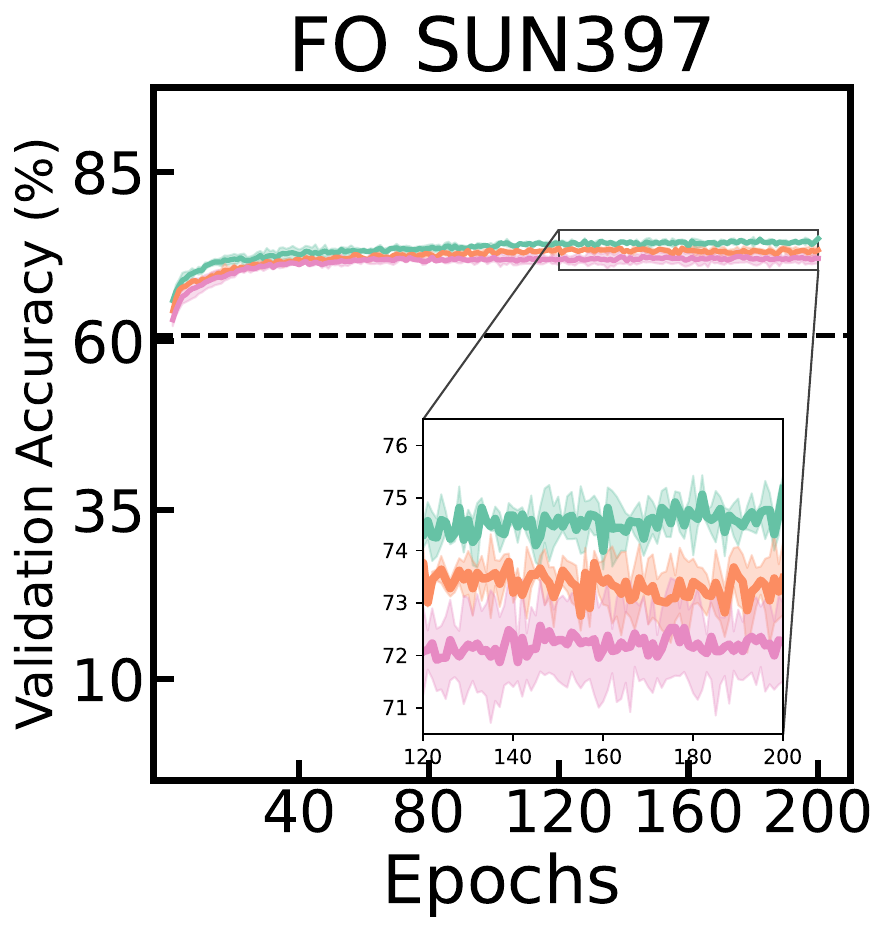}
    \end{subfigure}
    \begin{subfigure}{0.19\linewidth}
        \centering
        \includegraphics[width=\linewidth]{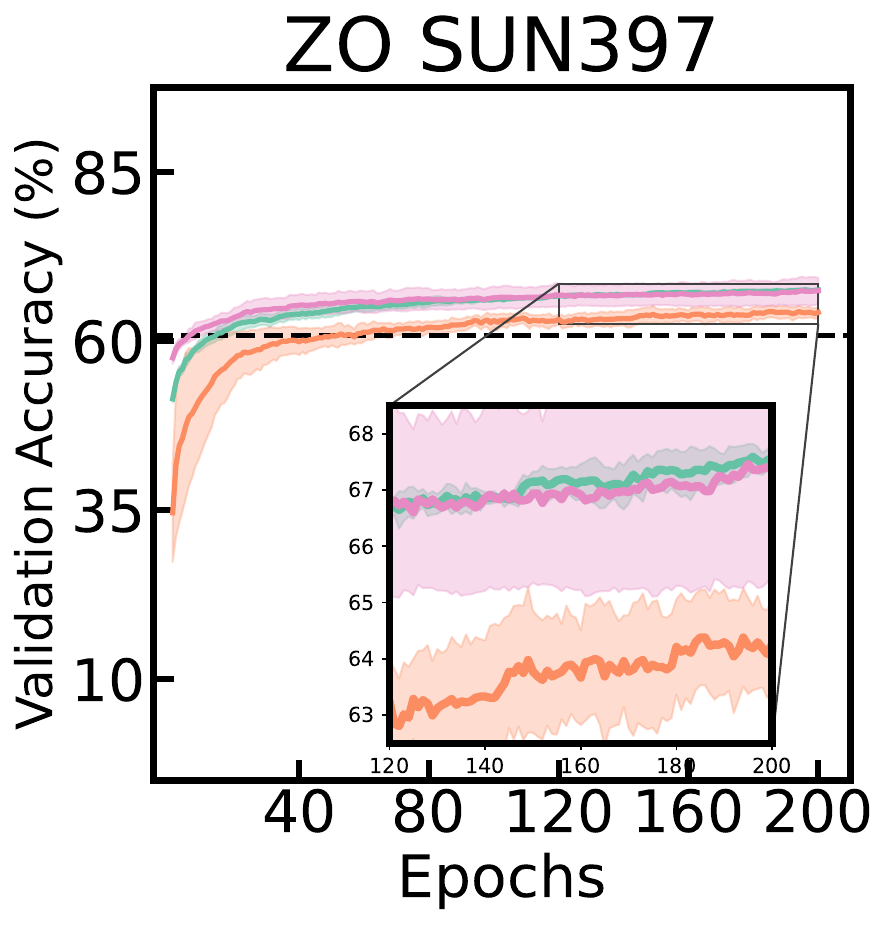}
    \end{subfigure}

    
    \begin{subfigure}{0.19\linewidth}
        \centering
        \includegraphics[width=\linewidth]{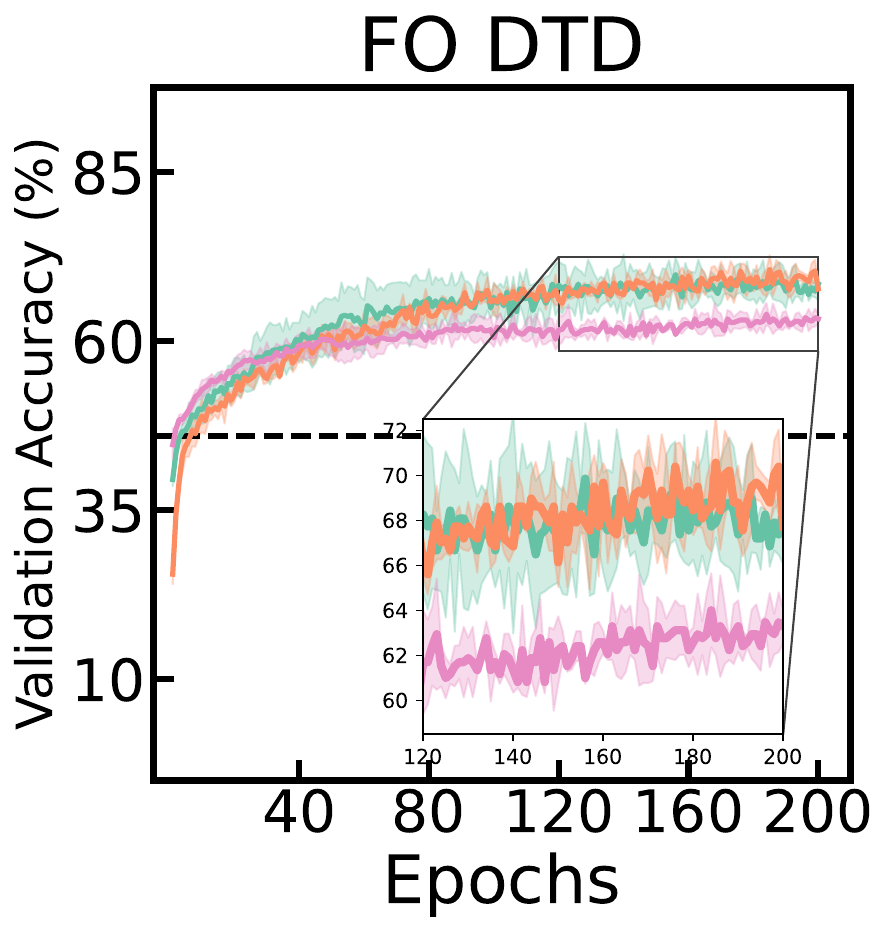}
    \end{subfigure}
    \begin{subfigure}{0.19\linewidth}
        \includegraphics[width=\linewidth]{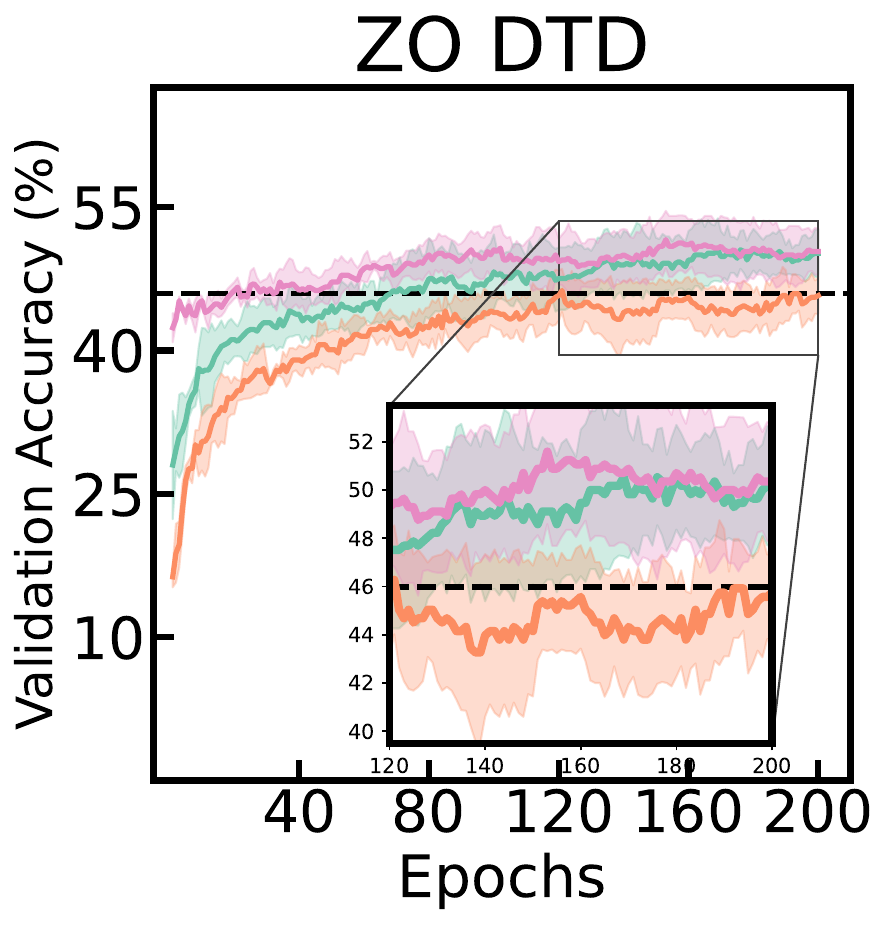}
    \end{subfigure}
    \begin{subfigure}{0.19\linewidth}
        \centering
        \includegraphics[width=\linewidth]{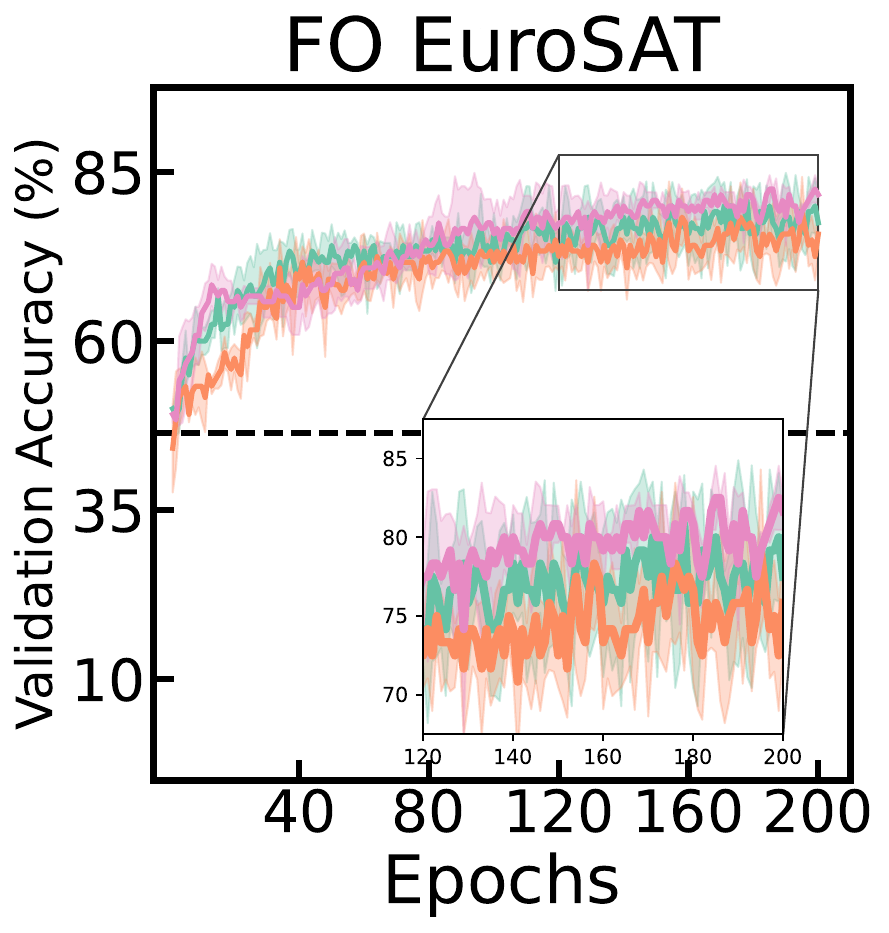}
    \end{subfigure}
    \begin{subfigure}{0.19\linewidth}
        \centering
        \includegraphics[width=\linewidth]{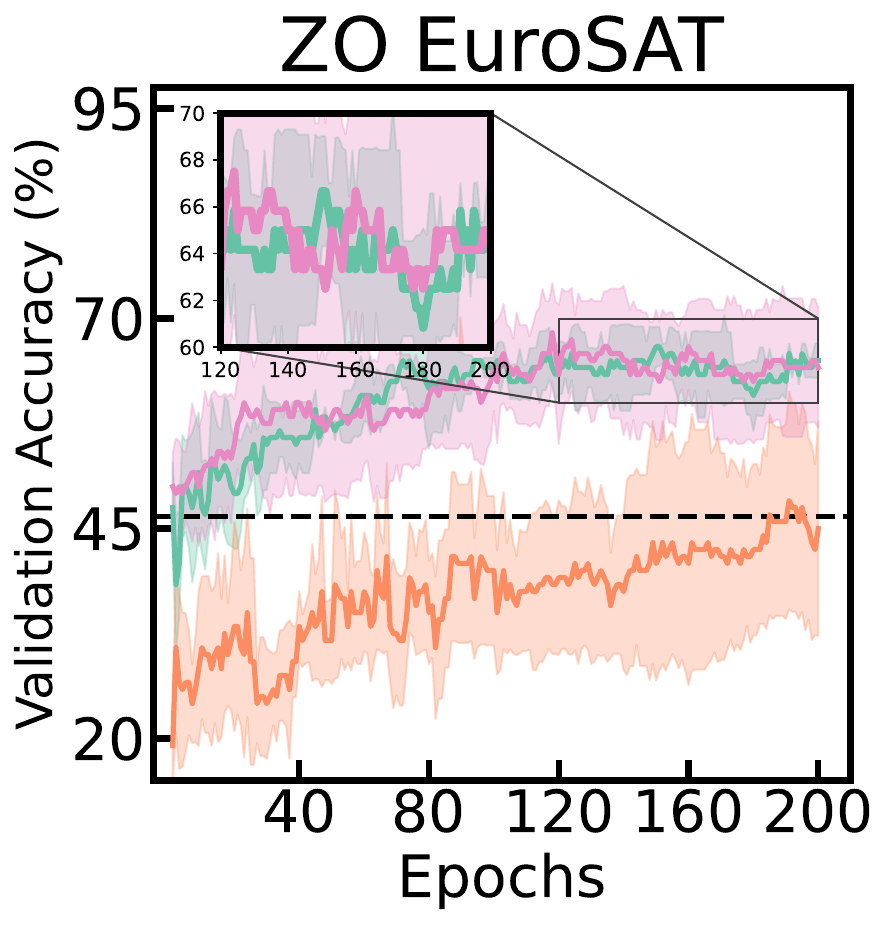}
    \end{subfigure}

    
    \begin{subfigure}{0.19\linewidth}
        \centering
        \includegraphics[width=\linewidth]{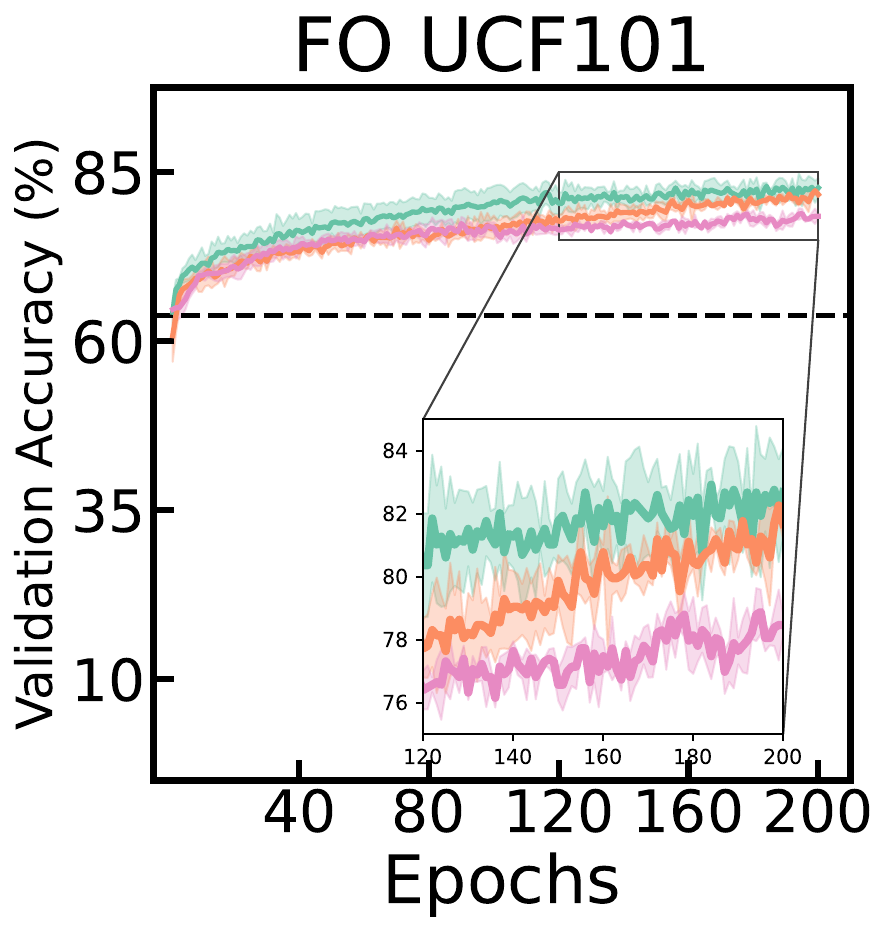}
    \end{subfigure}
    \begin{subfigure}{0.19\linewidth}
        \centering
        \includegraphics[width=\linewidth]{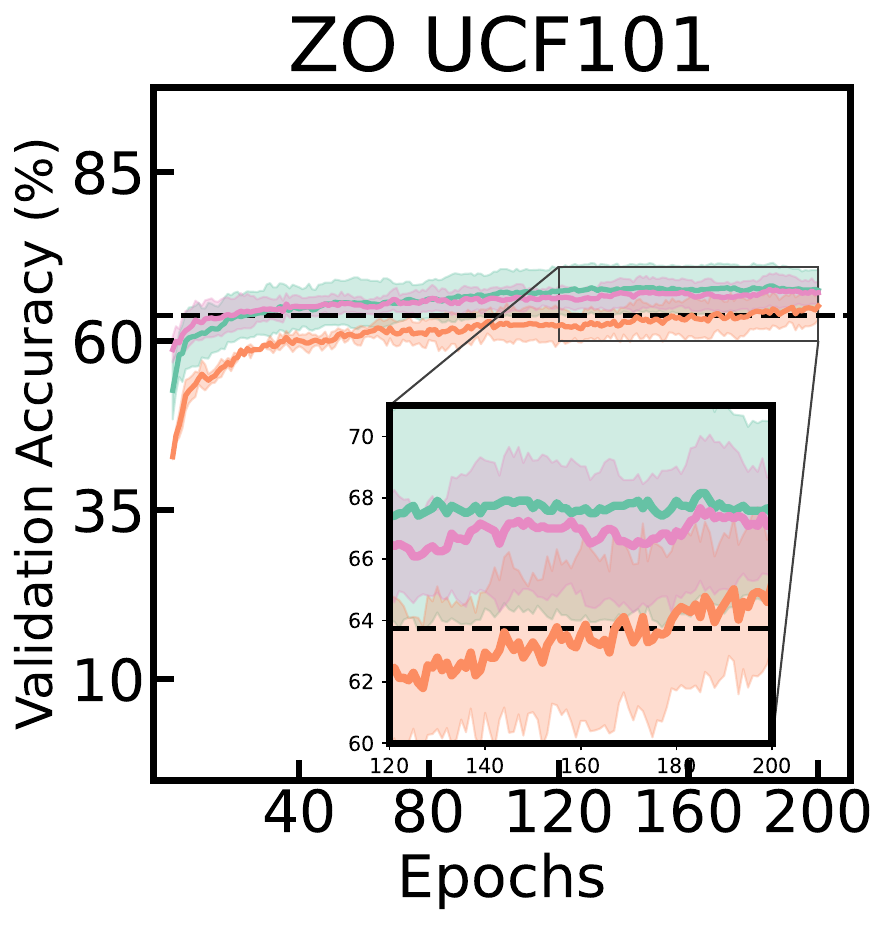}
    \end{subfigure}
    \begin{subfigure}{0.19\linewidth}
        \centering
        \includegraphics[width=\linewidth]{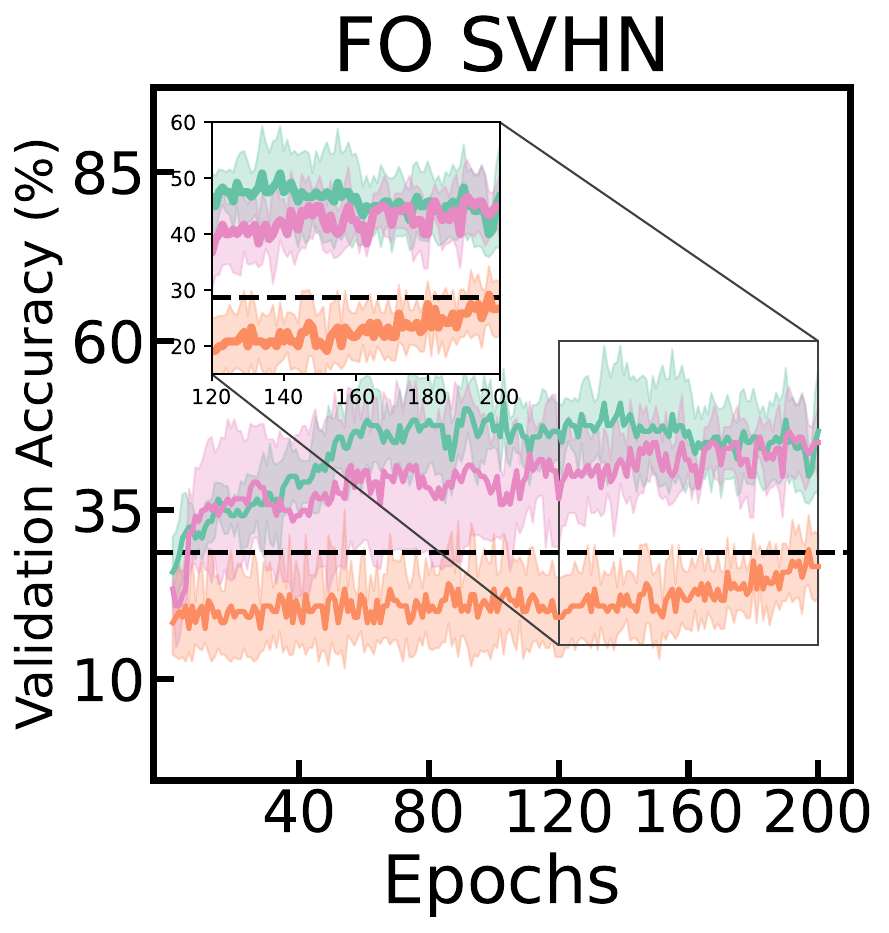}
    \end{subfigure}
    \begin{subfigure}{0.19\linewidth}
        \centering
        \includegraphics[width=\linewidth]{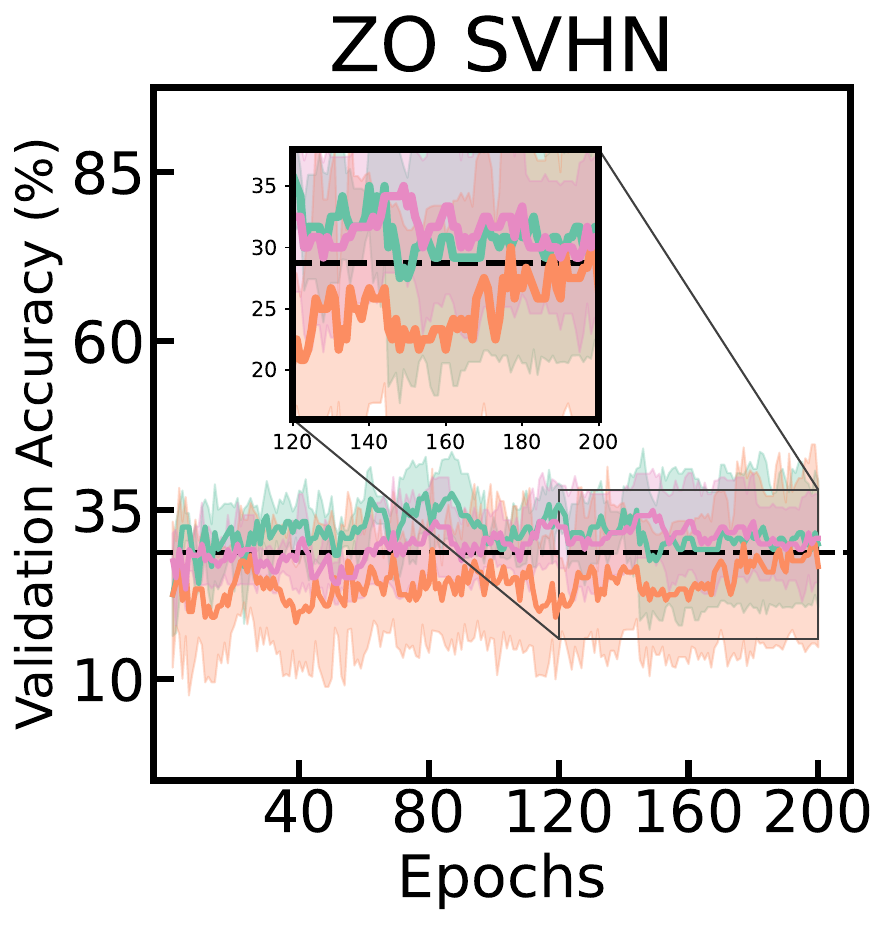}
    \end{subfigure}


    \begin{subfigure}{0.19\linewidth}
        \centering
        \includegraphics[width=\linewidth]{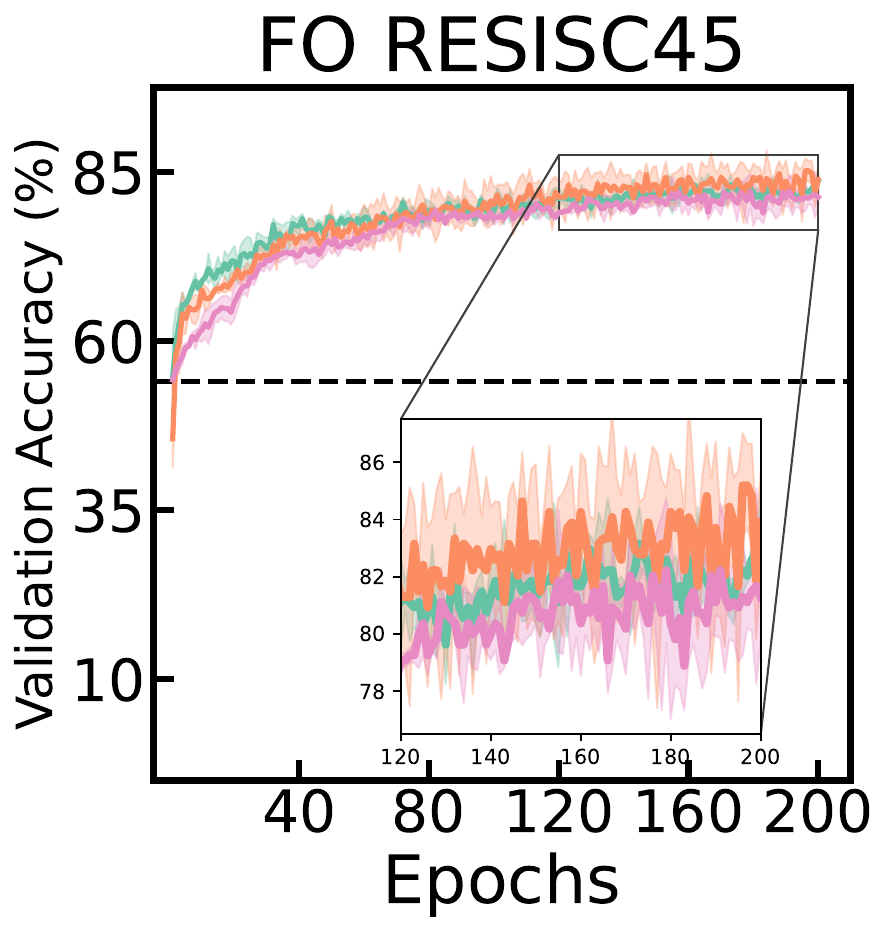}
    \end{subfigure}
    \begin{subfigure}{0.19\linewidth}
        \centering
        \includegraphics[width=\linewidth]{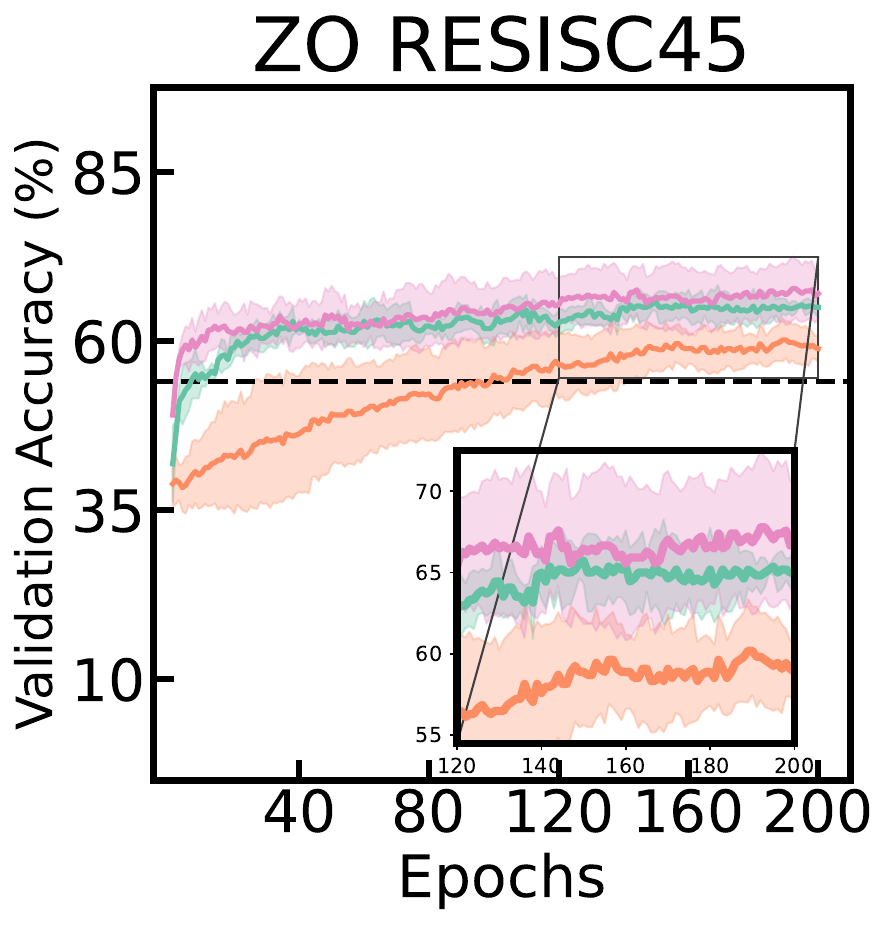}
    \end{subfigure}
    \begin{subfigure}{0.19\linewidth}
        \centering
        \includegraphics[width=\linewidth]{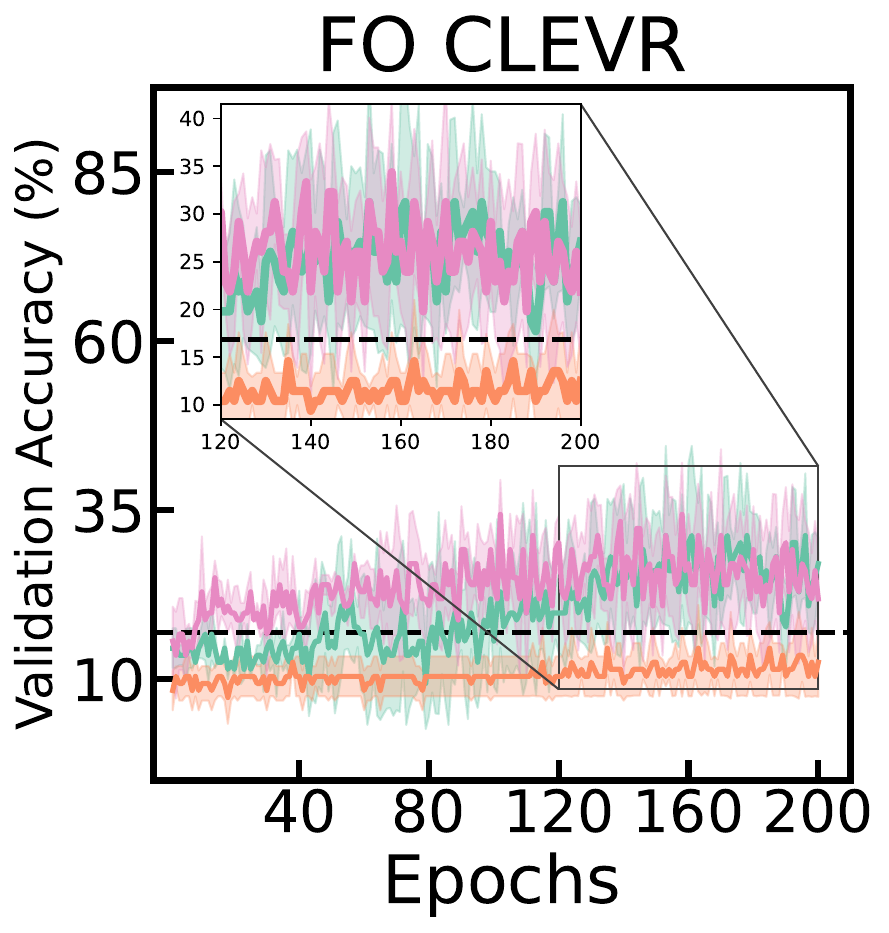}
    \end{subfigure}
    \begin{subfigure}{0.19\linewidth}
        \centering
        \includegraphics[width=\linewidth]{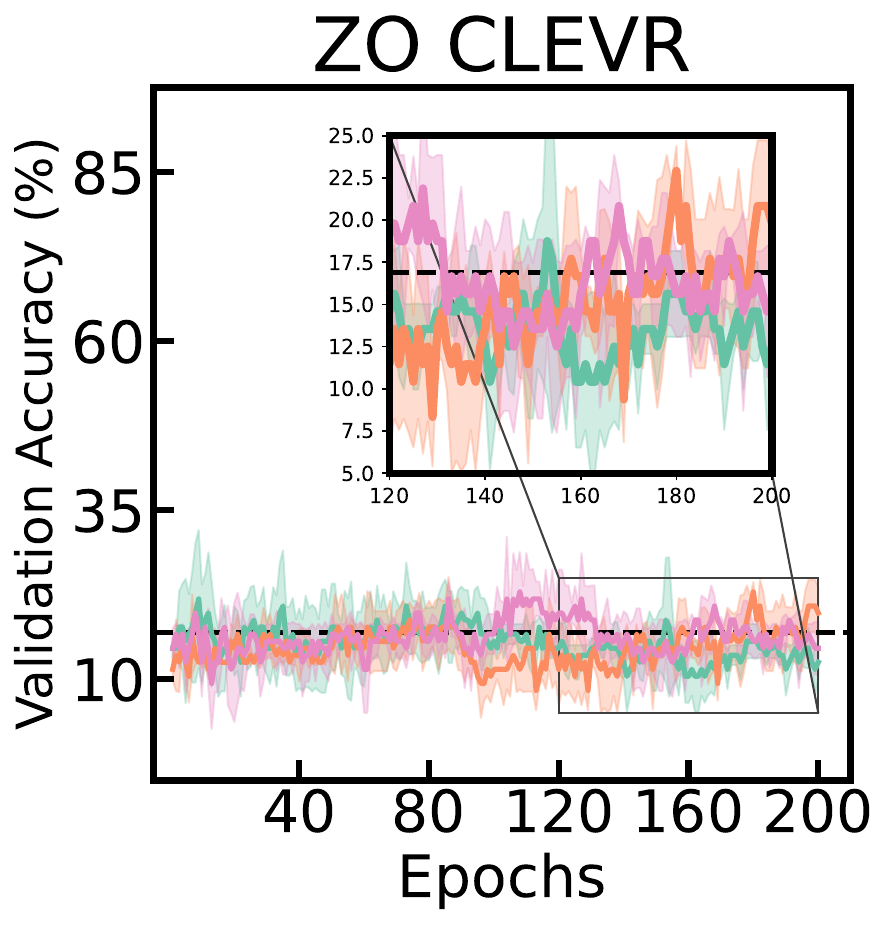}
    \end{subfigure}

    
    \caption{
        Validation curves illustrating the performance of different optimization methods across various vision-language tasks. 
        The black dotted line represents no trainable parameters ($m = 0$, \ie, only the [CLASS] token), serving as a baseline for comparison.
        }
    \label{fig:appendix limitations validation}
    \vspace{-0.7em}
\end{figure}
\begin{figure}[h!]
    \centering
    \begin{subfigure}{0.19\linewidth}
        \centering
        \includegraphics[width=\linewidth]{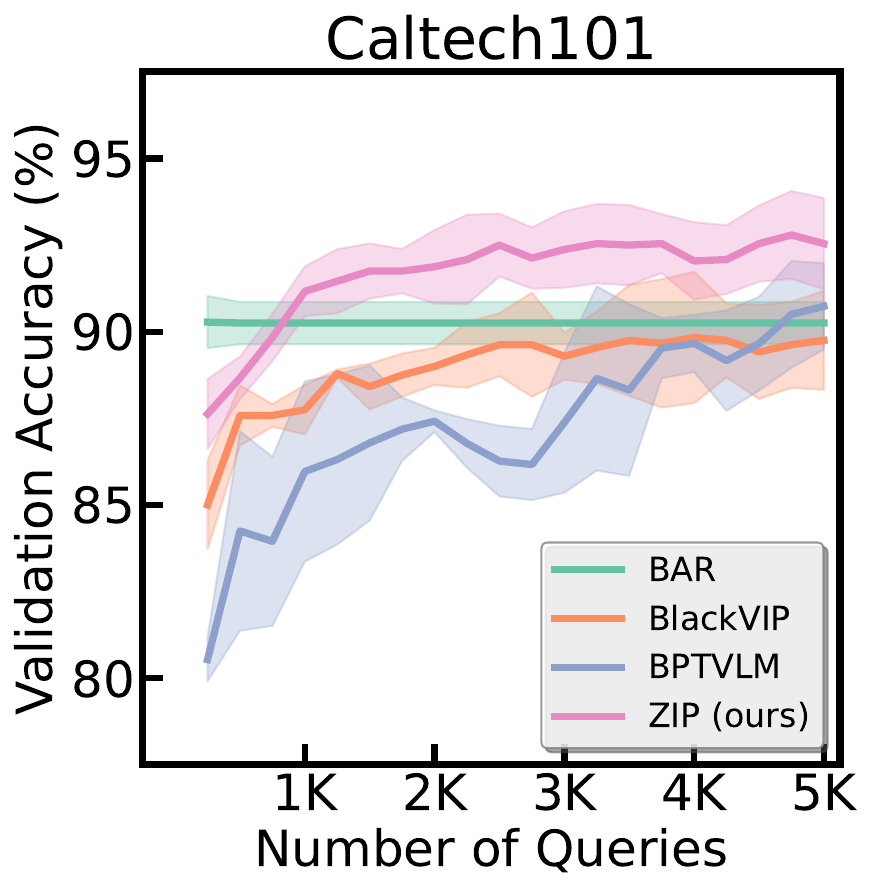}
    \end{subfigure}
    \begin{subfigure}{0.19\linewidth}
        \centering
        \includegraphics[width=\linewidth]{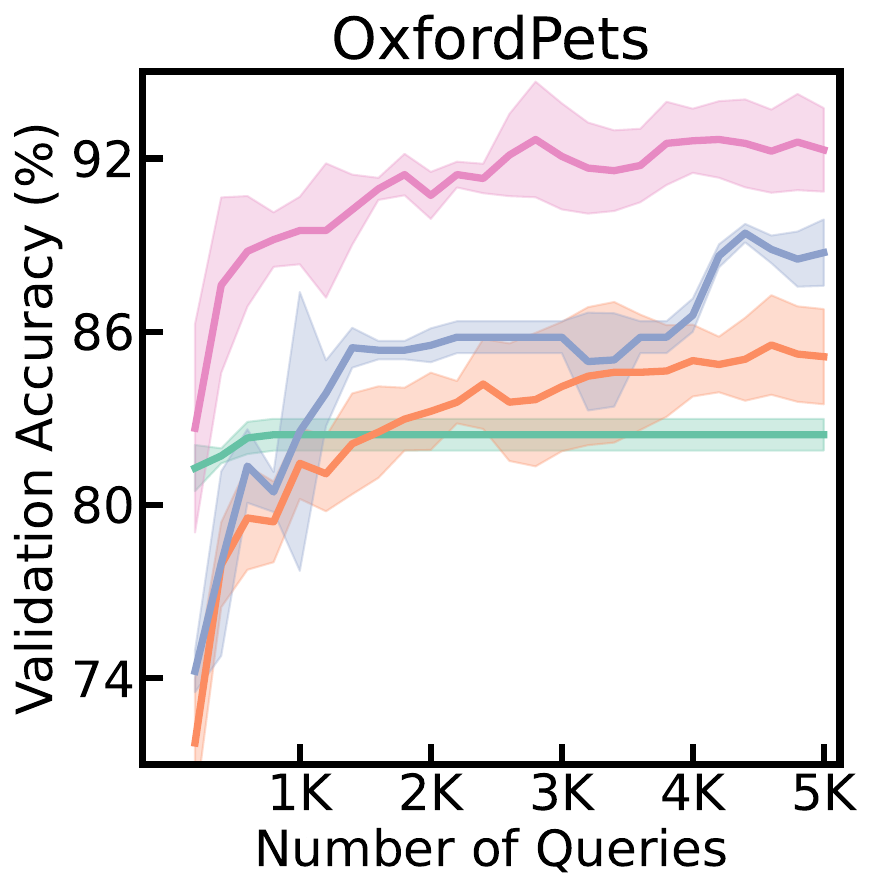}
    \end{subfigure}
    \begin{subfigure}{0.19\linewidth}
        \centering
        \includegraphics[width=\linewidth]{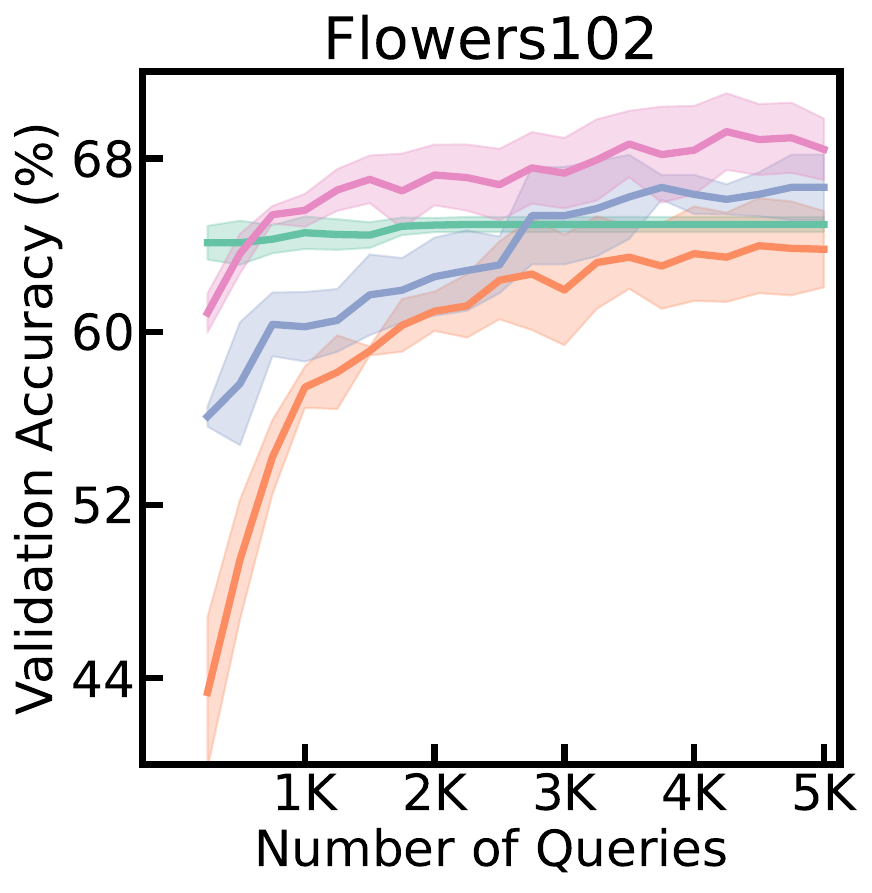}
    \end{subfigure}
    \begin{subfigure}{0.19\linewidth}
        \centering
        \includegraphics[width=\linewidth]{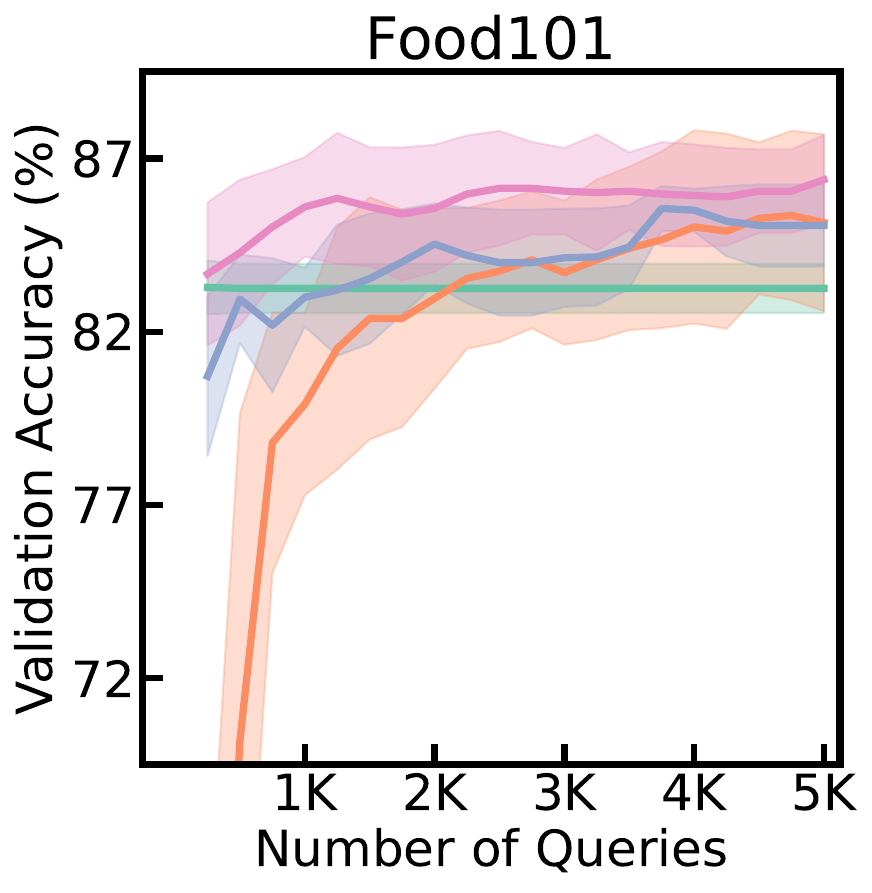}
    \end{subfigure}


    \begin{subfigure}{0.19\linewidth}
        \centering
        \includegraphics[width=\linewidth]{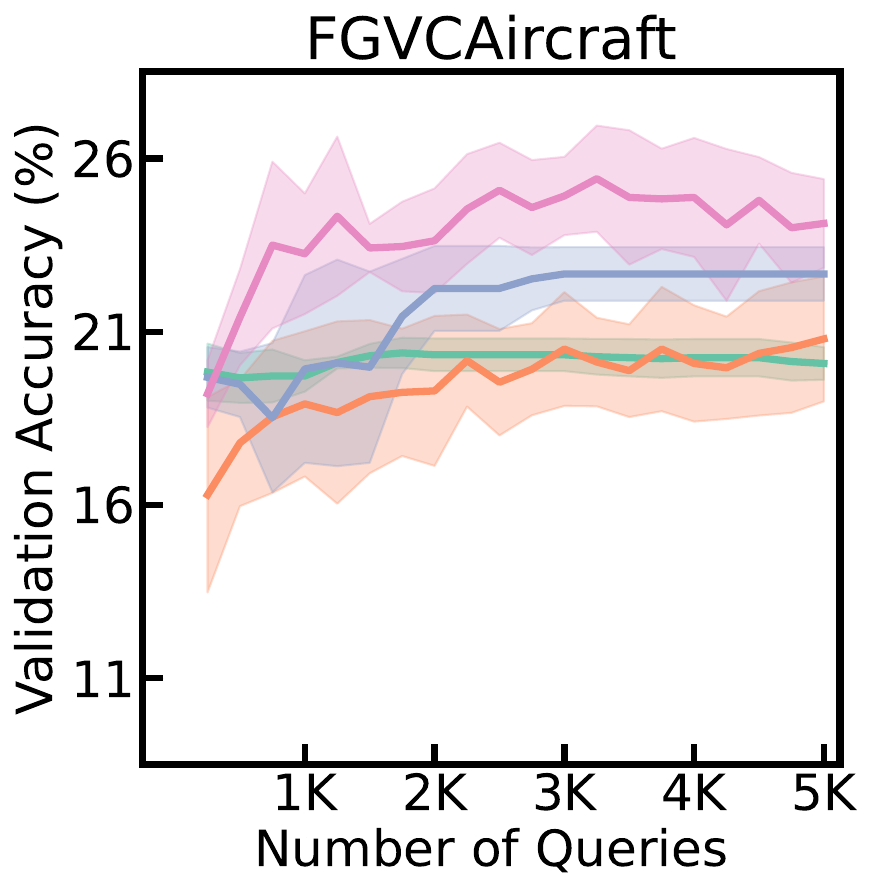}
    \end{subfigure}
    \begin{subfigure}{0.19\linewidth}
        \centering
        \includegraphics[width=\linewidth]{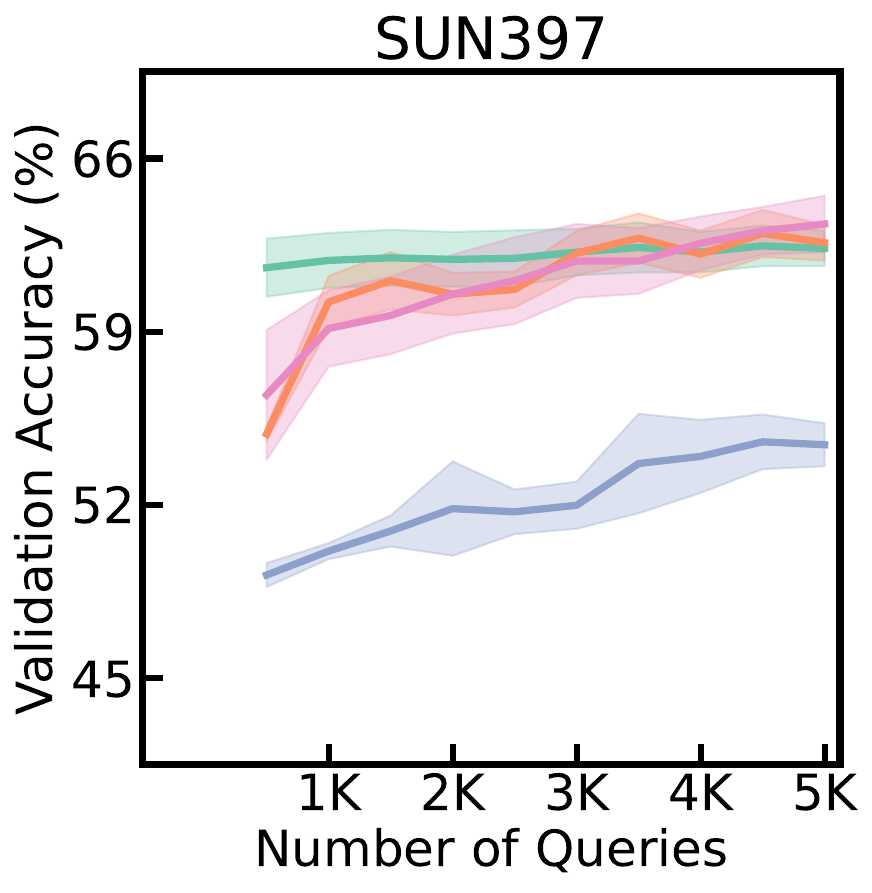}
    \end{subfigure}
    \begin{subfigure}{0.19\linewidth}
        \centering
        \includegraphics[width=\linewidth]{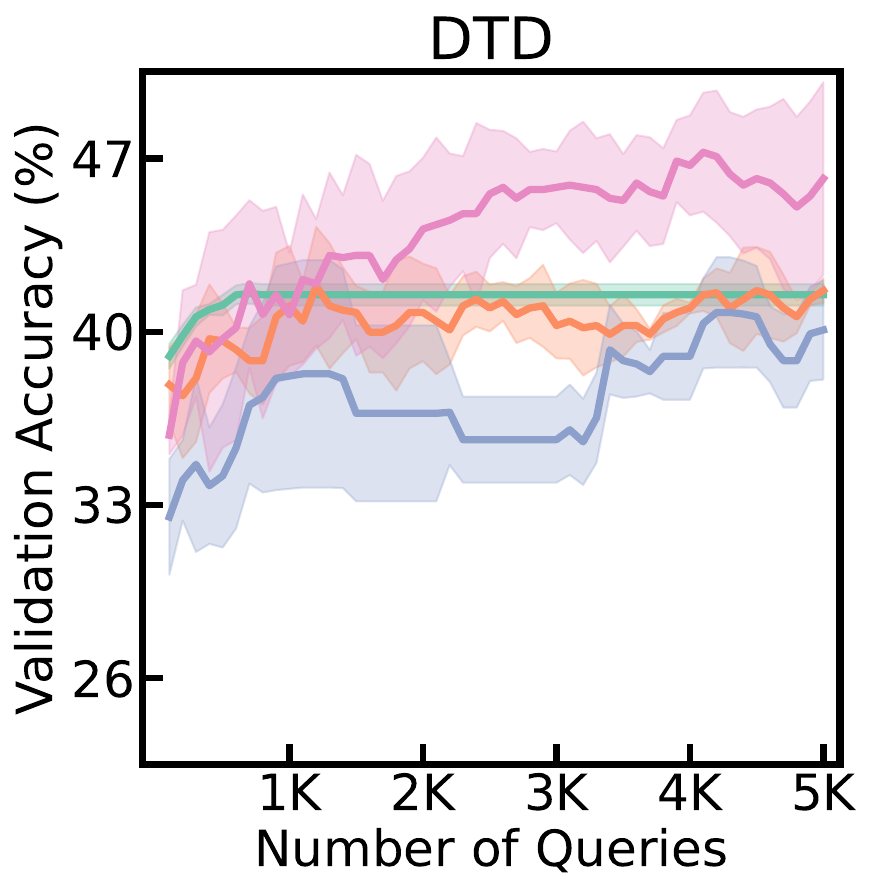}
    \end{subfigure}
    \begin{subfigure}{0.19\linewidth}
        \centering
        \includegraphics[width=\linewidth]{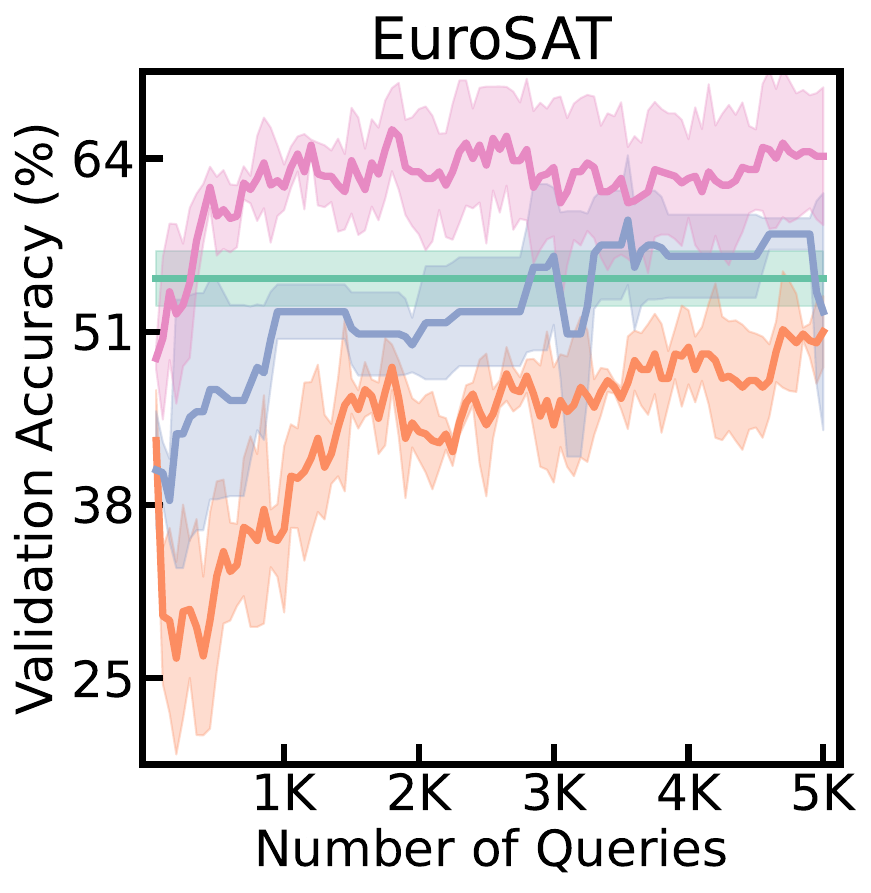}
    \end{subfigure}

    
    \begin{subfigure}{0.19\linewidth}
        \centering
        \includegraphics[width=\linewidth]{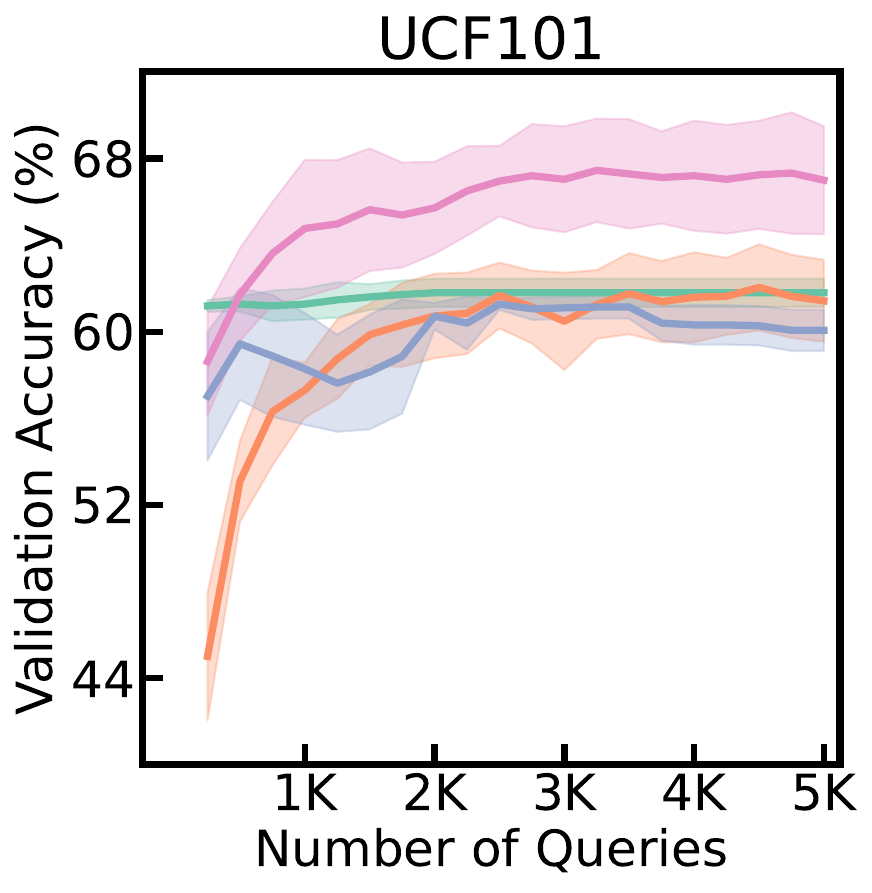}
    \end{subfigure}
    \begin{subfigure}{0.19\linewidth}
        \centering
        \includegraphics[width=\linewidth]{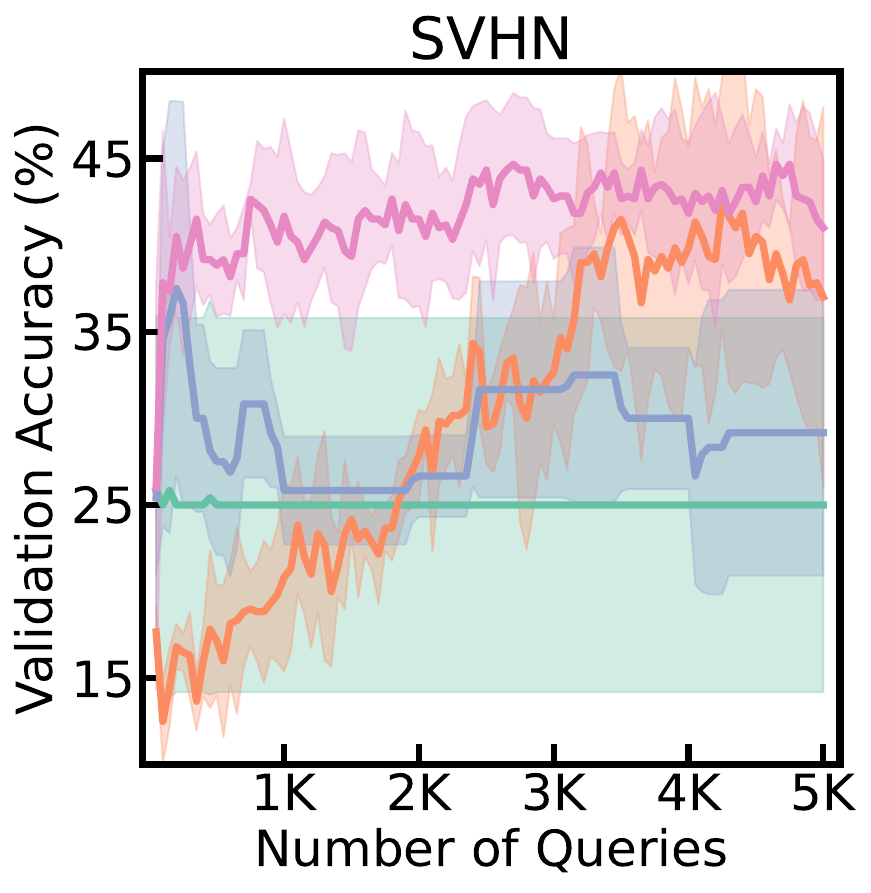}
    \end{subfigure}
    \begin{subfigure}{0.19\linewidth}
        \centering
        \includegraphics[width=\linewidth]{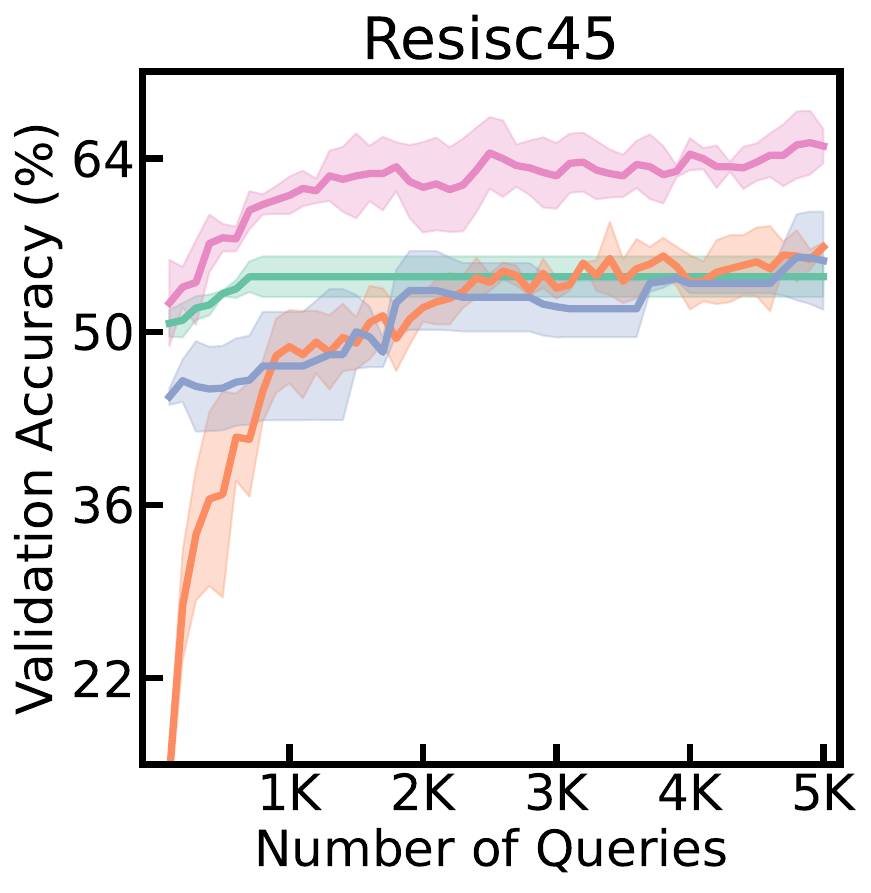}
    \end{subfigure}
    \begin{subfigure}{0.19\linewidth}
        \centering
        \includegraphics[width=\linewidth]{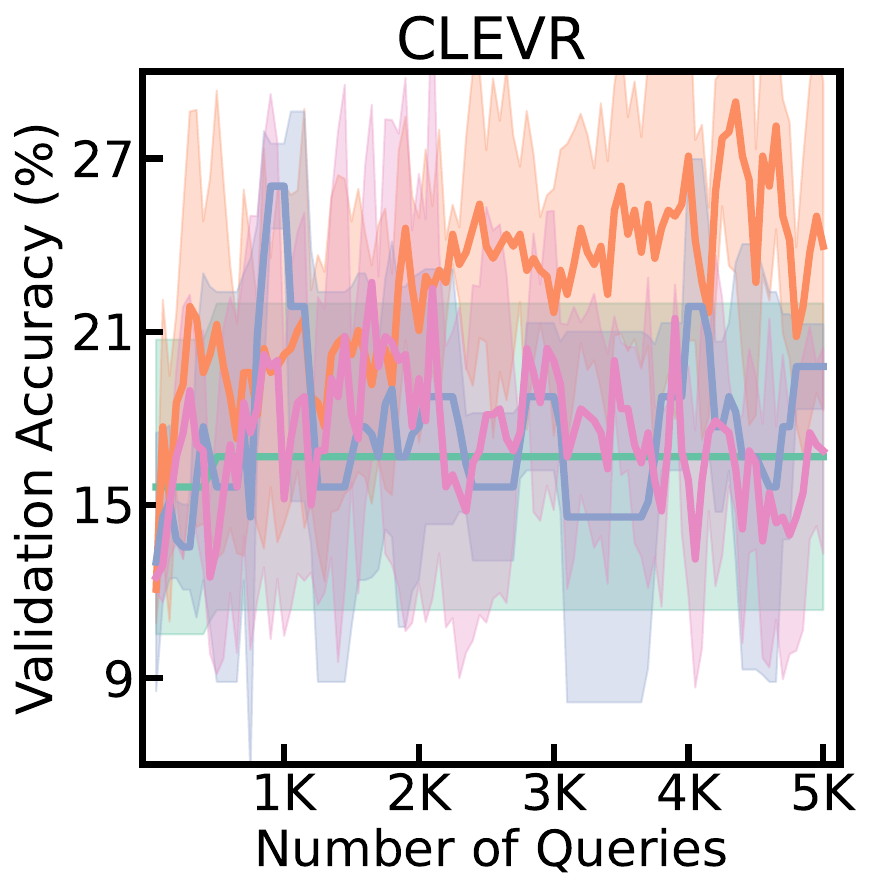}
    \end{subfigure}

    
    \begin{subfigure}{0.19\linewidth}
        \centering
        \includegraphics[width=\linewidth]{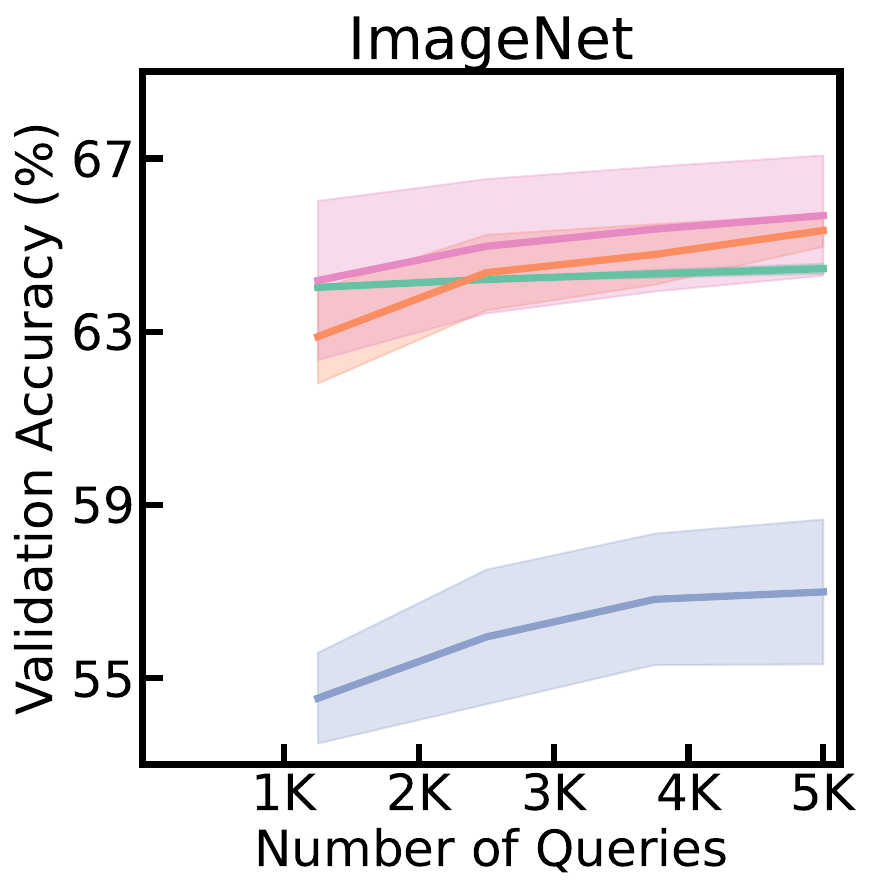}
    \end{subfigure}
    
    \caption{
        Validation curves under a 5,000 query budget across various vision-language tasks, showcasing \zip's efficient utilization of limited queries to achieve competitive performance.
        The results demonstrate consistent validation trends, reflecting \zip's robustness and generalization capabilities.
        }
        \label{fig:appendix_convergence_rate validation}
    \vspace{-0.75em}
\end{figure}
\begin{figure}[h!]
    \centering
    \begin{subfigure}{0.19\linewidth}
        \centering
        \includegraphics[width=\linewidth]{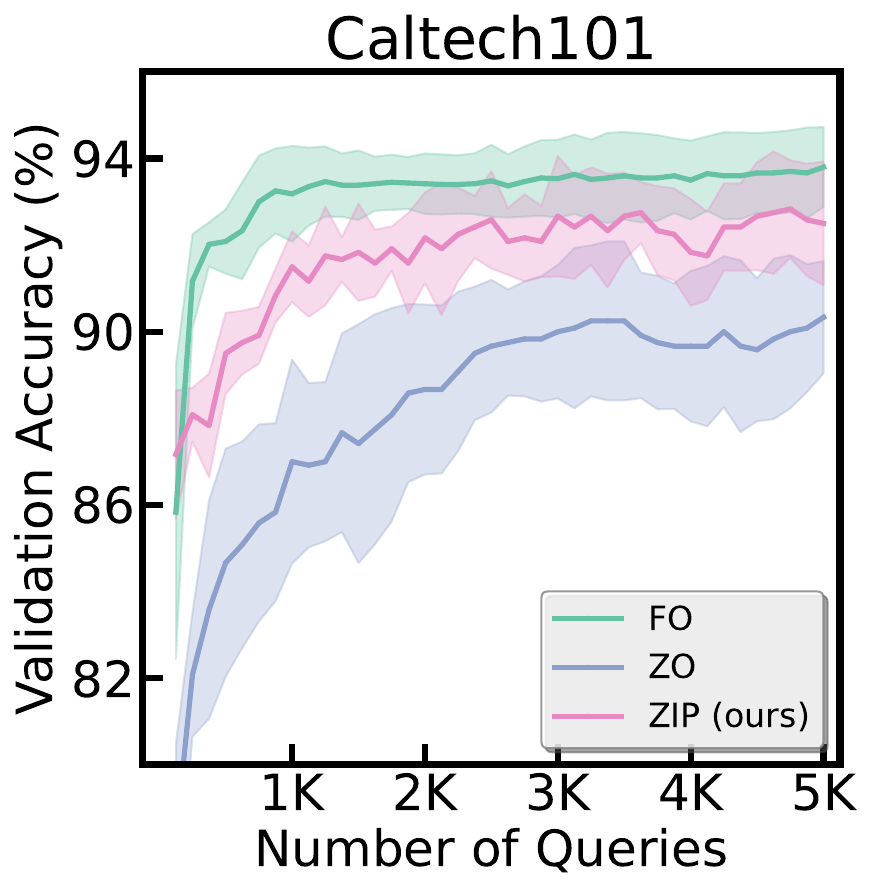}
    \end{subfigure}
    \begin{subfigure}{0.19\linewidth}
        \centering
        \includegraphics[width=\linewidth]{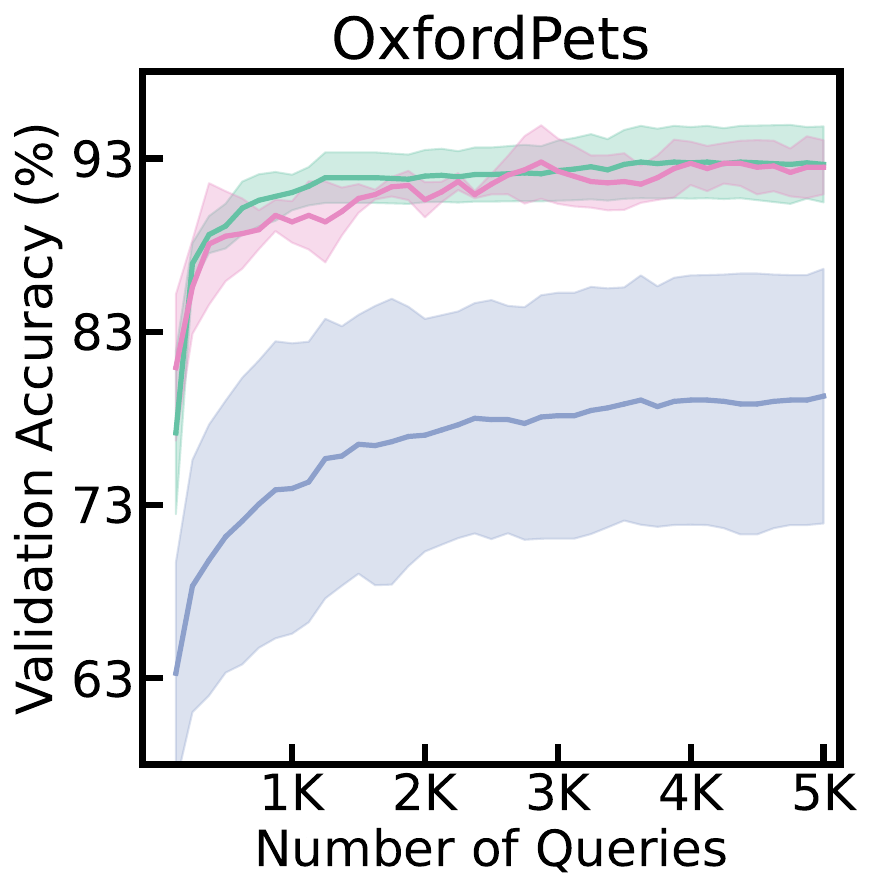}
    \end{subfigure}
    \begin{subfigure}{0.19\linewidth}
        \centering
        \includegraphics[width=\linewidth]{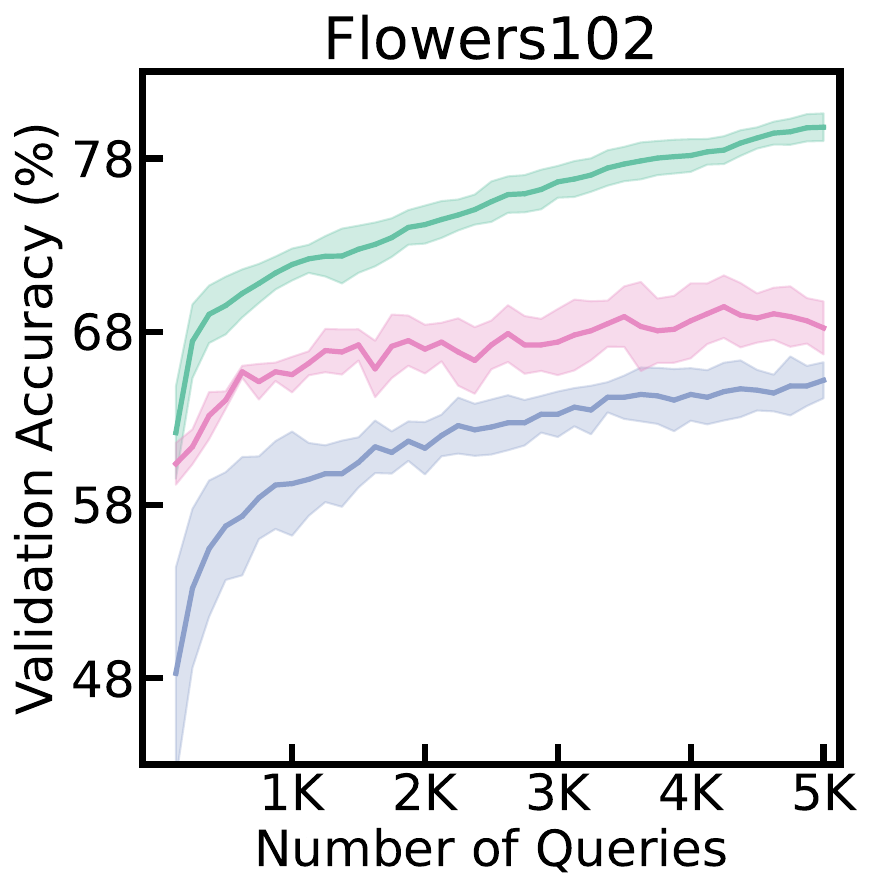}
    \end{subfigure}
    \begin{subfigure}{0.19\linewidth}
        \centering
        \includegraphics[width=\linewidth]{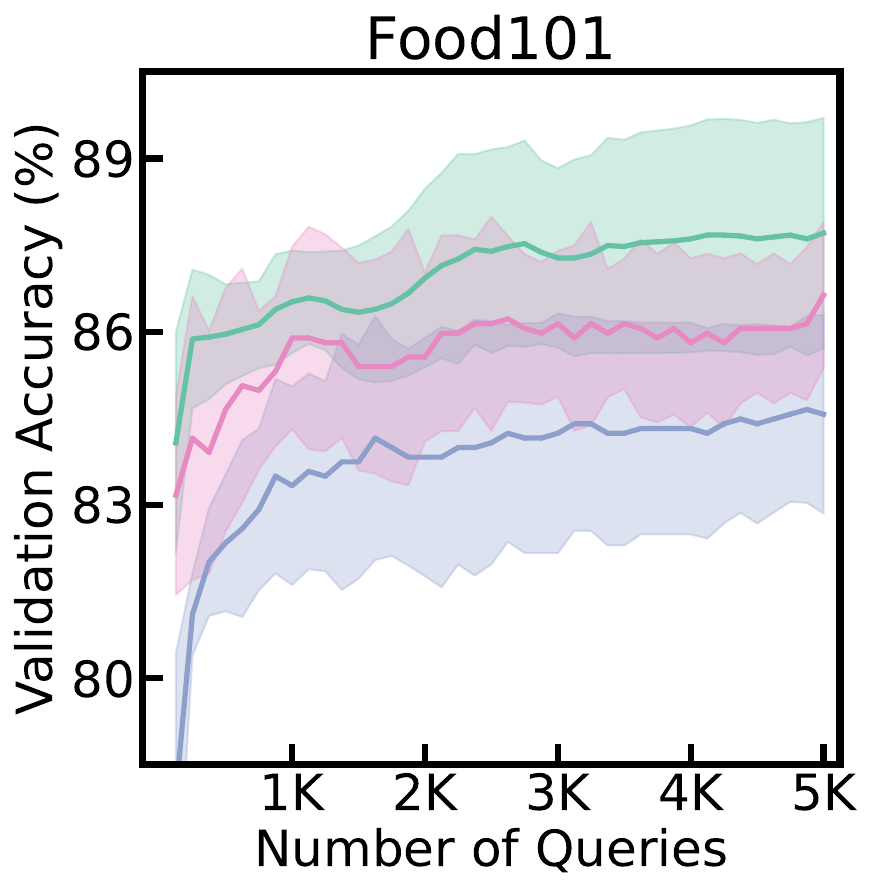}
    \end{subfigure}


    \begin{subfigure}{0.19\linewidth}
        \centering
        \includegraphics[width=\linewidth]{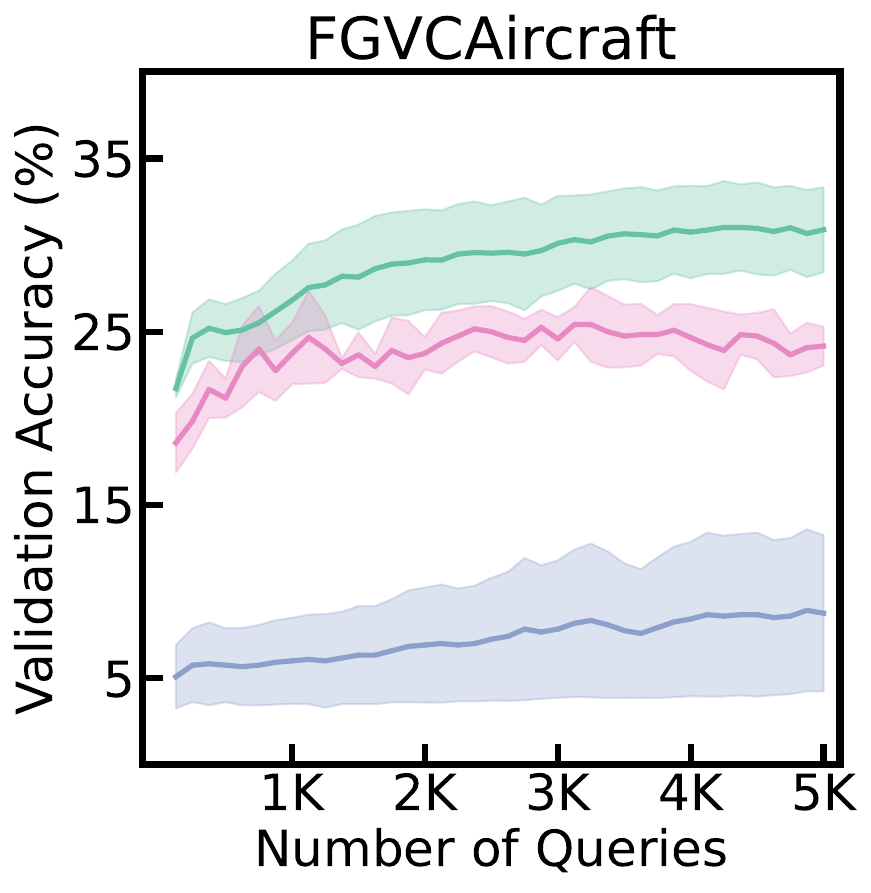}
    \end{subfigure}
    \begin{subfigure}{0.19\linewidth}
        \centering
        \includegraphics[width=\linewidth]{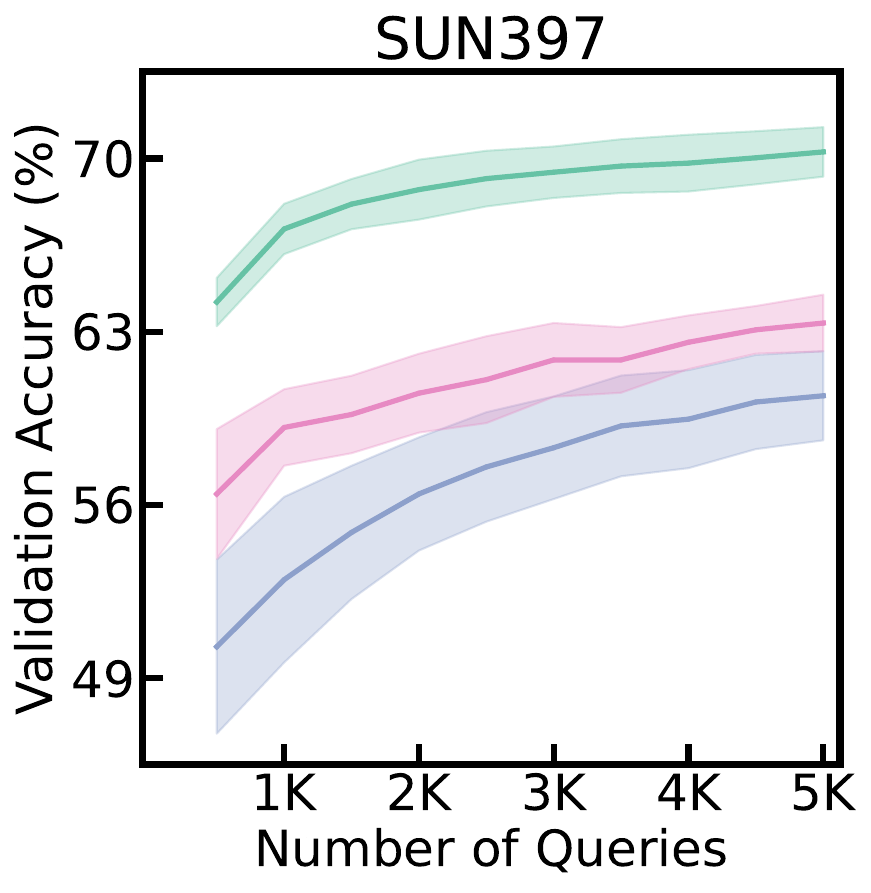}
    \end{subfigure}
    \begin{subfigure}{0.19\linewidth}
        \centering
        \includegraphics[width=\linewidth]{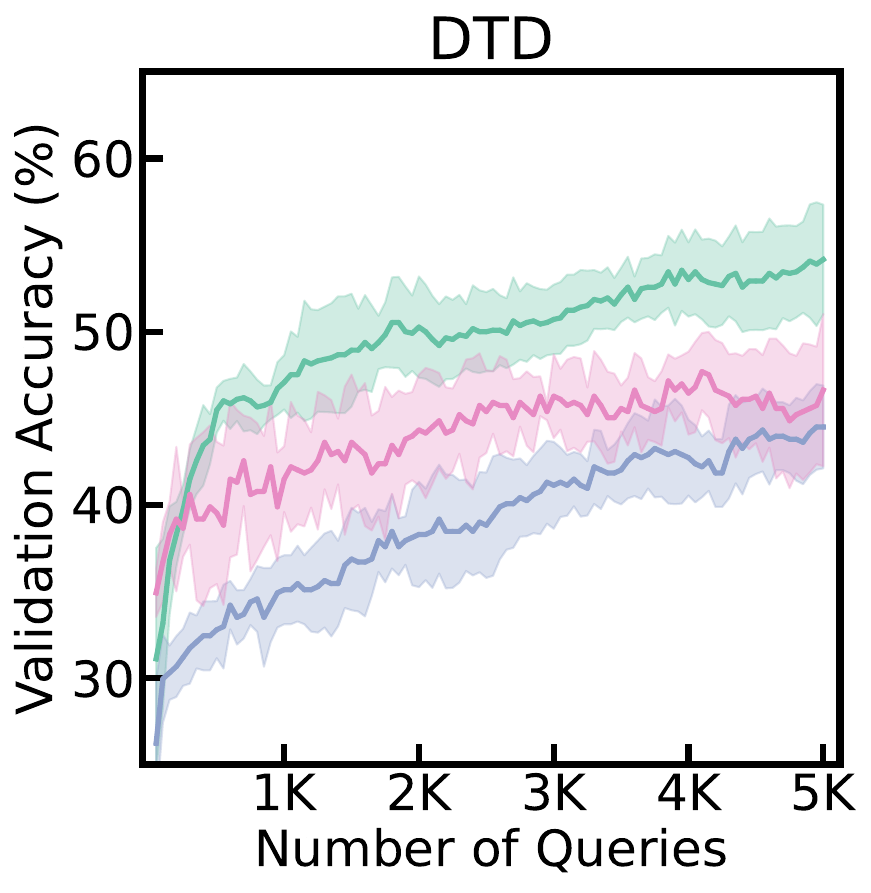}
    \end{subfigure}
    \begin{subfigure}{0.19\linewidth}
        \centering
        \includegraphics[width=\linewidth]{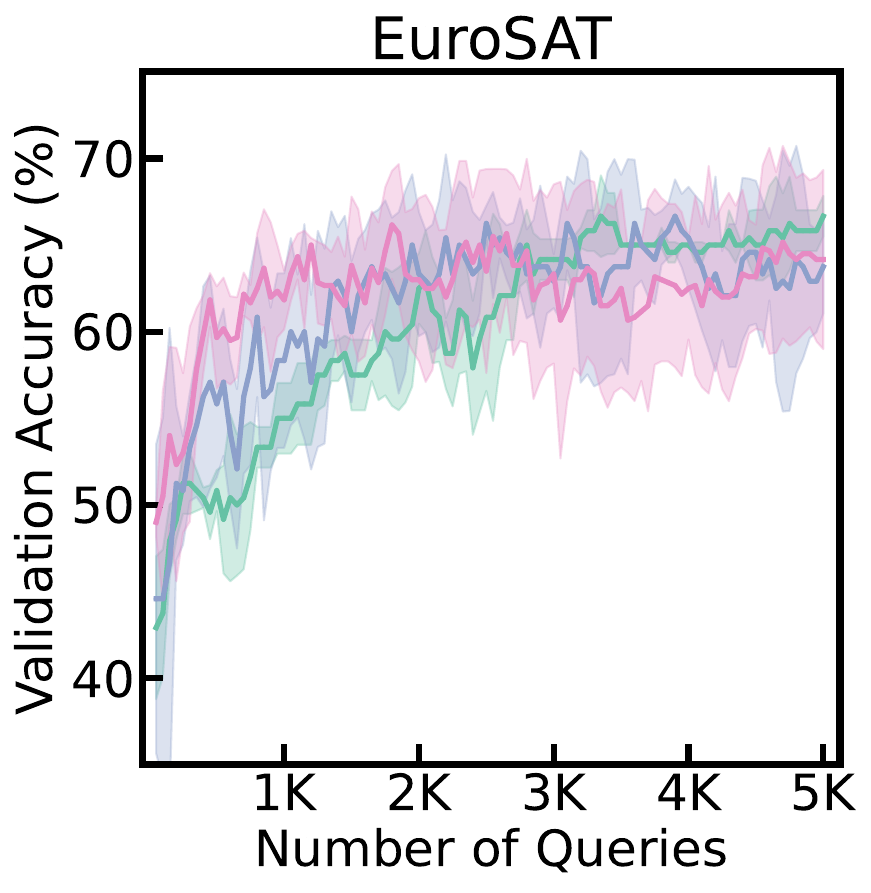}
    \end{subfigure}

    
    \begin{subfigure}{0.19\linewidth}
        \centering
        \includegraphics[width=\linewidth]{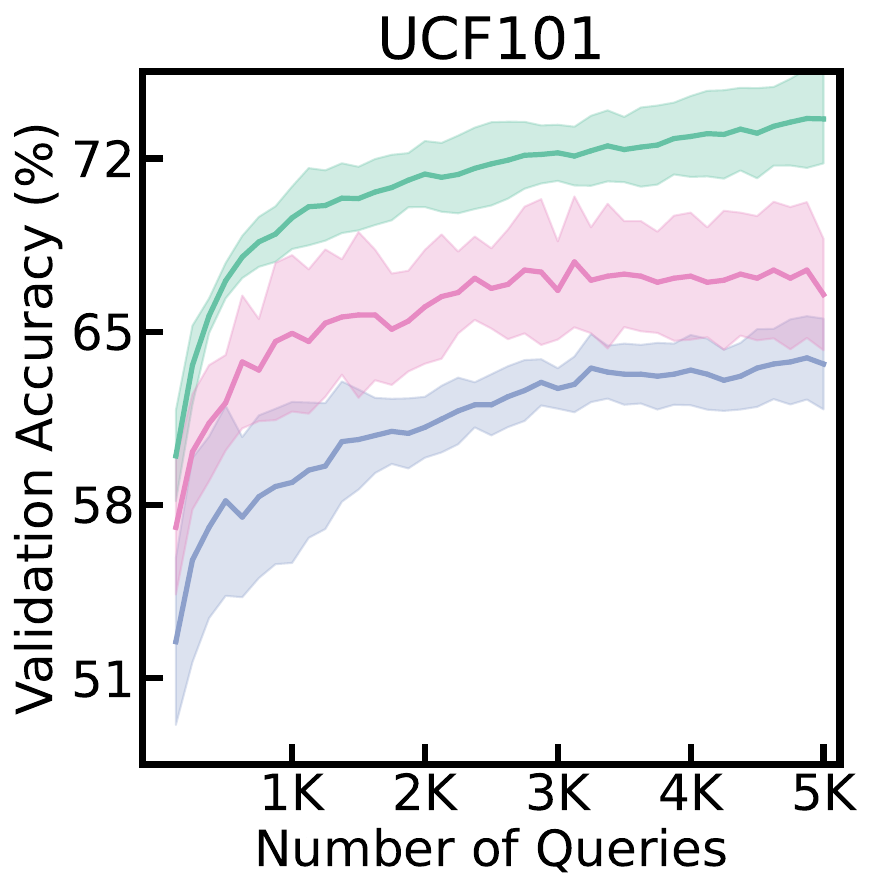}
    \end{subfigure}
    \begin{subfigure}{0.19\linewidth}
        \centering
        \includegraphics[width=\linewidth]{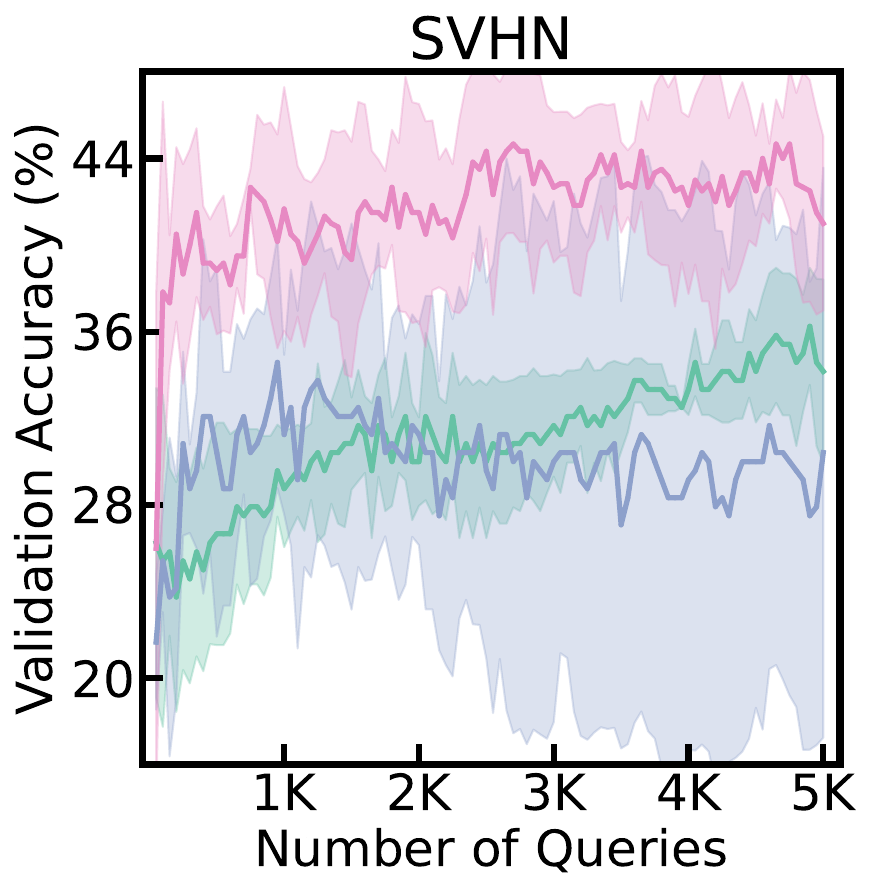}
    \end{subfigure}
    \begin{subfigure}{0.19\linewidth}
        \centering
        \includegraphics[width=\linewidth]{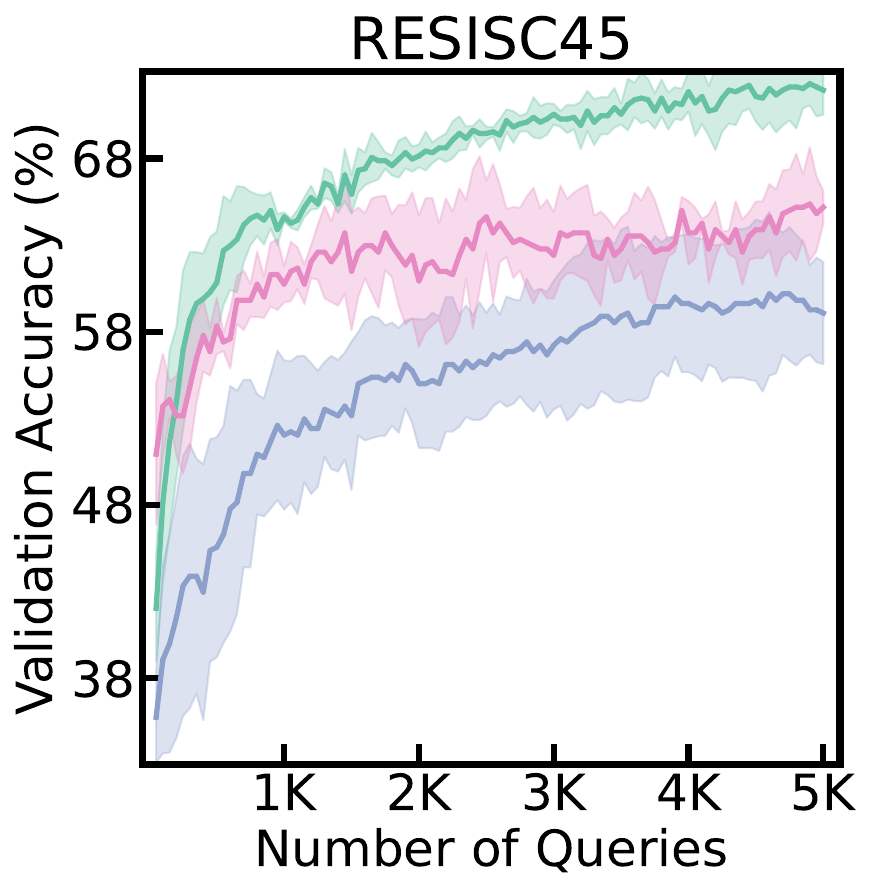}
    \end{subfigure}
    \begin{subfigure}{0.19\linewidth}
        \centering
        \includegraphics[width=\linewidth]{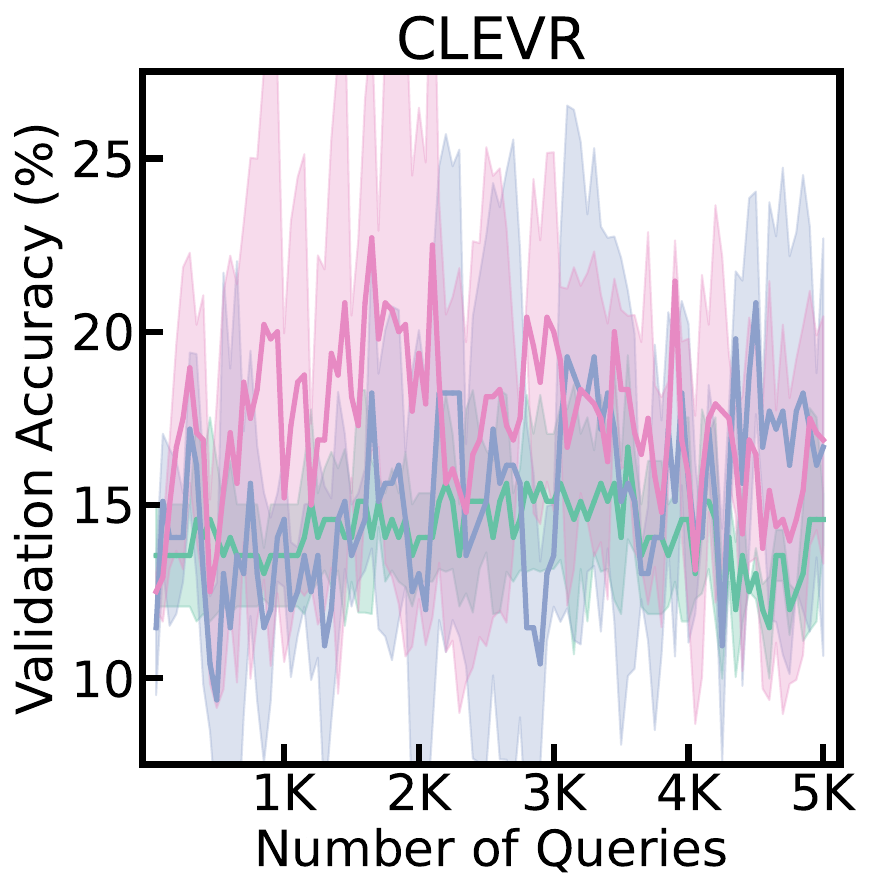}
    \end{subfigure}

    
    \begin{subfigure}{0.19\linewidth}
        \centering
        \includegraphics[width=\linewidth]{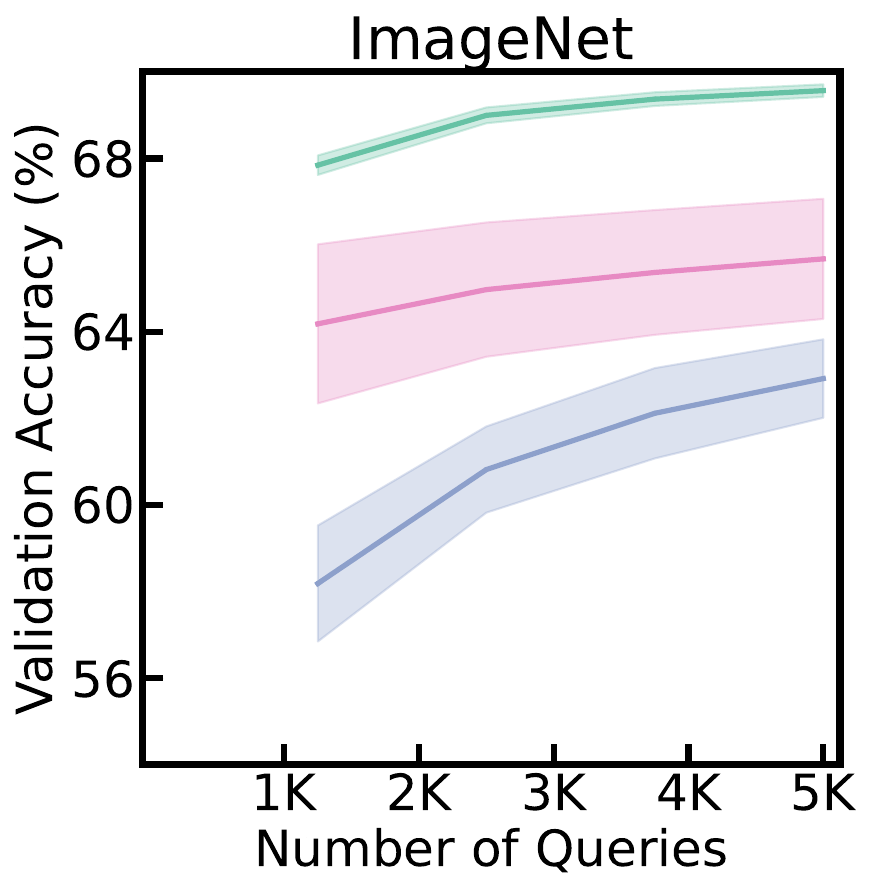}
    \end{subfigure}
    
    \caption{
        Validation curves comparing first-order, zeroth-order, and \zip optimization methods across various vision-language tasks. 
        \zip bridges the gap between first-order and zeroth-order optimization, maintaining stable validation accuracy while achieving better generalization compared to standard zeroth-order methods.
        }
    \vspace{-1em}
    \label{fig:appendix FOvsZO validation}
\end{figure}
\begin{figure}[h!]
    \centering
    \begin{subfigure}{0.19\linewidth}
        \centering
        \includegraphics[width=\linewidth]{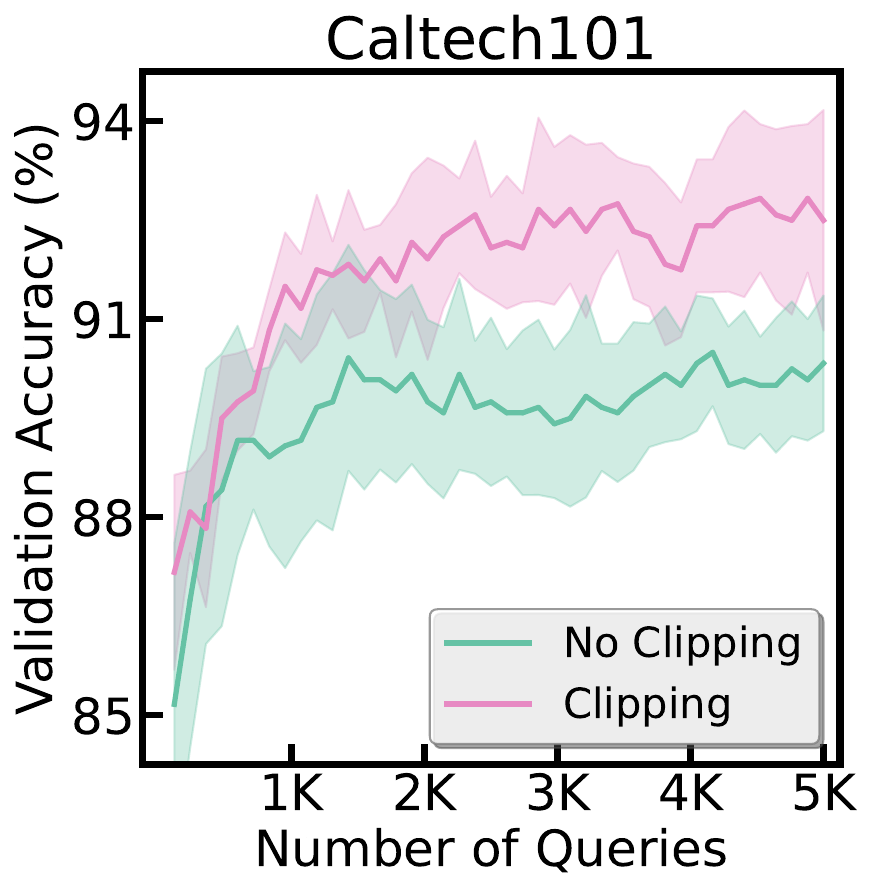}
    \end{subfigure}
    \begin{subfigure}{0.19\linewidth}
        \centering
        \includegraphics[width=\linewidth]{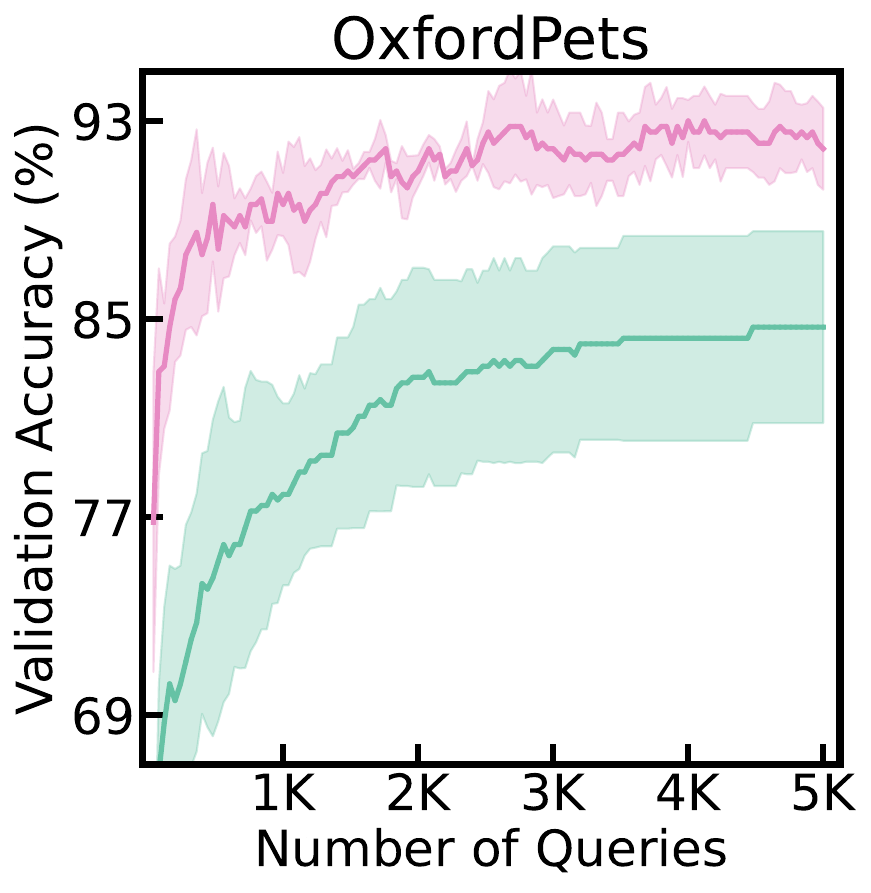}
    \end{subfigure}
    \begin{subfigure}{0.19\linewidth}
        \centering
        \includegraphics[width=\linewidth]{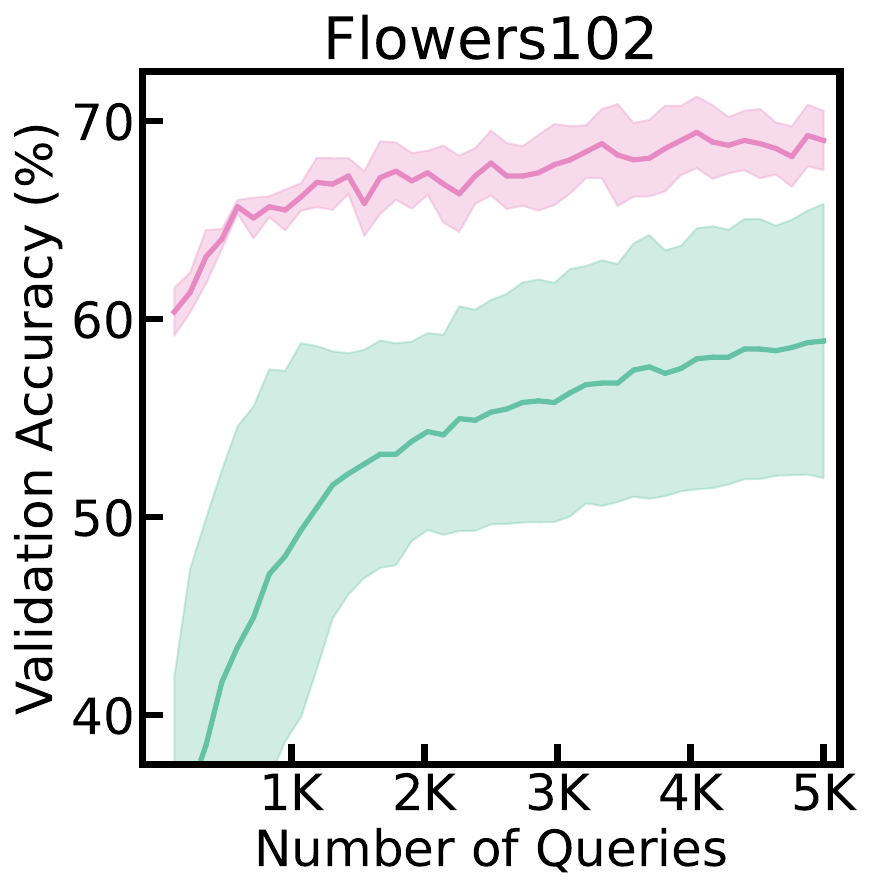}
    \end{subfigure}
    \begin{subfigure}{0.19\linewidth}
        \centering
        \includegraphics[width=\linewidth]{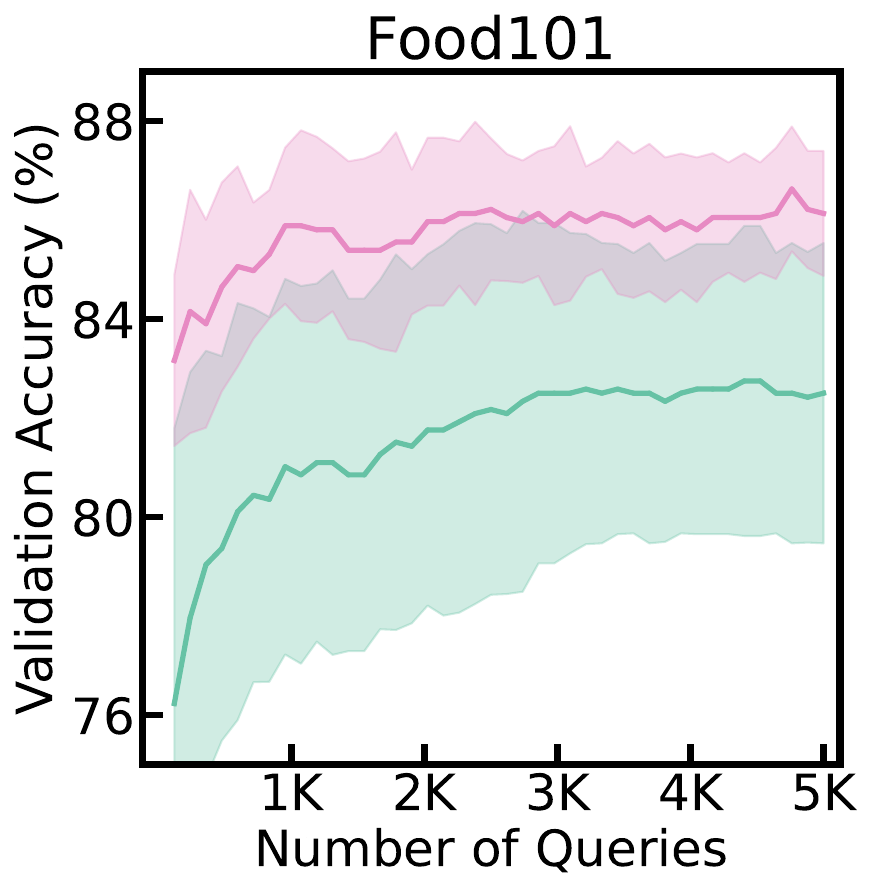}
    \end{subfigure}

    
    \begin{subfigure}{0.19\linewidth}
        \centering
        \includegraphics[width=\linewidth]{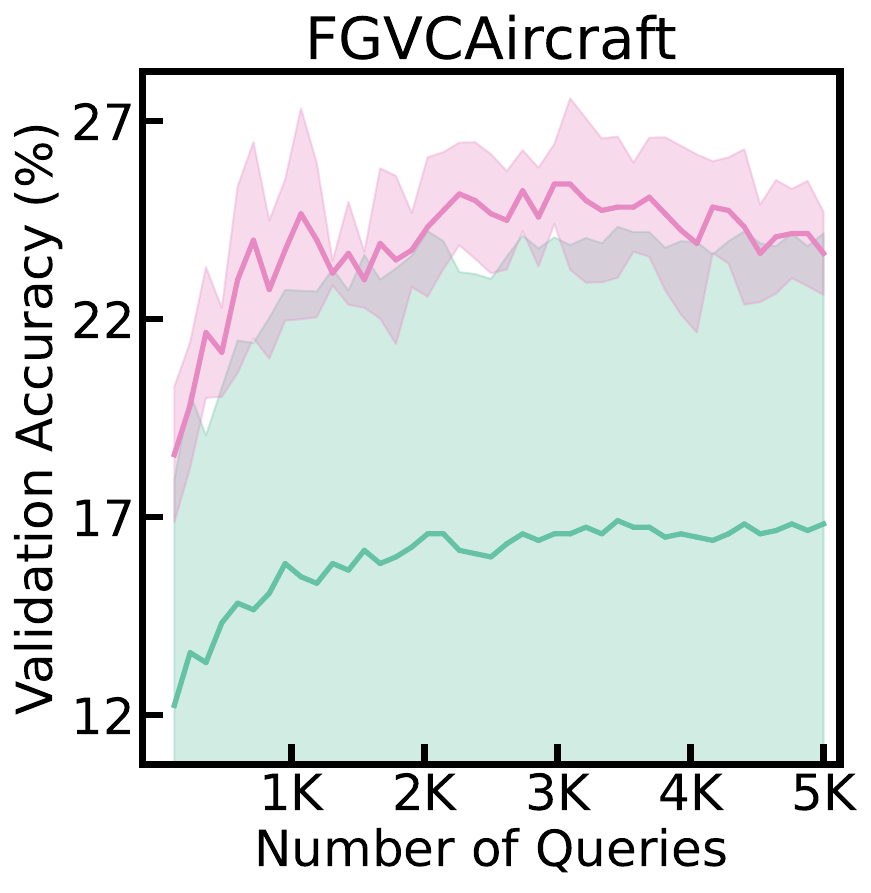}
    \end{subfigure}
    \begin{subfigure}{0.19\linewidth}
        \centering
        \includegraphics[width=\linewidth]{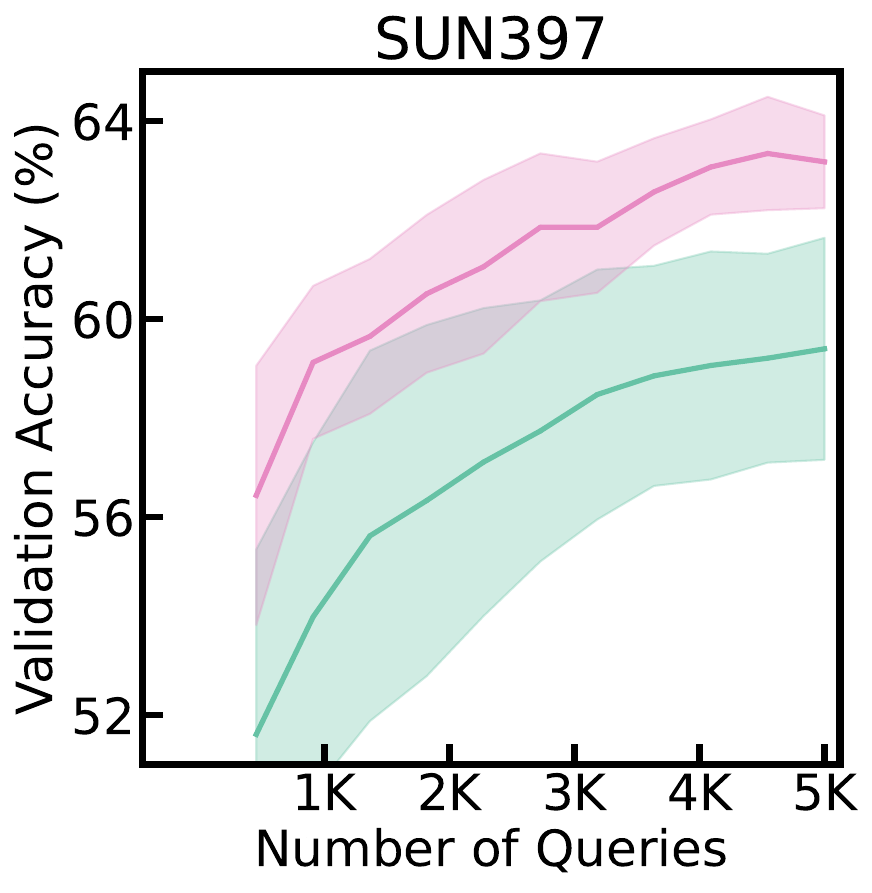}
    \end{subfigure}
    \begin{subfigure}{0.19\linewidth}
        \centering
        \includegraphics[width=\linewidth]{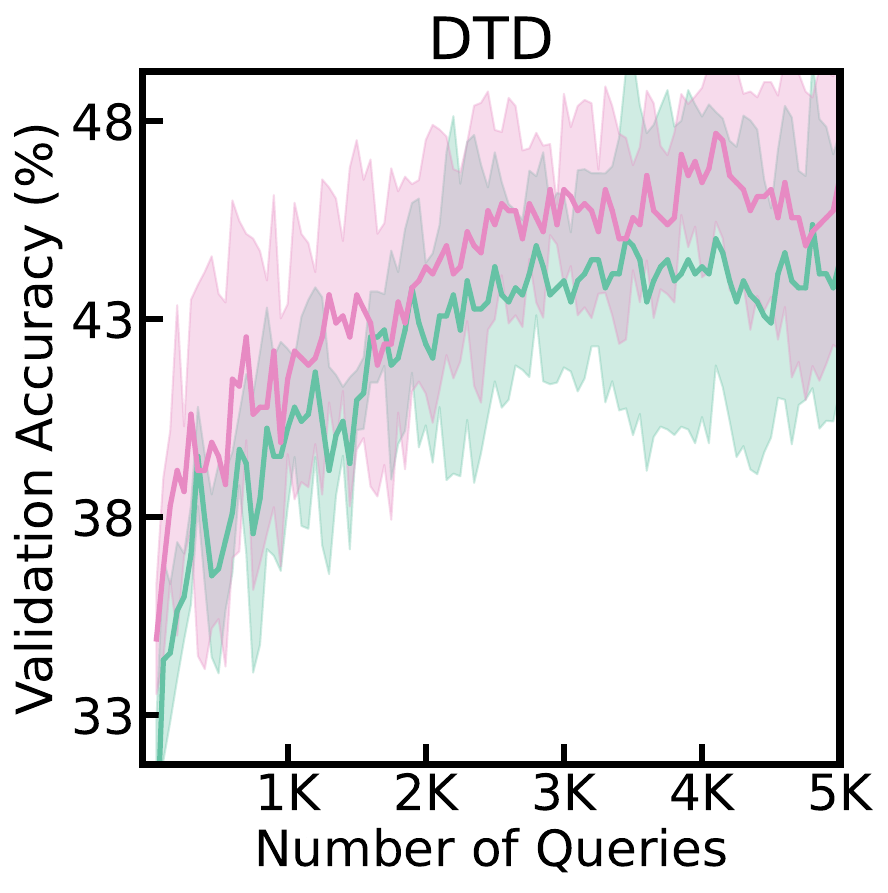}
    \end{subfigure}
    \begin{subfigure}{0.19\linewidth}
        \centering
        \includegraphics[width=\linewidth]{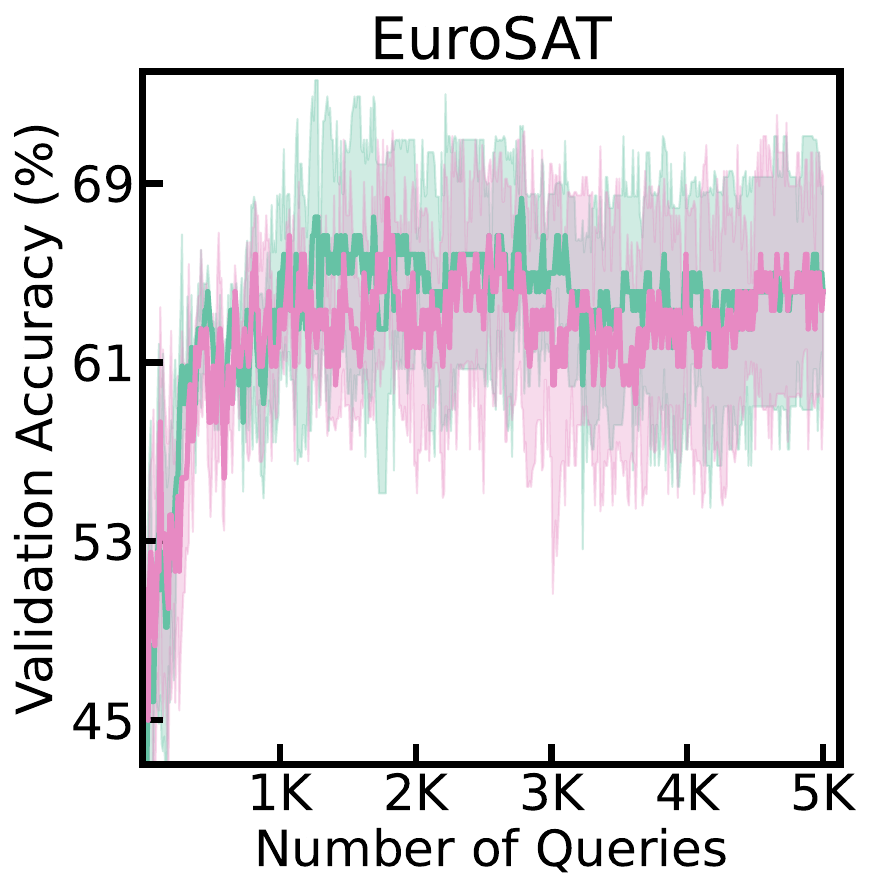}
    \end{subfigure}


    \begin{subfigure}{0.19\linewidth}
        \centering
        \includegraphics[width=\linewidth]{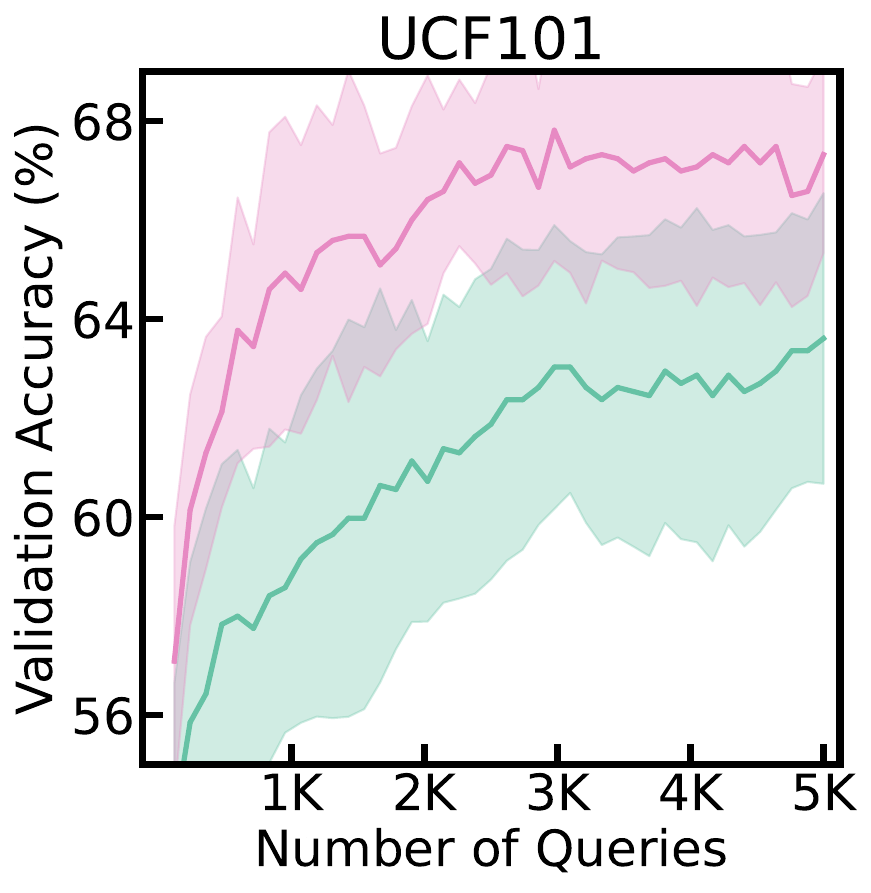}
    \end{subfigure}
    \begin{subfigure}{0.19\linewidth}
        \centering
        \includegraphics[width=\linewidth]{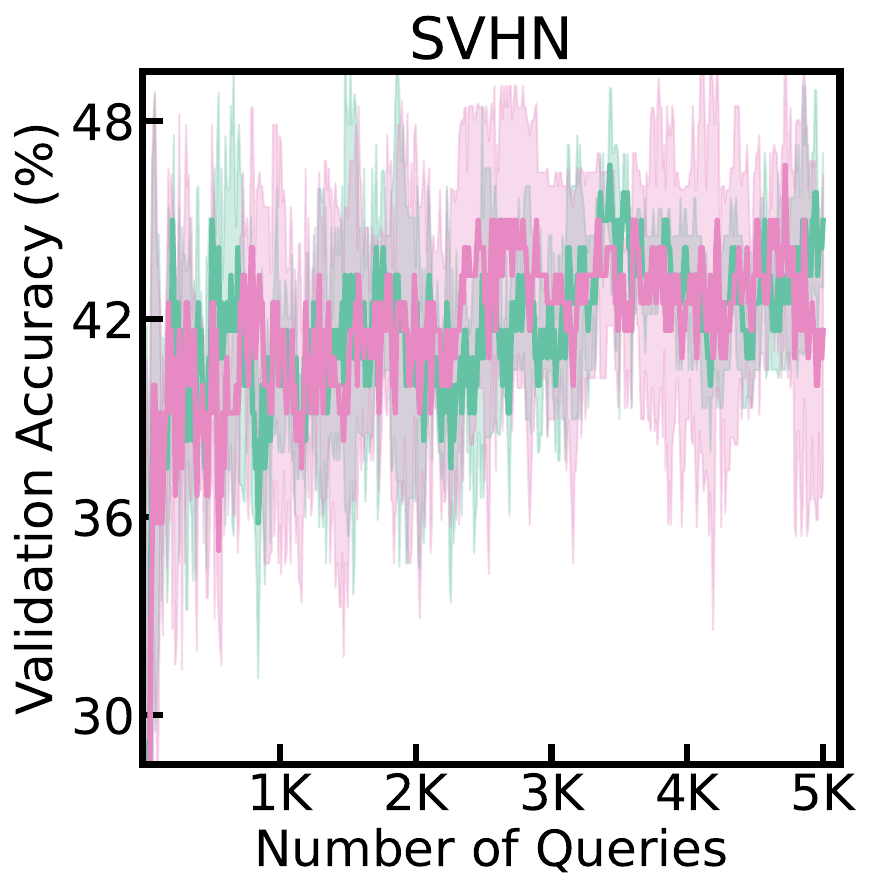}
    \end{subfigure}
    \begin{subfigure}{0.19\linewidth}
        \centering
        \includegraphics[width=\linewidth]{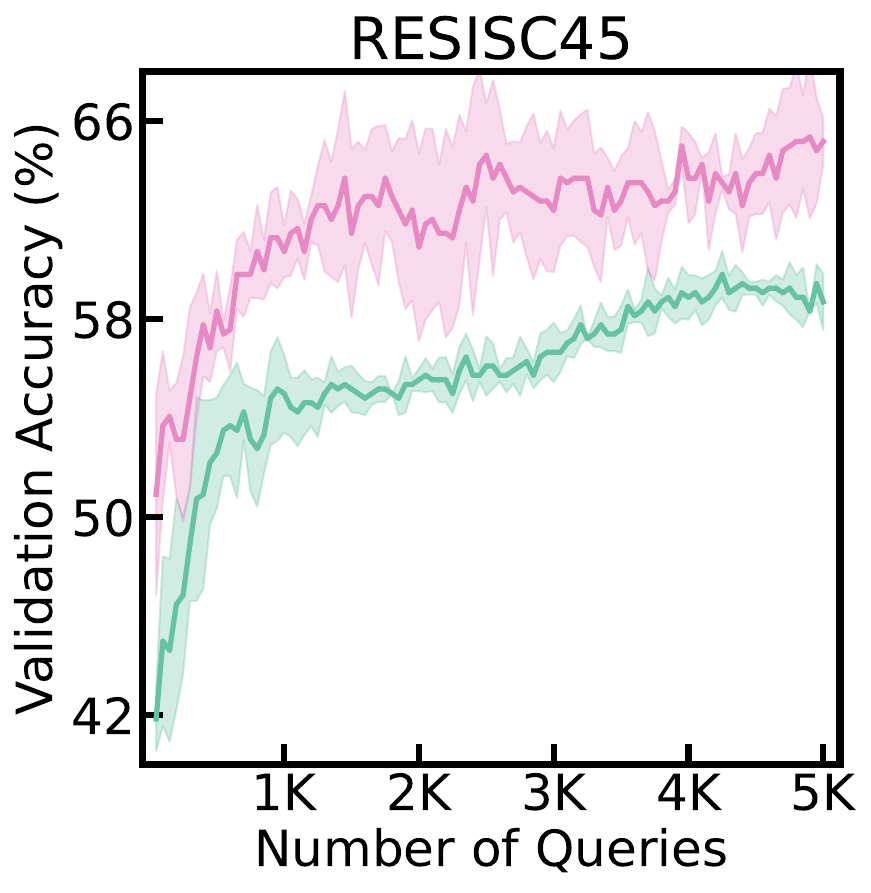}
    \end{subfigure}
    \begin{subfigure}{0.19\linewidth}
        \centering
        \includegraphics[width=\linewidth]{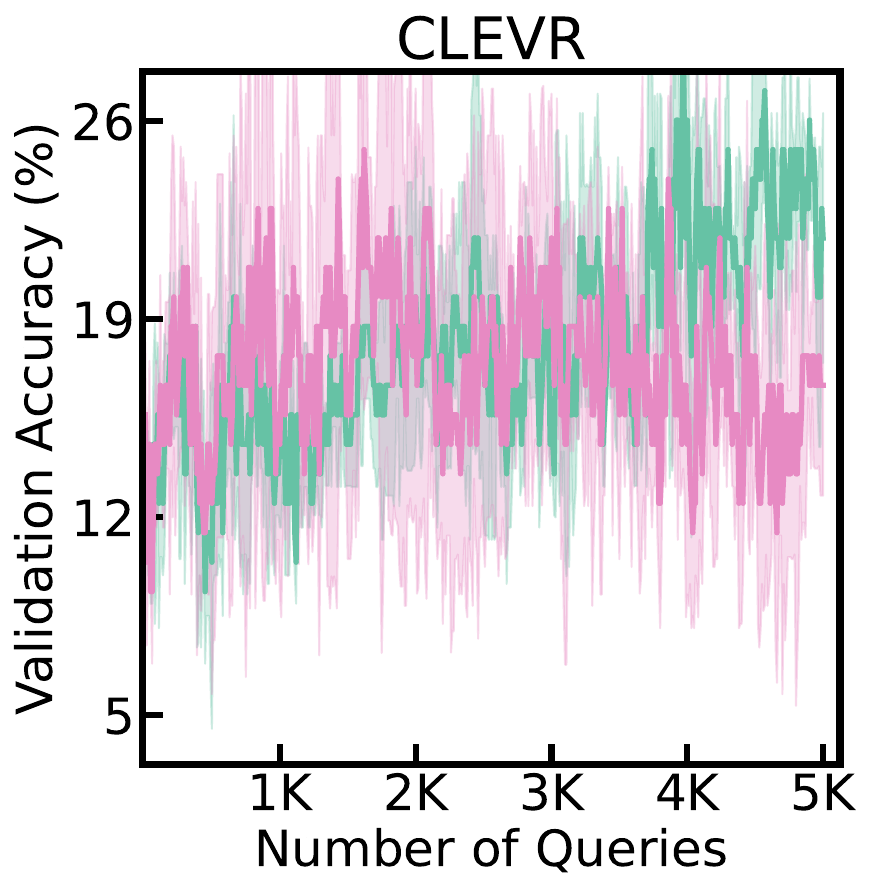}
    \end{subfigure}

    
    \begin{subfigure}{0.19\linewidth}
        \centering
        \includegraphics[width=\linewidth]{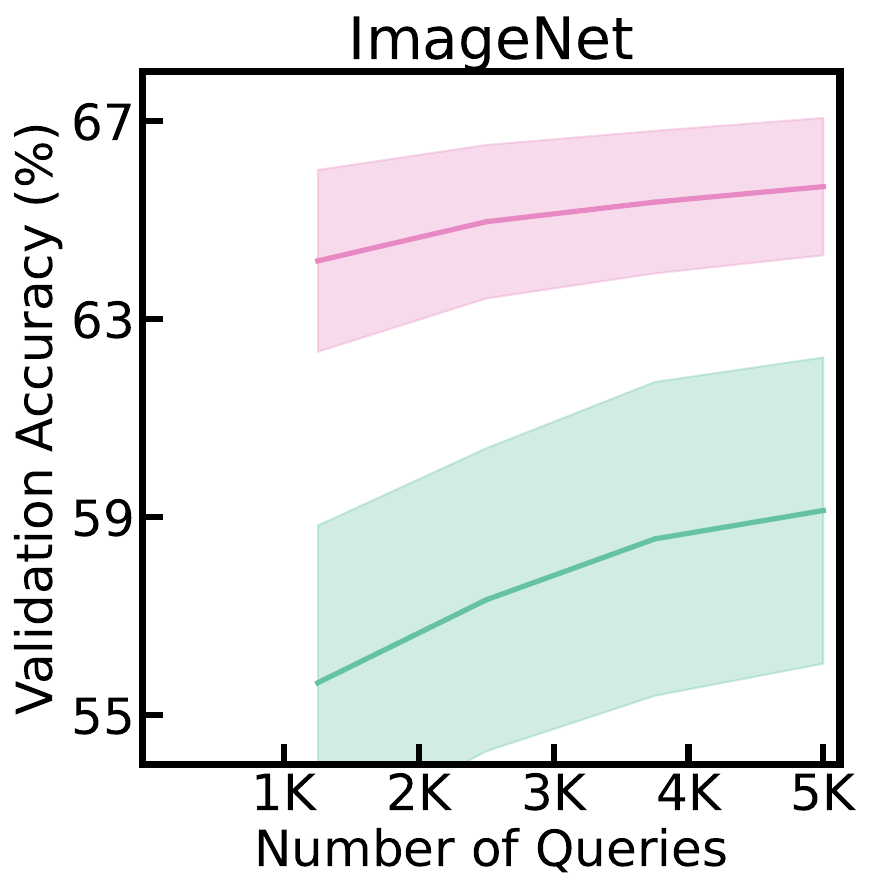}
    \end{subfigure}
    
    \caption{
        Validation curves illustrating the impact of zeroth-order gradient clipping on various vision-language tasks. 
        The results emphasize the effectiveness of the $\sqrt{\delta}$ threshold in stabilizing training and improving validation accuracy, confirming its utility in zeroth-order optimization scenarios.
        }
    \label{fig:appendix clipping validation}
\end{figure}
To evaluate the generalization capability of \zip and verify that it does not overfit, we report validation accuracies corresponding to the training results presented in \Cref{fig:limitation}, \ref{fig:convergence_rate}, \ref{fig:FOvsZO}, and \ref{fig:appendix clipping}. 
These are illustrated in \Cref{fig:appendix limitations validation}, \ref{fig:appendix_convergence_rate validation}, \ref{fig:appendix FOvsZO validation}, and \Cref{fig:appendix clipping validation}.

Across all datasets and experimental settings, the validation accuracy trends closely mirror the training accuracy, demonstrating consistent performance and negligible signs of overfitting. 
Notably, for zeroth-order optimization (\Cref{fig:appendix FOvsZO validation}), the validation results confirm the stability and efficiency of the query-efficient design of \zip. 
Similarly, the validation outcomes for intrinsic-dimensional clipping (\Cref{fig:appendix clipping validation}) highlight the effectiveness of the $\sqrt{\delta}$ threshold in enhancing both training and validation accuracy without introducing instability.

These findings provide strong evidence of robustness of \zip, showing that it maintains reliable performance across both training and validation datasets. 
This underscores its generalization ability and query efficiency, further validating its applicability across diverse vision-language tasks, even under resource constraints.

\clearpage
    \section{Experiment details}\label{app_sec:details}
\subsection{Algorithm}
\label{app_sec:algorithm}
\setlength{\algomargin}{1.5em}
\begin{algorithm}
    \caption{The training process of \zip.}
    \label{algorithm_zip}
    \KwIn{
        The training data $\cD = \{\bx_i, \by_i\}$, pre-trained CLIP model $g$, projection matrix $\{\Mb_i\}_{i=1}^m$, learnable parameters of each context token $\theta_i$, gradient clipping threshold $\sqrt{\delta}$, context token counts $m$, number of gradient estimates $N$ for $N$-SPSA, smoothing parameter $c$, batch size $\cB$, and API call budget $\cT$.
        }
        
    \SetKwFunction{recon}{$f$}
    \SetKwProg{myProc}{Function}{:}{end}
    \myProc{\recon{$\Xi_t$; $\mathrm{X}$}}{
        Calculate the original token parameters $\theta_t$\\
        \For{$i$ \textup{to} $m$}{
            $\theta_{t, i} = \theta_{0, i} + \Mb_i\bw_{t, i}$\\
        }

        Forward propagate through CLIP model with reconstructed tokens $\Tilde{g} = g(\theta_t; \mathrm{X})$\\

        \KwRet{$\Tilde{g}$}\\
    }
    
    \SetKwFunction{myFunction}{$N$-SPSA}
    \SetKwProg{myProc}{Function}{:}{end}
    \myProc{\myFunction{$\Xi_t$, $c$, $N$, $\mathrm{X}$}}{
        \For{$n$ \textup{to} $N$}{
            Sample $a \sim Uniform(0, 1)$, with ensuring $a$ is not 0\\ 
            Sample $z_n \sim Bernoulli(a:0.5, -a:0.5)$\\
            Calculate the first loss $f(\Xi_t + c z_n; \mathrm{X})$\\
            Calculate the second loss $f(\Xi_t - c z_n; \mathrm{X})$\\
            Calculate the $n$-th gradient estimation $\hat{\nabla}f_n(\Xi_t; \mathrm{X}) = \frac{f(\Xi_t+c z_n; \mathrm{X}) - f(\Xi_t-c z_n; \mathrm{X})}{2c} (z_{n})^{-1}.$\\
        }
        Calculate $N$-SPSA gradient estimation $\hat{\nabla}f(\Xi_t; \mathrm{X}) = \frac{1}{N}\sum_{n=1}^N \hat{\nabla}f_n(\Xi_t; \mathrm{X})$\\
        \KwRet{$\hat{\nabla}f(\Xi_t; \mathrm{X})$} \\
    }
    
    Initialize $\Xi_0, \Ub_0, \bs_0, \Vb_0$ \\
    \For{$t$ \textup{to} $\cT/2N$}
    {
        \For{\textup{each training mini-batch} $\mathrm{X}$\textup{,} $\mathrm{Y}$}
        {    
            Calculate the weight matrix $\Xi_t = [\bw_{t,1} | \bw_{t,2} | \cdots | \bw_{t,q}] = \Ub_t\text{diag}(\bs_t)\Vb_t^T + \bu_t \otimes \mathds{1}$ \\
            Calculate the gradient estimation $\hat{\nabla}f(\Xi_t; \mathrm{X})$ using \myFunction{$\Xi_t$, $c$, $N$, $\mathrm{X}$}\\
            Calculate the clipping coefficient $\alpha_t = \mathbf{min} (\frac{\sqrt{\delta}}{\sqrt{\sum_{i=1}^\delta \hat{\nabla} f(\theta_t)_i^2}}, 1)$\\
            Gradient descent using clipping $\Xi_{t+1} = \Xi_t - \eta_t\alpha_t\hat{\nabla}f(\Xi_t)$\\
        
        }
    }

\end{algorithm}
During the training process, our method, \zip, initiates by calculating the low-rank approximation and integrating shared feature representations. 
These approximations are subsequently utilized to reconstruct the original parameter space through random projection, allowing \zip to generate the prompt representations necessary for loss computation efficiently.
To ensure clarity and provide a comprehensive understanding of the training procedure, the summarized training algorithm can be found in \Cref{algorithm_zip}, which outlines each stage of the process for easy reference.

\subsection{Comparison of Black-box Settings in VLMs}
\begin{table*}[!h]
    \centering
    \small
    \resizebox{0.6\linewidth}{!}{%
    \begin{tabular}{lcc}
        \toprule
        & \multicolumn{2}{c}{\textbf{Black-box Assumption}}\\
        \cmidrule(lr){2-3}
        \textbf{Method} & Access Permission & Prompt Type \\
        \midrule
        \barr~\citep{BAR} & Loss & Hard\\
        \blackvip~\citep{BlackVIP} & Loss & Hard\\
        \bptvlm~\citep{BPT-VLM} & Loss & Soft\\
        LFA~\citep{LFA} & Logits & Hard\\
        CraFT~\citep{CraFT} & Logits & Soft\\
        \midrule
        \zip (ours) & Loss & Soft\\
        \bottomrule
    \end{tabular}
    }
    \caption{
        Comparison of black-box assumptions. 
        We compare our setting with prior methods in terms of access permission and prompt type.
        } 
    \label{table:blackbox assumption}
\end{table*}

Our work builds upon the Language-model-as-a-Service (LMaaS) framework~\citep{BBT}, which envisions large language models as services accessible via APIs. 
This paradigm has attracted significant attention in previous studies as a practical and flexible setting for black-box prompt tuning~\citep{BBT, BPT-VLM}. 
In the LMaaS framework, models are treated as opaque systems, and users fine-tune them through external signals—such as logits or losses—without requiring direct access to the internal parameters. 
Moreover, the LMaaS framework accepts soft prompt inputs, further enhancing its adaptability.
Although current models like ChatGPT~\citep{ChatGPT} and Gemini~\citep{Gemini} do not yet support this specific configuration, the LMaaS approach establishes a foundation for future methods that emphasize flexible, efficient, and secure fine-tuning of black-box models.

As detailed in \Cref{table:blackbox assumption}, our approach adheres to a black-box setting where only loss values are accessible while soft prompts are utilized. 
This design is particularly advantageous as it avoids the need for accessing logits, \ie, a type of sensitive information that should ideally remain concealed for security and privacy reasons. 
In contrast, methods such as LFA~\citep{LFA} and CraFT~\citep{CraFT} rely on logits, which raises concerns over model confidentiality and may not be feasible in many real-world API scenarios. 
By leveraging loss-based feedback, our method conforms more closely to practical constraints and fosters the development of robust optimization strategies capable of deriving meaningful insights from limited information.

\subsection{Dataset details}
\label{app_subsubsec:dataset details}
\begin{table*}[!h]
\begin{center}
\centering
\resizebox{\columnwidth}{!}{%
\begin{tabular}{lrrrlc}
\toprule
\multicolumn{6}{c}{\bf \textit{Classification Tasks}} \\
\midrule
\bf Dataset & \bf \#Train & \bf \#Valid & \bf \#Test & \bf Classification Type & \bf Manual Prompt \\
\midrule
ImageNet & 1.28M & N/A & 50,000 & Generic object & ``a photo of a [CLASS].''\\
Caltech101 & 4,128 & 1,649 & 2,465 & Generic object & ``a photo of a [CLASS].''\\
OxfordPets & 2,944 & 736 & 3,669 & Fine-grained objects & ``a photo of a [CLASS], a type of pet.''\\
Flowers102 & 4,093 & 1,633 & 2,463 & Fine-grained objects & ``a photo of a [CLASS], a type of flower.''\\
Food101 & 50,500 & 20,200 & 30,300 & Fine-grained objects & ``a photo of [CLASS], a type of food.''\\
FGVCAircraft & 3,334 & 3,333 & 3,333 & Fine-grained objects & ``a photo of a [CLASS], a type of aircraft.''\\
SUN397 & 15,880 & 3,970 & 19,850 & Scene & ``a photo of a [CLASS].''\\
DTD & 2,820 & 1,128 & 1,692 & Text & ``[CLASS] texture.''\\
SVHN & 73,257 & 26,032 & 26,032  & Digit & ``This is a photo of a [CLASS].''\\
EuroSAT & 13,500 & 5,400 & 8,100 & Satellite & ``a centered satellite photo of a [CLASS].''\\
Resisc45  & 6,300  & 2,520 & 7,560  & Scene & ``This is a photo of a [CLASS].''\\
CLEVR  & 70,000 & 15,000 & 15,000  & Diagnosis & ``This is a photo of [CLASS] objects.''\\
UCF101 & 7,639 & 1,898 & 3,783 & Action & ``a photo of a person doing [CLASS].''\\
\midrule
ImageNetV2 & N/A & N/A & 10,000 & Generic object & ``a photo of a [CLASS].''\\
ImageNet-Sketch & N/A & N/A & 50,889 & Sketch image & ``a photo of a [CLASS].''\\
ImageNet-A & N/A & N/A & 7,500 & Adversarially filtered image & ``a photo of a [CLASS].''\\
ImageNet-R & N/A & N/A & 30,000 & Cartoon, Sculptures, Paintings & ``a photo of a [CLASS].''\\
\bottomrule
\end{tabular}
}
\end{center}
    \caption{
        The datasets used in this study, along with the corresponding manual prompts. 
        Samples are drawn exclusively from the original training set to ensure consistency with baseline data.
    }
    \label{appendix:datasets}
\end{table*}
In this study, we leverage a total of 13 general classification datasets and 4 out-of-distribution (OOD) datasets, widely used in prior research. 
These 13 classification tasks are employed to comprehensively evaluate the performance of \zip in general few-shot learning, base-to-new generalization, and cross-dataset transfer scenarios. 
Additionally, the 4 OOD datasets are used to rigorously assess the ability of \zip to handle out-of-distribution generalization. 
A detailed overview of each dataset, including task descriptions and evaluation metrics, is provided in \Cref{appendix:datasets}.

\subsection{Hyper-parameters}
\label{app_sec:hyperparameters}
\begin{table*}[!h]
    \centering
    \small
    \resizebox{0.7\linewidth}{!}{%
    \begin{tabular}{ccc}
        \toprule
        \textbf{Hyper-parameter} & \textbf{Assignment} & \textbf{Method} \\
        \midrule
        initial LR & \{40.0, 20.0, 10.0, 5.0, 1.0\} & \barr \\
        \midrule
        initial LR ($a_1$) & \{1.0, 0.1, 0.01, 0.005\} & \blackvip, \zip \\
        \midrule
        min LR & \{0.1, 0.01, 0.001\} & \barr \\
        \midrule
        decaying step & \{0.9, 0.5, 0.1\} & \barr \\
        \midrule
        LR decaying factor & \{0.6, 0.5, 0.4, 0.3\} & \blackvip, \zip \\
        \midrule
        initial PM ($c_1$) & \{0.01, 0.005, 0.001\} & \blackvip, \zip \\
        \midrule
        PM decaying factor & \{0.2, 0.1\} & \blackvip, \zip \\
        \midrule
        std. of perturbation & \{1.0, 0.5\} & \barr \\
        \midrule
        smoothing & \{0.1, 0.01, 0.001\} & \barr \\
        \midrule
        gradient smoothing & \{0.9, 0.7, 0.5, 0.3\} & \blackvip \\
        \midrule
        population size & \{5, 10, 15, 20\} & \bptvlm \\
        \midrule
        intrinsic dimensionality & \{500, 1000, 2000\} & \bptvlm, \zip \\
        \midrule
        rank & \{1, 3, 5\} & \zip \\
        \midrule
        visual tokens & \{5, 10\} & \bptvlm \\
        \midrule
        text tokens & \{5, 10\} & \bptvlm \\
        \bottomrule
    \end{tabular}
    }
    \caption{Hyper-parameter search range for BBPT approaches.} 
    \label{table:pft_hyperparameters}
\end{table*}
To achieve stable and accurate gradient approximations, zeroth-order optimization algorithms typically perform multiple gradient estimations, with the results being averaged to obtain a more reliable gradient estimate.
Following the methodology outlined in \citet{BlackVIP}, we repeat this gradient estimation process five times for all zeroth-order-based baselines to ensure consistency and robustness.
For SPSA methods, we tune key hyper-parameters, including the perturbation magnitude and decay factor. 
For evolutionary strategies, we adjust the population size, intrinsic dimensionality, and the number of visual and text tokens. 
The search ranges for these hyper-parameters are based on the recommendations provided by the authors of \barr~\citep{BAR}, \blackvip~\citep{BlackVIP}, and \bptvlm~\citep{BPT-VLM}, and are summarized in Table~\ref{table:pft_hyperparameters}. 
Regarding the learning objectives, cross-entropy loss is employed for \blackvip\ and \bptvlm, while focal loss is used for \barr.
All BBPT experiments utilize a batch size of 128 across all datasets, ensuring consistent and comparable evaluation.

\subsection{Baseline Details}
\label{app_subsubsec:baseline details}
\subsubsection{Zero-shot CLIP}
CLIP~\citep{CLIP} is a prominent vision-language foundation model widely employed across various tasks, such as classification, segmentation, and other vision-language applications. 
Trained on large-scale image-text datasets, CLIP has demonstrated exceptional effectiveness in numerous downstream tasks, thanks to its ability to leverage visual concepts learned from natural language supervision. 
It performs zero-shot classification using manually crafted prompt templates (\eg, ``a photo of a [CLASS].''). 
Due to its versatility and strong performance, CLIP serves as the backbone for many black-box prompt tuning models, including our proposed method, \zip.

\subsubsection{\barr}
Originally developed for transferring knowledge from an ImageNet pre-trained model to the medical domain, \barr~\citep{BAR} reprograms pre-trained models using a frame-shaped, learnable program that embeds the target task image within this frame and optimizes it via zeroth-order algorithms. 
The size of this learnable program is adjusted based on the input image resolution. 
For example, in the original study, when the resolution of the downstream image was larger than that of the pre-trained model, an embedded target image size of $64 \times 64$ was used within a $299 \times 299$ learnable program.
In contrast, \blackvip~\citep{BlackVIP} modified this approach by designing an embedded image resolution of $194 \times 194$ to avoid performance degradation caused by the heavy-padding of thin images within the prompt. 
In this paper, we adopt the settings established by \blackvip~\citep{BlackVIP} when optimizing \barr, ensuring consistency and addressing the limitations of the original design.

\subsubsection{\blackvip}
\blackvip~\citep{BlackVIP} generates input-conditional visual prompts for each image via a projection network, allowing prompts to adapt dynamically to the specific features of each input. 
For the optimization process, \blackvip employs Simultaneous Perturbation Stochastic Approximation with Gradient Correction (SPSA-GC), which integrates Nesterov Accelerated Gradients (NAG)~\citep{NAG}, enhancing the efficiency of zeroth-order training. 
Unlike other methods such as CoCoOp~\citep{CoCoOp}, which optimize additional input-attached parameters, \blackvip focuses exclusively on the projection network, effectively creating adaptive, input-conditioned visual prompts for BBPT tasks. 
While this design choice makes \blackvip highly adaptable and well-suited for black-box settings, the large number of parameters introduced by the projection networks can negatively impact training efficiency, posing a challenge in resource-constrained environments.

\subsubsection{\bptvlm}
\bptvlm~\citep{BPT-VLM} utilizes evolutionary strategies for BBPT, distinguishing itself from previous approaches. 
In this method, \bptvlm introduces learnable parameters into both text and image prompts, enabling a more comprehensive adaptation to various tasks. 
To enhance efficiency, \bptvlm incorporates the concept of intrinsic dimensionality, reducing the overall number of learnable parameters by applying a random projection matrix to both text and image prompts. 
This approach effectively balances adaptability and parameter efficiency, making \bptvlm a more versatile option for BBPT scenarios.
\end{document}